\newlength\DX
\newlength\DY
\definecolor{myblue}{rgb}{0.21, 0.34, 0.74}
\definecolor{mygrey}{rgb}{0.55, 0.57, 0.67}
\definecolor{myred}{rgb}{0.79, 0.0, 0.09}
\DeclareMathAlphabet{\mathscrbf}{OMS}{mdugm}{b}{n}
\newcommand{\Id}{\mathrm{Id}}
\newcommand{\Cx}{\mathrm{conv}}
\newcommand{\supp}{\mathrm{supp}}
\newcommand{\argmax}{\mathrm{argmax}}
\newcommand{\setS}{\mathscr{S}}
\newcommand{\R}{\mathbb{R}}
\newcommand{\F}{\mathscr{F}}
\newcommand{\G}{\mathscr{G}}
\newcommand{\V}{\mathscr{V}}
\newcommand{\proj}{\text{proj}}
\newcommand{\diff}{\, \mathrm{d}}
\newcommand{\dist}{\mathrm{dist}}
\newcommand{\K}{\mathcal{K}}
\newcommand{\comp}{\mathrm{co}}
\newcommand{\op}{\mathrm{op}}
\newcommand{\bx}{\mathbf{X}}
\numberwithin{equation}{section}
\theoremstyle{plain}
\newtheorem{theorem}{Theorem}[section]
\newtheorem{definition}[theorem]{Definition}
\newtheorem{proposition}[theorem]{Proposition}
\newtheorem{lemma}[theorem]{Lemma}
\newtheorem{corollary}[theorem]{Corollary}
\newtheorem{claim}{Claim}
\newtheorem{conjecture}[theorem]{Conjecture}
\newtheorem{remark}[theorem]{Remark}
\title{The emergence of clusters in self-attention dynamics}
\author[Geshkovski]{Borjan Geshkovski}
\author[Letrouit]{Cyril Letrouit}
\author[Polyanskiy]{Yury Polyanskiy}
\author[Rigollet]{Philippe Rigollet}
\begin{document}

\begin{abstract}
 Viewing Transformers as interacting particle systems, we describe the geometry of learned representations when the weights are not time dependent. We show that particles, representing tokens, tend to cluster toward particular limiting objects as time tends to infinity. Cluster locations are determined by the initial tokens, confirming context-awareness of representations learned by Transformers. Using techniques from dynamical systems and partial differential equations, we show that the type of limiting object that emerges depends on the spectrum of the value matrix. Additionally, in the one-dimensional case we prove that the self-attention matrix converges to a low-rank Boolean matrix. The combination of these results mathematically confirms the empirical observation made by Vaswani et al. \cite{vaswani2017attention} that \emph{leaders} appear in a sequence of tokens when processed by Transformers. 
\end{abstract}

\vspace*{-.01cm}
\maketitle

\thispagestyle{empty}

\vspace*{-.2cm}
\setcounter{tocdepth}{1}
{\small\tableofcontents}

\part{Introduction and main results}

\section{Introduction}

The introduction of Transformers in 2017~\cite{vaswani2017attention} marked a turning point in the AI revolution, powering breakthroughs in natural language modeling and computer vision. With remarkable empirical success, Transformers enable large language models to compute very powerful representations using the self-attention mechanism. Yet, little is known about the geometric structure of these representations. As the size of these models grows at an astonishing rate, the need to understand their inner workings is becoming a pressing scientific challenge. In this work, we make a first step in this direction by describing the geometry of learned representations.

To provide a transparent presentation of our findings, we take a leaf out of the literature on continuous-time dynamics such as neural ordinary differential equations (ODEs)~\cite{chen2018neural,weinan2017proposal,haber2017stable}. By viewing layers as a time variable, this formalism has emerged as a flexible mathematical framework to implement and study ResNets~\cite{he2016deep} as particular discrete-time versions of a parametrized dynamics of the form 
$$
\dot x(t)=f_{\theta}(x(t)),\hspace{1cm} t\in[0,T].
$$
Here $\theta$ is the trained parameter of a neural network and $f_{\theta}$ is characterized by the precise architecture of the ResNet\footnote{A classical choice is $\theta=(W, A, b) \in \R^{d\times d} \times \R^{d\times d}\times  \R^{d}$ and $f_{\theta}(x)=W\sigma(Ax+b)$ where $\sigma$ is an elementwise nonlinearity such as the ReLU (\cite{he2016identity}).}. In turn, an input (e.g., an image) $x(0)\in \R^d$ is mapped to its representation $x(T)$. 

  Unlike neural ODEs and ResNets, the representation map of Transformers is not solely a function of an individual input  $x(0)\in\mathbb{R}^d$ but rather of a set/sequence $(x_1(0), \ldots, x_n(0))$ of $n\geq1$ $d$-dimensional tokens. 
These tokens then evolve in time by interacting with each other per the self-attention mechanism. Namely, following~\cite{sander2022sinkformers}, we view tokens as particles, and the transformer dynamics as an interacting particle system of the form
\begin{equation}
\label{eq:trans_dyn}
\dot{x}_i(t) = \sum_{j=1}^n P_{ij}(t)
Vx_j(t), \hspace{1cm} t\in[0,+\infty),
\end{equation}
for any $i\in[n]$, where $P_{ij}(t)$ are the entries of a $n\times n$ stochastic matrix $P(t)$, given by
\begin{equation} \label{eq:P}
 P_{ij}(t):=\frac{e^{\langle Qx_i(t), Kx_j(t)\rangle}}{\sum_{\ell=1}^n e^{\langle Qx_i(t), Kx_{\ell}(t)\rangle}}\,, \hspace{1cm} (i,j)\in[n]^2.
\end{equation}
Here the matrices $Q$ (Query), $K$ (Key), and  $V$ (Value) are learned from data. Note that $Q, K$ need not be square.
The $n \times n$ matrix $P(t)$ is called \emph{self-attention matrix}. The wording \emph{attention} stems precisely from the fact that $P_{ij}(t)$ captures the attention given by token $i$ to token $j$ relatively to all tokens $\ell\in[n]$. 
The matrices $Q$ and $K$ in~\eqref{eq:P} warp the geometry
of the input tokens, so that a trained attention matrix contains weights which indicate semantic relations between words. Such conclusions have been drawn in the context of language processing tasks in \cite[Figures 3-5]{vaswani2017attention}.   

Our goal is to showcase the fact that self-attention, which itself is the core novelty of Transformers, entails a 
clustering effect. To that end, we focus on the pure self-attention dynamics described in~\eqref{eq:trans_dyn}. In particular, we do not model variations such as multiple heads, feed-forward layers, and layer normalization that are typically adjoined to self-attention dynamics of~\eqref{eq:trans_dyn}. However, on this last point, we note that our theoretical findings indicate that without any normalization, the dynamics~\eqref{eq:trans_dyn} can diverge in some (or even all) directions over time.
We leave these additional questions for future research; see Section~\ref{sec:conclusion}. 

\subsection{Organization of the paper and summary of contributions}

\begin{wrapfigure}[13]{r}{0.45\textwidth}
    \centering
    \vspace{-0.1cm}
    \includegraphics[width=0.4\textwidth]{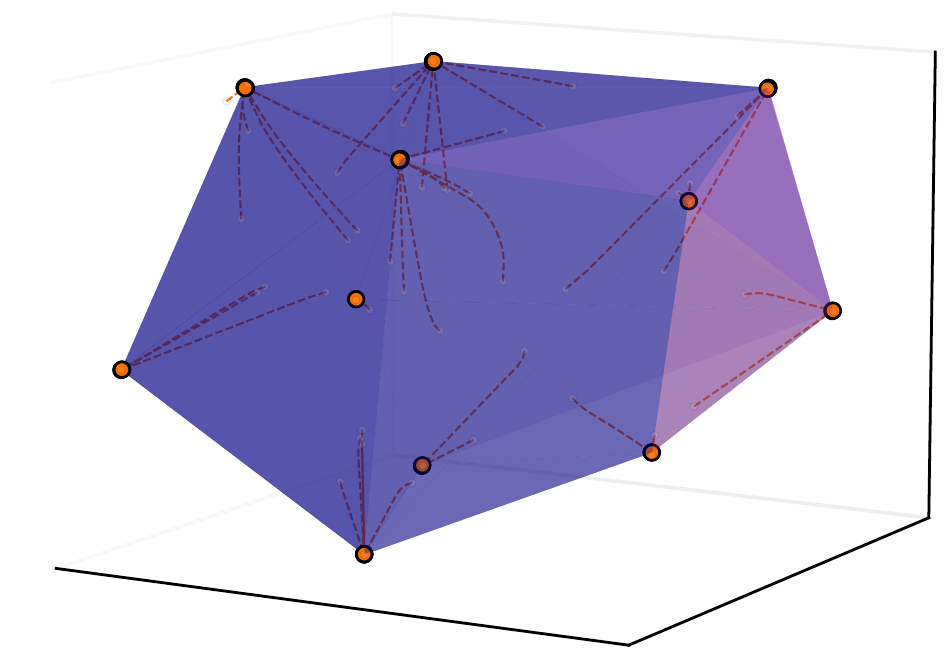}
    \caption{\label{fig:diamond} For $V=I_3$ tokens cluster toward the vertices of a convex polytope (Theorem~\ref{t:Idcase11int}).}
\end{wrapfigure}

The goal of this paper is to characterize clustered representations  of a \emph{trained} Transformer by studying the asymptotic behavior of a sequence of tokens $(x_1(t),\ldots,x_n(t))$ as they evolve through the layers of a transformer architecture using the dynamics~\eqref{eq:trans_dyn}. In this setup, a Transformer is completely described by the  weight matrices $(Q, K, V)$ obtained during training. Note that we assume that these three matrices are  \emph{time-independent}. While this assumption is motivated by mathematical convenience, it is worth noting that such weight-sharing scenarios are in fact used in practice---see, e.g., {\sf ALBERT} \cite{lanalbert}---as they drastically reduce the number of parameters of a network.

With parameters $(Q, K, V)$ fixed, tokens are subject to collective dynamics that we call \emph{transformer dynamics}. While these dynamics are reminiscent of existing models for opinion dynamics and flocking, they present they own mathematical challenges requiring ad-hoc tools to study their asymptotic behavior.

The main conclusion of our analysis is that the set of tokens $\{x_1(t),\ldots,x_n(t)\}$, appropriately rescaled, tends to a \emph{clustered configuration} as $t\to \infty$. Our theoretical findings justify the empirical observation made in \cite{vaswani2017attention} that \textit{leaders} appear in a sequence of tokens when processed by Transformers. We now list our main contributions.
\vspace{0.2cm}

\noindent {\it (i)} As a warm-up to the geometric characterization of the limits of sequences of tokens, we show in {\bf \Cref{s:lowrank}} that when $d=1$ and $V>0$, the self-attention matrix $P(t)$ converges to a low-rank matrix with entries $0$ and $1$ as $t\to+\infty$ thus revealing the emergence of a small number of leaders that drive the transformer dynamics. The restriction $d=1$ follows from technical considerations, and some pathological phenomena may occur in higher dimensions (see Remark \ref{r:higherdimclus}). The proof may be found in {\bf \Cref{sec: self-att}}.
    But numerical experiments (as well as past empirical work) indicate that the result may extend to higher dimensions for almost all initial sequences of tokens.
\vspace{0.2cm}
    
\noindent {\it (ii)} In {\bf \Cref{s:clusterpolytope}} we first focus on the case $V=I_d$ as a natural canonical choice that enables us to establish some of the main tools of the paper. We introduce a time re-scaling reminiscent of the layer normalization heuristics to alleviate the possible divergence of tokens. 
    We show that along this scale the tokens converge to the boundary of a convex polytope. For almost all initial sequences they even converge to the vertices of the polytope, the number of which is significantly smaller than $n$. This elucidates the clustering phenomenon. (See Fig.~\ref{fig:diamond}.) When $V=-I_d$, all tokens following the dynamics \eqref{eq:trans_dyn} collapse to $0$. The proofs are given in  {\bf \Cref{sec: clustering.polytopes}}.
\vspace{0.2cm}

\noindent {\it (iii)} We build on these results and in  {\bf \Cref{s:simpleeigvalue}} consider the case wherein $V$ is only assumed to have a simple and positive leading eigenvalue. This setting is much closer to reality and corresponds to actual learned matrices $V$ (see Figure \ref{f:V.spectrum}). 
    We show that along the particular timescale, tokens cluster toward one of at most three hyperplanes which are determined by the corresponding eigenvector. The proof is given in {\bf \Cref{sec: clustering.hyperplanes}}.
\vspace{0.2cm}

\noindent {\it (iv)}  In {\bf Section \ref{s:mix}} we complete the results of Sections \ref{s:clusterpolytope} and \ref{s:simpleeigvalue} by addressing the case where the leading eigenvalue has multiplicity. This results in clustering toward the vertices of a convex polytope in some directions, and a linear subspace in the others. The proof is provided in {\bf \Cref{sec: clustering.hyperplanes.and.polytopes}}.
\vspace{0.2cm}

\noindent {\it (v)}  We also prove the global existence and uniqueness of solutions of all dynamics considered in this work (including the mean field limit). We refer the reader to {\bf \Cref{sec: introductory.facts}} for more details.
\vspace{0.2cm}

\noindent
We also observed numerically that our conclusions extend to more compound architectures (see \Cref{c:codim}, {\bf \Cref{sec:conclusion}} and {\bf \Cref{s: pictures}}).

{\small
\begin{table}[h!]
    \centering
    \begin{tabular}{l|l|l|l}
        \textbf{Value} &  \textbf{Key and Query} & \textbf{Limit geometry} & \textbf{Reference} \\
        \midrule
        $V=I_d$ & $Q^\top K\succ0$ & vertices of convex polytope &   Theorem~\ref{t:Idcase11int} \\
       $\lambda_1(V)>0$, simple & $\langle Q\varphi_1,K\varphi_1\rangle>0$ & union of $3$ parallel hyperplanes & Theorem~\ref{l:3hyperplanes11} \\
      $V$ paranormal & $Q^\top K\succ0$ & polytope $\times$ subspaces  & Theorem~\ref{t:multiplicity}\\
     \midrule
        $V=-I_d$ & $Q^\top K=I_d$ &single cluster at origin$^\ast$ &   Theorem~\ref{t:cas-Idintro}
    \end{tabular}
    \vspace{0.5cm}
    \caption{Summary of the clustering results of this work. $^\ast$All results except for the case $V=-I_d$ hold for the time-scaled dynamics~\eqref{e:Rres}.} \label{table:results}
\end{table}
}

\begin{remark}[Discrete time] \label{r:discretetime}
While we focus on the idealized setting of  self-attention dynamics in continuous-time, this is solely done for convenience and all of our methods are straightforwardly applicable to the discrete-time setting. (See also Remark \ref{r:discreterescaling}.) 
The discrete-time analog of \eqref{eq:trans_dyn} with time-step $\Delta t>0$ (equal to $1$ in practice) is simply the forward Euler iteration
\begin{equation}\label{e:discreteequation}
x_i((k+1)\Delta t) = x_i(k\Delta t)+\Delta t\sum_{j=1}^n \left(\frac{e^{\langle Q x_i(k\Delta t), Kx_j(k\Delta t)\rangle}}{\sum_{\ell=1}^n e^{\langle Q x_i(k\Delta t), Kx_\ell(k\Delta t)\rangle}}\right)V x_j(k\Delta t),
\end{equation}
for $k\in\mathbb{N}$.
\end{remark}

\subsection{Notation.} We denote by $\langle\cdot,\cdot\rangle$ and $\|\cdot\|$ the Euclidean dot product and norm respectively, and we  use the shorthand $[n]:=\{1,\ldots,n\}$.
For any matrix $M\in\mathbb{R}^{d\times d}$, we order its eigenvalues (repeated according to multiplicity) by decreasing order of modulus: $|\lambda_1(M)|\geq \ldots\geq |\lambda_d(M)|$.
We denote by $\|M\|_\op$ the $\ell^2$---operator norm of the matrix $M$, equal to the largest singular value of $M$.
Given a set $S\subset\mathbb{R}^d$, we define the distance of a point $x\in\mathbb{R}^d$ to $S$ as $\dist(x, S):=\inf_{s\in S}\|x-s\|$, and by $\Cx(S)$ the convex hull of $S$.

\subsection{Related work}\label{s:relatedwork}

Our study and results build on several different lines of work, and we draw some parallels in what follows.

\subsubsection{Analysis of attention-based models.} Given the widespread use of Transformers in natural language processing, there has been a surge of interest in understanding the function and significance of attention layers within these models.
In \cite{yun2019transformers}, the authors show that when treated as discrete-time systems with additional dense layers and multiple heads appended to the core attention mechanism, Transformers exhibit the universal approximation property.
In \cite{lu2019understanding}, the authors present, to the best of our knowledge, the first interacting particle systems perspective on Transformers. They then leverage the similarities between Transformers (with an additional feed-forward layer compared to \eqref{eq:trans_dyn}) and convection-diffusion equations to slightly improve the performance of Transformers by employing a Strang-Marchuk splitting scheme for time discretization.
In \cite{sander2022sinkformers}, the authors interpret system \eqref{eq:trans_dyn} as the characteristics of a continuity equation. Drawing on the similarities between \eqref{eq:trans_dyn} and Sinkhorn iterations, they propose a novel architecture dubbed \emph{Sinkformer}, which possesses the desirable property of being a Wasserstein gradient flow. 

\subsubsection{Quadratic complexity of Transformers.} 
The major computational challenge of Transformers is their high computational complexity, particularly when processing long sequences. Transformers require quadratic time and space complexity to process sequences, because each self-attention layer contains $n^2$ products of the form $\langle Qx_i,Kx_j\rangle$ (for $i,j\in[n]$). 
The empirical observation that the self-attention matrix $P$ is close to a low rank matrix---see \cite[Section 4.4]{surveytransformers} for references---is cited as the inspiration behind \emph{Linformers} \cite{wang2020linformer} and the fine-tuning algorithm LoRA~\cite{hu2022lora}. For both approaches, the low-rank structure is imposed rather than extracted from $P$ itself. Other methods called \emph{sparse attention} and \emph{block attention} have been proposed to reduce the quadratic complexity---see \cite[Section 2.2]{wang2020linformer} for references. In the spirit of these works, a foreshadowing of the clustering mechanism was invoked in \cite{vyas2020fast}, where queries are clustered into groups, again in view of reducing the quadratic complexity of self-attention. 
We point out that \cite{dong2021attention} previously demonstrated that without skip connections, the dynamics trivializes and all tokens quickly  lump together into a single tight cluster. Our work, in contrast, shows that in the presence of skip connections a rich cluster structure emerges.

Compared to the usual {\sf BERT}, {\sf ALBERT}~\cite{lanalbert} uses parameter-sharing across layers, meaning that the weight matrices $Q,K,V$ in \eqref{eq:trans_dyn}-\eqref{eq:P} do not depend on time, as in the present paper. 
This does not reduce the theoretical $O(n^2)$ complexity of the original Transformer, but, quoting \cite{lanalbert}, it "significantly reduce[s] the number of parameters for {\sf BERT} without seriously hurting performance, thus improving parameter-efficiency. An {\sf ALBERT} configuration similar to {\sf BERT}-large has 18x fewer parameters and can be trained about 1.7x faster. The parameter reduction techniques also act as a form of regularization that stabilizes the training and helps with generalization".

\subsubsection{Neural collapse.} Our results and conclusions bear a resemblance to some geometric aspects of neural collapse for classification tasks \cite{papyan2020prevalence}. 
A key geometric aspect of neural collapse is the observation that, during the training of deep neural networks, the representation of different classes in the later layers of the network tends to form a tight cluster around the vertices of a simplex.
The emergence of a simplex structure in the representation space provides insights into how the neural network organizes and separates the different classes. 

\subsubsection{Clustering in interacting particle systems.} The transformer dynamics \eqref{eq:trans_dyn} have a strong connection to the vast literature on nonlinear systems arising in the modeling of opinion dynamics and flocking phenomena. In addition to the classical Kuramoto model describing synchronization/clustering of oscillators \cite{kuramoto1975self, acebron2005kuramoto}, the model which is most similar to \eqref{eq:trans_dyn} is the Krause model \cite{krause2000discrete}
$$
\dot{x}_i(t) = \sum_{j=1}^n a_{ij}(x_j(t)-x_i(t)), \hspace{1cm} a_{ij} = \frac{\phi(\|x_i-x_j\|^2)}{\sum_{k=1}^n \phi(\|x_i-x_k\|^2)}.
$$
which is non-symmetric in general ($a_{ij}\neq a_{ji}$), much like \eqref{eq:trans_dyn}. When $\phi$ is compactly supported, it has been shown in \cite{jabin2014clustering}  that the particles $x_i(t)$ assemble in several clusters as $t\rightarrow +\infty$. Other models of opinion dynamics and flocking have been proposed and studied, among which the Vicsek model \cite{vicsek1995novel}, the Hegselmann-Krause model \cite{rainer2002opinion} and the Cucker-Smale model \cite{cucker2007emergent}.  These models may also exhibit a clustering behavior under various assumptions (see \cite{motsch2014heterophilious, cho2016emergence, ha2019complete} and the references therein). The transformer dynamics are also closely related to the dynamics employed in mean-shift clustering~\cite{Cheng95_mean_shift}, and this work indirectly sheds some light on its theoretical properties.

The analysis of transformer dynamics presents unique mathematical challenges that cannot be addressed using the tools developed for these more primitive models. In particular, our work demonstrates how different choices for the parameters lead to remarkably diverse clustering patterns. Much more remains to be discovered and this work is a first attempt a rigorous mathematical analysis of these synthetic dynamics.

\subsection*{Acknowledgments}
We thank Pierre Ablin, Léonard Boussioux, Enric Boix Adsera, Gabriel Peyré, Yair Shenfeld and Emmanuel Trélat for helpful discussions. 
C.L. was supported by the Simons Foundation Grant 601948, DJ. P.R. is supported by NSF grants IIS-1838071, DMS-2022448, and CCF-2106377. Y.P. is supported in part by the MIT-IBM Watson AI Lab.

\section{Asymptotic low-rankness of the self-attention matrix} \label{s:lowrank}

As mentioned in Section \ref{s:relatedwork}, numerical experiments in \cite{wang2020linformer} show that the self-attention matrix $P$, defined in \eqref{eq:P}, has an almost low-rank structure.
This observation has then been leveraged to reduce the quadratic complexity in the sequence length $n$ which is inherent to Transformers, resulting in a non-negligible decrease in the cost of training. 

As a warm-up to deriving complete geometric representations of the dynamics, our first result shows, in the simple $1d$ case that $P(t)$ indeed converges exponentially fast toward a matrix which is typically both Boolean and low-rank (see Fig.~\ref{f:attention1}).
Although there are clear obstructions to a rigorous extension of this result to higher dimensions (Remark \ref{r:higherdimclus}), 
numerical experiments appear to show that this result holds in greater generality, for almost all initial sequences (\Cref{s: pictures}). 

\begin{wrapfigure}[16]{R}{0.35\textwidth}
\centering
\vspace{-0.25cm}
\includegraphics[width=0.25\textwidth]{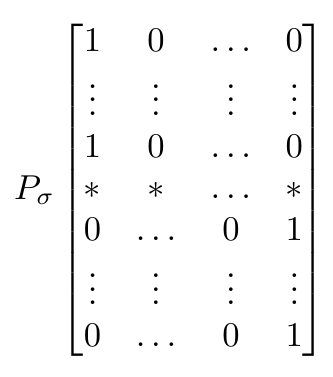}
\caption{\label{e:star}Elements in $\mathscr{P}$, where $P_{\sigma_i}\in\mathbb{R}^{n\times n}$ are some permutation matrices, and asterisks denote arbitrary non-negative reals which add to $1$.}
\end{wrapfigure}

To set this up, we introduce the set $\mathscr{P}$ of $n\times n$ matrices having the form illustrated in Fig.~\ref{e:star},
where the asterisks denote arbitrary non-negative real numbers which add up to $1$. The row of asterisks may actually be any row between the first and the last one. 

\begin{theorem}[Self-attention matrix converges to a low-rank Boolean matrix]\label{t:boolean}
Let $d=1$. Suppose that the scalars $(Q,K,V)$ satisfy $V>0$ and $QK>0$. For any initial sequence of pairwise distinct tokens $(x_1(0),\ldots,x_n(0))\in\R^n$, there exists some $P^*\in\mathscr{P}$ such that the self-attention matrix $P(t)$ defined in \eqref{eq:P} converges to $P^*$ as $t\rightarrow +\infty$. 
\end{theorem}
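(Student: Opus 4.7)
The crux of the scalar case is that all tokens obey a \emph{common} time-inhomogeneous scalar ODE: writing $\beta:=QK>0$ and
\[
F(y;\mathbf{x}) \;:=\; \sum_{j=1}^n \frac{e^{\beta y x_j}}{\sum_\ell e^{\beta y x_\ell}}\, x_j,
\]
each trajectory satisfies $\dot y(t) = V F(y(t);\mathbf{x}(t))$ with initial condition $y(0)=x_i(0)$. Two structural consequences are immediate: (a) uniqueness of ODE solutions forbids trajectories from crossing, so WLOG $x_1(t)<\cdots<x_n(t)$ for every $t\ge 0$; and (b) a direct computation yields $\partial_y F(y;\mathbf{x}) = \beta\,\mathrm{Var}_{P(y;\mathbf{x})}(\mathbf{x}) \ge 0$, so $F(\cdot;\mathbf{x})$ is non-decreasing, making the gap $x_j(t) - x_i(t)$ with $j>i$ non-decreasing in $t$ and in particular bounded below by $x_j(0)-x_i(0)>0$.

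I next exclude a uniformly bounded configuration. If $|x_i(t)|\le M$ for all $i,t$, then uniformly $P_i(y;\mathbf{x}(t)) \ge e^{-2\beta M^2}/n =: c$, whence $\mathrm{Var}_{P(y)}(\mathbf{x}) \ge c^2(x_n(0)-x_1(0))^2$ and integrating yields $\frac{d}{dt}(x_n-x_1) = V(F(x_n)-F(x_1))$ bounded below by a positive constant, contradicting boundedness. Iterating this argument on sub-configurations (the diverging tokens asymptotically decouple from the rest because $P_{ij}\to 0$ whenever $\mathrm{sign}(x_i) \cdot \mathrm{sign}(x_j)$ is eventually $-1$ with one factor diverging) produces a stratification: there exist indices $0 \le k_- \le k_+ \le n$ with $k_+ - k_- \le 1$ such that $x_i(t) \to -\infty$ for $i \le k_-$, $x_i(t) \to +\infty$ for $i > k_+$, and at most one borderline index has a bounded trajectory converging to some $x^\ast \in \R$. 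The borderline case is degenerate: the monotone gaps force any bounded limit $x^\ast$ to be the unique common accumulation point, and the driving term in $\dot x_{i^\ast}$ must asymptotically vanish, which happens only for special initial data (e.g., symmetric configurations with $\bar{\mathbf{x}}(t) \equiv 0$).

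Given the stratification, convergence of $P$ follows at exponential rate. For $i > k_+$ and $j<n$, the monotone-gap bound gives
\[
\beta\, x_i(t)\bigl(x_n(t)-x_j(t)\bigr) \;\ge\; \beta\, x_i(t)\bigl(x_n(0)-x_j(0)\bigr) \;\longrightarrow\; +\infty,
\]
so $P_{i,j}(t)/P_{i,n}(t) = \exp\!\bigl(-\beta x_i(x_n-x_j)\bigr) \to 0$ and the $i$-th row of $P$ converges to $e_n^{\top}$. A symmetric argument gives $P_{i,\cdot}(t) \to e_1^{\top}$ for $i \le k_-$. Finally, if a borderline index $i^\ast$ exists, $P_{i^\ast,\cdot}(t)$ is a bounded probability vector; one argues convergence by expressing each entry as a continuous function of $(x_{i^\ast}(t), \mathbf{x}(t))$ in the extended reals, using that every other trajectory already has a limit in $[-\infty,+\infty]$ and that $x_{i^\ast}(t) \to x^\ast$. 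This produces the single asterisk row of $P^\ast$, and assembling the three row types establishes $P(t)\to P^\ast \in \mathscr{P}$.

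The main obstacle is the stratification step: rigorously carrying out the iterative bounded-case exclusion, ruling out partial-boundedness pathologies, and tracking tokens whose sign may change in finite time (e.g., an initially negative token pulled positive by a strongly positive mass, which must subsequently be reassigned to the $+\infty$ cluster). The monotone-gap identity combined with the uniform variance lower bound in the bounded case is the crucial tool, but its iterative application to nested sub-configurations demands careful bookkeeping. Once the stratification is in place, the exponential concentration of $P$ is an almost immediate consequence of the gap bound and requires no further estimates.
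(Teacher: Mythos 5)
Your overall architecture matches the paper's: monotone gaps from convexity of the log-sum-exp potential (your variance identity $\partial_y F=\beta\,\mathrm{Var}\ge 0$ is the same fact), exclusion of an all-bounded configuration via a positive lower bound on the spreading rate (the paper uses a compactness argument on the set of admissible configurations; your uniform softmax lower bound is a clean, slightly more quantitative alternative), at most one bounded particle, doubly exponential convergence of the rows indexed by diverging particles, and a single exceptional row. So the skeleton is sound and essentially the paper's.

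The genuine gap is the convergence of the asterisk row. You propose to conclude by "expressing each entry as a continuous function of $(x_{i^\ast}(t),\mathbf{x}(t))$ in the extended reals." This fails, because the bounded particle necessarily converges to $0$ (not to a generic $x^\ast$: if $x_{i^\ast}(t)>\varepsilon$ along an unbounded set of times while $x_n\to+\infty$, the drift forces $x_{i^\ast}\to+\infty$, and symmetrically for $-\varepsilon$), while the other particles diverge. Hence every off-diagonal exponent $\beta\,x_{i^\ast}(t)x_j(t)$ is a $0\cdot\infty$ indeterminate form, and $(u,v)\mapsto e^{\beta uv}$ has no continuous extension to $(0,\pm\infty)$. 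Resolving this requires second-order asymptotics: the paper proves $x_j(t)=\gamma_j e^t+o(e^t)$ for the diverging particles (Lemma \ref{l:exactasymptotic}) and then shows $e^t x_{i_0}(t)\to\theta_0$, the unique zero of an explicit auxiliary function $g$ built from the $\gamma_j$ (Lemma \ref{l:boundedother}); only then do the products $x_{i_0}(t)x_j(t)\to\theta_0\gamma_j$ converge and determine the asterisk entries. The same indeterminacy arises when the bounded particle is extremal ($i_0\in\{1,n\}$), where the paper must prove $x_j(t)x_n(t)-x_n(t)^2\to-\infty$ by a separate monotonicity argument (Lemma \ref{l:boundedxn}). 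Relatedly, you assert but do not prove that the bounded trajectory converges at all ("the monotone gaps force any bounded limit $x^\ast$ to be the unique common accumulation point" does not rule out oscillation); this too is supplied in the paper by the sign/threshold dichotomy derived from \eqref{e:lowerboundxi}. Your stratification step, which you flag as the main obstacle, is indeed delicate but follows the paper's route; the asterisk row is where your proposal, as written, does not close.
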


The proof may be found in \Cref{sec: self-att}.
The rate of convergence toward $P^*$ is in fact doubly exponential in $t$ for coefficients outside the row of asterisks in Fig.~\ref{e:star}. The proof the theorem also reveals that for almost all initial sequences of pairwise distinct tokens, $P^*$ is actually of rank $1$ or $2$, i.e., the row of asterisks is equal to either $e_1=(1,0,\ldots,0)$ or $e_n=(0,\ldots,0,1)$.

The interpretation of Theorem~\ref{t:boolean} is that in the $1d$ case, at most three tokens \emph{capture the attention} of all tokens except at most one. Typically, these \emph{leading} tokens are those carrying the largest amount of information. 
This is also illustrated in Fig.~\ref{f:attention}.
Since the tokens $x_i$ here evolve on $\mathbb{R}$, the right-most and left-most ones (which typically tend toward $\pm\infty$) capture the attention of all the others. 

\begin{figure}[h]
    \centering
    \includegraphics[width=0.22\textwidth]{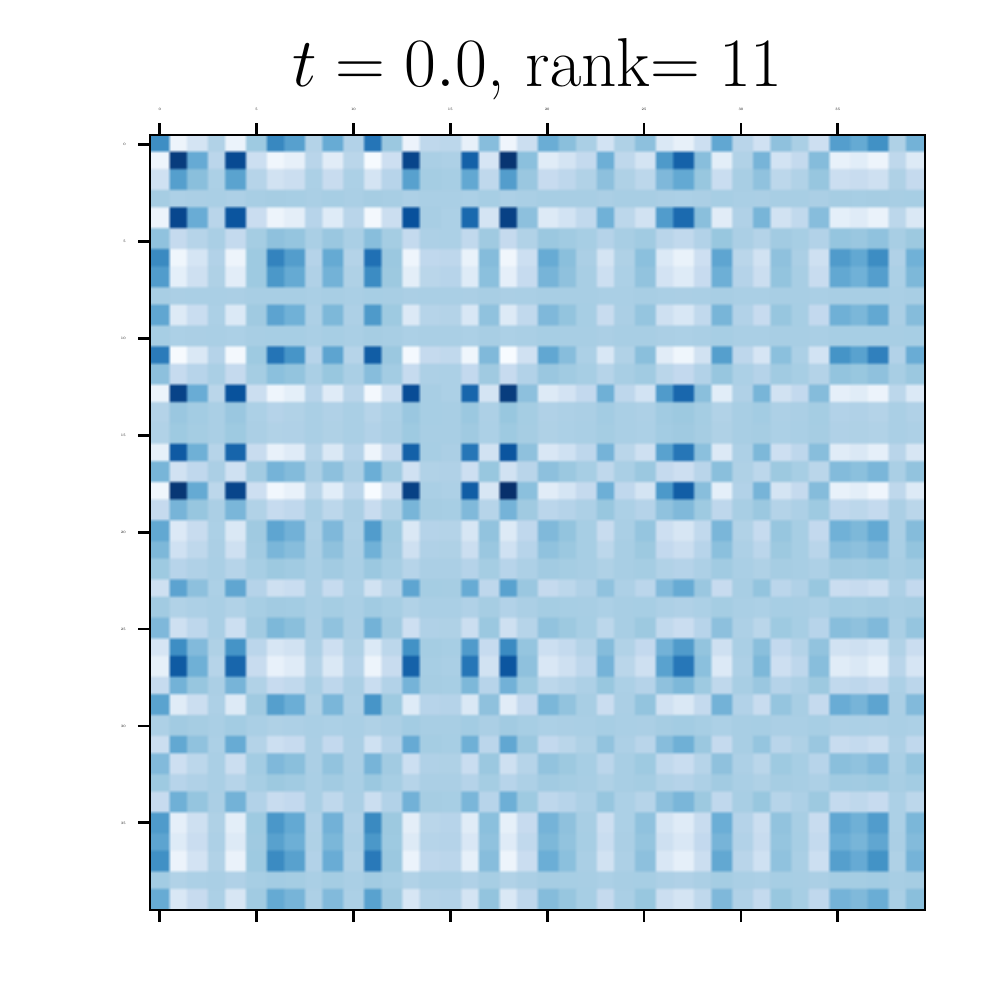}
    \hspace{0.1cm}
    \includegraphics[width=0.22\textwidth]{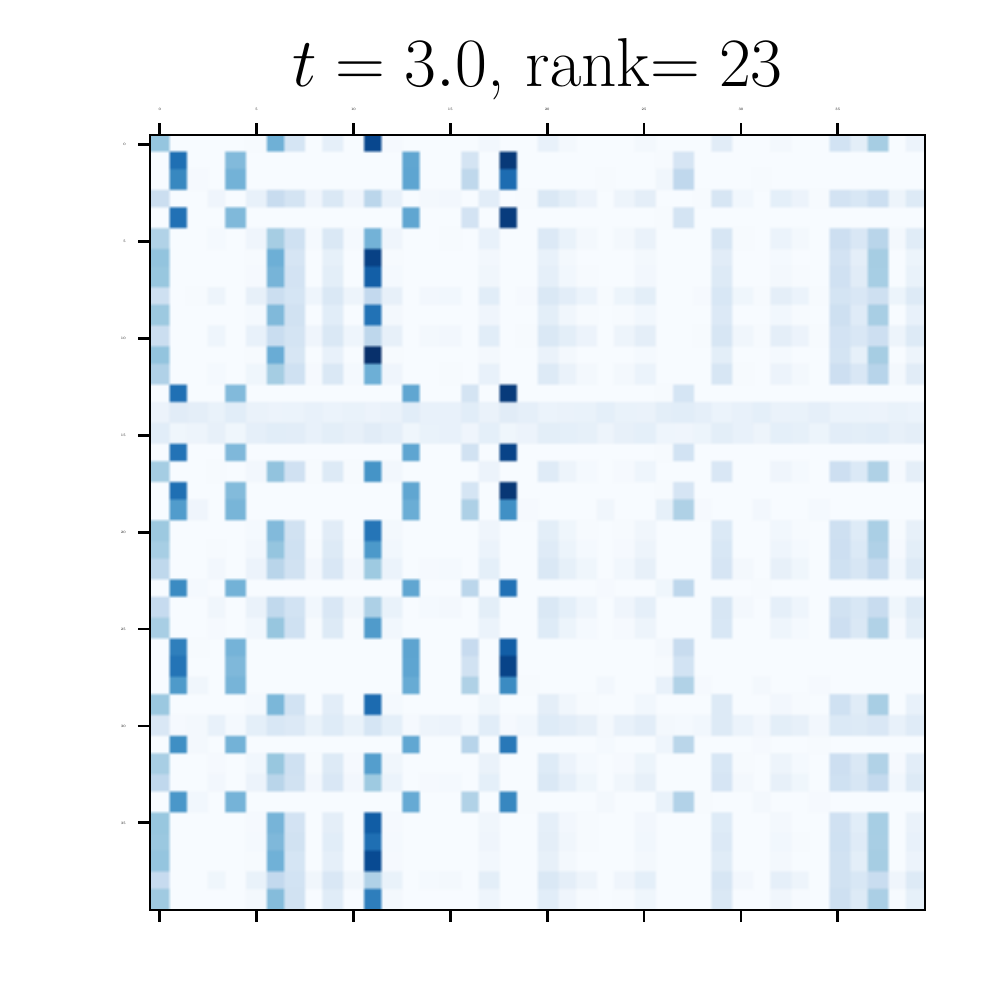}
    \hspace{0.1cm}
    \includegraphics[width=0.22\textwidth]{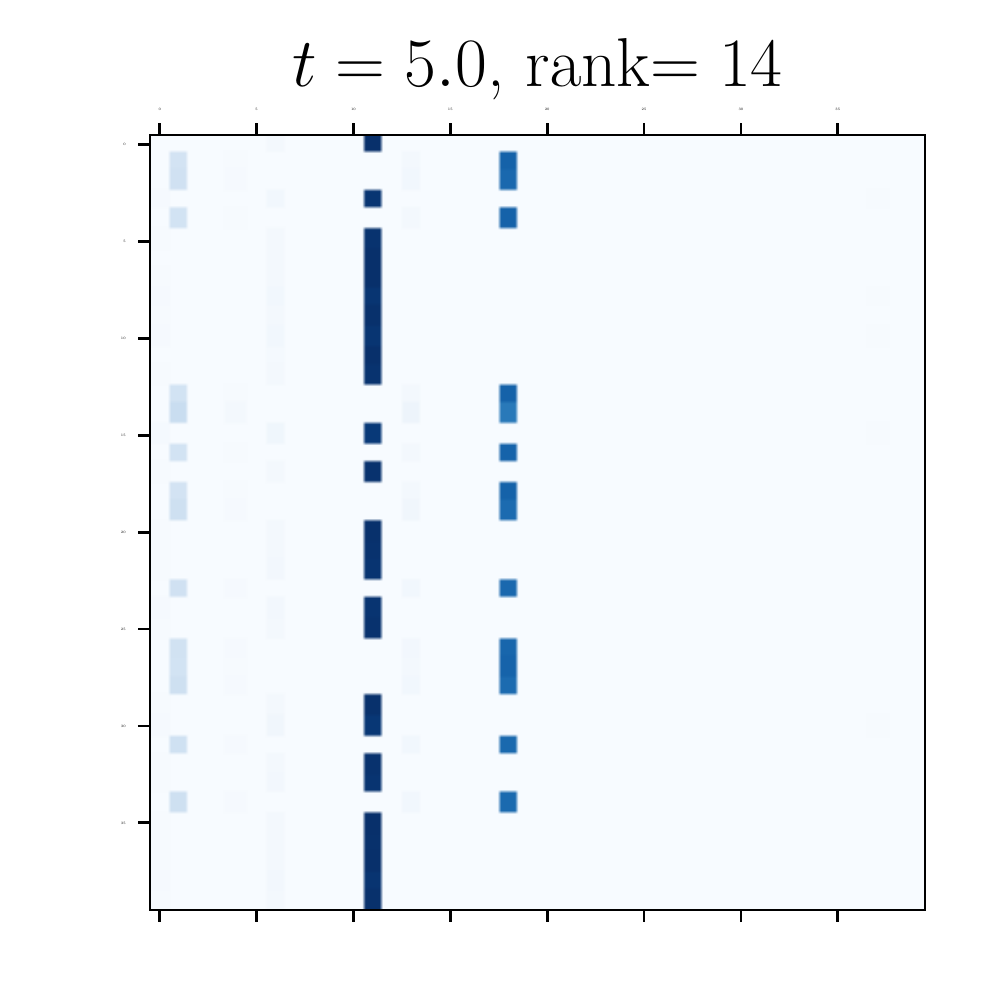}
    \hspace{0.1cm}
    \includegraphics[width=0.22\textwidth]{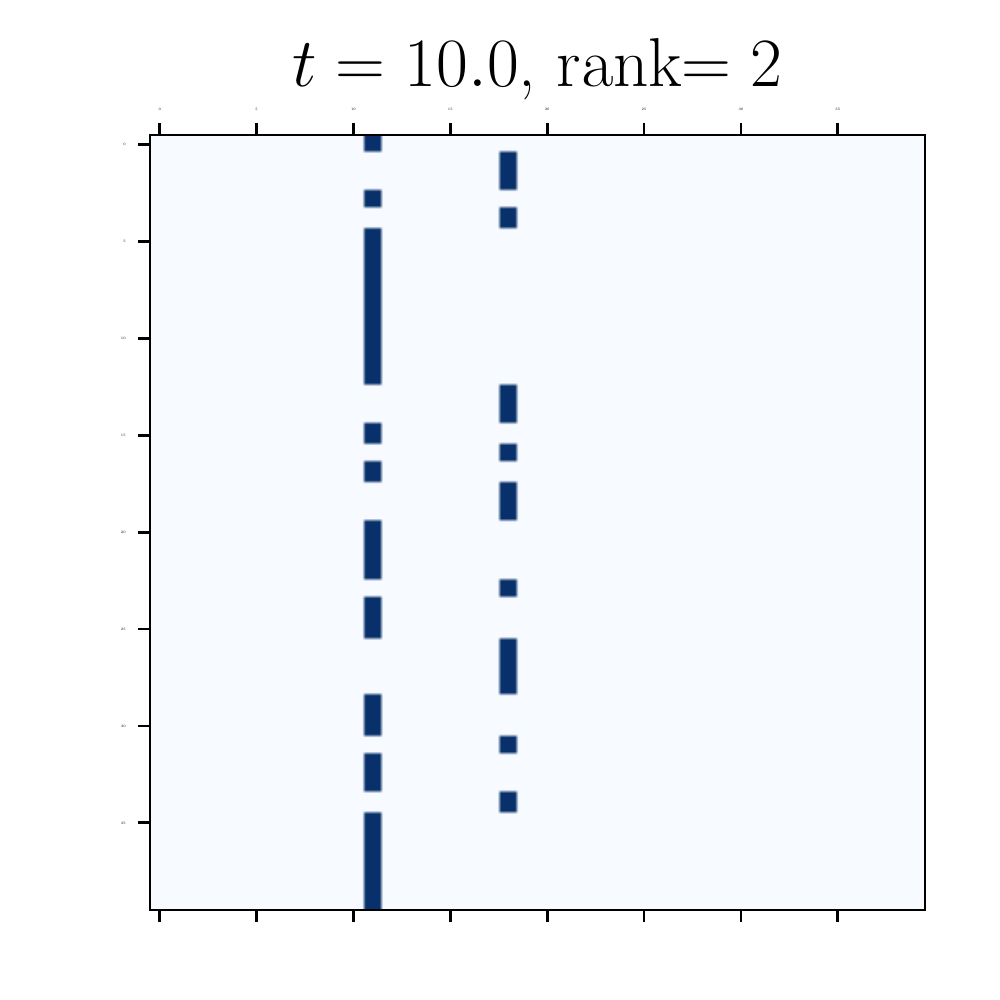}
    \caption{An illustration of the asymptotics of $P(t)$ entailed by Theorem~\ref{t:boolean} for $n=40$ tokens, with $Q=K=1$ and $V=1$. (See \Cref{s: pictures} for details on computing.) Increasing $n$ has no effect on this behavior of $P(t)$---see Fig.~\ref{f:largenP}.} 
    \label{f:attention1}
\end{figure}

\begin{figure}[h]
    \centering
    \includegraphics[scale=0.425]{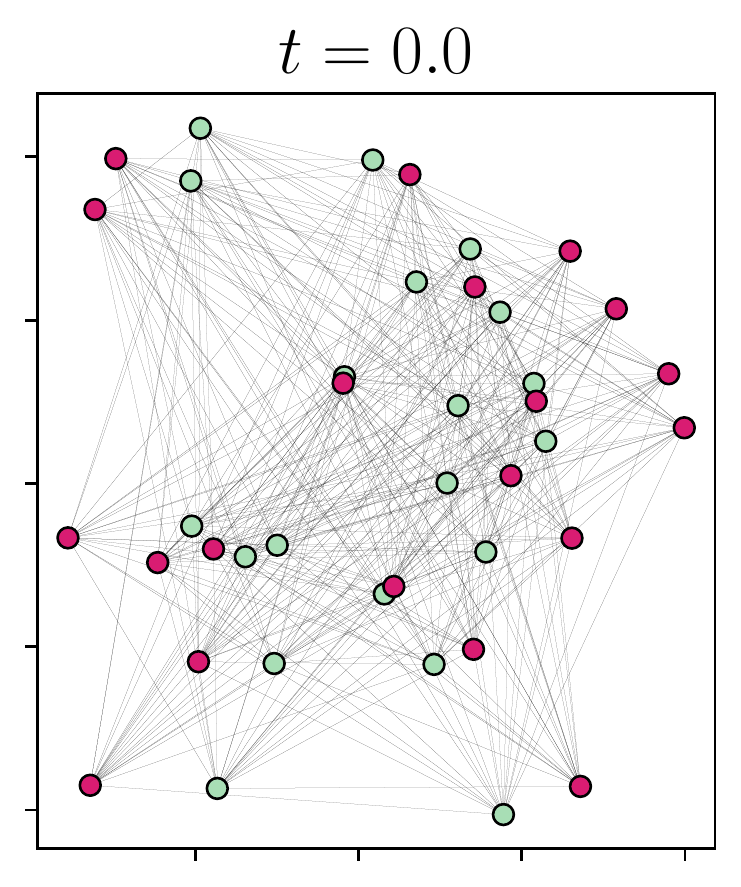}
    \hspace{0.25cm}
    \includegraphics[scale=0.425]{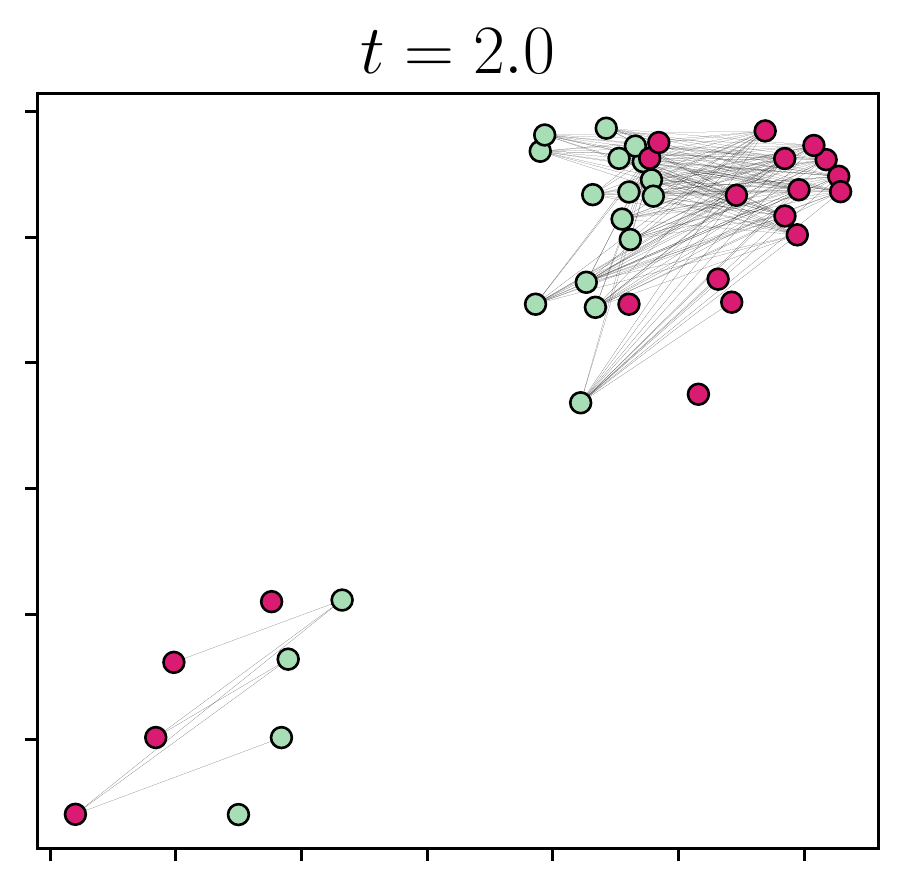}
    \hspace{0.25cm}
    \includegraphics[scale=0.425]{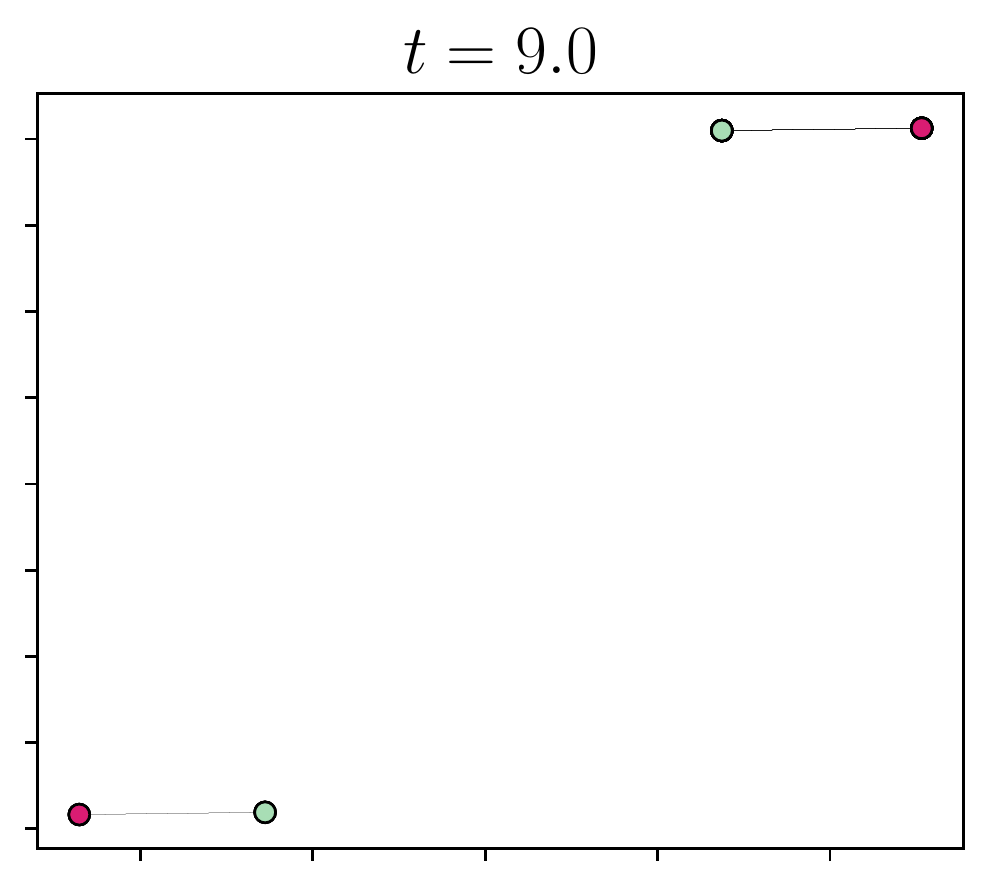}
    \caption{The clouds $\{Kx_i(t)\}_{i\in[20]}$ (green) and $\{Qx_j(t)\}_{j\in[20]}$ (purple) for $d=2$ where pairwise points of clouds are connected by a line of width equal to $P_{ij}(t)$. Here $V\succ0$ and $Q\succ0$ are random matrices and $K=I_2$. 
    The creation of clusters is reflected by the rank $\leq2$ structure of the self-attention matrix $P(t)$. This interaction echoes findings illustrated in the original paper \cite{vaswani2017attention}---for instance, Figures 3-5 therein.}
    \label{f:attention}
\end{figure}

\section{Clustering toward vertices of convex polytopes} \label{s:clusterpolytope}

In the rest of the paper, we seek to taxonomize various \emph{clustering} results for the solutions to \eqref{e:Rres} when $t\to+\infty$, depending the sign and the multiplicity of the eigenvalues of $V$. We begin by focusing on what may appear to be the most natural\footnote{Note that the case $V=-I_d$ may appear equally natural. For such a choice of $V$, we show in \Cref{s:c<0} that the dynamics converge to a single cluster located at the origin. Multiplicative constants preserving the sign, i.e., $V=\pm cI_d, c>0$ trivially yield the same conclusions.} case $V=I_d$, as is also done in \cite{sander2022sinkformers}. In fact, we demonstrate (theoretically and numerically) later on, clustering is a generic phenomenon which holds under much less restrictive assumptions.

The transformer dynamics considered in \eqref{eq:trans_dyn} does not contain a layer normalization mechanism typically encountered in  practice \cite{vaswani2017attention}. In absence of such a device, tokens may diverge to infinity as in Theorem~\ref{t:boolean}. In fact, the norm of the tokens $x_i(t)$ typically diverges exponentially toward $+\infty$ for any $d$: this is expected, by analogy with the non-trivial solutions to $\dot{y}(t)=y(t)$.

To remedy this situation, we take inspiration from the solution $y(t)=e^{tV}y(0)$ to $\dot{y}(t)=Vy(t)$. Namely, for any $i\in[n]$ we consider the \emph{rescaled} tokens 
$$z_i(t):=e^{-tV}x_i(t),$$ 
which solve
\begin{equation}\label{e:Rres}
\dot{z}_i(t) = \sum_{j=1}^n\left(\frac{e^{\left\langle Qe^{tV}z_i(t),Ke^{tV}z_j(t)\right\rangle}}{\sum_{k=1}^n e^{\left\langle Qe^{tV}z_i(t),Ke^{tV}z_k(t)\right\rangle}}\right)V(z_j(t)-z_i(t)), \hspace{1cm} t\in[0, +\infty).
\end{equation}
The initial condition remains the same: $x_i(0)=z_i(0)$ for any $i\in[n]$. More importantly, the coefficients of the self-attention matrix for the rescaled tokens $z_i(t)$ are the same as those for the original tokens $x_i(t)$. Whence, the conclusion of Theorem~\ref{t:boolean} also applies to the dynamics \eqref{e:Rres}. 
We see this rescaling of tokens as a mathematically justified surrogate for the layer normalization.

The appearance of the exponential factor within the self-attention kernel facilitates the analysis of \eqref{e:Rres} compared to \eqref{eq:trans_dyn}, and it is in fact instrumental in the proofs of all results that follow. Each result on the rescaled tokens $z_i(t)$ then gives information on the dynamics of the original tokens $x_i(t)$ by virtue of the relation $x_i(t)=e^{tV}z_i(t)$. 

We are now able to state the main result of this section on the case $V=I_d$. The following theorem shows that the tokens $z_i(t)$ evolving per  dynamics \eqref{e:Rres} converge to the boundary of a convex polytope as $t\to+\infty$. 
We present here a simplified but weaker version of our result for convenience, and refer the reader to Theorem~\ref{t:Idcase11} for a complete statement.

\begin{theorem}[Convergence to points on the boundary of a convex polytope] \label{t:Idcase11int}
Suppose $V=I_d$ and $Q^\top K\succ 0$. 
Then, for any initial sequence of tokens 
$\{z_i(0)\}_{i\in[n]}\subset\R^d$, there exists a convex polytope $\K\subset \R^d$ such that for any $i\in[n]$, $z_i(t)$ converges either to $0$ or to some point on $\partial \mathcal{K}$ as $t\to+\infty$. 
\end{theorem}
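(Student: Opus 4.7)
The plan is to work with the rescaled dynamics \eqref{e:Rres} for $V = I_d$, which, upon setting $A := Q^\top K \succ 0$, becomes
\begin{equation*}
\dot z_i(t) = \sum_{j=1}^n P_{ij}(t)\bigl(z_j(t) - z_i(t)\bigr), \qquad P_{ij}(t) = \frac{\exp\!\bigl(e^{2t}\langle z_i(t), A z_j(t)\rangle\bigr)}{\sum_{k=1}^n \exp\!\bigl(e^{2t}\langle z_i(t), A z_k(t)\rangle\bigr)}.
\end{equation*}
The structural fact driving the whole analysis is that each $\dot z_i$ is a convex combination of the differences $z_j - z_i$, so every token always drifts into the current convex hull of the cloud.

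My first step is to show that $(\Cx(\{z_i(t)\}))_{t \ge 0}$ is non-increasing for inclusion. For any direction $d \in \mathbb{R}^d \setminus \{0\}$, the function $t \mapsto \max_i \langle d, z_i(t)\rangle$ is non-increasing: at an index $i^\ast$ attaining the maximum, $\langle d, \dot z_{i^\ast}\rangle = \sum_j P_{i^\ast j}(\langle d, z_j\rangle - \langle d, z_{i^\ast}\rangle) \le 0$, and a Danskin-type upper Dini argument lifts this pointwise inequality to monotonicity of the max. Consequently $\K := \bigcap_{t \ge 0} \Cx(\{z_i(t)\})$ is a non-empty, bounded, closed convex set, and every trajectory $\|z_i(t)\|$ is uniformly bounded.

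The second and main step is to prove that each $z_i(t)$ admits a unique limit $p_i$ as $t \to \infty$. The quantitative lever is the \emph{doubly-exponential} sharpening of the softmax: if $\langle z_i(t), A z_j(t)\rangle < \max_k \langle z_i(t), A z_k(t)\rangle - \epsilon$ on some interval, then $P_{ij}(t) \lesssim e^{-\epsilon e^{2t}}$ there. Combined with the hull monotonicity from the previous step, I would show that for each $i$ the maximizer map $j^\ast(i, t) := \argmax_j \langle z_i(t), A z_j(t)\rangle$ eventually stabilizes. Once that happens, $\dot z_i(t) = z_{j^\ast(i)}(t) - z_i(t) + O\!\bigl(e^{-c\,e^{2t}}\bigr)$ is an asymptotically linear contractive ODE, and a Grönwall/telescoping argument yields $p_i := \lim_{t \to \infty} z_i(t) \in \K$.

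Finally I would characterize the limits and conclude that $\K$ is a polytope. Since $z_i(t) \to p_i$ for every $i$, passing $t \to \infty$ in the support-function identity gives $\K = \Cx(\{p_1, \dots, p_n\})$, which is a polytope with at most $n$ vertices. Now fix $i$ with $p_i \ne 0$; since $A$ is invertible, $A^\top p_i \ne 0$. Passing the softmax to the limit, $P_{ij}(t)$ concentrates on $J^\ast(i) := \argmax_j \langle A^\top p_i, p_j\rangle$, and testing $\dot z_i(t)$ against $A^\top p_i$ forces $p_i \in \Cx(\{p_j : j \in J^\ast(i)\})$, so $p_i$ itself maximizes the nonzero linear functional $\langle A^\top p_i, \cdot\rangle$ over $\K$; consequently $p_i \in \partial \K$, as claimed. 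The principal obstacle is Step~2: a priori the argmax could flip infinitely often among competing candidates and prevent convergence of $z_i(t)$, and the doubly-exponential sharpening of $P_{ij}$ must be delicately combined with the shrinkage of $\Cx(\{z_i(t)\})$ to rule out such oscillations.
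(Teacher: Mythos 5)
Your Step 1 (monotonicity of the convex hull via support functions) is correct and is essentially the paper's Proposition~\ref{p:noninc}. The genuine gap is your Step 2, and you have in fact named it yourself: you assert that the maximizer map $j^\ast(i,t)=\argmax_j\langle z_i(t),Az_j(t)\rangle$ ``eventually stabilizes,'' but you give no argument for this, and the subsequent contraction estimate $\dot z_i = z_{j^\ast(i)}-z_i+O(e^{-ce^{2t}})$ depends entirely on it. Worse, the statement is false as formulated: the attention weights need not concentrate on a single index in the limit. If $z_i(t)\to 0$ (a case the theorem explicitly allows), then $\langle z_i(t),Az_j(t)\rangle\to 0$ for every $j$ and the softmax does not concentrate at all; and if $z_i(t)$ converges to a point $w$ in the relative interior of a face of $\K$, that limit is characterized by $\|Aw\|^2=\langle Aw,Av_j\rangle$ holding \emph{simultaneously} for all vertices $v_j$ of that face (with $A=(Q^\top K)^{1/2}$), so the argmax is a persistent tie among several indices. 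Ruling out infinite oscillation of the near-argmax set is precisely the hard part of the theorem, and the proposal contains no mechanism for it.

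The paper's proof uses a different device that you would need to supply: it introduces the scalar functional $t\mapsto\|Az_i(t)\|^2$ and the \emph{finite} candidate limit set $\setS=\{w\in\K:\|Aw\|^2=\max_j\langle Aw,Av_j\rangle\}$, proves the quantitative lower bound $\frac{\diff}{\diff t}\|Az_i(t)\|^2\geq\gamma\delta$ whenever $z_i(t)$ lies outside the $\delta$-neighborhood $\setS_\delta$ (Step~2 of the proof of Theorem~\ref{t:Idcase11}, via Claim~\ref{cl:gamma'}), and then observes that each transit between distinct connected components of $\setS_\delta$ forces $\|Az_i\|^2$ to increase by a fixed amount; since the components are strictly ordered by the values of $\|Ax\|^2$, the order is transitive, and there are finitely many components, the particle must eventually settle near a single point of $\setS$. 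This monotone-energy argument completely sidesteps any stabilization of the argmax. Your final step (identifying the limits and placing them on $\partial\K\cup\{0\}$) is essentially the paper's Claim~\ref{cl:propofS} and is fine modulo Step~2, but as written the proposal does not constitute a proof.
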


The convex polytope $\mathcal{K}$ is completely determined by the initial sequence of tokens, and $Q^\top K$ (refer to Claim \ref{cl:propofS}).
Numerical experiments (e.g. Fig. \ref{fig:th32}) also lead us to claim that for almost all initial sequences of tokens, one should expect convergence of $z_i(t)$ ($i\in[n]$) toward some vertex of $\K$. (Furthermore, the number of vertices of $\K$ is often found to be significantly smaller than $n$.) 
It may however happen that for initial sequences taken in some null set (not seen when tokens are drawn at random) some tokens converge to other points of the boundary $\partial\K$, namely in the interior of facets. On the other hand, for generic choices of initial sequences, we do not see a way to predict $\mathcal{K}$ explicitly besides running the full dynamics.

\begin{figure}[h!]
    \centering
    \includegraphics[width=0.45\textwidth]{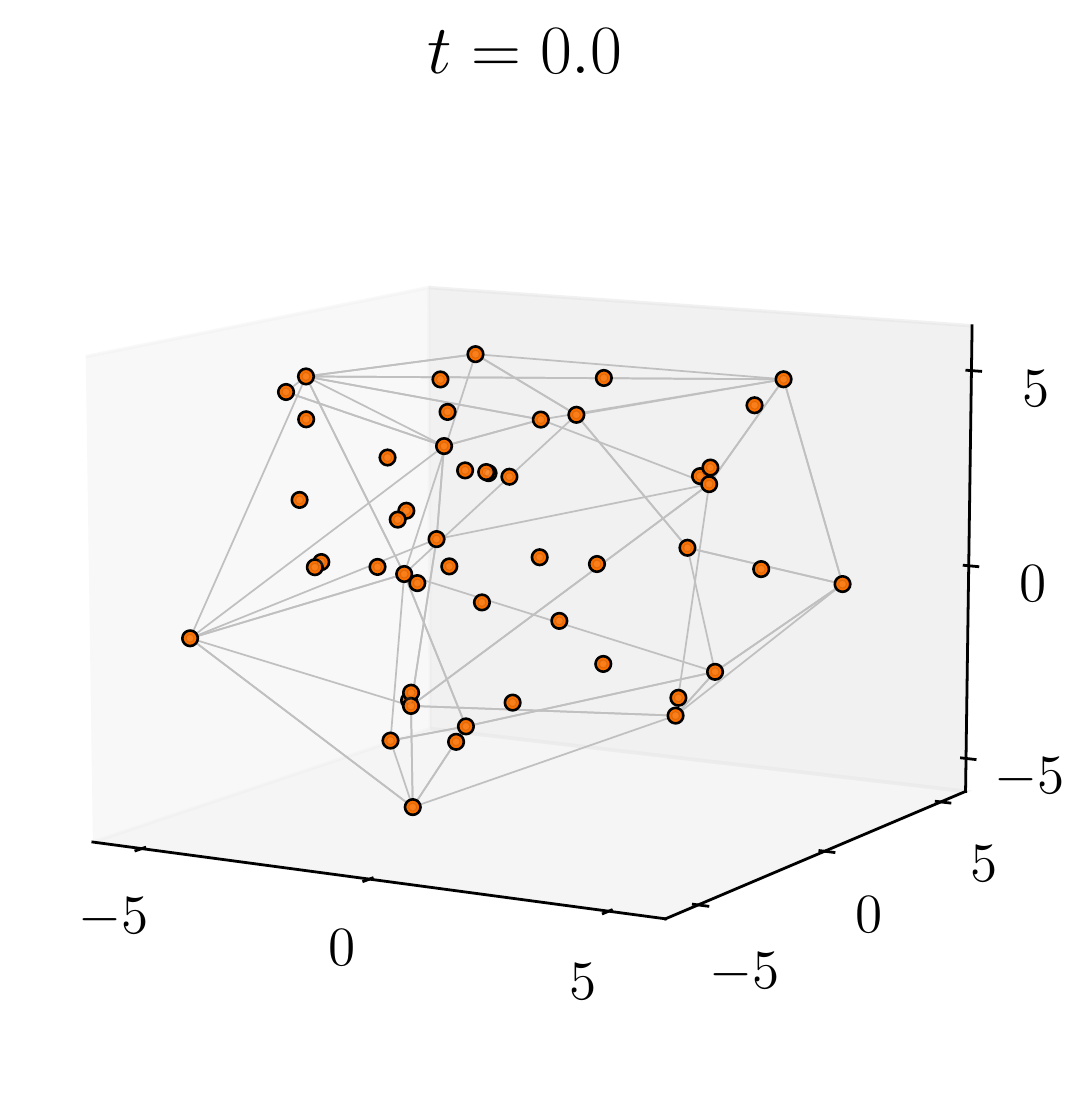}
    \includegraphics[width=0.45\textwidth]{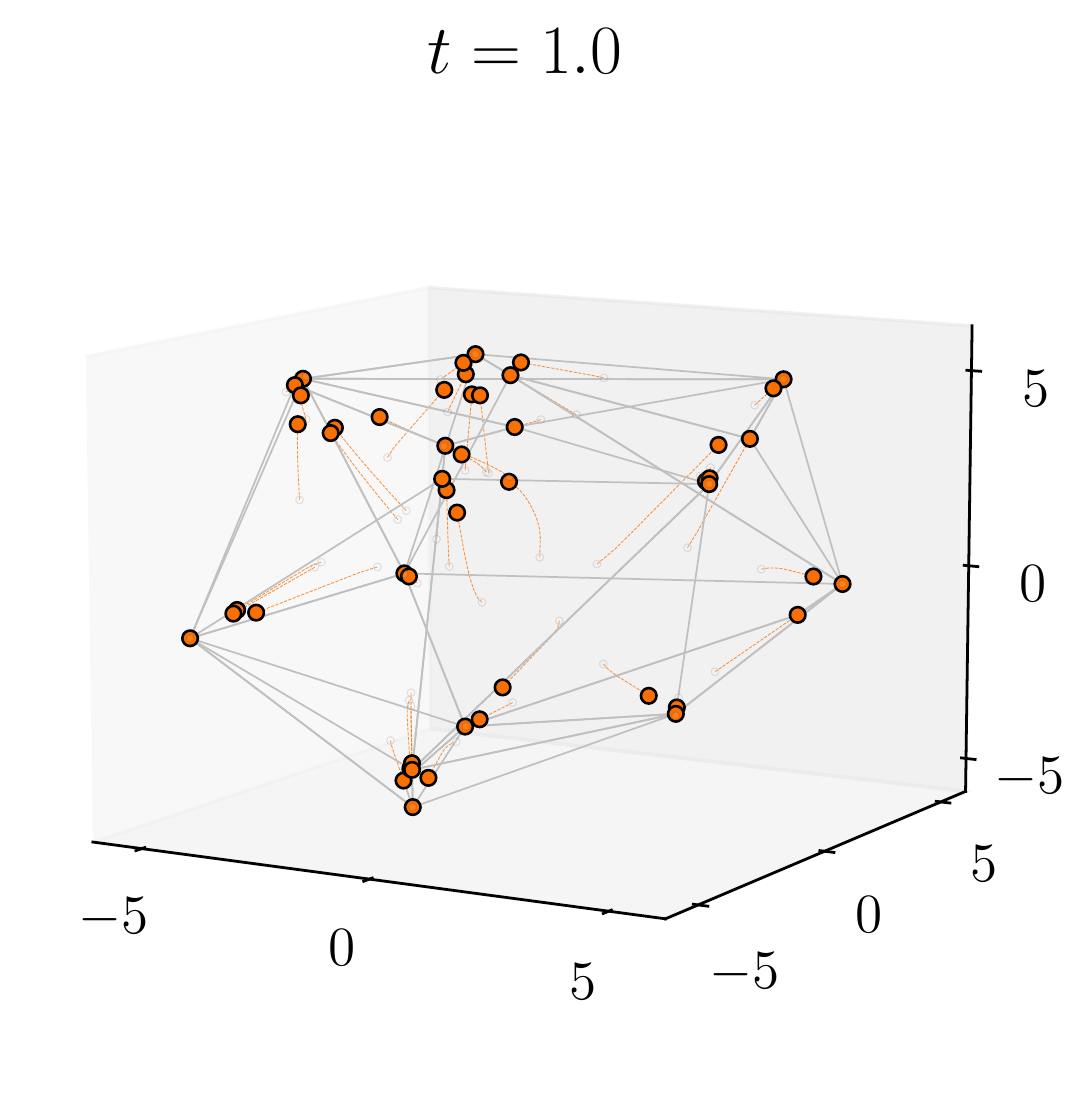}
    \includegraphics[width=0.45\textwidth]{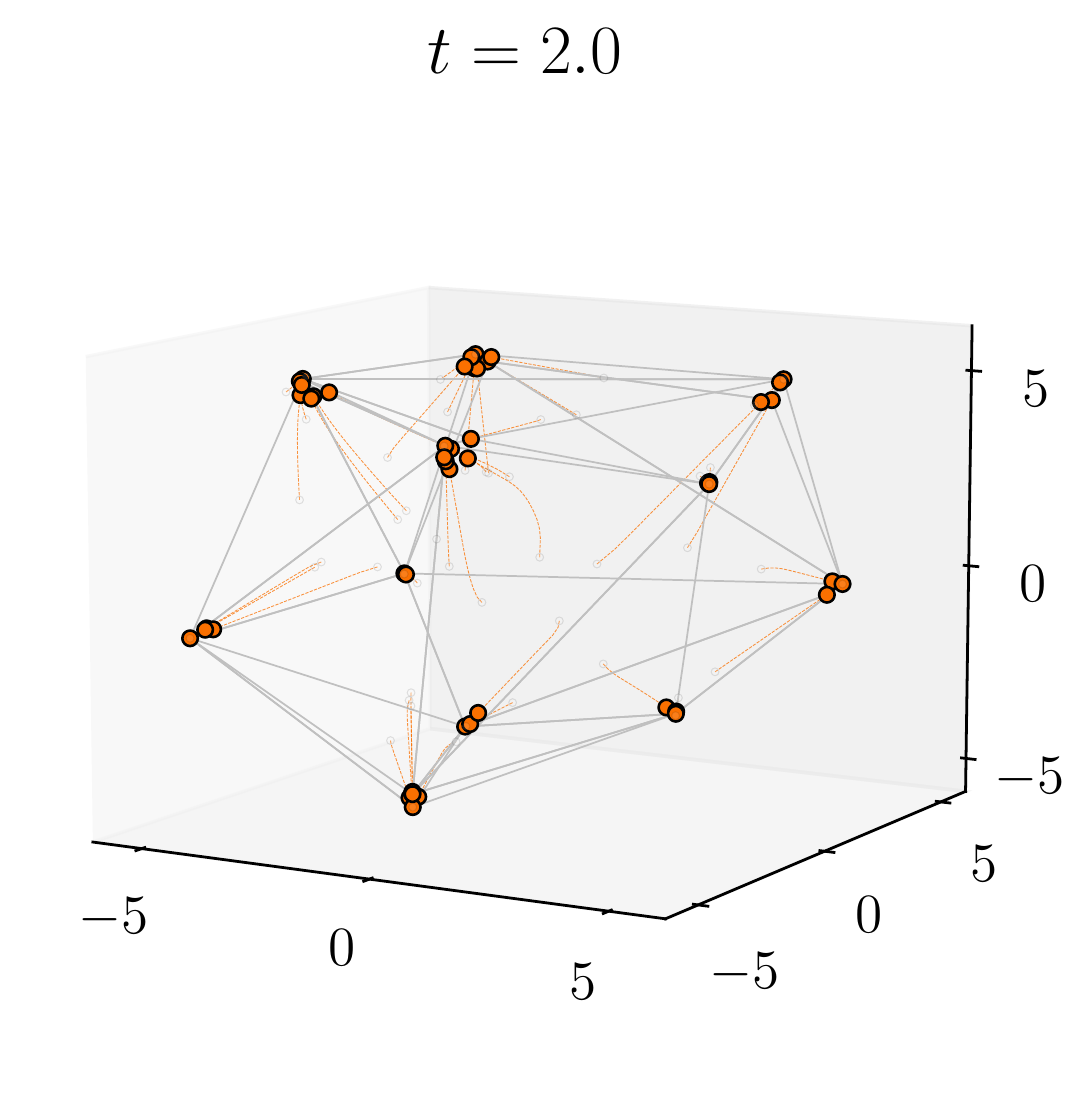}
    \includegraphics[width=0.45\textwidth]{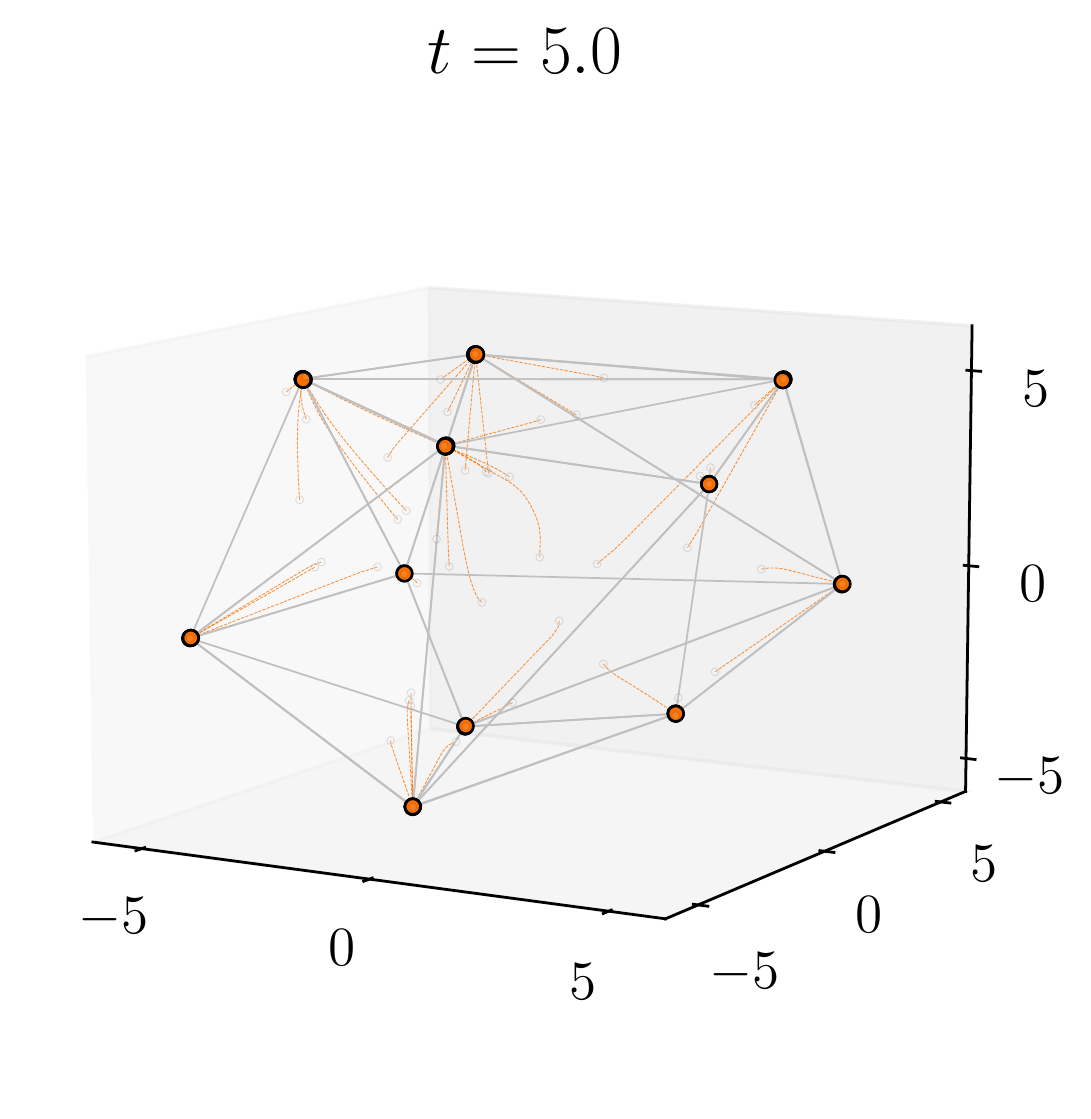}
    \caption{A toy example illustrating Theorem~\ref{t:Idcase11int} with $n=40$ tokens in $\mathbb{R}^3$. Here $Q=K=I_3$. The tokens converge to one of the 
    vertices (\emph{leaders}) of the limiting convex polytope.}
    \label{fig:th32}
\end{figure}

Recall that the points $x_i(t)=e^{t}z_i(t)$ when $V=I_d$ follow the original dynamics \eqref{eq:trans_dyn}. Akin to Theorem~\ref{t:boolean}, this result also shows the emergence of a set of \emph{leaders} (given by the vertices of $\K$) attracting all tokens as $t$ grows. It has been experimentally observed (first in \cite{vaswani2017attention}) that in trained Transformers, tokens focus their attention on local leaders in a way that seems to reproduce the syntactic and semantic structure of sentences.

The proof of Theorem~\ref{t:Idcase11int} is postponed to  \Cref{sec: clustering.polytopes}, and amounts to a couple of effects entailed by the dynamics. First of all, the convex hull of the particles is shrinking over time (Proposition \ref{p:noninc}). This is due to the fact that the distance of the particle nearest to any half-space (not containing the particles) increases with time. On the other hand, the convex hull ought not collapse since particles which have not concentrated near the boundary of the limiting polytope will continue to increase in magnitude until they themselves reach this boundary (Step 2 in the proof). This occurs due to the time-rescaling.

\begin{remark}
    Assuming $Q^\top K\succ0$ does not seem to be essential for our conclusions; instead, it guides the direction of the proof. To emphasize the broader validity of our conclusion beyond this specific assumption, we conducted additional experiments (refer to Section \ref{sec: first.new}) which suggest that Theorem~\ref{t:Idcase11int} (as well as Theorems \ref{l:3hyperplanes11} and \ref{t:multiplicity} stated below) holds in more generality.
\end{remark}

\begin{remark}[Rate of convergence]
    Although Theorem~\ref{t:Idcase11int} (as well as Theorems \ref{l:3hyperplanes11} and \ref{t:multiplicity} stated below) does not specify a rate of convergence toward $\partial\K$, we expect (and observe through numerics) that convergence happens very quickly---after few layers, most tokens are already clustered. What "few layers" means here necessarily depends on the typical modulus of the initial tokens, since the dynamics \eqref{eq:trans_dyn} is not invariant under multiplication of all initial conditions by a fixed real number.
\end{remark}

\begin{remark}[Discrete time] \label{r:discreterescaling}
As alluded to in Remark \ref{r:discretetime}, all our results extend to the discrete-time Transformers \eqref{e:discreteequation}. Indeed, just as in the continuous-time case, there is a natural rescaled dynamics, which is the discrete analogue of \eqref{e:Rres}: if we set $R=I_d+V\Delta t$, and assume that $R$ is invertible (which is the case for sufficiently small $\Delta t$), then $z_i(k\Delta t)=R^{-k}x_i(k\Delta t):=z_i^{[k]}$ satisfies
\begin{align*}
z_i^{[k+1]}=z_i^{[k]}+\Delta t\sum_{j=1}^n \left(\frac{e^{\langle QR^kz_i^{[k]}, KR^kz_j^{[k]}\rangle}}{\sum_{\ell=1}^n e^{\langle QR^kz_i^{[k]}, KR^kz_\ell^{[k]}\rangle}}\right)R^{-1}V\left(z_j^{[k]}-z_i^{[k]}\right), \hspace{1cm} k\in\mathbb{N}.
\end{align*}
The proofs of Theorems \ref{t:boolean}, \ref{t:cas-Idintro}, \ref{t:Idcase11int}, \ref{l:3hyperplanes11}, and \ref{t:multiplicity} carry through with straightforward modifications.

Let us provide some comments on the proof of Theorem \ref{t:Idcase11int} in the discrete-time setting, for the sake of completeness. First of all, Proposition \ref{p:noninc} holds intuitively because for all integers $i\in[n]$ and $k\geq1$, 
$$z_i^{[k+1]}=\frac{1}{1+\Delta t}\left(z_i^{[k]}+\Delta t\sum_{j=1}^n P_{ij}^{[k]}z_j^{[k]}\right)\in \Cx\left(\left\{z_j^{[k]}\right\}_{j\in[n]}\right).$$  
We then define the candidate set of limit points as in \eqref{e:Ax15}, and Claim 1 holds without any change in the statement or in the proof. Then, just as in Steps 2 and 3 in the proof of \ref{t:Idcase11}, we can first show that if $z_i^{[k]}$ is not already near some point in the candidate limit set, it will keep moving toward the boundary of the convex polytope. Finally, we can prove that tokens cannot circulate indefinitely between different points on the boundary. The combination of these arguments would establish the convergence of each token toward some point in the set given by \eqref{e:Ax15}.
\end{remark}

\section{Clustering toward hyperplanes}\label{s:simpleeigvalue}

While being a natural example to consider, value matrices found empirically are much more general than $V=I_d$, which we considered in the previous section. 
We now turn our attention to a significantly more general setting of value matrices, which we formalize as follows. 

\begin{definition}\label{d:good}
We call $(Q,K,V)$ a \emph{good triple} if the two following conditions are satisfied:
\begin{itemize}
    \item the eigenvalue of $V$ with largest modulus is real, positive, and simple; namely, 
    \begin{equation*}
    \lambda_1(V)>|\lambda_2(V)|\geq \ldots\geq |\lambda_d(V)|.
    \end{equation*}
    
    \item $\langle Q\varphi_1,K\varphi_1\rangle>0$ for any $\varphi_1\in\mathbb{R}^d$ lying on the line $\mathrm{ker}(V-\lambda_1(V)\Id)$. 
\end{itemize}
\end{definition}

The second condition simply states that the quadratic form $\langle Q \cdot, K \cdot\rangle$ is positive definite along the eigenspace associated to the leading eigenvalue of $V$. 
Note also that if all entries of $V$ are positive, the first condition is automatically satisfied by virtue of the Perron-Frobenius theorem. In fact, this assumption is generic. On the one hand, it is satisfied by some pre-trained value matrices for {\sf ALBERT} (Figure \ref{f:V.spectrum}). On the other hand, numerical experiments indicate that a constant fraction (about $14\%$) of matrices from the real Ginibre ensemble in dimension $d=128$---this proportion is known to vanish as $d\to \infty$, albeit very slowly~\cite{RidSin14ginibre}.

Our clustering result in the setting of good triples can be summarized as follows: the coordinate $\langle z_i(t) , \frac{\varphi_1}{\|\varphi_1\|}\rangle$ of any token $z_i(t)$ along the eigenspace spanned by $\varphi_1$ converges, as $t\rightarrow +\infty$, toward one among possibly $3$ real scalars. Consequently, all the tokens $z_i(t)$ converge toward one among at most three parallel hyperplanes; see Fig.~\ref{fig:thm21} for an illustration.

\begin{theorem}[Convergence toward $\leq3$ hyperplanes] \label{l:3hyperplanes11}
Assume that $(Q,K,V)$ is a good triple in the sense of Definition~\ref{d:good}. Then, for any initial sequence of tokens $\{z_i(0)\}_{i\in[n]}\subset\R^d$, there exist at most three parallel hyperplanes in $\R^d$ such that for any $i\in[n]$, the distance of the solution $z_i(t)$ to \eqref{e:Rres} to one of these hyperplanes converges to $0$ as $t\rightarrow+\infty$.
\end{theorem}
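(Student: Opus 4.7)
The plan is to reduce the theorem to a one-dimensional statement about the coordinate of each token along the leading \emph{left} eigenvector of $V$. Let $\varphi_1,\psi_1\in\mathbb{R}^d$ be right and left eigenvectors of $V$ for $\lambda_1:=\lambda_1(V)>0$, normalized so that $\langle\psi_1,\varphi_1\rangle=1$ (both are well-defined because $\lambda_1$ is simple). Define the scalar projection
$$y_i(t):=\langle\psi_1,z_i(t)\rangle,\qquad i\in[n].$$
Applying $\langle\psi_1,\cdot\rangle$ to \eqref{e:Rres} and using $V^{\top}\psi_1=\lambda_1\psi_1$ produces the scalar averaging ODE
$$\dot y_i(t)\;=\;\lambda_1\sum_{j=1}^n P_{ij}(t)\bigl(y_j(t)-y_i(t)\bigr).$$
Three parallel hyperplanes of $\mathbb{R}^d$ with common normal $\psi_1$ correspond to three real levels, so the theorem reduces to showing that each $y_i(t)$ converges to one of at most three limits. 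As an immediate by-product of the averaging structure, $Y(t):=\max_i y_i(t)$ is non-increasing and $Y_{*}(t):=\min_i y_i(t)$ is non-decreasing, hence both converge to finite limits $y^+\geq y^-$.

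Next, I would expand the attention exponent using the spectral decomposition of $V$. Decompose $z_i(t)=y_i(t)\varphi_1+r_i(t)$ with $r_i(t)$ in the $V$-invariant complement $E^c$ of $\mathrm{span}(\varphi_1)$. Since $\|e^{tV}|_{E^c}\|_{\op}$ grows at most like $t^{d-1}e^{|\lambda_2(V)|t}$ (Jordan bound) while a Gronwall argument applied to the $r_i$-subdynamics shows that $\|r_i(t)\|$ grows strictly more slowly than $e^{t\lambda_1}$, one obtains
$$e^{tV}z_i(t)\;=\;e^{t\lambda_1}\,y_i(t)\,\varphi_1\;+\;o\bigl(e^{t\lambda_1}\bigr).$$
Setting $\beta:=\langle Q\varphi_1,K\varphi_1\rangle>0$, which is positive by the good-triple assumption, this yields
$$\bigl\langle Qe^{tV}z_i(t),Ke^{tV}z_j(t)\bigr\rangle\;=\;e^{2t\lambda_1}\,\beta\,y_i(t)y_j(t)\;+\;o\bigl(e^{2t\lambda_1}\bigr).$$
Hence whenever $y_i(t)$ stays bounded away from $0$, the softmax row $P_{i\cdot}(t)$ concentrates, at doubly-exponential rate, on the index maximizing $y_i(t)y_j(t)$: on the current argmax of $y_\cdot$ if $y_i>0$, and on the current argmin if $y_i<0$. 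In particular, the sign of the dominant term of $P_{i\cdot}$ is determined by the sign of $y_i$.

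Combining these ingredients yields the convergence. For a token $i$ with $y_i(t)$ eventually strictly positive, the attention concentration collapses the averaging ODE to $\dot y_i\approx\lambda_1(Y(t)-y_i(t))$, which gives exponential contraction of $y_i(t)$ toward $Y(t)$ and therefore $y_i(t)\to y^+$; symmetrically, eventually negative tokens contract onto $y^-$; and any remaining token must satisfy $y_i(t)\to 0$. This produces at most three limit values $\{y^-,0,y^+\}$ and hence the three parallel hyperplanes orthogonal to $\psi_1$. The principal obstacle is precisely showing that no token can perpetually oscillate around $y_i=0$: in that regime the leading-order asymptotic of the attention exponent degenerates, so one must invoke the subdominant spectral contributions encoded in $e^{tV}r_i(t)$ to ensure that every such token either eventually commits to a definite sign (and is then contracted onto $y^{\pm}$) or genuinely converges to $0$. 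I would establish this last step via a Lyapunov/monotonicity argument on a sign-refined reordering of the $y_i$'s, in the spirit of the convex-hull contraction used to prove Theorem~\ref{t:Idcase11int} in \Cref{s:clusterpolytope}.
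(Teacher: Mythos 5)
Your proposal follows essentially the same route as the paper's proof of Theorem~\ref{l:3hyperplanes11}: your $\psi_1$ is exactly the dual basis vector $\varphi_1^*$ used there, the monotonicity of $\max_i y_i$ and $\min_i y_i$ is Lemma~\ref{l:fj}, the expansion of the attention exponent as $\beta e^{2\lambda_1 t}y_i(t)y_j(t)+O(e^{(\lambda_1+|\lambda_2|)t})$ is Lemma~\ref{l:nottoofassst} together with \eqref{e:majorizeweights}, and the conclusion that positively (resp.\ negatively) signed tokens are driven to $b:=\lim\max_i y_i$ (resp.\ $a:=\lim\min_i y_i$) while the rest go to $0$ is the paper's trichotomy.

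The one genuine gap is the step you yourself flag as the ``principal obstacle'', and your proposed fix points in the wrong direction. No analysis of the subdominant contributions $e^{tV}r_i(t)$ is needed to rule out oscillation around $0$: those terms are pure error and carry no usable sign information. The paper's resolution is a tautological dichotomy --- either $y_i(t)\to 0$, or there exist $\varepsilon_0>0$ and times $t_k\to\infty$ with (after extracting a sign) $y_i(t_k)>\varepsilon_0$ --- combined with the observation that the lower bound $\dot y_i(t)\geq \lambda_1\varepsilon/(2n)$ holds at \emph{every} sufficiently large time at which $y_i(t)\in[\varepsilon,b-\varepsilon]$, not merely along the subsequence: its only inputs are $y_i(t)\geq\varepsilon$ (which controls the $O(e^{(\lambda_1+|\lambda_2|)t})$ error against the main term of order $\varepsilon\cdot(\text{gap})\cdot e^{2\lambda_1 t}$) and the existence, guaranteed by monotonicity of the maximum, of some index with $y_{i_0(t)}(t)\geq b$. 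Hence a single visit above $\varepsilon$ at a late time traps the token above $\varepsilon$ forever and forces it up to $b-\varepsilon$; perpetual sign changes are simply incompatible with the dichotomy, and no Lyapunov functional on a reordering is required. Two further imprecisions are worth fixing. First, the softmax row does not concentrate on a single argmax (generically several tokens share the limit $b$); what one actually gets is that indices with $y_j<y_i$ carry doubly-exponentially small weight while weight at least of order $1/n$ sits on some index with $y_j\geq b$, yielding $\dot y_i\gtrsim\frac{\lambda_1}{n}(b-\varepsilon-y_i)$ rather than contraction toward $Y(t)$ at rate $\lambda_1$ --- still sufficient. Second, the bound $\|e^{tV}r_i(t)\|=o(e^{\lambda_1 t})$ must be obtained by running Gr\"onwall on the components of $x_i(t)=e^{tV}z_i(t)$ in the (possibly $\varepsilon$-perturbed Jordan) eigenbasis, as in Lemma~\ref{l:nottoofassst} and Remark~\ref{r:notdiagonalizable}; combining a Gr\"onwall bound on $r_i(t)$ itself with the operator norm of $e^{tV}|_{E^c}$, as you write it, multiplies the two growth rates and does not deliver $o(e^{\lambda_1 t})$.
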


\begin{figure}[h]
    \centering
    \includegraphics[width=0.21\textwidth]{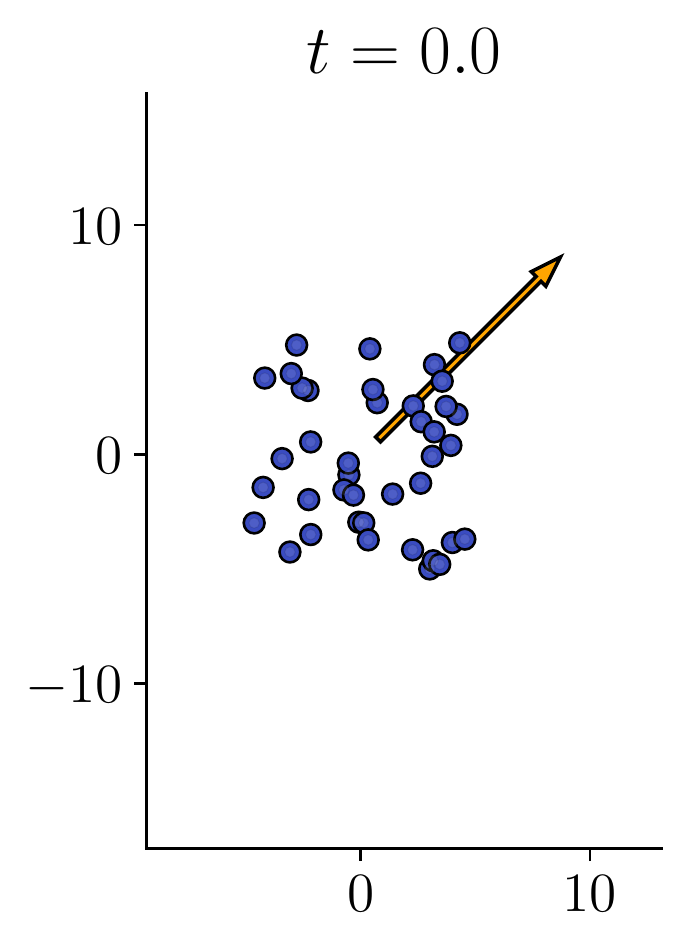}
    \hspace{0.3cm}
    \includegraphics[width=0.21\textwidth]{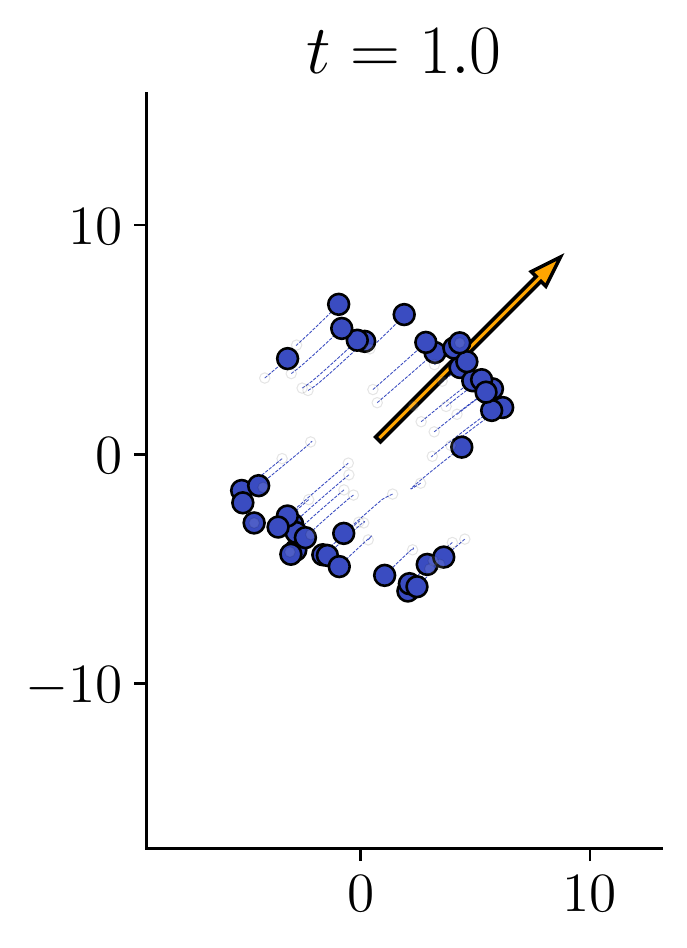}
        \hspace{0.3cm}
    \includegraphics[width=0.21\textwidth]{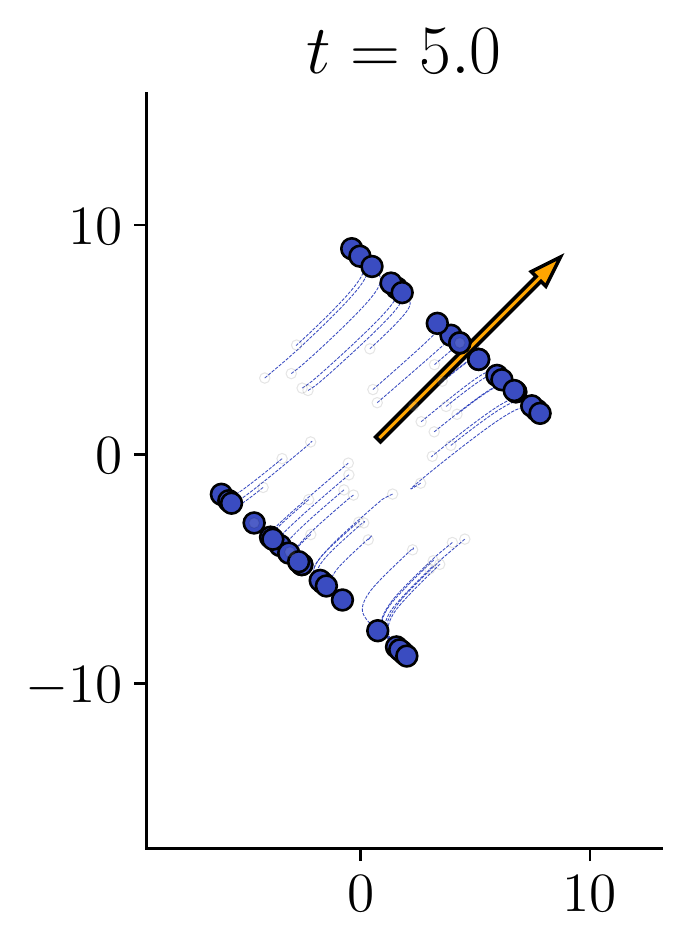}    
    \hspace{0.3cm}
    \includegraphics[width=0.21\textwidth]{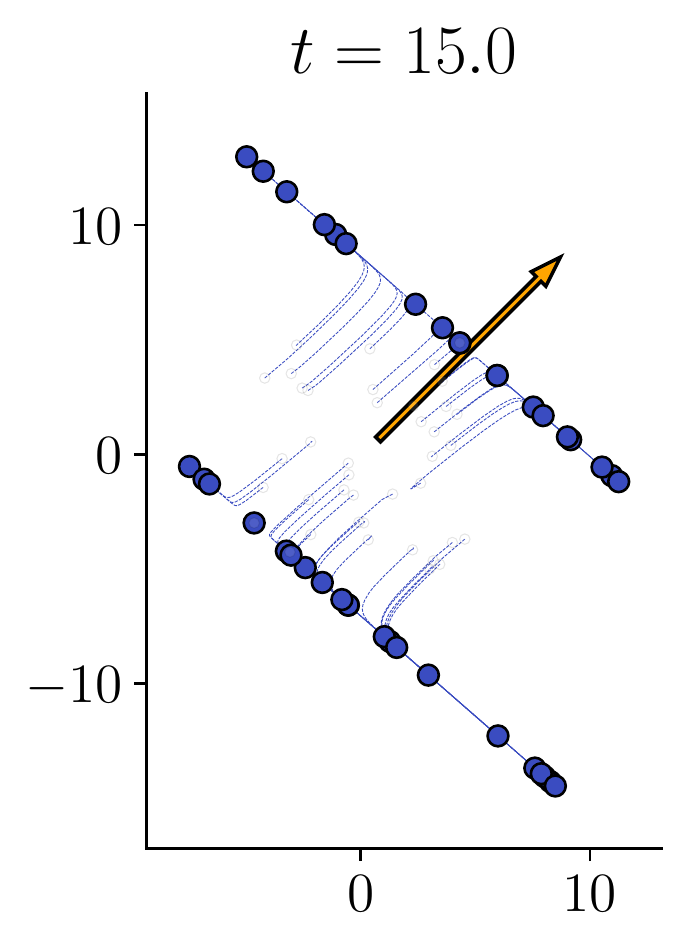}
    \caption{Illustrating Theorem~\ref{l:3hyperplanes11} with $n=40$ tokens in $\mathbb{R}^2$. Here $Q=K=I_2$, $V$ is a random symmetric matrix with eigenvalues $\{1.35, -0.07\}$,
    and $\varphi_1=(0.76, 0.65)$. 
    The components of the tokens in the direction of $\varphi_1$ (orange arrow) cluster over time. (See Figures \ref{f:point}--\ref{f:planets} for examples in $\mathbb{R}^3$.) We also observe that tokens typically cluster toward only two hyperplanes---a third one (passing through the origin) may appear for non-generic initial sequences. The hyperplanes are perpendicular to $\varphi_1$ since $V$ is diagonalizable.}
    \label{fig:thm21}
\end{figure}

The proof may be found in 
 \Cref{sec: clustering.hyperplanes}. 
The important role played by $\lambda_1(V)$ in the dynamics may be seen in \eqref{e:Rres}: the component of $z_i(t)$ along $\varphi_1$  determines the size of $e^{tV}z_i(t)$ in the exponent appearing in \eqref{e:Rres}. The tokens $z_j(t)$ attracting other tokens $z_i(t)$ are those for which this component along $\varphi_1$ is largest in modulus. This attraction process forms the clusters.
These \textit{leaders}, as in all our results, have been empirically observed to be the ones carrying the largest amount of information in the sentence (see Supplementary material in \cite{vaswani2017attention}).

Furthermore, Theorem~\ref{l:3hyperplanes11} can also be interpreted in more classical machine learning terms.
On the one hand, it can be seen as an instance of \emph{$K$-flats clustering}~\cite{bradley2000k, vidal2011subspace}---points in the input sequence are clustered, based on their intrinsic similarity, to at most $3$ "flats" of dimension $d-1$. On the other hand, it ensures that for a good triple $(Q, K, V)$, \eqref{e:Rres} generates a \emph{linearly separable} representation of tokens. 

\subsection*{Beyond a single direction?}

Numerical experiments (e.g., Fig.~\ref{fig:con1}) indicate that a similar phenomenon emerges for more complex $V$. We formulate following conjecture which is a natural generalization of Theorem~\ref{l:3hyperplanes11}.

\begin{conjecture}[Codimension conjecture]\label{c:codim}
Let $k\geq1$ be the number of eigenvalues of $V$ with positive real part. Then there exist at most three parallel Euclidean subspaces of $\R^d$ of codimension $k$ such that for any $i\in[n]$, the distance of $z_i(t)$ to one of these subspaces converges to $0$ as $t\rightarrow+\infty$. 
\end{conjecture}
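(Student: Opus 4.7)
The plan is to reduce Conjecture~\ref{c:codim} to a clustering statement on the unstable invariant subspace of $V$, in a spirit similar to the proof of Theorem~\ref{l:3hyperplanes11} where the single leading eigendirection $\varphi_1$ is replaced by the full unstable invariant subspace. Let $E_u \subset \mathbb{R}^d$ denote the (generalized) invariant subspace of $V$ associated to the $k$ eigenvalues with positive real part, and $E_s$ the complementary invariant subspace, so that $\mathbb{R}^d = E_u \oplus E_s$ and $V = V_u \oplus V_s$ with $\sigma(V_u) \subset \{\mathrm{Re}\, z > 0\}$ and $\sigma(V_s) \subset \{\mathrm{Re}\, z \le 0\}$. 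Write $\pi_u, \pi_s$ for the associated projectors and set $u_i(t) := \pi_u z_i(t)$, $s_i(t) := \pi_s z_i(t)$. Since the codimension-$k$ affine subspaces of $\mathbb{R}^d$ parallel to $E_s$ are in bijection with the points of $E_u$, the conjecture is equivalent to showing that $\{u_i(t)\}_{i\in[n]}$ accumulates on at most three points of $E_u \cong \mathbb{R}^k$ as $t\to+\infty$.

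The first step is to show that the attention weights $P_{ij}(t)$ asymptotically depend only on the unstable components. Expanding $e^{tV}z_i(t) = e^{tV_u}u_i(t) + e^{tV_s}s_i(t)$ and using that $\|e^{tV_u}\|$ grows at rate $e^{\mathrm{Re}\,\lambda_1(V_u)\, t}$ up to polynomial factors while $\|e^{tV_s}\|$ remains bounded, the inner product appearing in the softmax satisfies
\[
\bigl\langle Q e^{tV}z_i(t), K e^{tV}z_j(t)\bigr\rangle = \bigl\langle Q_u e^{tV_u}u_i(t), K_u e^{tV_u}u_j(t)\bigr\rangle + o\bigl(e^{2\,\mathrm{Re}\,\lambda_1(V_u)\, t}\bigr),
\]
where $Q_u = Q|_{E_u}$ and $K_u = K|_{E_u}$. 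After dividing numerator and denominator of $P_{ij}(t)$ by the largest such exponent, the weights coincide, up to vanishing errors, with those of a $k$-dimensional attention system driven by the $u_\ell(t)$'s alone. Combined with $\pi_u V = V_u \pi_u$, this yields an autonomous asymptotic dynamics for $u_i(t)$ of the form \eqref{e:Rres} on $E_u$, where now every eigenvalue of $V_u$ has positive real part.

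The hard part is then to perform a clustering analysis for this reduced $k$-dimensional rescaled system and to conclude convergence to at most three points of $E_u$. If $(Q_u, K_u, V_u)$ happens to satisfy Definition~\ref{d:good}---namely $\lambda_1(V_u)$ simple and $\langle Q_u \varphi_1, K_u \varphi_1\rangle > 0$---one can adapt the proof of Theorem~\ref{l:3hyperplanes11} to $E_u$ and obtain three parallel hyperplanes of codimension $1$ in $E_u$, which pull back to three parallel subspaces of codimension $k$ in $\mathbb{R}^d$.

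The principal obstacle is the case in which the unstable block is highly degenerate, for instance $V_u$ proportional to $I_k$: Theorem~\ref{t:Idcase11int} then applies inside $E_u$ and in general produces convergence to the vertices of a convex polytope in $E_u$, of which there may be far more than three. Reconciling this picture with the conjectured bound of three appears to require either an additional genericity hypothesis on $V_u$ mirroring Definition~\ref{d:good}, or a finer argument showing that the projected polytope generically degenerates to at most three extremal directions (e.g.\ via a dominance argument singling out the most expanding subdirection of $V_u$). In my view, removing this remaining obstruction---and thereby matching the sharp constant $3$ rather than a more permissive polytope bound---is the main difficulty of the conjecture.
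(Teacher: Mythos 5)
This statement is an open conjecture in the paper: the authors provide no proof, only numerical evidence (the experiments reported around Conjecture~\ref{c:codim}), so there is no argument of theirs to compare yours against. Your submission is, by your own admission, a reduction strategy rather than a proof, and it should be assessed as such.

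The strategy itself is the natural one --- split $\R^d$ into the unstable and non-unstable invariant subspaces of $V$ and argue that the softmax weights are asymptotically governed by the unstable components --- but it has a genuine gap even before the obstruction you flag. Your dominance claim
$\langle Q e^{tV}z_i, K e^{tV}z_j\rangle = \langle Q_u e^{tV_u}u_i, K_u e^{tV_u}u_j\rangle + o(e^{2\,\mathrm{Re}\,\lambda_1(V_u)t})$
rests on ``$\|e^{tV_s}\|$ remains bounded,'' but the relevant quantity is $e^{tV_s}s_i(t)$, and the rescaled components $s_i(t)$ along negative eigendirections typically diverge: the paper's Lemma~\ref{l:nottoofassst} only gives $|\varphi_k^*(e^{tV}z_i(t))|\le Ce^{|\lambda_k|t}$, with the \emph{modulus} of $\lambda_k$ in the exponent. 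Hence the cross terms are of size $e^{(\mathrm{Re}\,\lambda_1+|\lambda_j|)t}$, and they are only negligible relative to the leading term when $\mathrm{Re}\,\lambda_1(V)>|\lambda_j(V)|$ for all $j$ --- exactly the hypothesis of Definition~\ref{d:good}, which the conjecture does not assume. If $V$ has a negative eigenvalue of modulus exceeding $\lambda_1$, your reduction to an autonomous system on $E_u$ breaks down at the first step, and some additional idea (or an implicit genericity/dominance hypothesis) is needed. On the positive side, the obstruction you do identify is real and worth stating: for $V=I_d$ the conjecture read literally would force at most three limit points, whereas Theorem~\ref{t:Idcase11int} produces a polytope with potentially many more vertices, so the conjecture cannot hold in the degenerate case without further qualification. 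In short: a sensible plan, honestly presented, but with two substantive holes --- the unjustified dominance of the unstable block and the degenerate-$V_u$ case --- neither of which the paper resolves either.
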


\begin{figure}[h!]
\begin{subfigure}[b]{0.55\textwidth}
\includegraphics[scale=0.3]{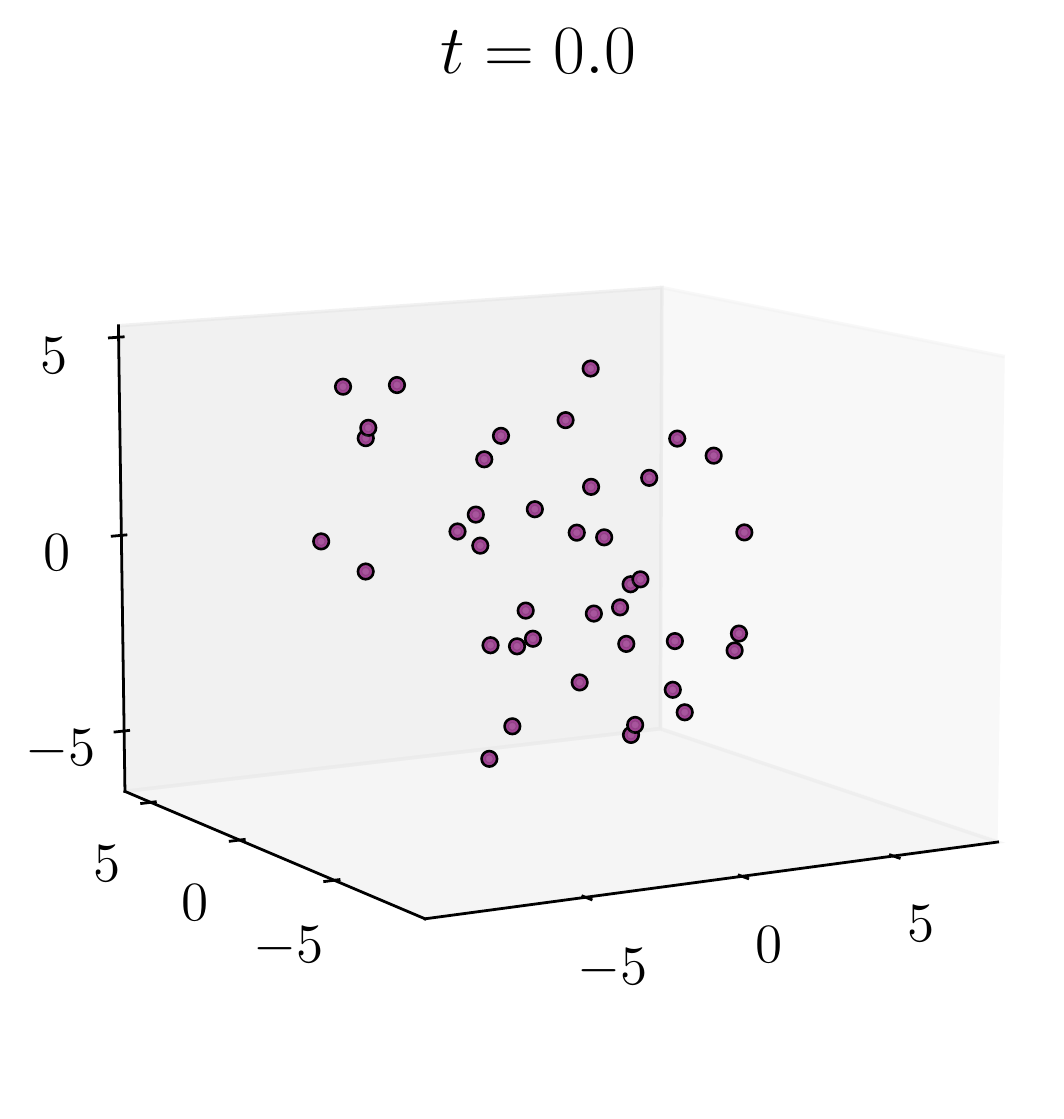}
\includegraphics[scale=0.3]{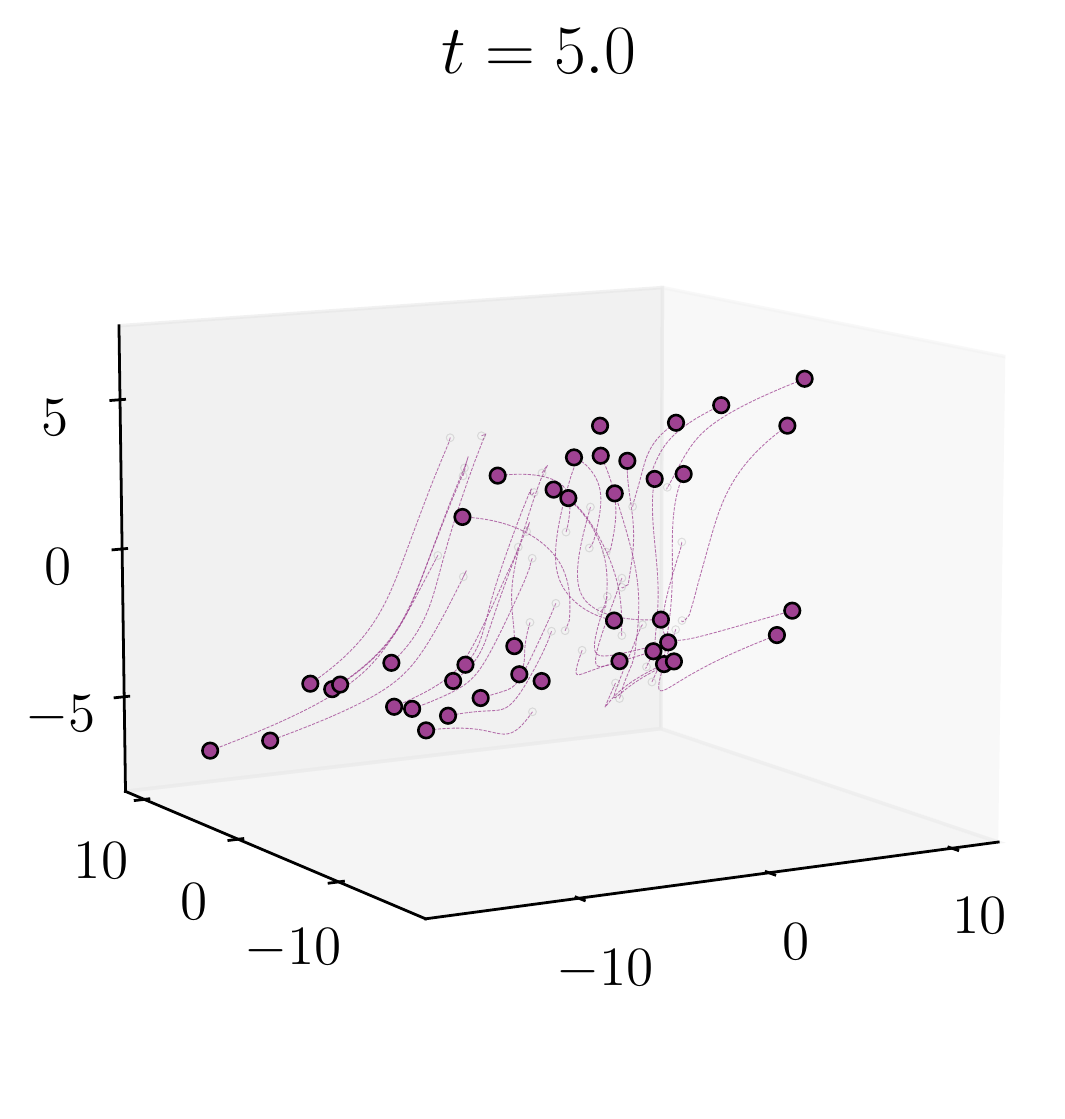}
\\
\includegraphics[scale=0.3]{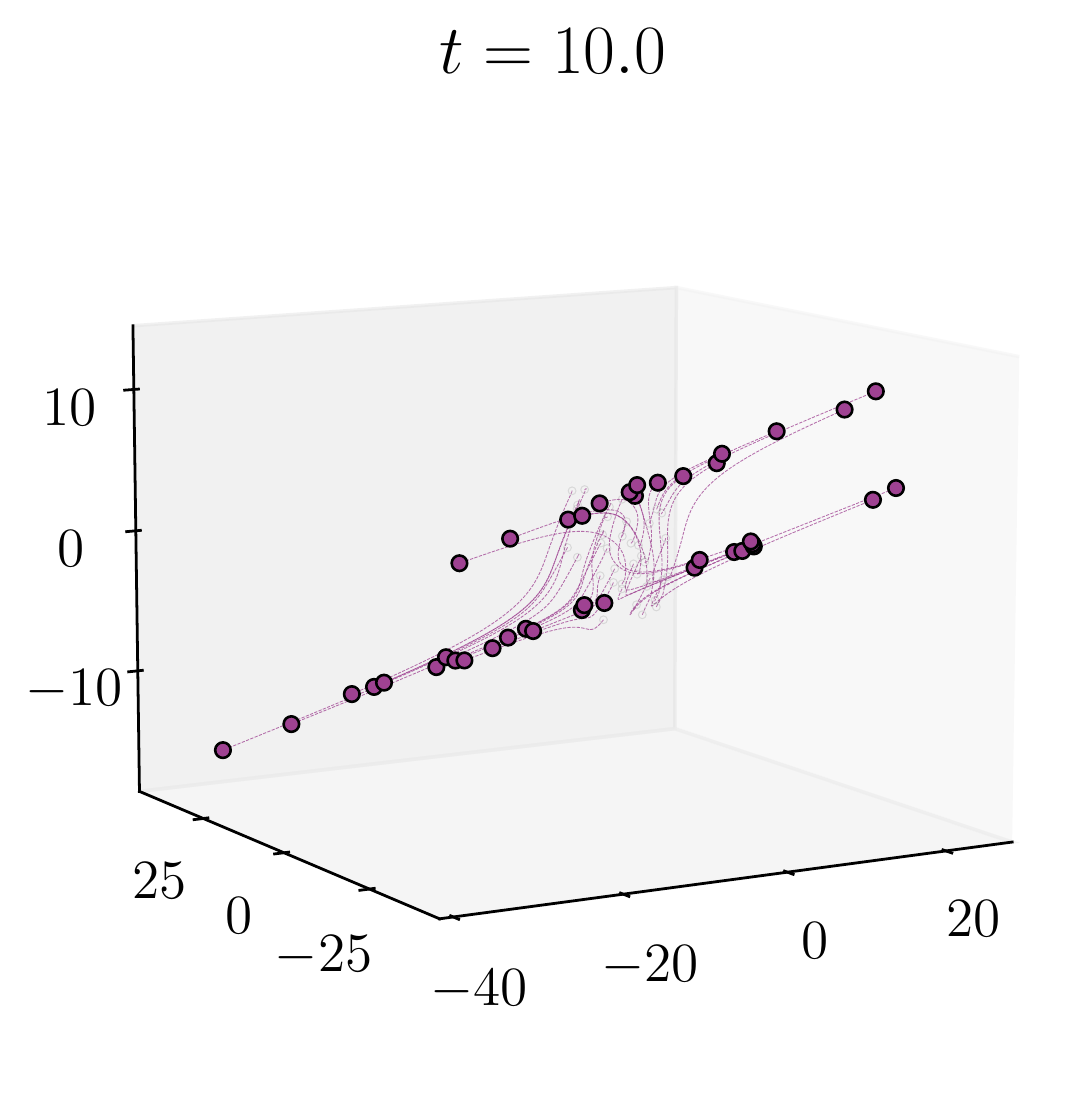}
\includegraphics[scale=0.3]{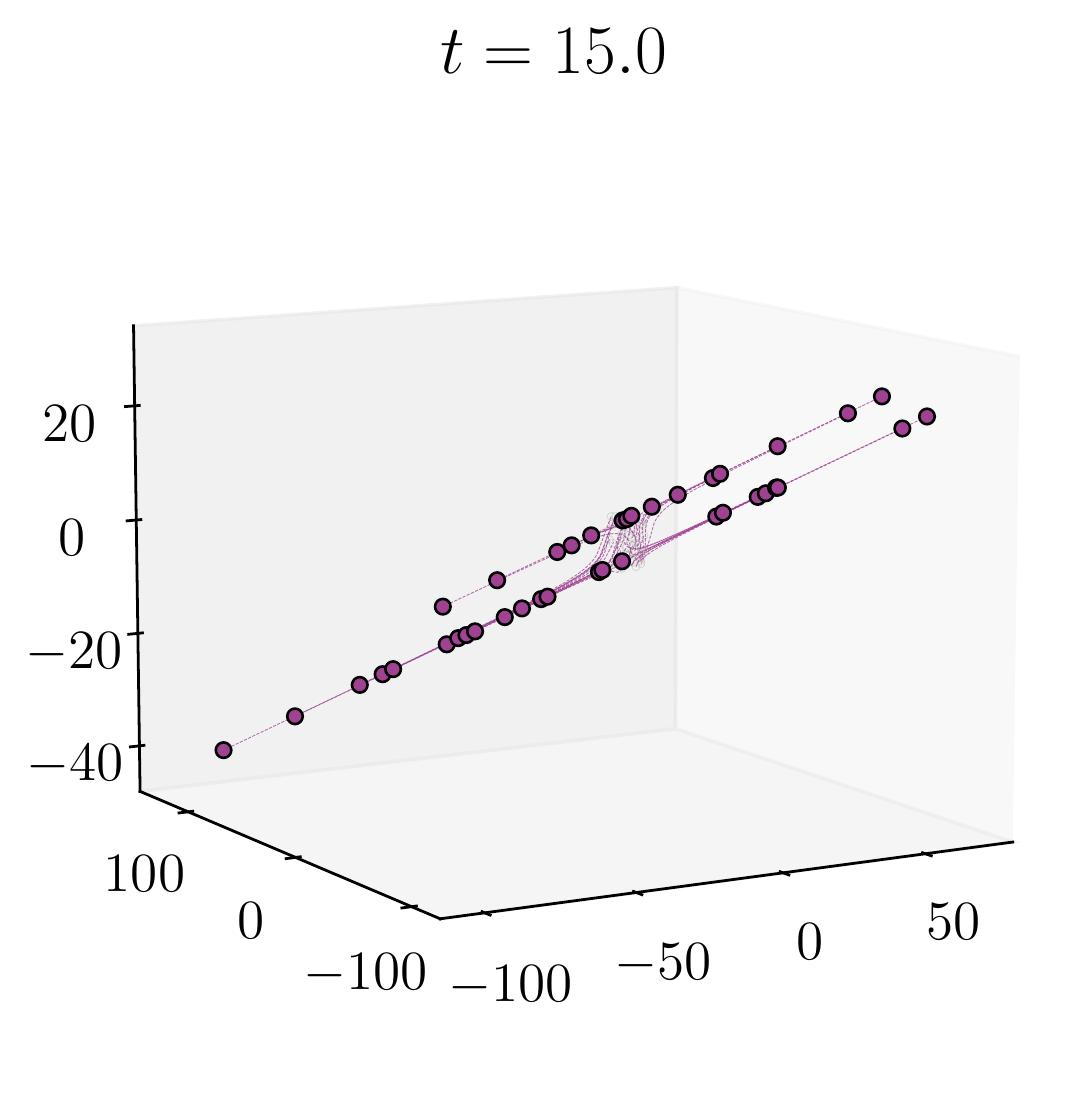}
     \caption{Conjecture \ref{c:codim}: low-dimensional case.}
\end{subfigure}
\hspace{0.2cm}
\begin{subfigure}[b]{0.4\textwidth}
    \includegraphics[scale=0.3]{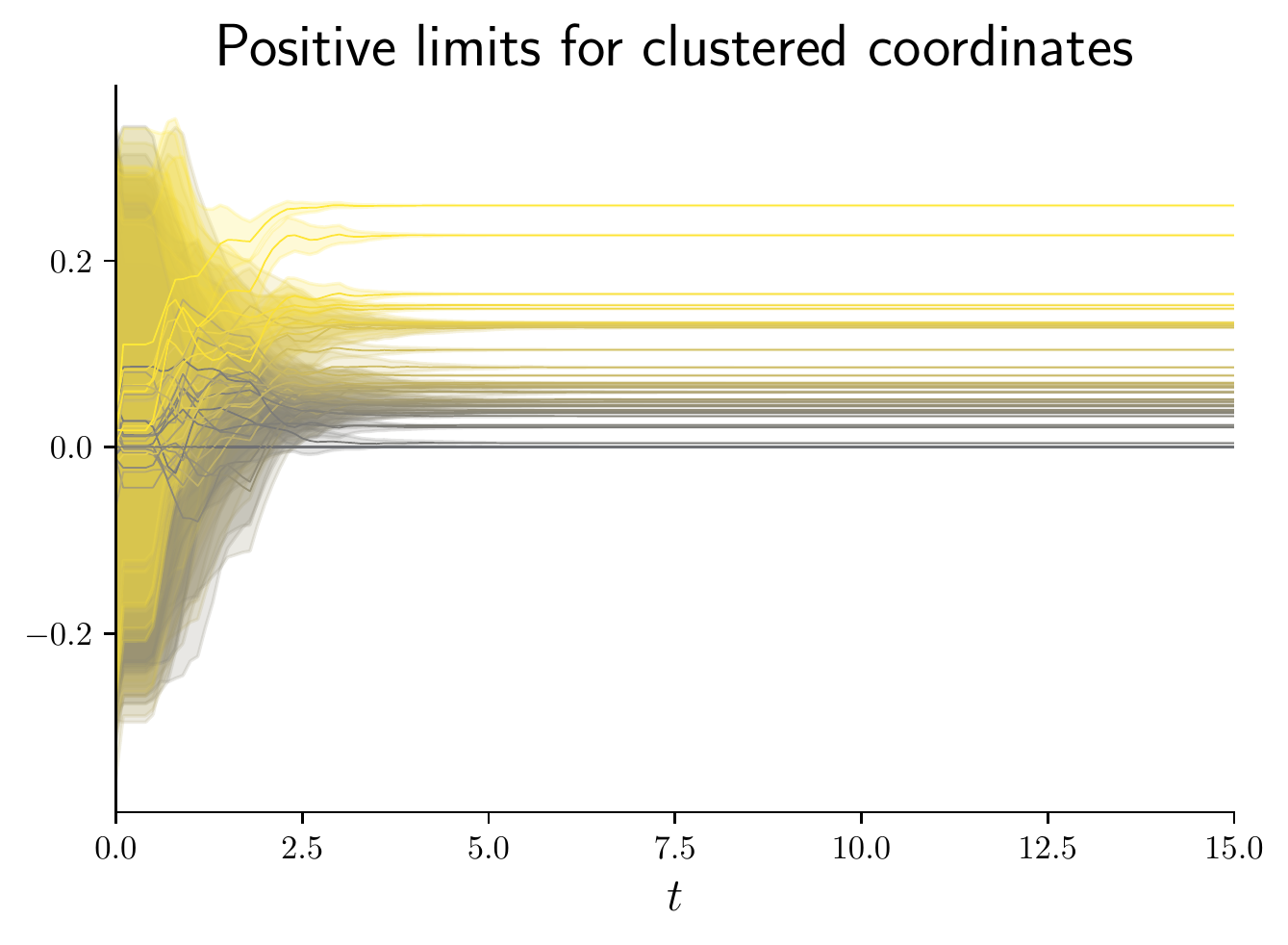}
    \includegraphics[scale=0.3]{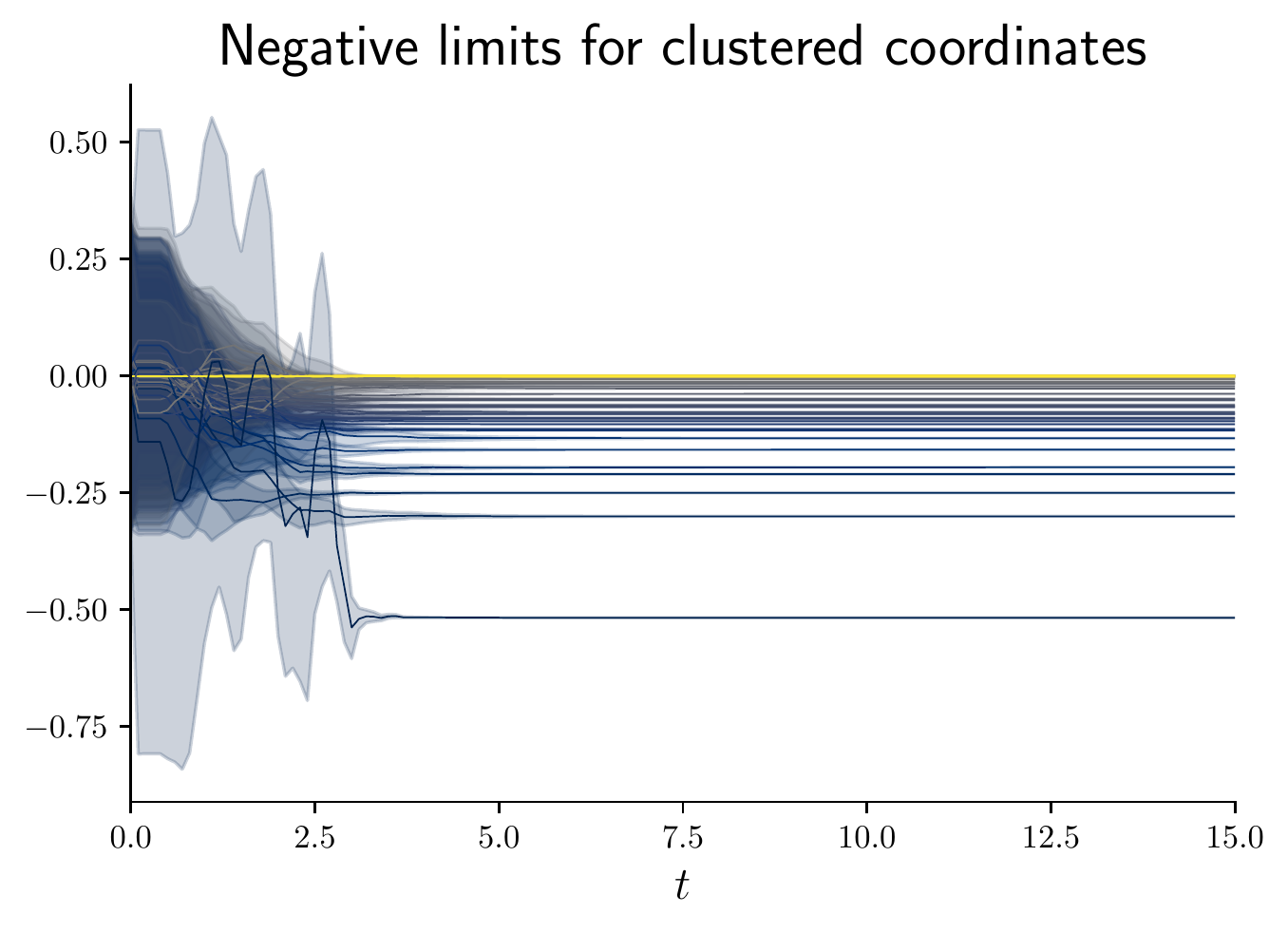}
    \caption{Conjecture \ref{c:codim}: high-dimensional case.}
\end{subfigure}
 \caption{(a) $n=40$, $d=3$ and $Q=K=I_3$ with $V$ a random matrix with eigenvalues  $\{1.96,-0.22, 0.25\}$.  
The $k=2$ positive eigenvalues of $V$ generate attraction between the tokens and even convergence in the corresponding eigenspaces--this explains the codimension $k$ statement. The negative eigenvalue generates a repulsive effect between the tokens, and we see a divergence along two lines (note the different scales between the four figures). (b) $n=256$, $d=128$, with $(Q, K, V)$ fixed random matrices and $V$ symmetric. For each coordinate $j$ corresponding to a positive eigenvalue, the variance of the set $\{\varphi_j^*(z_i(t))\colon i\in[n]\}$ (shaded area) tends to $0$ with $t$, while the mean (solid lines) converges to one among two real scalars: one positive (top figure), one negative (bottom) figure. Coordinates corresponding to negative eigenvalues diverge (Fig.~\ref{f:oscillations}).}
      \label{fig:con1}
\end{figure}

\section{A mix of hyperplanes and polytopes}\label{s:mix}

We now turn our attention to an even more general version of Theorem~\ref{l:3hyperplanes11}, which does not require the leading eigenvalue of $V$ to be simple. The resulting theorem can be viewed as a combination of Theorem~\ref{l:3hyperplanes11} and Theorem~\ref{t:Idcase11int}. Specifically, we assume that $V$ behaves as the identity when acting on the eigenspace of the leading eigenvalue. This property is automatically satisfied if $V$ is normal---so that its eigenvectors form an orthonormal basis---so we call such a $V$ \emph{paranormal.}

\begin{definition} \label{d:goodmulti}
We call $(Q,K,V)$ a \emph{good triple with multiplicity} if the following conditions hold:
\begin{itemize}
\item[(i)] $Q^\top K$ is positive definite: $Q^\top K\succ 0$;
\vspace{0.2cm}
\item[(ii)] $V$ is paranormal: there exist two linear subspaces $\mathscr{F}, \mathscr{G}\subset \R^d$ which are invariant under $V$, and such that $\mathscr{F}\oplus \mathscr{G}=\R^d$, $V_{|{\mathscr{F}}}=\lambda {\rm Id}$ for $\lambda>0$, and $\rho(V_{|\mathscr{G}})<\lambda$, where $\rho(\cdot)$ denotes the spectral radius (the maximal modulus of eigenvalues). 
\end{itemize}
\end{definition}

An example of such a $V$ is used for Fig.~\ref{fig:th41}.
We may now state our main result in the setting of good triples with multiplicity. The proof may be found in \Cref{sec: clustering.hyperplanes.and.polytopes}.

\begin{theorem}[Clustering for $\lambda_1$ with multiplicity] \label{t:multiplicity}
Suppose that $(Q,K,V)$ is a good triple with multiplicity in the sense of Definition~\ref{d:goodmulti}.
Then, for any initial sequence $\{z_i(0)\}_{i\in[n]}\subset\R^d$, there exists a bounded convex polytope $\mathcal{K}\subset \mathscr{F}$ such that setting $\mathscr{H}:=(\partial {\mathcal{K}}\cup \{0 \}) \times \mathscr{G}$, for any $i\in[n]$, we have $\dist(z_i(t),\mathscr{H})\rightarrow 0$ as $t\rightarrow +\infty$.
\end{theorem}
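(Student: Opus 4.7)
The plan is to exploit the $V$-invariant decomposition $\R^d=\mathscr{F}\oplus\mathscr{G}$ afforded by paranormality. Writing $z_i(t)=y_i(t)+g_i(t)$ with $y_i(t)\in\mathscr{F}$ and $g_i(t)\in\mathscr{G}$, and using $V|_{\mathscr{F}}=\lambda\Id$ together with $V$-invariance of $\mathscr{G}$, the rescaled dynamics \eqref{e:Rres} split as
\begin{align*}
\dot{y}_i(t) &= \lambda\sum_{j=1}^n P_{ij}(t)\bigl(y_j(t)-y_i(t)\bigr),\\
\dot{g}_i(t) &= \sum_{j=1}^n P_{ij}(t)V|_{\mathscr{G}}\bigl(g_j(t)-g_i(t)\bigr),
\end{align*}
with the same attention coefficients $P_{ij}(t)$ driving both blocks. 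The strategy is to show that as $t\to+\infty$ these coefficients are asymptotically dictated by the $y$-components alone, reducing the $\mathscr{F}$-block to the setting of Theorem~\ref{t:Idcase11int} while allowing the $g$-components to evolve freely inside $\mathscr{G}$.

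The key computation is the asymptotic decoupling of the attention. From $e^{tV}z_i=e^{t\lambda}y_i+e^{tV|_{\mathscr{G}}}g_i$ and the spectral gap $\rho(V|_{\mathscr{G}})<\lambda$, each exponent appearing inside $P_{ij}(t)$ expands as
\begin{align*}
\langle Qe^{tV}z_i,Ke^{tV}z_j\rangle = e^{2t\lambda}\langle Qy_i,Ky_j\rangle + O\bigl(e^{t(\lambda+\mu)}\bigr),
\end{align*}
for some $\mu<\lambda$. Whenever the $y_i$'s are not all zero, the row $(P_{ij}(t))_{j\in[n]}$ converges doubly-exponentially to the softmax built from $\{\langle Qy_i,Ky_j\rangle\}_j$, i.e. to the attention matrix produced by the $y$-data using the restricted query/key pair $(Q|_{\mathscr{F}},K|_{\mathscr{F}})$. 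Since $Q^\top K\succ 0$ on $\R^d$, the quadratic form $y\mapsto y^\top(Q^\top K)y$ is positive definite on $\mathscr{F}$, which places the $y$-subsystem exactly within the scope of Theorem~\ref{t:Idcase11int}.

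With the attention asymptotically driven by the $y$-data, the convexity arguments of \Cref{sec: clustering.polytopes} lift to the $\mathscr{F}$-block. Since $\dot{y}_i(t)$ is a convex combination of the vectors $\lambda(y_j(t)-y_i(t))$, the analogue of Proposition~\ref{p:noninc} holds and $\Cx(\{y_i(t)\}_{i\in[n]})\subset\mathscr{F}$ is non-increasing; the half-space drift plus the no-circulation argument then produce a convex polytope $\mathcal{K}\subset\mathscr{F}$, determined by the initial data and the restriction of $Q^\top K$ to $\mathscr{F}$, for which $\dist(y_i(t),\partial\mathcal{K}\cup\{0\})\to 0$. The $g_i(t)$ may oscillate or grow at rate $\rho(V|_{\mathscr{G}})$, but this is exactly why the limit set is enlarged to $\mathscr{H}=(\partial\mathcal{K}\cup\{0\})\times\mathscr{G}$; the decomposition then gives
\begin{align*}
\dist(z_i(t),\mathscr{H})\leq\dist(y_i(t),\partial\mathcal{K}\cup\{0\})\longrightarrow 0.
\end{align*}

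The main obstacle is handling tokens whose $y$-component stays close to zero over long intervals: in that regime the leading term $e^{2t\lambda}\langle Qy_i,Ky_j\rangle$ loses its grip on $P_{ij}(t)$ and subleading $g$-dependent terms may drive the attention transiently. Quantitatively exploiting the spectral gap $\lambda-\rho(V|_{\mathscr{G}})>0$—for instance, by showing that any $y_i(t)$ that is not already near the origin is pushed toward $\partial\mathcal{K}$ at a rate uniformly dominating the $\mathscr{G}$-contribution—is what ultimately closes the argument and separates trajectories that converge to $0\in\mathscr{F}$ from those that converge to $\partial\mathcal{K}$. Once this quantitative separation is in hand, the convergence statement $\dist(z_i(t),\mathscr{H})\to 0$ follows from the combination of Steps 2 and 3.
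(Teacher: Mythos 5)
Your proposal follows essentially the same route as the paper's proof: decompose along $\mathscr{F}\oplus\mathscr{G}$, use the spectral gap $\rho(V_{|\mathscr{G}})<\lambda$ to show the attention exponents are dominated by the $e^{2\lambda t}$-term coming from the $\mathscr{F}$-components (the paper's bound $|r_j(t)|\leq Ce^{(\lambda_1+|\lambda_2|)t}$ in Step 2'), and then rerun the polytope machinery of Theorem~\ref{t:Idcase11} on the projected dynamics with $\|\pi_{\mathscr{F}}(A\cdot)\|$ in place of $\|A\cdot\|$. The "main obstacle" you flag is exactly what the paper's Step 2' and the no-circulation Steps 3--4 resolve, so the sketch is faithful to the actual argument.
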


\begin{figure}[h!]
    \centering
    \includegraphics[width=0.45\textwidth]{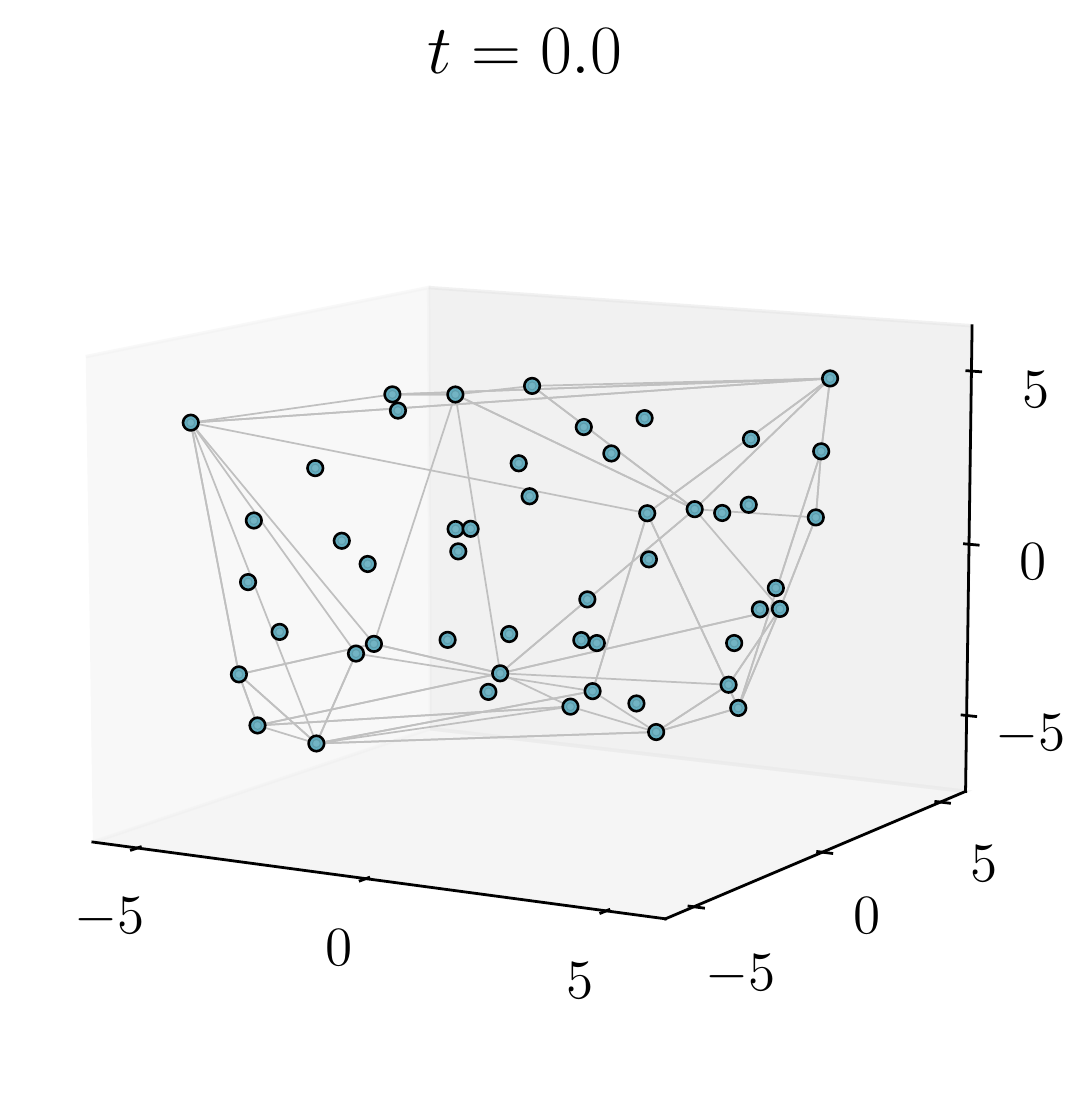}
    \includegraphics[width=0.45\textwidth]{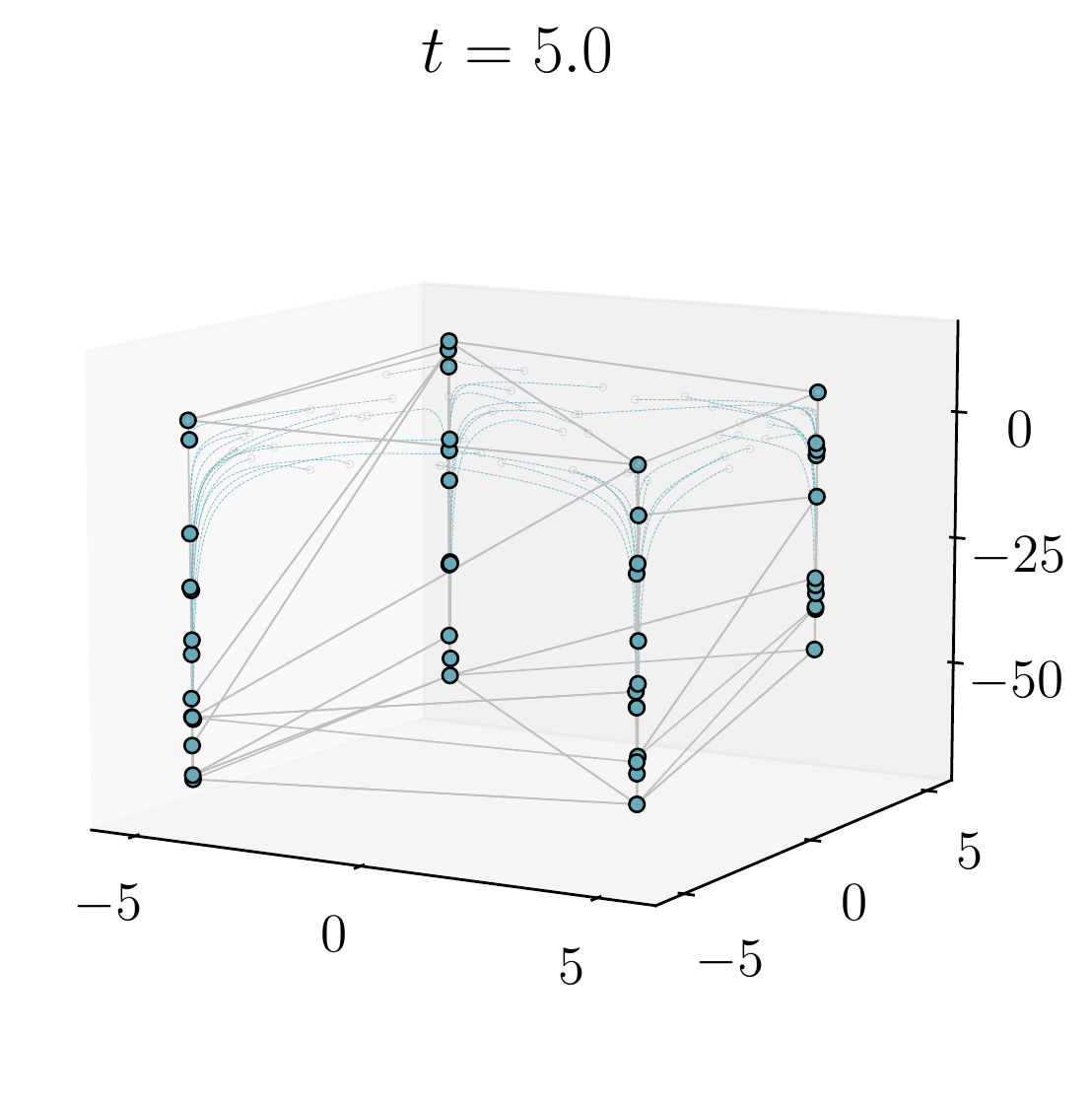}
    \includegraphics[width=0.45\textwidth]{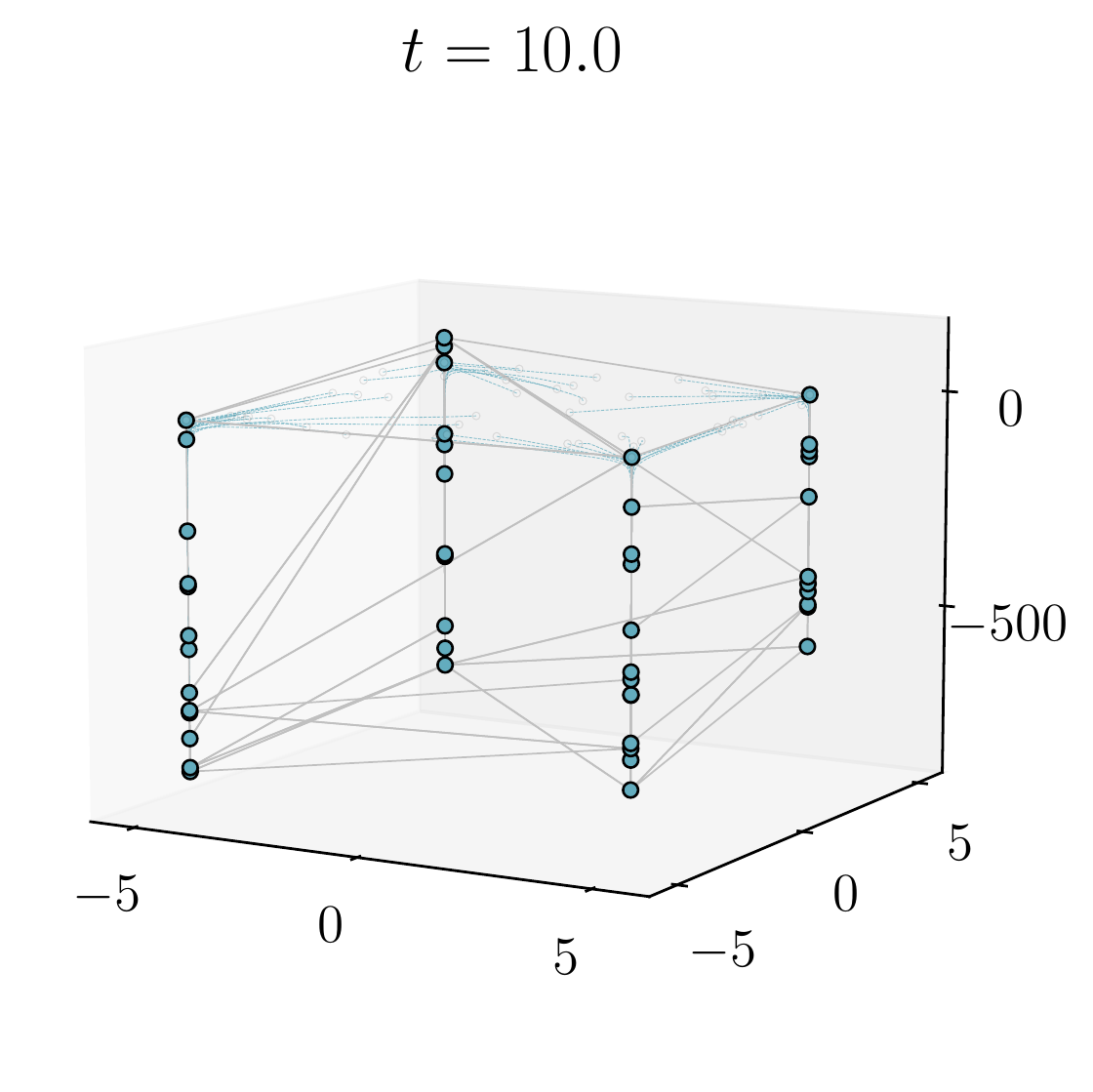}
    \includegraphics[width=0.45\textwidth]{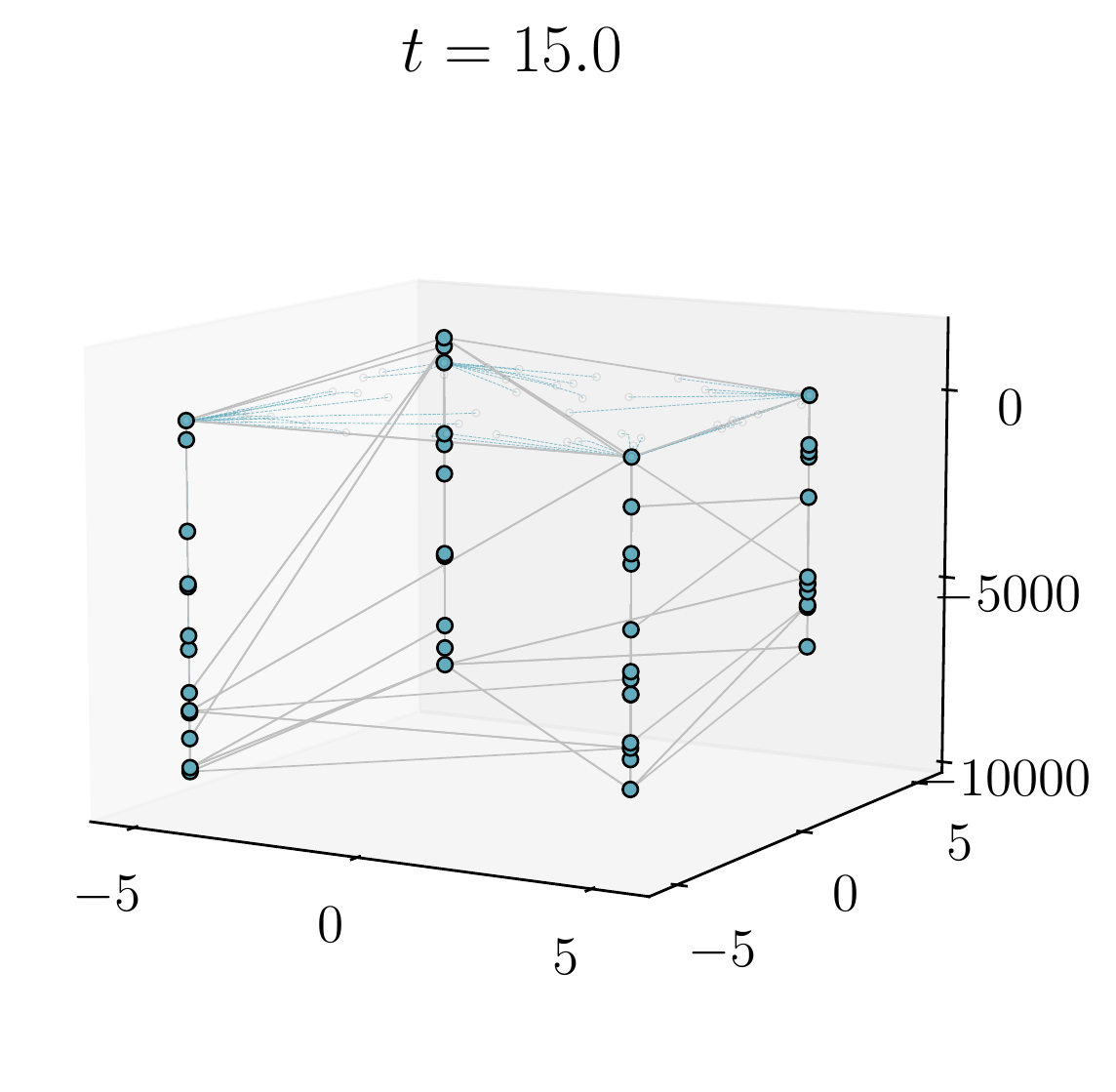}
    \caption{Illustrating Theorem~\ref{t:multiplicity} with $n=40$ tokens in $\mathbb{R}^3$. As before, $Q=K=I_d$, and we take $V=\mathrm{diag}(1, 1,-\frac12)$. A convex polytope $\mathcal{K}$ emerges before time $5$, toward which two coordinates of the tokens cluster, and persists throughout the evolution, while the tokens diverge along the coordinate corresponding to the eigenvalue $-\frac12$ (note the different scales between the four figures).}
    \label{fig:th41}
\end{figure}

\part{Proofs}

\section{Well-posedness} \label{sec: introductory.facts}

We collect several facts regarding the global-in-time existence and uniqueness of solutions to all systems under consideration. Throughout the remainder of the paper, we use the terminology "tokens" and "particles" interchangeably. 

To prove these results, we leverage the underlying continuity equation (see \eqref{e:conteq}). For the sake of future use, we prove a more general well-posedness result for the continuity equation than what is needed in this paper.

\subsection{Notation.}
We denote by $\mathcal{P}_c(\R^d)$ the set of compactly supported probability measures on $\R^d$, and by $\mathcal{P}_2(\R^d)$ the set of probability measures $\mu$ on $\R^d$ having finite second moment: $\int_{\R^d}\|x\|^2\diff\mu(x)<+\infty$.
Let $C^0(\R;\mathcal{P}_c(\R^d))$ denote the Banach space of continuous curves $\R\ni t\mapsto\mu(t)\in \mathcal{P}_c(\R^d)$. Here $\mathcal{P}_c(\R^d)$ is endowed with the weak topology, which coincides with the topology induced by the Wasserstein distance $W_p$ for any $p\in[1,+\infty)$.

As seen below, for compactness purposes regarding solutions to the continuity equation, we consider an additional property on the support of such curves, summarized by the following definition. 

\begin{definition}[Equi-compactly supported curves]
    The set $C^0_{\comp}(\R;\mathcal{P}_c(\R^d))$ consists of all elements $\mu\in C^0(\R;\mathcal{P}_c(\R^d))$ such that for any $t_0,t_1\in\R$, there exists a compact subset $\mathcal{K}\subset \R^d$ such that $\mathrm{supp}(\mu(t))\subset \mathcal{K}$ for any $t\in[t_0,t_1]$.
\end{definition}

We emphasise that there exist elements in $C^0(\mathbb{R};\mathcal{P}_c(\mathbb{R}^d))$ which do not satisfy this property with regard to their support---e.g.,
$\mu(t)=(1-e^{-\frac{1}{t^2}})\delta_0 + e^{-\frac{1}{t^2}}\delta_{\frac{1}{t}}$.

\subsection{Well-posedness of the ODEs}

For any initial datum, i.e. a sequence of $n$ points in $\R^d$, the dynamics \eqref{eq:trans_dyn} is well-posed, in the sense that it admits a unique solution defined for all times. 

\begin{proposition} \label{p:wellposedparticles}
For any initial datum $\bx_0=(x_1^0,\ldots,x_n^0)\in(\R^d)^n$, there exists a unique Lipschitz continuous function $\mathbb{R}\ni t\mapsto \bx(t)=(x_1(t),\ldots,x_n(t))$ such that $x_i(\cdot)$ solves \eqref{eq:trans_dyn} and satisfies $x_i(0)=x_i^0$ for any $i\in[n]$.
\end{proposition}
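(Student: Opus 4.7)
The natural route is a Picard--Lindelöf/Cauchy--Lipschitz argument combined with an a priori bound preventing finite-time blow-up. Concretely, I would set $\bx=(x_1,\ldots,x_n)\in(\R^d)^n$ and write the system \eqref{eq:trans_dyn} as $\dot{\bx}=F(\bx)$, where
\[
F_i(\bx)=\sum_{j=1}^n \frac{e^{\langle Qx_i, Kx_j\rangle}}{\sum_{\ell=1}^n e^{\langle Qx_i, Kx_\ell\rangle}} V x_j.
\]
The map $F:(\R^d)^n\to(\R^d)^n$ is smooth (the softmax denominator is strictly positive, the numerator and denominator are entire in $\bx$, and $V$ acts linearly), hence in particular locally Lipschitz. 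The standard Cauchy--Lipschitz theorem thus yields a unique maximal $C^1$ solution $\bx(\cdot)$ on an open interval $(T_-,T_+)\ni 0$ with $\bx(0)=\bx_0$.

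The key step is to show $T_-=-\infty$ and $T_+=+\infty$. Here I would exploit the crucial structural fact that, at every time, the matrix $P(t)\in\R^{n\times n}$ with entries \eqref{eq:P} is row-stochastic: $P_{ij}(t)\geq 0$ and $\sum_j P_{ij}(t)=1$. Setting $M(t):=\max_{i\in[n]}\|x_i(t)\|$, I compute
\[
\frac{\diff}{\diff t}\tfrac{1}{2}\|x_i(t)\|^2 = \Big\langle x_i(t),\sum_{j=1}^n P_{ij}(t) V x_j(t)\Big\rangle \leq \|V\|_{\op}\,\|x_i(t)\|\sum_{j=1}^n P_{ij}(t)\|x_j(t)\|\leq \|V\|_{\op}\,M(t)^2,
\]
so that $M(t)^2 \leq M(0)^2 + 2\|V\|_{\op}\int_0^t M(s)^2 \diff s$ for $t\geq 0$ (with the obvious analogue for $t\leq 0$). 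A direct application of Grönwall's inequality gives $M(t)^2\leq M(0)^2 \,e^{2\|V\|_{\op}|t|}$, uniformly on compact sub-intervals of $(T_-,T_+)$. This a priori bound rules out blow-up at any finite endpoint, so by the standard continuation criterion the maximal solution extends to all of $\R$.

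Finally, Lipschitz regularity in $t$ follows for free from the ODE: on any compact interval $[-T,T]$, the exponential bound on $M$ gives $\|\dot{x}_i(t)\|\leq \|V\|_{\op}\,M(t)\leq \|V\|_{\op}\,M(0)\,e^{\|V\|_{\op}T}$, so each $x_i$ is Lipschitz on $[-T,T]$, hence locally Lipschitz on $\R$. The only even mildly delicate point is verifying that $F$ is (globally) $C^1$ despite the ratio structure of $P_{ij}$; this is immediate because the denominator $\sum_\ell e^{\langle Q x_i, Kx_\ell\rangle}$ is a finite sum of strictly positive exponentials and thus bounded below by $n\cdot e^{-C\|\bx\|^2}>0$ on bounded sets, so no singularities appear. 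I expect no substantive obstacle: the argument is essentially the canonical ``stochasticity of $P$ + Grönwall'' combination, and the subsequent sections presumably recycle the same bound when extending to the continuity equation \eqref{e:conteq}.
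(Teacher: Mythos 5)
Your proof is correct, but it takes a genuinely different route from the paper's. You argue directly in the finite-dimensional phase space $(\R^d)^n$: the right-hand side $F$ is smooth because the softmax denominator is a strictly positive sum of exponentials, Cauchy--Lipschitz gives a unique maximal solution, and the row-stochasticity of $P(t)$ yields the a priori bound $\frac{\diff}{\diff t}\tfrac12\max_i\|x_i(t)\|^2\le \|V\|_\op\max_i\|x_i(t)\|^2$, which via Gr\"onwall excludes finite-time blow-up and gives global existence plus Lipschitz regularity in $t$. The paper instead proves well-posedness first for the continuity equation \eqref{e:conteq} with the attention kernel \eqref{e:vectorfield} (Proposition \ref{c:wellposedtransformers}, via an operator-splitting scheme, Arzel\`a--Ascoli in Wasserstein space, and a $W_2$ contraction estimate for uniqueness), and then obtains the particle statement as a corollary by taking $\mu_0=\frac1n\sum_j\delta_{x_j^0}$ and observing that the solution is a pushforward of $\mu_0$ under a Cauchy--Lipschitz flow, hence remains an empirical measure of $n$ Lipschitz curves; notably, the paper's uniqueness step uses exactly your Gr\"onwall bound to place the empirical measures in $C^0_{\comp}(\R;\mathcal{P}_c(\R^d))$ before invoking uniqueness at the measure level. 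Your argument is shorter and entirely self-contained for this proposition; the paper's detour buys the mean-field well-posedness and the $W_2$ stability estimate \eqref{e:w2estimate}, which it needs later (e.g.\ in the proof of Theorem~\ref{t:cas-Idintro}), so it avoids proving two separate results. The only point worth polishing in your write-up is that $M(t)=\max_i\|x_i(t)\|$ need not be differentiable at times where the maximizing index changes; your integral formulation already sidesteps this (bound each $\|x_i(t)\|^2$ by $M(0)^2+2\|V\|_\op\int_0^t M(s)^2\diff s$ and then take the maximum over $i$), so no substantive gap remains.
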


We postpone the proof which is seen as a corollary of the well-posedness for the corresponding continuity equation.
It follows that the equation \eqref{e:Rres} is also well-posed:

\begin{proposition} \label{p:wellposednessrescaledparticles}
For any initial datum $\mathbf{Z}_0=(z_1^0,\ldots,z_n^0)\in(\R^d)^n$, there exists a unique Lipschitz continuous function $\mathbb{R}\ni t\mapsto \mathbf{Z}(t)=(z_1(t),\ldots,z_n(t))$ such that $z_i(\cdot)$ solves \eqref{e:Rres} and satisfies $z_i(0)=z_i^0$ for any $i\in[n]$.
\end{proposition}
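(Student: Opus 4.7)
The plan is to deduce Proposition \ref{p:wellposednessrescaledparticles} as a direct corollary of Proposition \ref{p:wellposedparticles} via the explicit change of variables $z_i(t)=e^{-tV}x_i(t)$ already used in Section \ref{s:clusterpolytope} to derive \eqref{e:Rres}. At every fixed $t$, this is a globally invertible linear bijection of $\mathbb{R}^d$, and $t\mapsto e^{-tV}$ is smooth; consequently, it transfers existence, uniqueness, and (local) Lipschitz regularity back and forth between the two systems without loss.

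For existence, I would take the unique Lipschitz solution $\bx(t)=(x_1(t),\ldots,x_n(t))$ of \eqref{eq:trans_dyn} with initial datum $x_i(0)=z_i^0$ provided by Proposition \ref{p:wellposedparticles}, and set $z_i(t):=e^{-tV}x_i(t)$. Using that $V$ commutes with $e^{-tV}$, that $\dot x_i(t)=\sum_j P_{ij}(t)Vx_j(t)$, and that each row of $P(t)$ sums to one, a direct differentiation yields
\begin{equation*}
\dot z_i(t) = -V e^{-tV}x_i(t)+e^{-tV}\dot x_i(t) = -Vz_i(t)+\sum_{j=1}^n P_{ij}(t) V z_j(t) = \sum_{j=1}^n P_{ij}(t)V\bigl(z_j(t)-z_i(t)\bigr).
\end{equation*}
Substituting $x_i(t)=e^{tV}z_i(t)$ into the expression \eqref{eq:P} for $P_{ij}(t)$ reproduces exactly the softmax kernel appearing in \eqref{e:Rres}, so $(z_1,\ldots,z_n)$ is a solution of the rescaled system with the prescribed initial condition. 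Local Lipschitz regularity of $z_i$ on any compact interval follows from the Lipschitz regularity of $x_i$ and the boundedness of $\|e^{-tV}\|_\op$ on compact time intervals.

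For uniqueness, I would suppose that $(\tilde z_1,\ldots,\tilde z_n)$ is any other locally Lipschitz solution of \eqref{e:Rres} with the same initial data, and set $\tilde x_i(t):=e^{tV}\tilde z_i(t)$. Running the computation above in reverse shows that $(\tilde x_1,\ldots,\tilde x_n)$ is a Lipschitz solution of \eqref{eq:trans_dyn} with $\tilde x_i(0)=z_i^0$; the uniqueness part of Proposition \ref{p:wellposedparticles} then forces $\tilde x_i\equiv x_i$ on $\mathbb{R}$, whence $\tilde z_i\equiv z_i$. There is no genuine obstacle: the entire argument rests on the algebraic identity $[V,e^{-tV}]=0$ and on $\sum_j P_{ij}(t)=1$, and the substantive content is shifted onto Proposition \ref{p:wellposedparticles}, which the paper handles separately via the associated continuity equation.
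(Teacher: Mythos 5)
Your proposal is correct and follows exactly the paper's own route: the paper likewise deduces Proposition \ref{p:wellposednessrescaledparticles} as an immediate consequence of Proposition \ref{p:wellposedparticles} via the change of variables $x_i(t)=e^{tV}z_i(t)$. You simply spell out the differentiation and the row-stochasticity of $P(t)$ that the paper leaves implicit.
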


\begin{proof}[Proof of Proposition \ref{p:wellposednessrescaledparticles}]
Since the equations \eqref{eq:trans_dyn} and \eqref{e:Rres} are related by the change of variables $x_i(t)=e^{tV}z_i(t)$, Proposition \ref{p:wellposednessrescaledparticles} is an immediate consequence of Proposition \ref{p:wellposedparticles}.
\end{proof}

\subsection{The continuity equation}

To prove Proposition \ref{p:wellposedparticles}, we show a more general result concerning global existence and uniqueness of solutions to the corresponding continuity equation\footnote{which can be seen as a mean-field limit, and is sometimes also referred to as a \emph{Vlasov equation}.}
\begin{equation}\label{e:conteq}
\begin{cases}
\partial_t\mu+\mathrm{div}(\mathcal{X}[\mu]\mu)=0 &\text{ in } (0,+\infty)\times\mathbb{R}^d\\ 
\mu_{|t=0} = \mu_0 &\text{ in } \mathbb{R}^d,
\end{cases}
\end{equation}
when $\mathcal{X}[\mu]$ is the \emph{attention kernel}
\begin{equation}\label{e:vectorfield}
\mathcal{X}[\mu](x):=\frac{\displaystyle\int_{\R^d}e^{\langle Qx, Ky\rangle}Vy\diff\mu(y)}{\displaystyle\int_{\R^d}e^{\langle Qx,Ky\rangle}\diff\mu(y)}.
\end{equation}
We will make use of the following notion of solution.

\begin{definition} \label{d:solconti}
Fix $\mu_0\in \mathcal{P}_c(\R^d)$. 
We say that $t\mapsto \mu(t)=:\mu_t$ is a solution to the Cauchy problem \eqref{e:conteq} if 
$\mu\in C^0_{\comp}(\R,\mathcal{P}_c(\R^d))$, the function 
$$\mathbb{R}\ni t\mapsto \int_{\R^d}g(x)\diff\mu_t(x)$$
is absolutely continuous for every $g\in C_c^\infty(\R^d)$, and
$$
\int_{\R^d}g(x)\diff\mu_t(x)=\int_{\R^d}g(x)\diff\mu_0(x)+\int_0^t \int_{\R^d}\left\langle \nabla g(x),\mathcal{X}[\mu_t](x)\right\rangle \diff\mu_s(x)\diff s
$$
holds for almost every $t\in\R$.
\end{definition}

We will make use of the following lemma regarding \eqref{e:vectorfield}.

\begin{lemma} \label{lem: vectorfield.properties}
    For any $R>0$ there exists a constant $C_1(R)>0$ such that for any $\mu, \nu\in\mathcal{P}_c(\mathbb{R}^d)$ with support in $B(0, R)$, 
    \begin{align}
    \|\mathcal{X}[\mu]\|_{L^\infty(\mathbb{R}^d; \mathbb{R}^d)}&\leq \|V\|_\op R, \label{e:bddinx}\\
    \|\nabla_x \mathcal{X}[\mu]\|_{L^\infty(\mathbb{R}^d; \mathbb{R}^{d\times d})}&\leq 2\|Q^\top K\|_\op \|V\|_\op R^2 \label{e:lipinx}\\
\|\mathcal{X}[\mu](\cdot)-\mathcal{X}[\nu](\cdot)\|_{L^\infty(B(0,R); \mathbb{R}^d)}&\leq C_1(R)W_2(\mu,\nu).\label{e:lipinmu}
\end{align}
\end{lemma}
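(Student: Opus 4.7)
The plan is to treat each of the three bounds \eqref{e:bddinx}--\eqref{e:lipinmu} as a direct computation on the formula \eqref{e:vectorfield}. Introduce the normalising factor $Z(x,\mu):=\int_{\R^d} e^{\langle Qx,Ky\rangle}\diff\mu(y)$ and the probability measure $\diff\pi_x^\mu(y):=Z(x,\mu)^{-1}e^{\langle Qx,Ky\rangle}\diff\mu(y)$, so that $\mathcal{X}[\mu](x)=\int_{\R^d} Vy\diff\pi_x^\mu(y)$ is a genuine barycenter. For \eqref{e:bddinx}, since $\pi_x^\mu$ is supported in $B(0,R)$, the triangle inequality under the integral immediately gives $\|\mathcal{X}[\mu](x)\|\leq \|V\|_\op R$.

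For \eqref{e:lipinx}, I would differentiate the ratio under the integral. A short computation using $\nabla_x\log Z(x,\mu)=Q^\top K\bar y(x,\mu)$, where $\bar y(x,\mu):=\int_{\R^d}y\diff\pi_x^\mu(y)$, produces
\begin{equation*}
\nabla_x\mathcal{X}[\mu](x)=\int_{\R^d} Vy\,\bigl(y-\bar y(x,\mu)\bigr)^\top K^\top Q\diff\pi_x^\mu(y).
\end{equation*}
Using $\|uv^\top\|_\op=\|u\|\|v\|$ for the rank-one integrand, together with $\|y\|\leq R$ and $\|y-\bar y(x,\mu)\|\leq 2R$ (as $\bar y(x,\mu)\in B(0,R)$ by convexity), yields the operator-norm bound $\|\nabla_x\mathcal{X}[\mu](x)\|_\op\leq 2\|Q^\top K\|_\op\|V\|_\op R^2$ claimed in \eqref{e:lipinx}.

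For the $W_2$-stability \eqref{e:lipinmu}, I would fix $x\in B(0,R)$ and treat the numerator $N(x,\mu):=\int_{\R^d} e^{\langle Qx,Ky\rangle}Vy\diff\mu(y)$ and the denominator $Z(x,\mu)$ separately. Both integrands are Lipschitz in $y$ on $B(0,R)$ with constants depending only on $R$, $Q$, $K$, $V$; pairing Kantorovich--Rubinstein duality (applied coordinatewise, or directly via couplings in the vector-valued case) with the elementary bound $W_1\leq W_2$ then gives $W_2$-Lipschitz dependence of $\mu\mapsto Z(x,\mu)$ and $\mu\mapsto N(x,\mu)$. The denominator admits the uniform lower bound $Z(x,\mu)\geq e^{-\|Q\|_\op\|K\|_\op R^2}$ for $x\in B(0,R)$. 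Plugging this into the algebraic identity
\begin{equation*}
\mathcal{X}[\mu](x)-\mathcal{X}[\nu](x)=\frac{N(x,\mu)-N(x,\nu)}{Z(x,\mu)}+N(x,\nu)\left(\frac{1}{Z(x,\mu)}-\frac{1}{Z(x,\nu)}\right)
\end{equation*}
closes \eqref{e:lipinmu} with an explicit $C_1(R)$. The only step requiring care is the tracking of Lipschitz constants of the exponential-weighted integrands and the uniform lower bound on $Z$, but no conceptual obstacle arises.
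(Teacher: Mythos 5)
Your proposal is correct and follows essentially the same route as the paper: the barycenter interpretation for \eqref{e:bddinx}, the quotient-rule computation (your covariance form $\int Vy\,(y-\bar y)^\top K^\top Q\,\diff\pi_x^\mu$ is just a repackaging of the paper's two-term bound) for \eqref{e:lipinx}, and the numerator/denominator splitting combined with Kantorovich--Rubinstein duality and $W_1\leq W_2$ for \eqref{e:lipinmu}. No gaps.
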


\begin{proof}
    We henceforth set $\mathsf{G}(x,y):=e^{\langle Qx, Ky\rangle}$.
    To show \eqref{e:bddinx}, since $\mathsf{G}>0$ we see that for any $x\in\mathbb{R}^d$,
    \begin{equation*}
        \left\|\mathcal{X}[\mu](x)\right\|\leqslant \|V\|_\op \frac{\displaystyle \int_{B(0, R)} \mathsf{G}(x,y)\|y\|\diff\mu(y)}{\displaystyle \int_{B(0, R)} \mathsf{G}(x,y)\diff\mu(y)} \leqslant \|V\|_\op R.
    \end{equation*}
    We now show \eqref{e:lipinx}. Note that $\nabla_x \mathsf{G}(x,y) = Q^\top K y \mathsf{G}(x,y)$, thus, arguing as above, we find
    \begin{align*}
        \|\nabla_x\mathcal{X}[\mu](x)\|&\leqslant \frac{\displaystyle \int_{B(0, R)} \|\nabla_x \mathsf{G}(x,y)\|\|Vy\|\diff\mu(y)}{\displaystyle \int_{B(0, R)} \mathsf{G}(x,y)\diff\mu(y)}\\
        &\hspace{1cm}+ \|V\|_\op \frac{\displaystyle \int_{B(0, R)} \mathsf{G}(x,y)\|y\|\diff\mu(y)}{\displaystyle \int_{B(0, R)} \mathsf{G}(x,y)\diff\mu(y)}\frac{\displaystyle \int_{B(0, R)}\|\nabla_x \mathsf{G}(x,y)\|\diff\mu(y)}{\displaystyle \int_{B(0, R)} \mathsf{G}(x,y)\diff\mu(y)}\\
        &\leq 2\|Q^\top K\|_\op \|V\|_\op R^2.
    \end{align*}
    We finally prove \eqref{e:lipinmu}. 
Using the fact that
\begin{equation*}
\int_{\mathbb{R}^d} \mathsf{G}(x,y)\diff \mu(y) \geqslant \left(\inf_{(x,y)\in B(0,R)^2} \mathsf{G}(x,y)\right)\mu(B(0,R)),
\end{equation*}
--with an analogous bound for $\nu$--, we see that it suffices to bound 
\begin{equation*}
\left|\int_{\R^d} \mathsf{G}(x,y)Vy\diff\mu(y) \int_{\R^d} \mathsf{G}(x,y)\diff\nu(y) -\int_{\R^d} \mathsf{G}(x,y)Vy\diff\nu(y) \int_{\R^d} \mathsf{G}(x,y)\diff\mu(y)\right|
\end{equation*}
from above.
We rewrite this difference by making $\mu-\nu$ appear artificially, and we then use the triangle inequality along with the fact that both $\int_{\R^d} \mathsf{G}(x,y)Vy\diff\mu(y)$ and $\int_{\R^d} \mathsf{G}(x,y)\diff\mu(y)$ are bounded from above (by $e^{\|Q^\top K\|_\op R^2}\max(1,\|V\|_\op R)$). 
We thus end up with the task of bounding from above the absolute values of
\begin{equation}\label{e:deuxintàborner}
\int_{\R^d} \mathsf{G}(x,y)({\diff\nu}-\diff\mu)(y) \qquad \text{and} \qquad \int_{\R^d} \mathsf{G}(x,y)Vy(\diff\nu-\diff\mu)(y).
\end{equation}
For the first integral, from the Kantorovich-Rubinstein duality we deduce 
\begin{equation} \label{eq: inequality.just.above}
\left|\int_{\R^d} \mathsf{G}(x,y)(\diff\nu-\diff\mu)(y)\right|\leq \|\mathsf{G}(x,\cdot)\|_{C^{0,1}(B(0,R))}W_1(\mu,\nu).    
\end{equation}
We now recall the following inequality relating Wasserstein distances of different orders: for any $p\geq 1$ and any bounded set $B$, for all Radon measures $\mu,\nu$ supported in $B$,
\begin{equation}\label{e:comparaisonwasserstein}
W_1(\mu,\nu)\leq W_p(\mu,\nu)\leq \mathrm{diam}(B)^{1-\frac{1}{p}} W_1(\mu,\nu)^{1/p}.
\end{equation}
Using \eqref{e:comparaisonwasserstein} and the fact that the Lipschitz constant $ \|\mathsf{G}(x,\cdot)\|_{C^{0,1}(B(0,R))}$ is uniformly bounded for $\|x\|\leq R$ by some $C_R>0$ in \eqref{eq: inequality.just.above}, we end up with
$$
\left|\int_{\R^d}\mathsf{G}(x,y)(\diff\nu-\diff\mu)(y)\right|\leq C_R W_2(\mu,\nu).
$$
The same chain of inequalities applies to the second integral in \eqref{e:deuxintàborner} (with the additional multiplier $\|V\|_\op R$), which finally leads us to \eqref{e:lipinmu}.
\end{proof}

The following existence and uniqueness result is adapted from \cite[Theorem 2.3]{piccoli2015control}. In fact, the result holds true for any vector field $\mathcal{X}[\mu]$ on $\mathbb{R}^d$ satisfying conditions analog to those entailed by Lemma \ref{lem: vectorfield.properties}.

\begin{proposition} \label{c:wellposedtransformers}
For any initial condition $\mu_0\in\mathcal{P}_c(\R^d)$, the Cauchy problem \eqref{e:conteq} admits a unique solution $\mu\in C^0_{\comp}(\R;\mathcal{P}_c(\R^d))$ in the sense of Definition~\ref{d:solconti}. 

Furthermore, we have the following stability estimate for solutions: for any $R>0$ and $T>0$, there exists a constant $C(T,R)>0$ such that for any $\mu_0,\nu_0 \in \mathcal{P}_c(\mathbb{R}^d)$ with support in $B(0,R)$, 
\begin{equation}\label{e:w2estimate}
W_2(\mu(t),\nu(t))\leq e^{C(T,R)t}W_2(\mu_0,\nu_0)
\end{equation}
for any $t\in [0,T]$, where $\mu(t)$ and $\nu(t)$ solve \eqref{e:conteq} with initial conditions $\mu_0$ and $\nu_0$ respectively.
\end{proposition}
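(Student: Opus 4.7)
The plan is to obtain existence and uniqueness via a Banach fixed-point argument in the Wasserstein space, in the spirit of \cite[Theorem 2.3]{piccoli2015control}. Lemma \ref{lem: vectorfield.properties} supplies exactly the three ingredients needed: uniform boundedness \eqref{e:bddinx}, spatial Lipschitzness \eqref{e:lipinx}, and Wasserstein-Lipschitzness \eqref{e:lipinmu} of $\mathcal{X}[\mu]$ on bounded sets. Fix $T>0$ and $R>0$ with $\supp(\mu_0)\subset B(0,R)$; since the velocity field is uniformly bounded on sets of diameter at most $2R$, a Grönwall-type bound yields an \emph{a priori} radius $R_T$, depending only on $T,R,\|V\|_\op$, such that the support of any candidate solution stays in $B(0,R_T)$ on $[0,T]$. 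I would work on the complete metric space
\[
\mathscr{A}_T:=\left\{\mu\in C^0([0,T];\mathcal{P}_c(\R^d))\colon \mu(0)=\mu_0,\ \supp(\mu(t))\subset B(0,R_T)\text{ for all }t\in[0,T]\right\},
\]
endowed with $d_T(\mu,\nu):=\sup_{t\in[0,T]}W_2(\mu(t),\nu(t))$.

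For any $\mu\in\mathscr{A}_T$, the non-autonomous field $(t,x)\mapsto \mathcal{X}[\mu(t)](x)$ is uniformly bounded and uniformly Lipschitz in $x$, so Picard--Lindelöf furnishes a global flow $\Phi^\mu_t\colon\R^d\to\R^d$. Setting $\mathcal{T}(\mu)(t):=(\Phi^\mu_t)_\#\mu_0$, one verifies via \eqref{e:bddinx} that $\mathcal{T}(\mu)\in\mathscr{A}_T$, and any fixed point of $\mathcal{T}$ solves \eqref{e:conteq} in the sense of Definition \ref{d:solconti} by the usual superposition principle (differentiating $\int g\diff\mu_t=\int g(\Phi^\mu_t(x))\diff\mu_0(x)$ in $t$). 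To exhibit a unique fixed point, I would estimate, for $\mu,\nu\in\mathscr{A}_T$,
\[
W_2(\mathcal{T}(\mu)(t),\mathcal{T}(\nu)(t))^2\leq \int_{\R^d}\|\Phi^\mu_t(x)-\Phi^\nu_t(x)\|^2\diff\mu_0(x),
\]
and then bound the integrand pointwise by writing $\tfrac{\mathrm{d}}{\mathrm{d}t}(\Phi^\mu_t(x)-\Phi^\nu_t(x))=\mathcal{X}[\mu(t)](\Phi^\mu_t(x))-\mathcal{X}[\nu(t)](\Phi^\nu_t(x))$, adding and subtracting $\mathcal{X}[\mu(t)](\Phi^\nu_t(x))$, and applying \eqref{e:lipinx}, \eqref{e:lipinmu}, and Grönwall. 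This yields $W_2(\mathcal{T}(\mu)(t),\mathcal{T}(\nu)(t))\leq C(T,R_T)\int_0^t W_2(\mu(s),\nu(s))\diff s$, whence the standard iteration gives $d_T(\mathcal{T}^k\mu,\mathcal{T}^k\nu)\leq \tfrac{(C(T,R_T)T)^k}{k!}d_T(\mu,\nu)$, so $\mathcal{T}^k$ is a contraction for $k$ large and Banach's theorem produces the unique fixed point. Concatenating over $[0,T],[T,2T],\dots$ then delivers a global solution in $C^0_{\comp}(\R;\mathcal{P}_c(\R^d))$.

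The stability estimate \eqref{e:w2estimate} follows from essentially the same computation, this time comparing the flows driven by the same curve $\mu$ but emanating from initial data $\mu_0$ and $\nu_0$: the same Grönwall step pulls out a factor $W_2(\mu_0,\nu_0)$ and, after absorbing the self-coupling in $d$, produces the exponential factor $e^{C(T,R)t}$. Finally, Proposition \ref{p:wellposedparticles} falls out by specialising to $\mu_0=\tfrac{1}{n}\sum_{i=1}^n\delta_{x_i^0}$: since $(\Phi^\mu_t)_\#\mu_0$ remains an equally-weighted sum of $n$ Diracs, its atoms $x_i(t):=\Phi^\mu_t(x_i^0)$ solve \eqref{eq:trans_dyn}, and ODE uniqueness is inherited from flow uniqueness.

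The main subtlety is the two-way coupling between the support radius and the Lipschitz constants: the bounds in Lemma \ref{lem: vectorfield.properties} deteriorate with $R$, so one cannot let the support grow freely while simultaneously running the contraction argument. Establishing the \emph{a priori} radius $R_T$ \emph{before} the fixed point is extracted---this is precisely what the space $\mathscr{A}_T$ enforces---is therefore the step that requires the most care, and it is the reason the equi-compact-support condition of Definition \ref{d:solconti} is built into the notion of solution.
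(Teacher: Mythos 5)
Your argument is correct, but for the existence part it takes a genuinely different route from the paper. You run a Picard/Banach fixed-point iteration on the space $\mathscr{A}_T$ with the sup-$W_2$ metric, obtaining existence, uniqueness and (essentially for free) the stability estimate \eqref{e:w2estimate} from a single Gr\"onwall computation on the map $\mu\mapsto(\Phi^\mu_t)_\#\mu_0$. The paper instead constructs a solution by an operator-splitting scheme: it freezes the vector field $\mathcal{X}[\mu]$ on dyadic subintervals of length $T/2^k$, propagates by the resulting autonomous flows, proves a uniform support bound and a uniform $W_2$-Lipschitz bound for the approximants, and extracts a limit by Arzel\`a--Ascoli; uniqueness and stability are then handled separately via the flow-comparison inequality \eqref{e:prt} from \cite[Proposition 4]{piccoli2016properties}. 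Both routes rest on the same inputs from Lemma \ref{lem: vectorfield.properties}. Your contraction argument is more economical and self-contained; the paper's compactness route is constructive (it mirrors a numerical time-stepping scheme) and would survive in settings where $\mu\mapsto\mathcal{X}[\mu]$ is merely continuous rather than Wasserstein-Lipschitz, where a contraction is unavailable. Two points deserve explicit care in a written-out version: (i) as you note, the constant-radius set $\mathscr{A}_T$ is not invariant under $\mathcal{T}$ for the naive bound $R+T\|V\|_\op R_T$, so you must either take $T$ small and concatenate, or replace the constraint by the time-dependent radius $Re^{\|V\|_\op t}$, which \emph{is} invariant; (ii) Banach's theorem gives uniqueness only among fixed points of $\mathcal{T}$, so to conclude uniqueness in the sense of Definition \ref{d:solconti} you need the standard fact that any weak solution of the linear continuity equation driven by the bounded, spatially Lipschitz field $\mathcal{X}[\mu(t)]$ coincides with the pushforward of $\mu_0$ along its flow --- the paper's uniqueness argument implicitly uses the same representation.
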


Results of this nature can be found in the literature---see for instance \cite{piccoli2015control}. They are however not sufficient for our purposes. We wrote Proposition \ref{c:wellposedtransformers} in the $W_2$ setting instead of the usual $W_1$ setting (used for instance for the classical \emph{Dobrushin estimate}~\cite{dobrushin1979vlasov, golse2013mean}) because it allows to extend the results of \cite{weinan2019mean} without difficulty from classical ResNets to self-attention dynamics. 
We recall that the goal of \cite{weinan2019mean} is to import classical (mean-field) optimal control tools such as the Pontryagin maximum principle and the analysis of Hamilton-Jacobi-Bellman equations to deep learning, and relies heavily on $W_2$ estimates (e.g., in \cite[Section 4]{weinan2019mean}).

\begin{proof}[Proof of Proposition \ref{c:wellposedtransformers}]
To ease reading, we split the proof in three parts.
\vspace{0.2cm}

\noindent
{\bf Part 1: Existence.}
Fix an arbitrary $T>0$. For $k\geq1$, set 
\begin{equation*}
 \tau_k:=\frac{T}{2^k}.
\end{equation*}
We define a sequence of curves $\mu^k:[0,T]\rightarrow \mathcal{P}_c(\R^d)$ by the following scheme\footnote{In other words we "freeze" the vector field $\mathcal{X}$ on each interval of the form $[\ell\tau_k,(\ell+1)\tau_k)$, and during this time interval, we follow the flow generated by this vector field starting from $\mu^k(\ell\tau_k)$.}:
\begin{enumerate}[(i)]
\item $\mu^k(0):=\mu_0$;
\item $\mu^k(\ell\tau_k+t):=\left(\Phi^t_{\mathcal{X}[\mu^k(\ell\tau_k)]}\right)_{\#} \mu^k(\ell\tau_k)$ for $\ell\in\{0,\ldots,2^k-1\}$ and $t\in (0,\tau_k]$,
\end{enumerate}
where for any $x\in\mathbb{R}^d$, $\Phi^t_{\mathcal{X}[\mu^k(\ell\tau_k)]}(x)$ is the unique solution to the Cauchy problem
\begin{equation*}
    \begin{cases}
        \dot{y}(t) = \mathcal{X}[\mu^k(\ell\tau_k)](y(t)) &\text{ on } [0, \tau_k]\\
        y(0) = x.
    \end{cases}
\end{equation*}
(The above problem indeed has a unique solution for any $x\in\mathbb{R}^d$ by virtue of the Cauchy-Lipschitz theorem, using \eqref{e:lipinx}.) By construction, $\mu^k\in C^0([0, T];\mathcal{P}_c(\mathbb{R}^d))$ for any $k\geqslant1$.

We begin by showing that there exists a radius $R=R(T)>0$ independent of $k$ such that $\supp(\mu^k(t))\subset B(0, R)$ for any $k\geqslant1$ and $t\in[0, T]$. To this end, for any $t\in[0, T]$ and $k\geqslant1$, let $R_k(t)>0$ denote the smallest positive radius\footnote{This radius always exists, since $\mu^k(t)$ is compactly supported.} such that $\supp(\mu^k(t))\subset B(0, R_k(t))$. We will first look to show that 
\begin{equation} \label{eq: support.ball}
 \supp(\mu^k(\ell\tau_k+t))\subset B(0, R_k(\ell\tau_k)+t\|V\|_{\op} R_k(\ell\tau_k)).   
\end{equation}
Let $x\in\supp(\mu^k(\ell\tau_k+t))$, thus $\mu^k(\ell\tau_k+t)(B(x,\varepsilon))>0$ for any $\varepsilon>0$. By the change of variables formula, we find that 
\begin{equation*}
    \int_{(\Phi^t_{\mathcal{X}[\mu^k(\ell\tau_k)]})^{-1}(B(x,\varepsilon))} \diff\mu^k(\ell\tau_k)(z)>0.
\end{equation*}
Consequently $(\Phi^t_{\mathcal{X}[\mu^k(\ell\tau_k)]})^{-1}(B(x,\varepsilon))\cap\supp(\mu^k(\ell\tau_k))\neq\emptyset$, and let $z$ be an element lying in this set. From the Duhamel formula, we gather that
\begin{equation*}
    \Phi^t_{\mathcal{X}[\mu^k(\ell\tau_k)]}(z)=:y(t) = z + \int_0^t \mathcal{X}[\mu^k(\ell\tau_k)](y(s))\diff s.
\end{equation*}
Since $z\in(\Phi^t_{\mathcal{X}[\mu^k(\ell\tau_k)]})^{-1}(B(x,\varepsilon))$, we find that 
\begin{equation*}
    \left\|z+\int_0^t\mathcal{X}[\mu^k(\ell\tau_k)](y(s))\diff s - x \right\|\leq \varepsilon. 
\end{equation*}
Using the triangle inequality, \eqref{e:bddinx}, and since $z\in\supp(\mu^k(\ell\tau_k))$ implies $z\in B(0,R_k(\ell\tau_k))$, we deduce that 
\begin{equation*}
    \|x\|\leqslant\varepsilon + t \|V\|_{\op}R_k(\ell\tau_k) + R_k(\ell\tau_k).
\end{equation*}
Since $\varepsilon>0$ is arbitrary, this inequality yields \eqref{eq: support.ball}.
We now use \eqref{eq: support.ball} to prove the original claim. 
Using the definition of the radius $R_k(t)$, we evaluate \eqref{eq: support.ball} at  $t=\tau_k$ and find
\begin{equation*}
R_k((\ell+1)\tau_k)\leqslant (1+\|V\|_{\op}\tau_k) R_k(\ell\tau_k).
\end{equation*}
By induction, we deduce that
\begin{equation*}
R_k(\ell\tau_k) \leqslant (1+\|V\|_{\op}\tau_k)^\ell R_k(0),
\end{equation*}
whence
\begin{align*}
R_k(\ell\tau_k) \leqslant \left(1+\|V\|_{\op}\frac{T}{2^k}\right)^{2^k} R_k(0)<e^{\|V\|_{\op}T}R_0,
\end{align*}
where $R_0>0$ denotes the smallest positive radius such that $\supp(\mu_0)\subset B(0, R_0)$. Since the above bound is independent of $k$, the claim follows, yielding the desired radius $R=R(T)>0$ bounding the support of every element in the sequence. In turn, we also deduce that $\mu^k\in C^0_{\mathrm{co}}(\mathbb{R};\mathcal{P}_c(\mathbb{R}^d))$ for any $k\geqslant1$.

Using the above fact, along with \eqref{e:bddinx} and the definition of $\mu^k(\ell\tau_k+t)$, we find that
\begin{equation*}
 W_2\left(\mu^k(\ell\tau_k+t),\mu^k(\ell\tau_k)\right)\leq \|V\|_{\op} Rt  
\end{equation*}
for any $\ell\in\{0,\ldots,2^k-1\}$, $t\in (0,\tau_k]$ and $k\geqslant1$. Gluing these inequalities (for different $\ell$ and $t$) with the triangle inequality yields 
\begin{equation*}
 W_2\left(\mu^k(t),\mu^k(s)\right)\leq \|V\|_{\op} R|t-s|   
\end{equation*}
for any $t\in[0, T]$. Since $\mu^k(0)=\mu_0$ for any $k\geq1$, and since $\mathcal{P}_2(\R^d)$ is the completion of $\mathcal{P}_c$ for the Wasserstein distance $W_2$, the Arzelà-Ascoli theorem implies the existence of a subsequence uniformly converging to some $\mu^*\in C^0([0,T];\mathcal{P}_2(\R^d))$. Since for any $t\in[0,T]$ the curves $\mu^k(t)$ have their support enclosed in $B(0,R)$ for any $k\geqslant1$, we even deduce that $\mu^*\in C^0_{\comp}(\R,\mathcal{P}_c(\R^d))$. Note moreover that $\mu^*(0)=\mu_0$ and that
\begin{equation*}
W_2(\mu^*(t),\mu^*(s))\leq \|V\|_{\op} R|t-s|
\end{equation*}
for any $t,s\in[0,T]$.

The fact that $\mu^*$ is a solution of \eqref{e:conteq} follows exactly from the same computations as in \cite[p. 4711-4712]{piccoli2015control}, starting from (A.2) therein. 
We do not reproduce here this argument since the computations are the same word for word. 
The fact that for any $T>0$ we have $\sup_{t\in[0,T]}W_1(\mu^*(t),\mu^k(t))\rightarrow 0$ as $k\rightarrow +\infty$, which is instrumental in \cite[p. 4711-4712]{piccoli2015control}, follows in our case from the left-hand-side of \eqref{e:comparaisonwasserstein}.
\vspace{0.2cm}

\noindent
{\bf Part 2: Uniqueness.}
Regarding uniqueness, we proceed as follows. 
We first recall the following estimate from \cite[Proposition 4]{piccoli2016properties}. Let $p\geq 1$, let $t\geq 0$, let $v, w\in C^{0, 1}\cap L^\infty([0,t]\times\mathbb{R}^d;\mathbb{R}^d)$ (both with Lipschitz constant $L>0$, say), and let $\mu,\nu\in\mathcal{P}_c(\R^d)$. Then 
\begin{equation}\label{e:prt}
    W_p\left((\Phi_v^t)_{\#}\mu,(\Phi_w^t)_{\#}\nu\right)\leq e^{\frac{p+1}{p}Lt}W_p(\mu,\nu)+\frac{e^{\frac{Lt}{p}}(e^{Lt}-1)}{L}\|v-w\|_{L^\infty([0,t]\times\mathbb{R}^d; \mathbb{R}^d)}.
\end{equation} 
Now assume that there are two solutions $\mu$ and $\nu$ of \eqref{e:conteq}, with a spatial support that is locally bounded in time, and having the same initial condition. Define $v(t,x):=\mathcal{X}[\mu(t)](x)$ and $w(t,x):=\mathcal{X}[\nu(t)](x)$. 
Also set
\begin{equation*}
    t_0:=\inf\{t\geq0\colon W_2(\mu(t),\nu(t))\neq 0\},
\end{equation*}
and assume that $t_0\neq +\infty$. Fix $T>t_0$ and take $R>0$ such that $\mu_t$ and $\nu_t$ are supported in $B(0,R)$ for any $t\in[0,T]$. Using \eqref{e:prt} with $p=2$, and setting $C_2(R):=2\|Q^\top K\|_\op \|V\|_\op R^2$ in \eqref{e:lipinx}, we find
\begin{align*}
W_2(\mu(t_0+s),\nu(t_0+s))&\leq e^{2C_2(R)s}W_2(\mu(t_0),\nu(t_0))\\ 
&\hspace{1cm}+e^{C_2(R)s}\frac{e^{C_2(R)s}-1}{C_2(R)}\sup_{\tau\in [t_0,t_0+s]}\|v(\tau,\cdot)-w(\tau,\cdot)\|_{L^\infty(\mathbb{R}^d)}.
\end{align*}
Choose $s>0$ sufficiently small so that $e^{C_2(R)s}-1\leq 2C_2(R)s$. Then, by virtue of \eqref{e:lipinmu} and the fact that $W_2(\mu(t_0),\nu(t_0))=0$, we deduce
\begin{equation}\label{e:estimate}
W_2(\mu(t_0+s),\nu(t_0+s))\leq 2se^{C_2(R)s}\sup_{\tau\in [t_0,t_0+s]} W_2(\mu(\tau),\nu(\tau)).
\end{equation}
We choose $s'>0$ satisfying both $e^{C_2(R)s'}-1\leq 2C_2(R)s'$ and $2s'e^{C_2(R)s'}<1$. Applying \eqref{e:estimate} to every $s\in[0,s']$ we obtain
\begin{align*}
\sup_{s\in[0,s']}W_2(\mu(t_0+s),\nu(t_0+s))&\leq 2s'e^{C_2(R)s'}\sup_{\tau\in [t_0,t_0+s']} W_2(\mu(\tau),\nu(\tau))\\
&<\sup_{s\in[0,s']}W_2(\mu(t_0+s),\nu(t_0+s)),
\end{align*}
which is a contradiction. Therefore $\mu(t)\equiv\nu(t)$ for any $t\geq 0$, which proves uniqueness, as desired.

\vspace{0.2cm}

\noindent
{\bf Part 3: Stability.}
We do not detail the proof of estimate \eqref{e:w2estimate}, which is very similar to the proof of (2.3) in Theorem 2.3 of \cite{piccoli2015control}: it follows from \eqref{e:prt} with $p=2$, and the argument after (A.7) in \cite{piccoli2015control}, with $W_2$ instead of $W_1$. See also \cite[Theorem 3]{piccoli2013transport}.
\end{proof}

We conclude this section with the proof of Proposition \ref{p:wellposedparticles}, which follows as a corollary of the above derivations.

\begin{proof}[Proof of Proposition \ref{p:wellposedparticles}]
We first show existence. We apply Proposition \ref{c:wellposedtransformers} with $\mu_0:=\frac1n\sum_{j=1}^n \delta_{x_i^0}$, which in turn yields a solution $\mu(t)$ to \eqref{e:conteq}.
Following the proof of Proposition \ref{c:wellposedtransformers}, we also know that this solution satisfies $\mu(t)=(\Phi^t_{\mathcal{X}[{\mu}(t)]})_{\#}\mu_0$ for any $t\in\mathbb{R}$, and the vector field $\mathcal{X}[\mu(t)]$ satisfies the assumptions of the Cauchy-Lipschitz theorem. In particular, $\mu(t)$ is of the form $\mu(t)=\frac1n\sum_{j=1}^n \delta_{x_i(t)}$ for some Lipschitz curves $\mathbb{R}\ni t\mapsto x_i(t)$, for $i\in[n]$. Then $t\mapsto \mu(t)=\frac1n\sum_{j=1}^n \delta_{x_i(t)}$ is a solution to the Cauchy problem \eqref{e:conteq}-\eqref{e:vectorfield} in the sense of Definition~\ref{d:solconti}.

Secondly, we show uniqueness. Suppose that $\bx(t)=(x_1(t),\ldots,x_n(t))$ and $\bx^*(t)$ are two Lipschitz solutions to \eqref{eq:trans_dyn}, with the same initial conditions. Then for a.e. $t\geq 0$, using the equation \eqref{eq:trans_dyn} and the fact that the attention matrix coefficients $P_{ij}(t)$ defined in \eqref{eq:P} belong to $[0,1]$, we obtain
$$
\frac12\frac{\diff}{\diff t}\max_{i\in[n]} \|x_i(t)\|^2\leq \|V\|_\op \max_{i\in[n]} \|x_i(t)\|^2
$$
(and analogously for $x_i^*(t)$). Using Grönwall's inequality, we deduce the existence of two constants $c_1,c_2>0$ such that for any $t>0$ and for any $i\in[n]$, $\|x_i(t)\|$ and $\|x_i^*(t)\|$ are bounded from above by $c_1e^{c_2t}$. It then follows that the empirical measures $\mu(\cdot)=\frac1n\sum_{j=1}^n\delta_{x_i(\cdot)}$ and $\mu^*(\cdot)=\frac1n\sum_{j=1}^n \delta_{x_i^*(\cdot)}$ belong to $C^0_{\comp}(\R,\mathcal{P}_c(\R^d))$. Moreover, they satisfy $\mu(t)=(\Phi^t_{\mathcal{X}[\mu(t)]})_{\#}\mu_0$ and $\mu^*(t)=(\Phi^t_{\mathcal{X}[\mu^*(t)]})_{\#}\mu_0$ and are thus solutions to \eqref{e:conteq}. Using the uniqueness result of Proposition \ref{c:wellposedtransformers}, we obtain that $\mu=\mu^*$ which concludes the proof.
\end{proof}

\section{Proof of Theorem~\ref{t:boolean}} \label{sec: self-att}

Throughout this section we focus on the following dynamics:
\begin{equation}\label{e:Idnonresca}
\dot{x}_i(t)=\sum_{j=1}^n\left(\frac{e^{\langle x_i(t),x_j(t)\rangle}}{\sum_{k=1}^n e^{\langle x_i(t),x_k(t)\rangle}}\right)x_j(t).
\end{equation}
Note that for $d=1$, the dot products in \eqref{e:Idnonresca} are just multiplications of scalars.

We begin with the following observation, which holds for any $d\geq1$.

\begin{lemma}\label{l:logsumexp}
For any $x_1,\ldots,x_n\in\R^d$, the function $f:\R^d\rightarrow \R$ defined by 
\begin{equation}\label{e:logsumexpfct}
    f:x\mapsto\log\left(\sum_{j=1}^n e^{\langle x,x_j\rangle}\right)
\end{equation}
is convex.
\end{lemma}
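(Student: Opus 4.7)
The function $f$ is the composition of $\log$ with the log-sum-exp functional, which is classical and admits two standard proofs. I would present the direct Hölder-based argument as it avoids any second-order computation and works uniformly for all $d\ge 1$.

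First, fix $x,y\in\mathbb{R}^d$ and $\lambda\in[0,1]$. The plan is to factor the linear-in-$x$ inner product inside the exponential and apply Hölder's inequality with conjugate exponents $p=1/\lambda$ and $q=1/(1-\lambda)$ (with the trivial boundary cases $\lambda\in\{0,1\}$ handled separately). Concretely, write
\[
\sum_{j=1}^n e^{\langle \lambda x+(1-\lambda)y,\,x_j\rangle} \;=\; \sum_{j=1}^n \Bigl(e^{\langle x,x_j\rangle}\Bigr)^{\lambda}\Bigl(e^{\langle y,x_j\rangle}\Bigr)^{1-\lambda},
\]
and then invoke Hölder's inequality on the right-hand side to bound it by
\[
\Bigl(\sum_{j=1}^n e^{\langle x,x_j\rangle}\Bigr)^{\lambda}\Bigl(\sum_{j=1}^n e^{\langle y,x_j\rangle}\Bigr)^{1-\lambda}.
\]
Taking logarithms of both sides yields $f(\lambda x+(1-\lambda)y)\le \lambda f(x)+(1-\lambda)f(y)$, which is the desired convexity inequality.

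As an alternative route (useful later in the paper if one needs a quantitative strict-convexity statement), I would compute the Hessian. Setting $p_j(x):=e^{\langle x,x_j\rangle}/\sum_k e^{\langle x,x_k\rangle}$, a direct calculation gives $\nabla f(x)=\sum_j p_j(x) x_j$ and
\[
\nabla^2 f(x) \;=\; \sum_{j=1}^n p_j(x)\, x_j x_j^\top \;-\; \Bigl(\sum_{j=1}^n p_j(x) x_j\Bigr)\Bigl(\sum_{j=1}^n p_j(x) x_j\Bigr)^{\!\top},
\]
which is the covariance matrix of the random vector taking value $x_j$ with probability $p_j(x)$ and is therefore positive semi-definite. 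Convexity then follows from the standard second-order characterization.

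There is no real obstacle here: the main subtlety is merely recognizing the right application of Hölder (or, equivalently, that the Hessian is a covariance matrix). I would favor the Hölder argument in the final write-up for compactness.
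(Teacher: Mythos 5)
Your Hölder argument is correct and complete, but it takes a different route from the paper. The paper proves midpoint convexity by expanding the product $\exp(f(x))\exp(f(y))$ as a symmetrized double sum over pairs $(j,k)$ and applying the elementary inequality $a+b\geq 2\sqrt{ab}$ termwise, obtaining $\exp(f(x)+f(y))\geq \exp(2f(\frac{x+y}{2}))$; full convexity then follows since $f$ is continuous (a step the paper leaves implicit). Your Hölder proof with exponents $p=1/\lambda$, $q=1/(1-\lambda)$ gives the convexity inequality directly for every $\lambda\in[0,1]$, which is cleaner in that respect. What the paper's formulation buys is that the equality case is transparent: equality requires $\langle x-y,\,x_j-x_k\rangle=0$ for all $j,k$, and this is exactly what is invoked later (in the proof of Lemma~\ref{l:onlyone}) to conclude strict convexity for configurations of pairwise distinct anchor points. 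If you adopt the Hölder route, you would need to supplement it with the corresponding equality analysis for Hölder (proportionality of the two sequences), or with your Hessian-as-covariance computation, to recover that later use. Both of your arguments are standard and sound; there is no gap.
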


\begin{proof}
Using the elementary inequality $(a+b)\geq 2(ab)^{\frac12}$ for any $a,b\geq 0$, we have
\begin{align}
\exp(f(x)+f(y))&=\left(\sum_{j=1}^n \exp(\langle x,x_j\rangle)\right)\left(\sum_{j=1}^n \exp(\langle y,x_j\rangle)\right)\nonumber\\
&=\frac12\sum_{j=1}^n\sum_{k=1}^n\Big[\exp\left(\langle x,x_j\rangle+\langle y,x_k\rangle\right)+\exp(\langle x,x_k\rangle+\langle y,x_j\rangle)\Big]\label{e:magicconvex}\\
&\geq \sum_{j=1}^n\sum_{k=1}^n\exp\left(\left\langle \frac{x+y}{2},x_j+x_k\right\rangle\right)\label{e:magicconvex2}\\
&=\exp\left(2f\left(\frac{x+y}{2}\right)\right).\nonumber
\end{align}
Taking the $\log$ on both sides yields the statement.
\end{proof}

The following lemma also holds for any $d\geq1$. 

\begin{lemma}\label{l:distnondec}
Let $\mathbb{R}\ni t\mapsto \{x_i(t)\}_{i\in[n]}$ be a solution to \eqref{e:Idnonresca}. Then for any $i,j\in[n]$, the map $\mathbb{R}\ni t\mapsto \|x_i(t)
-x_j(t)\|$ is non-decreasing.
\end{lemma}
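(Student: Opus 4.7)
The plan is to recognize the right-hand side of \eqref{e:Idnonresca} as the gradient of the log-sum-exp function from Lemma \ref{l:logsumexp}, and then invoke monotonicity of gradients of convex functions. Concretely, at each time $t$ define
\begin{equation*}
f_t(x) := \log\left(\sum_{j=1}^n e^{\langle x, x_j(t)\rangle}\right),
\end{equation*}
so that a direct computation gives
\begin{equation*}
\nabla f_t(x) = \frac{\sum_{j=1}^n e^{\langle x, x_j(t)\rangle} x_j(t)}{\sum_{k=1}^n e^{\langle x, x_k(t)\rangle}}.
\end{equation*}
Evaluating at $x = x_i(t)$ we recognize exactly the right-hand side of \eqref{e:Idnonresca}, so that $\dot x_i(t) = \nabla f_t(x_i(t))$ for every $i \in [n]$.

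The second step is to use Lemma \ref{l:logsumexp}: for each fixed $t$, the function $f_t$ is convex on $\mathbb{R}^d$ (the lemma applies with the points $\{x_j(t)\}_{j\in[n]}$ frozen), and consequently its gradient is a monotone operator, i.e.
\begin{equation*}
\langle \nabla f_t(a) - \nabla f_t(b),\, a - b\rangle \geq 0 \qquad \text{for all } a, b \in \mathbb{R}^d.
\end{equation*}
Applying this pointwise inequality at $a = x_i(t)$ and $b = x_j(t)$ yields
\begin{equation*}
\langle \dot x_i(t) - \dot x_j(t),\, x_i(t) - x_j(t)\rangle \geq 0.
\end{equation*}

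Finally, since $x_i, x_j$ are Lipschitz in time by Proposition \ref{p:wellposedparticles}, the map $t \mapsto \|x_i(t) - x_j(t)\|^2$ is absolutely continuous, and for almost every $t$,
\begin{equation*}
\frac{\diff}{\diff t} \|x_i(t) - x_j(t)\|^2 = 2\langle \dot x_i(t) - \dot x_j(t),\, x_i(t) - x_j(t)\rangle \geq 0.
\end{equation*}
Hence $t \mapsto \|x_i(t) - x_j(t)\|^2$ is non-decreasing, and so is its square root, which is the desired conclusion. There is no serious obstacle here: the only non-trivial ingredient is the convexity of the log-sum-exp already established in Lemma \ref{l:logsumexp}, and the whole argument reduces to identifying the vector field in \eqref{e:Idnonresca} as a gradient of that convex function evaluated on the current particle configuration.
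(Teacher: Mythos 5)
Your proof is correct and follows essentially the same route as the paper: identify the right-hand side of \eqref{e:Idnonresca} as $\nabla f(x_i(t))$ for the log-sum-exp function of Lemma \ref{l:logsumexp} (with the current configuration frozen), and use monotonicity of the gradient of a convex function to conclude that $\frac{\diff}{\diff t}\|x_i(t)-x_j(t)\|^2\geq 0$. Your additional care with the time-dependence of $f_t$ and the absolute-continuity remark are fine but not needed beyond what the paper does.
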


\begin{proof}
    The dynamics \eqref{e:Idnonresca} can be equivalently written as 
    \begin{equation*}
     \dot{x}_i(t)=\nabla f(x_i(t))   
    \end{equation*}
    where $f$ is as in \eqref{e:logsumexpfct}. By convexity of $f$ (Lemma \ref{l:logsumexp}),
    \begin{align*}
    \frac12\frac{\diff}{\diff t}\|x_i(t)-x_j(t)\|^2&=\langle \dot{x}_i(t)-\dot{x}_j(t),x_i(t)-x_j(t)\rangle\\
    &=\langle \nabla f(x_i(t))-\nabla f(x_j(t)),x_i(t)-x_j(t)\rangle\geq 0, 
    \end{align*}
    as desired.
\end{proof}

We now present the proof of Theorem~\ref{t:boolean}, which assumes $d=1$. We recall that in the statement, $V$ is a positive scalar, but by reparametrizing time we may assume that $V=1$, so the $1d$ dynamics under consideration is really given by \eqref{e:Idnonresca}. Also, to ease notations we focus on $QK=1$, but the proof adapts straightforwardly to the setting $QK>0$ assumed in the statement of Theorem~\ref{t:boolean}.

As seen in Section \ref{s:unbounded}, it is not difficult to prove the convergence of the coefficients $P_{ij}(t)$ of the attention matrix for indices $i\in[n]$ for which $x_i(t)$ becomes unbounded as $t\rightarrow +\infty$. 
This is the case for at least $n-1$ of the particles $x_i(t)$ (Lemma \ref{l:onlyone}). 
But should one particle $x_i(t)$ remain bounded, proving the convergence of $P_{ij}(t)$ for $j\in[n]$ is slightly tedious (Section \ref{s:bounded}). 
Since $d=1$, up to relabeling, we can order the initial collection of particles (which, we recall, are assumed distinct): 

\begin{equation*}
 x_1(0)<\ldots<x_n(0).  
\end{equation*}
We set
\begin{equation}\label{e:infdist}
c:=\min_{i\in[n-1]} |x_{i+1}(0)-x_i(0)|.
\end{equation}
According to Lemma \ref{l:distnondec}, we have $|x_i(t)-x_j(t)|\geq c$ for any $i\neq j$ and any $t\geq0$. In particular, particles never "collide".

\subsection{Results about unbounded particles} \label{s:unbounded}

In this section we gather several results concerning the indices $i$ corresponding to particles $x_i(t)$ which are not uniformly bounded in time. In particular, in Lemma \ref{l:unboundedparticles} we show that for such indices $i$, $P_{ij}(t)$ converges toward $0$ or $1$ for any $j\in[n]$.

\begin{lemma}\label{l:auxiliary}
Let $A>0$ denote the unique positive real number satisfying $A^2=n^2\exp(-A^2)$.
If $x_n(t_0)>A$ for some time $t_0\geq 0$, then there exists $c_1>0$ such that $x_n(t)\geq c_1e^{t}$ for any sufficiently large $t>0$. Similarly, if $x_1(t_0)<-A$ for some $t_0\geq 0$, then $x_1(t)\leq -c_1e^t$ for any sufficiently large $t>0$.
\end{lemma}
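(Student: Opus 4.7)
The approach is to exploit the convexity of $f_t(y):=\log\sum_{j=1}^n e^{y\,x_j(t)}$ (Lemma~\ref{l:logsumexp}) to obtain a clean lower bound on $\dot x_n(t)$, to show that the region $\{x_n>A\}$ is positively invariant under the flow, and finally to integrate via Grönwall.

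First I would note that directly from \eqref{e:Idnonresca} one has $\dot x_n(t)=f_t'(x_n(t))$. Two elementary bounds on $f_t$ are all that we will use: (a)~$f_t(0)=\log n$; and (b)~$f_t(y)\geq y\,x_n(t)$ for every $y\in\mathbb{R}$, because all summands of $\sum_j e^{y x_j(t)}$ are positive and the $j=n$ summand equals $e^{y x_n(t)}$. The tangent-line inequality for the convex function $f_t$ reads $y f_t'(y)\geq f_t(y)-f_t(0)$ for $y>0$; evaluating at $y=x_n(t)>0$ and combining with (a)--(b) yields the key bound
\begin{equation*}
\dot x_n(t)\ \geq\ x_n(t)-\frac{\log n}{x_n(t)}.
\end{equation*}

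Next I would verify that the defining relation $A^2 e^{A^2}=n^2$ forces $A^2>\log n$: since $u\mapsto u e^u$ is strictly increasing on $u>0$, if $A^2\leq\log n$ then $A^2 e^{A^2}\leq n\log n<n^2$ for $n\geq 2$, a contradiction. Hence the displayed inequality gives $\dot x_n(t)>0$ whenever $x_n(t)\geq A$, and a standard bootstrap argument (if $T^\ast>t_0$ were the first time $x_n$ hit $A$ from above, the left derivative at $T^\ast$ would be $\leq 0$, contradicting $\dot x_n(T^\ast)>0$) shows that $x_n(t)\geq A$ for every $t\geq t_0$. Setting $C:=(\log n)/A$, we have $C<A$ and $\dot x_n(t)\geq x_n(t)-C$ on $[t_0,+\infty)$. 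Grönwall applied to $u(t):=x_n(t)-C$ then produces
\begin{equation*}
x_n(t)\ \geq\ C+(x_n(t_0)-C)e^{t-t_0}\ \geq\ c_1 e^t
\end{equation*}
with $c_1:=(x_n(t_0)-C)e^{-t_0}>0$, valid for every $t\geq t_0$.

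The symmetric statement about $x_1$ is immediate from the invariance of \eqref{e:Idnonresca} under the sign flip $x_i\mapsto -x_i$: the map $\tilde x_i(t):=-x_{n+1-i}(t)$ yields another solution of the same system, the hypothesis $x_1(t_0)<-A$ becomes $\tilde x_n(t_0)>A$, and the conclusion $\tilde x_n(t)\geq c_1 e^t$ translates to $x_1(t)\leq -c_1 e^t$. The main delicate point is the invariance of $\{x_n>A\}$, which is exactly what the choice of $A$ is designed to ensure (and the property $A^2>\log n$ is the only consequence actually used); once invariance is in hand, the remainder reduces to the one-line Grönwall integration above.
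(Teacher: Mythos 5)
Your proof is correct, and it takes a genuinely different and more economical route than the paper's. The paper proves the key differential inequality by splitting the softmax sum according to the sign of $x_j(t)$, invoking the initial separation $c$ from \eqref{e:infdist} and the elementary bound $e^{x_n z}z\geq -1/x_n$ for $z\leq 0$; this yields $\dot x_n\geq x_n/n - n e^{-x_n^2}/x_n$, whose coefficient $1/n$ in front of $x_n$ forces a two-stage bootstrap (first $x_n\gtrsim e^{t/2n}$, then feeding this back into a sharper bound to get $\frac{\diff}{\diff t}\log x_n\geq 1-O(e^{-t/3n})$ and hence the rate $e^t$). You instead exploit the convexity of $f_t(y)=\log\sum_j e^{yx_j(t)}$ from Lemma~\ref{l:logsumexp} (which the paper only uses for the monotonicity of pairwise distances) via the tangent-line inequality at $y=x_n(t)$, obtaining $\dot x_n\geq x_n-\log n/x_n$ with the correct coefficient $1$ in front of $x_n$ from the start; the invariance of $\{x_n\geq A\}$ then follows from $A^2>\log n$ (your verification covers $n\geq 2$, and $n=1$ is trivial since $\log 1=0$), and a single Grönwall step finishes the proof, valid for all $t\geq t_0$ with an explicit constant and no dependence on the separation $c$. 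All the individual steps check out: $\dot x_n(t)=f_t'(x_n(t))$ is exactly the identity the paper records before Lemma~\ref{l:distnondec}, the bounds $f_t(0)=\log n$ and $f_t(y)\geq yx_n(t)$ are immediate, and the first-crossing argument for invariance is sound because the derivative bound is available on the closed interval up to the putative crossing time. Your observation that only $A^2>\log n$ is used is also accurate, and it slightly strengthens the lemma.
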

\begin{proof}
The two cases are symmetric since the evolution \eqref{e:Idnonresca} commutes with the involution of $(\R^d)^n$ given by $(x_1,\ldots,x_n)\mapsto(-x_1,\ldots,-x_n)$. We thus focus on the case $x_n(t_0)>A$.

If $x_n(t)\geq 0$ for some $t\geq 0$, then
\begin{align}
\dot{x}_n(t)&=\sum_{j=1}^n\left(\frac{e^{x_n(t)(x_j(t)-x_n(t))}}{\sum_{k=1}^n e^{x_n(t)(x_k(t)-x_n(t))}}\right)x_j(t)\label{e:popolo}\\
&\geq \frac{x_n(t)}{1+(n-1)e^{-cx_n(t)}}+\sum_{\{j\in [n]\colon x_j(t)<0\}}e^{x_n(t)(x_j(t)-x_n(t))}x_j(t)\label{e:papala}\\
&\geq \frac{x_n(t)}{1+(n-1)e^{-cx_n(t)}}-n\frac{e^{-x_n(t)^2}}{x_n(t)}\label{e:pupulu}\\
&\geq \frac{x_n(t)}{n}-n\frac{e^{-x_n(t)^2}}{x_n(t)}.\label{e:pypyly}
\end{align}
We provide some detail on the above sequence of inequalities. First of all, to pass from \eqref{e:popolo} to \eqref{e:papala}, we use  
\begin{equation*}
    e^{x_n(t)(x_k(t)-x_n(t))}\leq e^{-cx_n(t)}
\end{equation*} 
for $j=n$ and for any $k\in[n]$ (which holds by virtue of \eqref{e:infdist}), combined with the fact that 
\begin{equation*}
    \sum_{k=1}^n e^{x_n(t)(x_k(t)-x_n(t))}\geq 1
\end{equation*}
for all indices $j$ such that $x_j(t)<0$. To pass from \eqref{e:papala} to \eqref{e:pupulu}, we use $e^{x_n(t) z}z\geq -\frac{1}{x_n(t)}$, which holds for any $z\leq 0$.

For any $B>A$, we clearly have 
\begin{equation*}
 \frac{B}{n}-n\frac{e^{-B^2}}{B}>0.
\end{equation*}
We then deduce from \eqref{e:pupulu} and the fact that $x_n(t_0)>A$ that $x_n(t)\rightarrow +\infty$ as $t\rightarrow+\infty$. Moreover due to the fact that the expression in \eqref{e:pypyly} is bounded from below by $\frac{x_n(t)}{2n}$ whenever $x_n(t)$ is sufficiently large, we deduce that 
\begin{equation*}
 x_n(t)\geq c_0e^{\frac{t}{2n}}   
\end{equation*}
for any sufficiently large $t>0$.

Coming back to \eqref{e:pupulu}, we find that for sufficiently large $t>0$,
\begin{equation*}
\dot{x}_n(t)\geq x_n(t)\left(\frac{1}{1+(n-1)e^{-cc_0e^{\frac{t}{2n}}}}-e^{-c_0^2e^{\frac{t}{n}}}\right).
\end{equation*}
This implies that 
\begin{equation*}
 \frac{\diff}{\diff t}\log(x_n(t))\geq 1-O\left(e^{-\frac{t}{3n}}\right),
\end{equation*}
whence 
\begin{equation*}\log(x_n(t))\geq t+O(1)
\end{equation*}
for sufficiently large $t>0$, as desired.
\end{proof}

Here and in what follows, $\delta_{jk}$ denotes the Kronecker symbol.

\begin{lemma}\label{l:unboundedparticles}
If $i\in[n]$ is such that $x_i(t)$ is not uniformly bounded with respect to $t>0$, then $x_i(t)$ converges to either $-\infty$ or $+\infty$ as $t\rightarrow +\infty$. Moreover,
\begin{enumerate}
    \item if $x_i(t)\rightarrow +\infty$, then  for any $j\in[n]$, $P_{ij}(t)$ converges to $\delta_{nj}$ as $t\rightarrow +\infty$, with doubly exponential rate.
    \vspace{0.2cm}
    \item if $x_i(t)\rightarrow -\infty$, then for any $j\in[n]$, $P_{ij}(t)$ converges to $\delta_{1j}$ as $t\rightarrow +\infty$, with doubly exponential rate.
\end{enumerate}
\end{lemma}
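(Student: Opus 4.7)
The plan hinges on the preservation of the initial token ordering together with the gap bound from Lemma \ref{l:distnondec}: since $x_{i+1}(t)-x_i(t)\geq c$ for every $t\geq 0$ and $i\in[n-1]$, the initial ordering $x_1(t)<\cdots<x_n(t)$ is maintained for all time. Consequently, if $x_i(t)$ is unbounded from above (resp.\ from below), then so is $x_n(t)$ (resp.\ $x_1(t)$). Thanks to the reflection symmetry $(x_1,\ldots,x_n)\mapsto(-x_1,\ldots,-x_n)$ of \eqref{e:Idnonresca}, I would restrict attention to the case in which $x_n(t)$ is unbounded from above, whence Lemma \ref{l:auxiliary} gives $x_n(t)\geq c_1 e^{t}$ for all sufficiently large $t$.

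The heart of the proof is to show that $x_i(t)\to +\infty$ whenever $x_i(t)$ is unbounded from above---that is, to rule out oscillations between moderate and very large values. For this, pick a constant $M>0$ to be determined and suppose $x_i(T)\geq M$ at some large time $T$. The gap inequality gives, for every $j<n$,
\begin{equation*}
P_{ij}(T)\;=\;\frac{e^{x_i(T)(x_j(T)-x_n(T))}}{\sum_k e^{x_i(T)(x_k(T)-x_n(T))}}\;\leq\; e^{-(n-j)c\, x_i(T)}\;\leq\; e^{-Mc},
\end{equation*}
because the $k=n$ term alone forces the softmax denominator to be $\geq 1$. Combined with the crude a priori bound $|x_j(t)|\leq Ce^{t}$ (coming from $|\dot{x}_j(t)|\leq \max_k |x_k(t)|$ and Gr\"onwall applied to $\max_k x_k(t)^2$), we obtain
\begin{equation*}
\dot{x}_i(T)\;=\;P_{in}(T)x_n(T)+\sum_{j<n}P_{ij}(T)x_j(T)\;\geq\;(1-ne^{-Mc})c_1 e^{T}-nCe^{-Mc}e^{T},
\end{equation*}
which is strictly positive (indeed, at least $\tfrac{c_1}{2}e^{T}$) once $M$ is chosen large enough depending only on $n,c,c_1,C$. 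Hence the set $\{t\colon x_i(t)\geq M\}$ is forward-invariant past some large time $T_0$; since $x_i(t)$ is unbounded from above, this set contains some $T_\star\geq T_0$, beyond which $x_i$ is monotonically increasing and therefore must diverge to $+\infty$.

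Once $x_i(t)\to+\infty$, the same estimate gives $P_{ij}(t)\leq e^{-(n-j)c\, x_i(t)}$ for $j<n$ and $P_{in}(t)\geq 1-ne^{-c\, x_i(t)}$. The sharpened lower bound $\dot{x}_i(t)\geq \tfrac{c_1}{2}e^{t}$ derived above, valid for $t\geq T_\star$, integrates to $x_i(t)\gtrsim e^{t}$ for $t$ large, yielding
\begin{equation*}
P_{ij}(t)\;=\;\delta_{nj}+O\!\left(e^{-c'\,e^{t}}\right)\qquad (j\in[n]),
\end{equation*}
which is the claimed doubly exponential rate of convergence. The case where $x_i(t)$ is unbounded from below is handled verbatim by the reflection symmetry and produces $x_i(t)\to-\infty$ and $P_{ij}(t)\to\delta_{1j}$ at the same rate. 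The principal obstacle is the monotonicity trap in the middle paragraph: if $x_1(t)$ itself diverges to $-\infty$, the term $P_{i1}(T)x_1(T)$ could a priori cancel the positive $x_n(T)$ contribution to $\dot x_i(T)$. The exponential (in $x_i$) decay of $P_{ij}$ for $j<n$ afforded by the preserved gap is precisely what beats the singly exponential (in $t$) growth of $|x_1(t)|$ and makes the scheme go through.
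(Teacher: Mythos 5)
Your proposal is correct and follows essentially the same route as the paper: both arguments show that once $x_i$ exceeds a threshold at a late enough time, the preserved gap $|x_j-x_n|\geq (n-j)c$ forces the softmax weights $P_{ij}$, $j<n$, to be exponentially small in $x_i$, so that the $P_{in}x_n$ term (with $x_n\gtrsim e^t$ from Lemma \ref{l:auxiliary}) dominates and traps $x_i$ in a region of positive, exponentially growing derivative; the doubly exponential convergence of $P(t)$ then follows from $P_{ij}(t)\leq e^{-c\,x_i(t)}$ exactly as in the paper. The only cosmetic difference is that you control the negative particles' contribution via the a priori bound $|x_j(t)|\leq Ce^{t}$ combined with $P_{ij}\leq e^{-Mc}$, whereas the paper uses the pointwise inequality $ze^{x_i z}\geq -1/x_i$ for $z\leq 0$; both suffice.
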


\begin{proof}
We assume that $x_i(t)$ is not uniformly bounded with respect to $t>0$. Without loss of generality, we assume that there exists a sequence of positive times $\{t_k\}_{k=1}^{+\infty}$ with $t_k\rightarrow+\infty$ such that $x_i(t_k)\rightarrow +\infty$. Necessarily, $x_n(t_k)\rightarrow+\infty$. 
We notice that if $x_{i}(t)>0$ for some $t\geq 0$, then, arguing as in \eqref{e:popolo}--\eqref{e:papala}--\eqref{e:pupulu}, we have
\begin{equation}\label{e:lowerboundxi}
\dot{x}_{i}(t)=\sum_{j=1}^n\left(\frac{e^{x_{i}(t)(x_j(t)-x_n(t))}}{\sum_{k=1}^n e^{x_{i}(t)(x_k(t)-x_n(t))}}\right)x_j(t)\geq \frac{x_n(t)}{n}-\frac{n}{x_i(t)}e^{-x_{i}(t)x_n(t)}.
\end{equation}
For sufficiently large integers $k\geq1$, from \eqref{e:lowerboundxi} we get $\dot{x}_i(t_k)>0$ and $\dot{x}_n(t_k)>0$. But as $x_i$ and $x_n$ increase, the lower bound in \eqref{e:lowerboundxi} becomes larger. It follows that 
\begin{equation*}
 \dot{x}_i(t)\geq \frac{x_n(t)}{2n}\geq \frac{x_i(t)}{2n}   
\end{equation*}
for sufficiently large $t$, implying that $x_i(t)\rightarrow +\infty$ with exponential rate as $t\to+\infty$.

We now prove point 1. regarding $P(t)$. We assume that $x_i(t)\rightarrow +\infty$ as $t\to+\infty$.
In this case, for $j\neq n$ (namely $j\in[n-1]$),
$$
P_{ij}(t)=\frac{e^{x_i(t)x_j(t)}}{\displaystyle\sum_{k=1}^n e^{x_i(t)x_k(t)}}\leq e^{x_i(t)(x_j(t)-x_n(t))}\leq e^{-cx_i(t)},
$$
thus $P_{ij}(t)$ converges to $0$ as $t\rightarrow +\infty$ (with doubly exponential rate). Consequently, we also deduce that 
$$P_{in}(t)=1-\sum_{j=1}^{n-1} P_{ij}(t)$$ 
converges to $1$, also with doubly exponential rate, as $t\to+\infty$. 

The case where $x_i(t)\rightarrow-\infty$ is symmetric. This concludes the proof.
\end{proof}

Our last result is useful in the next section.

\begin{lemma}\label{l:exactasymptotic}
For any $i\in[n]$ such that $x_i(t)$ is not uniformly bounded with respect to $t>0$, there exists some $\gamma_i\in\mathbb{R}$, $\gamma_i\neq 0$ such that $x_i(t)=\gamma_ie^t+o(e^t)$ as $t\to+\infty$.
\end{lemma}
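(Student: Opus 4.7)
The plan is to bootstrap from Lemma \ref{l:unboundedparticles}, which already asserts that any unbounded token diverges to $\pm\infty$ and that the corresponding row of $P(t)$ collapses onto a single entry at doubly exponential rate. By the $(x_1,\dots,x_n)\mapsto(-x_1,\dots,-x_n)$ symmetry of \eqref{e:Idnonresca}, I restrict to the case $x_i(t)\to+\infty$; the case $x_i(t)\to-\infty$ is handled symmetrically and yields $\gamma_i<0$.

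First I would rewrite the dynamics for such an $i$ as
\[
\dot{x}_i(t)=x_n(t)+\varepsilon_i(t), \qquad \varepsilon_i(t):=\sum_{j\neq n}P_{ij}(t)\bigl(x_j(t)-x_n(t)\bigr),
\]
and show that $\varepsilon_i(t)$ is integrable at $+\infty$. A standard Grönwall argument applied to \eqref{e:Idnonresca} yields $\max_k|x_k(t)|\leq Ce^t$, so $|x_j(t)-x_n(t)|\leq 2Ce^t$. Combined with $P_{ij}(t)=O(e^{-c\,x_i(t)})$ for $j\neq n$ (Lemma \ref{l:unboundedparticles}) and the exponential lower bound $x_i(t)\geq c_1 e^t$ established in the proof of that same lemma, the quantity $\varepsilon_i(t)$ is in fact doubly exponentially small and hence integrable.

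Applied to $i=n$, the decomposition reads $\dot{x}_n(t)-x_n(t)=\varepsilon_n(t)$. Setting $y(t):=e^{-t}x_n(t)$ gives $\dot{y}(t)=e^{-t}\varepsilon_n(t)$, whose improper integral converges, so $y(t)$ admits a finite limit $\gamma_n$. Lemma \ref{l:auxiliary} forces $y(t)\geq c_1>0$ for large $t$, hence $\gamma_n>0$, which gives $x_n(t)=\gamma_n e^t+o(e^t)$. For any other index $i$ with $x_i(t)\to+\infty$, substituting this asymptotic of $x_n$ back into $\dot{x}_i(t)=x_n(t)+\varepsilon_i(t)$ and integrating from $0$ to $t$ yields $x_i(t)=\gamma_n e^t+o(e^t)$, so one may take $\gamma_i=\gamma_n$.

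The only delicate point, and the main obstacle, is justifying that $\varepsilon_i(t)$ is doubly exponentially small. This rests on the exponential lower bound $x_i(t)\gtrsim e^t$ for every right-diverging token, which does not appear verbatim in the statement of Lemma \ref{l:unboundedparticles} but is made explicit in its proof (via the computation leading to \eqref{e:lowerboundxi}, together with the growth of $x_n$ from Lemma \ref{l:auxiliary}). Once this is in hand, the rest reduces to routine asymptotic ODE bookkeeping, including the elementary observation that $\int_0^t o(e^s)\,\diff s=o(e^t)$.
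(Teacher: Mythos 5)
Your argument is correct and essentially reproduces the paper's proof: both isolate the dominant term $P_{in}(t)x_n(t)$, bound the remainder by a doubly exponentially small quantity using $P_{ij}(t)\leq e^{-cx_i(t)}$ for $j\neq n$, the stretched-exponential lower bound on $x_i$ from the proof of Lemma \ref{l:unboundedparticles}, and the crude bound $x_n(t)\leq Ce^t$, and then integrate --- your variation-of-constants packaging of the last step is just a cleaner rewriting of the paper's $\dot x_n(t)=x_n(t)(1+o(1))$. The only slip is quoting the lower bound as $x_i(t)\geq c_1e^t$ for a general right-diverging token: what the proof of Lemma \ref{l:unboundedparticles} actually provides is $x_i(t)\gtrsim e^{t/(2n)}$ (the full rate $c_1e^t$ comes from Lemma \ref{l:auxiliary} and applies to $x_n$), but this weaker bound still makes $\varepsilon_i(t)$ doubly exponentially small, so your conclusion stands.
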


\begin{proof}
Without loss of generality we assume that $x_i(t)\rightarrow +\infty$ as $t\to+\infty$. For $j\neq n$, we find
\begin{align*}
P_{ij}(t)=\frac{e^{x_i(t)x_j(t)}}{\displaystyle\sum_{k=1}^n e^{x_i(t)x_k(t)}}=\frac{e^{x_i(t)(x_j(t)-x_n(t))}}{\displaystyle\sum_{k=1}^n e^{x_i(t)(x_k(t)-x_n(t))}}\leq e^{-cx_i(t)}.
\end{align*}
Consequently, 
\begin{equation*}
 P_{in}(t)\geq 1-ne^{-cx_i(t)}.   
\end{equation*}
Therefore, using Lemma \ref{l:auxiliary} and the fact that $x_i(t)\geq b_ie^{\frac{t}{2n}}$ for some $b_i>0$ (thanks to \eqref{e:lowerboundxi}), we gather that 
\begin{align}
\dot{x}_i(t)&\geq\left(1-ne^{-cx_i(t)}\right)x_n(t)-ne^{-cx_i(t)}c_1e^t \nonumber\\
&\geq\left(1-ne^{-cb_ie^{\frac{t}{2n}}}\right)x_n(t)-ne^{-cb_ie^{\frac{t}{2n}}}c_1e^t
\label{e:dotxixnet}
\end{align}
for some $c_1>0$ independent of $t$.
We also notice that due to \eqref{e:Idnonresca}, $\dot{x}_i(t)\leq x_n(t)$. Using \eqref{e:dotxixnet}, firstly for $i=n$, together with the trivial upper bound $x_n(t)\leq Ce^t$ for some $C>0$ independent of $t$ (immediately seen from \eqref{e:Idnonresca}), we obtain 
\begin{equation*}
 \dot{x}_n(t)=x_n(t)\left(1+o\left(e^{-cb_ie^{\frac{t}{3n}}}\right)\right)
\end{equation*}
as $t\rightarrow+\infty$, which yields 
\begin{equation*}
x_n(t)=\gamma_ne^t+o(e^t)
\end{equation*}
for some $\gamma_n>0$. 
Now using \eqref{e:dotxixnet} for the index $i$, we gather that
\begin{equation*}
 \dot{x}_i(t)=x_n(t)+o\left(e^{-cb_ie^{\frac{t}{3n}}}\right),   
\end{equation*}
and so we deduce that 
\begin{equation*}
x_i(t)=\gamma_ne^t+o(e^t). 
\end{equation*}
Similarly, if $x_i(t)\rightarrow-\infty$, then $x_i(t)=\gamma_1e^t+o(e^t)$. This proves Lemma \ref{l:exactasymptotic} (and shows that $\gamma_i\in\{\gamma_1,\gamma_n\}$).
\end{proof}

\subsection{Results about bounded particles} \label{s:bounded}

In this section we collect results concerning particles which remain uniformly bounded in time. 
The following lemma entails that there can be at most one particle with this property.

\begin{lemma}\label{l:onlyone}
Consider
\begin{equation*}
    \mathscr{B}:=\Big\{i\in[n]\colon x_i(\cdot)\in L^\infty([0,+\infty))\Big\}.
\end{equation*}
Then $\#\mathscr{B}\in\{0,1\}$.
\end{lemma}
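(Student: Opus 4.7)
The plan is to rewrite the scalar dynamics \eqref{e:Idnonresca} as the gradient flow $\dot x_i(t) = \partial_x f(x_i(t), t)$, where $f(x, t) := \log \sum_{k=1}^n e^{x \cdot x_k(t)}$ is the convex log-sum-exp potential supplied by Lemma~\ref{l:logsumexp}. For any pair $i < j$, the fundamental theorem of calculus gives
\[
\frac{\diff}{\diff t}(x_j(t) - x_i(t)) = \partial_x f(x_j(t), t) - \partial_x f(x_i(t), t) = \int_{x_i(t)}^{x_j(t)} \partial_x^2 f(s, t) \diff s,
\]
and a direct computation identifies $\partial_x^2 f(s, t) = \mathrm{Var}_{p(s,t)}[X]$, where $p(s,t)$ is the Boltzmann distribution on $[n]$ with weights $p_k(s,t) \propto e^{s x_k(t)}$ and $X$ takes the value $x_k(t)$ with probability $p_k(s,t)$. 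This gives a quantitative refinement of Lemma~\ref{l:distnondec}.

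Arguing by contradiction, suppose $\#\mathscr{B} \geq 2$ and pick $i < j$ in $\mathscr{B}$. By Lemma~\ref{l:distnondec} and the boundedness of $x_i, x_j$, the gap $d(t) := x_j(t) - x_i(t)$ is non-decreasing and converges to some $d^* \in [c, +\infty)$, hence $\dot d(t) \to 0$. I aim for a contradiction by producing a positive constant $\delta$ such that $\partial_x^2 f(s, t) \geq \delta$ uniformly for $s \in [x_i(t), x_j(t)]$ and large $t$, which via the display above forces $\dot d(t) \geq \delta c > 0$.

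The main obstacle is that $\partial_x^2 f(s, t)$ can degenerate: if $x_n(t) \to +\infty$ and $s > 0$, then $p_n(s, t) \to 1$ and the variance vanishes. I neutralize this via a preliminary sign restriction: whenever $n \notin \mathscr{B}$, every $i \in \mathscr{B}$ satisfies $x_i(t) \leq 0$ for large $t$. This is established in the spirit of the proofs of Lemmas~\ref{l:auxiliary} and~\ref{l:unboundedparticles}: whenever $x_i(t) > 0$, the uniform bound $P_{in}(t) \geq 1/n$ (which holds because $e^{x_i x_n}$ is maximal among $\{e^{x_i x_\ell}\}_\ell$) combined with the doubly-exponential decay of the contributions to $\dot x_i(t)$ coming from particles drifting to $-\infty$ yields $\dot x_i(t) \geq x_n(t)/(2n)$ once $t$ is large; hence once $x_i$ becomes positive it strictly increases and diverges, contradicting $i \in \mathscr{B}$. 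A symmetric statement holds when $1 \notin \mathscr{B}$.

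With this restriction in hand, the proof splits into three cases. If both $x_1 \to -\infty$ and $x_n \to +\infty$, every $i \in \mathscr{B}$ satisfies $x_i(t) \leq 0$ and $x_i(t) \geq 0$ eventually, forcing $x_i(t) \to 0$ and contradicting $x_j - x_i \geq c$. If $\mathscr{B} = [n]$ (no particle escapes), the positions remain in a fixed compact set and each $p_k(s, t)$ is uniformly bounded below by a positive constant, so from the identity
\[
\mathrm{Var}_{p(s,t)}[X] = \tfrac{1}{2}\sum_{k, \ell} p_k(s,t) p_\ell(s,t) (x_k(t) - x_\ell(t))^2 \geq \tfrac{1}{2} p_i(s,t) p_j(s,t) c^2
\]
the required lower bound on $\partial_x^2 f(s,t)$ is immediate. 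In the representative remaining case $x_n \to +\infty$ and $1 \in \mathscr{B}$, the sign restriction confines $s \in [x_i(t), x_j(t)] \subset [-M, 0]$ for some $M > 0$; then $e^{s x_k(t)} \to 0$ for all indices $k$ with $x_k(t) \to +\infty$, so the mass of $p(s, t)$ asymptotically concentrates on the bounded set $\mathscr{B}$, where $p_i(s, t)$ and $p_j(s, t)$ are bounded below uniformly in $s \in [-M, 0]$ for large $t$, and the same variance identity again delivers the lower bound.
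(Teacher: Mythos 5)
Your argument is correct in substance but reaches the conclusion by a genuinely different mechanism than the paper. The paper first establishes the dichotomy ``$x_1\to-\infty$ or $x_n\to+\infty$'' via a compactness-plus-strict-convexity argument on the configuration space, then runs a trichotomy on the limit of $x_n$ (to $-\infty$, $0$, or $+\infty$) using the drift estimate \eqref{e:lowerboundxi} and the bound \eqref{e:minorationpourxn}, and finally invokes Lemma \ref{l:distnondec} to see that at most one particle can converge to $0$. You instead promote Lemma \ref{l:distnondec} to a quantitative statement by identifying $\partial_x^2 f(s,t)$ with the variance of the Gibbs measure $p_k(s,t)\propto e^{sx_k(t)}$ and writing $\dot d(t)=\int_{x_i}^{x_j}\mathrm{Var}_{p(s,t)}[X]\,\diff s$; a uniform lower bound on this variance then forces linear growth of the gap, contradicting boundedness. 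This single mechanism absorbs both the paper's compactness step (your case $\mathscr{B}=[n]$, where the Gibbs weights are trivially bounded below) and the mixed case $1\in\mathscr{B}$, $n\notin\mathscr{B}$ (where restricting $s$ to the negative half-line tames the weights of the escaping particles), so you avoid the trichotomy on $\lim x_n$ entirely. What you pay for this is the need for the preliminary sign restriction, which is essentially the same use of \eqref{e:lowerboundxi} as in the paper. Both proofs ultimately rest on the log-sum-exp convexity; yours just uses its second derivative explicitly rather than through the monotonicity of pairwise distances.

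Two small imprecisions, neither fatal. First, from \eqref{e:lowerboundxi} one only gets $\dot x_i(t)>0$ when $x_i(t)x_n(t)e^{x_i(t)x_n(t)}>n^2$, so ``once $x_i$ becomes positive it strictly increases and diverges'' is too strong for arbitrarily small positive values of $x_i$; the correct conclusion is $\limsup_{t\to\infty}x_i(t)\le 0$ for $i\in\mathscr{B}$, i.e.\ $x_i(t)\le\varepsilon$ eventually for every $\varepsilon>0$. This still suffices: integrate the variance only over $[x_i(t),\min(x_j(t),0)]$, an interval of length at least $c-\varepsilon$, and the Gibbs weights of the bounded particles remain uniformly bounded below there. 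Second, a bounded non-decreasing $d$ need not satisfy $\dot d(t)\to 0$; but you do not actually use this, since $\dot d(t)\ge\delta c>0$ for all large $t$ already contradicts the boundedness of $d$ directly.
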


\begin{proof}
We first prove that either $x_1(t)\rightarrow -\infty$ or $x_n(t)\rightarrow +\infty$ as $t\to+\infty$. By contradiction, if this is not the case, then by Lemma \ref{l:auxiliary}, $(x_1(t),\ldots,x_n(t))\in [-A,A]^n$ for any $t\geq0$. We denote by $\mathscr{I}$ the set of configurations $(x_1^*,\ldots,x_n^*)\in[-A,A]^n$ such that $|x_i^*-x_j^*|\geq |x_i(0)-x_j(0)|>0$ for any distinct $i,j\in[n]$. For any $\mathbf{X}^*=(x_1^*,\ldots,x_n^*)\in\mathscr{I}$, the function $f$ defined in \eqref{e:logsumexpfct} (with anchor points given by $\mathbf{X}^*$) is strictly convex---the equality in the inequality between \eqref{e:magicconvex} and \eqref{e:magicconvex2} is never achieved. Therefore, the proof of Lemma \ref{l:distnondec} shows that if $\mathbf{X}^*$ is seen as an initial datum for the dynamics \eqref{e:Idnonresca}, then 
\begin{equation*}
 v(\mathbf{X}^*):=\frac{\diff}{\diff t}_{|t=0}|x_1^*(t)-x_n^*(t)|>0.   
\end{equation*}
Since $\mathscr{I}$ is compact, $v_0:=\inf_{\mathbf{X}^*\in\mathscr{I}}v(\mathbf{X}^*)>0$. Hence, $t\mapsto|x_1(t)-x_n(t)|$ grows at least linearly, which is a contradiction.

We may therefore assume without loss of generality that $x_1(t)\rightarrow -\infty$ as $t\to+\infty$. We prove that $x_n(t)$ converges to either $-\infty$, or $0$, or $+\infty$, as $t\to+\infty$. 
We assume in the sequel that $x_n(t)$ does not converge to $-\infty$ or $0$. For any $i\in[n]$, if there exists $\varepsilon>0$ and a sequence of positive times $\{s_k\}_{k=1}^{+\infty}$ tending to $+\infty$ such that $x_i(s_k)\leq-\varepsilon$, then it follows from \eqref{e:lowerboundxi} that $x_i(t)\rightarrow-\infty$. Therefore, by our assumptions, we have $\liminf_{t\rightarrow +\infty}x_n(t)\geq 0$. Also, since $x_n(t)\not\rightarrow 0$, there exists $\varepsilon>0$ and a sequence of positive times $\{t_k\}_{k=1}^{+\infty}$ tending to $+\infty$ such that $x_n(t_k)\geq \varepsilon$ for any integer $k\geqslant 1$. For any $t\geq 0$ such that $x_n(t)\geq \varepsilon$, we introduce the set of indices
\begin{equation*}
 N(t)=\{i\in[n]\colon x_i(t)<0\},   
\end{equation*}
and we write
\begin{equation} \label{e:minorationpourxn}
\dot{x}_n(t)\geq \frac{e^{x_n(t)^2}x_n(t)}{\displaystyle\sum_{k=1}^n e^{x_n(t)x_k(t)}}+\frac{\displaystyle\sum_{j\in N(t)}e^{x_j(t)x_n(t)}x_j(t)}{\displaystyle\sum_{k=1}^n e^{x_n(t)x_k(t)}}\geq \frac{\varepsilon}{n}+\frac{1}{e^{\varepsilon^2}}\sum_{j\in N(t)}e^{\varepsilon x_j(t)}x_j(t).
\end{equation}
According to Lemma \ref{l:unboundedparticles}, any point $x_i(t)$ which takes negative values for arbitrarily large times and does not converge to $-\infty$ has to converge to $0$. 
Therefore, the second term in the lowermost bound in \eqref{e:minorationpourxn} is lower bounded by $-\frac{\varepsilon}{2n}$ for sufficiently large $t$. 
All in all, we gather that $\dot{x}_n(t)\geq \frac{\varepsilon}{2n}$ and $x_n(t)$ converges to $+\infty$ as $t\to+\infty$. If it converges to $0$, then necessarily $x_{n-1}(t)\rightarrow -\infty$ by combining Lemma \ref{l:distnondec} with Lemma \ref{l:unboundedparticles}. This proves Lemma \ref{l:onlyone} in this case. 

From now on we assume that $x_n(t)\rightarrow +\infty$. Using \eqref{e:lowerboundxi} we see that if there exists $\varepsilon>0$ such that $x_i(t)>\varepsilon$ for an unbounded sequence of times $t$, then $x_i(t)\rightarrow+\infty$. The same is true symmetrically when $x_i(t)<-\varepsilon$ for an unbounded sequence of times $t$. Thus if $i\in\mathscr{B}$, necessarily $x_i(t)\rightarrow 0$. By Lemma \ref{l:distnondec} this can be true for at most one index $i$, which concludes the proof of Lemma \ref{l:onlyone}. 
\end{proof}

If $\mathscr{B}=\emptyset$, Theorem~\ref{t:boolean} follows from Lemma \ref{l:unboundedparticles}. 
From now on, we assume that $\#\mathscr{B}=1$, and we denote by $i_0\in[n]$ its unique element. We distinguish two cases: either $i_0\in\{1,n\}$ (Lemma \ref{l:boundedxn}), or $i_0\notin\{1,n\}$ (Lemma \ref{l:boundedother}). 

\begin{lemma}\label{l:boundedxn}
If $x_n(t)$ is bounded as $t\rightarrow +\infty$, then $P_{nn}(t)\rightarrow 1$, and $P_{nj}(t)\rightarrow 0$ for any $j\in[n-1]$, as $t\rightarrow +\infty$. Similarly, if $x_1(t)$ is bounded as $t\rightarrow +\infty$, then $P_{11}(t)\rightarrow 1$, and $P_{1j}(t)\rightarrow 0$ for any $j\in[n-1]$, as $t\rightarrow +\infty$.
\end{lemma}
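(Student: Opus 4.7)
I would begin by refining the coarse asymptotics from Lemma~\ref{l:onlyone}: under the hypothesis that $x_n(t)$ is bounded, that lemma yields $x_n(t)\to 0$ and $x_j(t)\to-\infty$ for every $j\in[n-1]$. The first preliminary step is to pin down the sign of $x_n$. Since the ordering $x_j(t)\leq x_n(t)$ is preserved in time by Lemma~\ref{l:distnondec}, one obtains
\[
\dot x_n(t)=\sum_{j=1}^n P_{nj}(t)x_j(t)\leq x_n(t),
\]
so $e^{-t}x_n(t)$ is non-increasing. Were $x_n$ ever strictly negative, it would thus diverge to $-\infty$, contradicting boundedness; and at any instant where $x_n(t)=0$ the drift $\dot x_n(t)=\sum_{j<n}P_{nj}(t)x_j(t)$ is strictly negative (since $x_j(t)<0$ eventually), which would push $x_n$ below zero---impossible. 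Hence $x_n(t)>0$ for all sufficiently large $t$, and $x_n(t)\to 0^+$.

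Next, I would pin down the exponential scale of the unbounded particles. Applying the reflection $(x_1,\ldots,x_n)\mapsto(-x_n,\ldots,-x_1)$, which leaves~\eqref{e:Idnonresca} invariant, turns the situation into one in which the $n-1$ largest particles all diverge to $+\infty$, so the argument in the proof of Lemma~\ref{l:exactasymptotic} (starting from the new leader) delivers a constant $\gamma_1<0$ with $x_j(t)=\gamma_1 e^t(1+o(1))$ for every $j\in[n-1]$. This is the key quantitative input. Introducing the rescaled variable $u(t):=e^t x_n(t)\geq 0$, one has, for each $j<n$,
\[
\frac{P_{nj}(t)}{P_{nn}(t)}=\exp\bigl(x_n(t)(x_j(t)-x_n(t))\bigr)=\exp\bigl(\gamma_1 u(t)(1+o(1))-x_n(t)^2\bigr),
\]
so, using $\gamma_1<0$ and $x_n(t)^2\to 0$, the conclusion $P_{nn}(t)\to 1$ reduces to showing $u(t)\to+\infty$.

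The core step is to establish this divergence, for which I would argue by contradiction. Differentiating $u=e^t x_n$ yields
\[
\dot u(t)=\bigl(1+P_{nn}(t)\bigr)u(t)+\gamma_1 e^{2t}\bigl(1-P_{nn}(t)\bigr)\bigl(1+o(1)\bigr).
\]
Assume $\liminf_{t\to\infty}u(t)<\infty$ and fix $M$ above this liminf. Whenever $t$ is large and $u(t)\leq M$, each exponent $x_n(x_j-x_n)$ is bounded below by a constant depending only on $M$ and $\gamma_1$, which forces $1-P_{nn}(t)\geq\delta(M)$ for some explicit $\delta(M)>0$. Inserting this into the ODE above yields $\dot u(t)\leq 2M-\tfrac{|\gamma_1|\delta(M)}{2}e^{2t}$ for $t$ sufficiently large. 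Thus, from the first time $u$ enters $[0,M]$, the strict negativity of $\dot u$ prevents $u$ from ever climbing back; the bound persists for all subsequent times, and integration forces $u(t)\to-\infty$, contradicting $u\geq 0$. Hence $u(t)\to+\infty$, which proves the first half of the lemma. The companion statement for bounded $x_1$ follows by applying the very same argument to the reflected system.

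The main obstacle I expect is precisely this divergence of $u$: the easy bound $\dot x_n\leq x_n$ only excludes exponential growth of $x_n$ and leaves open a decay of order $e^{-t}$ or faster, a scenario in which $x_n(t)x_j(t)$ would stay bounded and the coefficients $P_{nj}$ ($j<n$) would not vanish. Ruling that scenario out requires exploiting the fact that, were $u$ bounded, the attraction coefficients $P_{nj}$ ($j<n$) would remain uniformly away from zero, and the interactions with the exponentially fleeing particles $x_j\sim\gamma_1 e^t$ would overwhelm the self-drift $P_{nn}x_n$ on the $e^{2t}$ scale, yielding the contradiction above.
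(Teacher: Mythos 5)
Your proof is correct. The preliminary steps (positivity of $x_n$ for large times via $\frac{\diff}{\diff t}(e^{-t}x_n(t))\leq 0$, the limit $x_n(t)\to 0^+$, and the divergence $x_j(t)\to-\infty$ for $j<n$) match the paper's points (1)--(2), and the reduction of the whole lemma to the divergence of the exponents $x_n(t)(x_j(t)-x_n(t))$ is the same. Where you diverge from the paper is in the key quantitative step. The paper works with the quantities $x_n(t)\big(x_n(t)-x_j(t)\big)$, one per index $j$, and only needs the one-sided exponential bounds $x_1(t)\geq -c_1e^t$ and $x_j(t)\leq -c_2e^t$ coming from Lemma \ref{l:auxiliary}: it shows via the differential inequality \eqref{e:epsxjxn} that if this quantity were $\leq\kappa$ at arbitrarily large times, its derivative would be dominated by $-\e x_j(t)^2\lesssim -e^{2t}$ and the quantity would be driven to $-\infty$, contradicting its positivity. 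You instead import the sharp asymptotics $x_j(t)=\gamma_1e^t+o(e^t)$ from Lemma \ref{l:exactasymptotic} (legitimately, via the reflection symmetry, since $x_1(t)\to-\infty$ here), which lets you collapse everything onto the single rescaled variable $u(t)=e^tx_n(t)$ and run the analogous barrier argument: bounded $u$ forces $1-P_{nn}\geq\delta(M)$, hence $\dot u\leq 2M-ce^{2t}$, hence $u\to-\infty$, contradiction. The mechanism of the contradiction is identical (attention mass retained by the fleeing particles produces an $e^{2t}$-scale pull that is incompatible with positivity), but your version is more economical in bookkeeping and yields the extra quantitative conclusion $e^tx_n(t)\to+\infty$, i.e.\ $x_n$ decays strictly slower than $e^{-t}$, whereas the paper's version avoids any need for the precise leading constant $\gamma_1$. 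Both are complete; no gap.
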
 
\begin{proof}
The two cases ($x_n(\cdot)$ bounded or $x_1(\cdot)$ bounded) are symmetric since the evolution \eqref{e:Idnonresca} commutes with the involution of $(\R^d)^n$ given by $(x_1,\ldots,x_n)\mapsto(-x_1,\ldots,-x_n)$. Whence, we only address the first one: we assume that $x_n(t)$ is bounded as $t\rightarrow +\infty$.
We first notice that all particles $x_j(t)$ for $j\in[n-1]$ tend to $-\infty$ as $t\to+\infty$ due to Lemma \ref{l:onlyone}.
We now prove the following properties:
\begin{enumerate}
    \item $x_n(t)>0$ for any sufficiently large $t$;
    \vspace{0.2cm}
    \item $x_n(t)\rightarrow 0$ as $t\rightarrow +\infty$;
    \vspace{0.2cm}
    \item for any $j\in[n-1]$, $P_{nj}(t)\rightarrow 0$ as $t\rightarrow +\infty$.
\end{enumerate}
To prove point (1), we notice that for sufficiently large $t$, $x_i(t)\leq 0$ for any $i\in[n-1]$. If in addition $x_n(t)\leq 0$, then due to \eqref{e:Idnonresca}, all $x_i(t)$ ($i\in[n]$) remain negative and due to \eqref{e:Idnonresca}, $x_n(t)\rightarrow -\infty$ as $t\rightarrow +\infty$, which is a contradiction.

For point (2), we fix $\varepsilon>0$, and set 
$$
\mathsf{T}_\varepsilon^+:=\{t\geq 0\colon x_n(t)\geq \varepsilon\}.
$$
We  prove that if $\mathsf{T}_\varepsilon^+$ is unbounded, then $x_n(t)\rightarrow +\infty$ as $t\rightarrow +\infty$, which is a contradiction. As a consequence, $\mathsf{T}_\varepsilon^+$ is bounded for any $\varepsilon>0$, which implies (in conjunction with point 1.) that $x_n(t)\rightarrow 0$  as $t\rightarrow +\infty$. So let us assume that $\mathsf{T}_\varepsilon^+$ is unbounded. We notice that for any $\delta>0$, if $t\in \mathsf{T}_\varepsilon^+$ is sufficiently large then 
\begin{equation*}
\Big|e^{x_n(t)x_j(t)}x_j(t)\Big|\leq\delta    
\end{equation*}
for any $j\in[n-1]$ since $x_j(t)\rightarrow +\infty$ as $t\to+\infty$. Therefore,
$$
\sum_{j=1}^n e^{x_n(t)x_j(t)}x_j(t)\geq e^{\varepsilon^2}\varepsilon-(n-1)\delta\geq 0,
$$
where we took $\delta>0$ sufficiently small for the last inequality to hold. Consequently,
\begin{equation*}
\dot{x}_n(t)=\frac{\displaystyle\sum_{j=1}^n e^{x_n(t)x_j(t)}x_j(t)}{\displaystyle\sum_{j=1}^n e^{x_n(t)x_j(t)}}\geq\frac{e^{x_n(t)^2}x_n(t)-(n-1)\delta}{e^{x_n(t)^2}+n-1}.
\end{equation*}
It is not difficult to see that this implies that $x_n(t)\rightarrow +\infty$ as $t\rightarrow +\infty$, which is a contradiction.

For point (3), we first notice that for any $j\neq n$, since $x_j(t)\rightarrow -\infty$,
\begin{equation*}
\dot{x}_{j}(t)=\sum_{k=1}^n\left(\frac{e^{x_{j}(t)(x_k(t)-x_n(t))}}{\sum_{\ell=1}^n e^{x_{j}(t)(x_\ell(t)-x_n(t))}}\right)x_k(t)\leq \frac{x_1(t)}{n}+\frac{n}{\varepsilon}e^{-x_{j}(t)x_n(t)}.
\end{equation*}
Using Lemma \ref{l:auxiliary}, we deduce the existence of some $c_2>0$ such that 
\begin{equation*}
x_j(t)\leq -c_2e^t 
\end{equation*}
for any sufficiently large $t>0$. 
We now prove that for any $j\neq n$,
\begin{equation}\label{e:convdeladiff}
x_j(t)x_n(t)-x_n(t)^2\underset{t\rightarrow +\infty}{\longrightarrow}-\infty.
\end{equation}
Due to the ordering of the particles, it is enough to prove \eqref{e:convdeladiff} for $j=n-1$. Fix $j=n-1$ and $\kappa>0$, and assume that 
$$x_n(t)x_j(t)\geq x_n(t)^2-\kappa$$ 
for some $t\geq 0$. Then, using the fact that 
$$x_n(t)x_j(t)\geq x_n(t)x_k(t)$$ 
for any $k\in[n-2]$, we get 
\begin{equation*}
P_{nj}(t)\geq \frac{e^{x_j(t)x_n(t)}}{e^{x_n(t)^2}+(n-1)e^{x_n(t)x_j(t)}}\geq \varepsilon,  
\end{equation*}
where $\varepsilon=\frac{1}{n+e^{\kappa}}$. 
We obtain
$$
\dot{x}_n(t)\leq P_{nn}(t)x_n(t)+P_{nj}(t)x_j(t)\leq x_n(t)+\varepsilon x_j(t),
$$
hence
\begin{align}
\frac{\diff}{\diff t}\big(x_n(t)(x_n(t)-x_j(t))\big)&=\dot{x}_n(t)(2x_n(t)-x_j(t))-x_n(t)\dot{x}_j(t)\nonumber\\
&\leq (x_n(t)+\varepsilon x_j(t))(2x_n(t)-x_j(t))-x_n(t)\dot{x}_j(t)\nonumber\\
&=-\varepsilon x_j(t)^2+x_n(t)(2\varepsilon x_j(t)+2x_n(t)-x_j(t)-\dot{x}_j(t))\nonumber\\
&\leq -\varepsilon x_j(t)^2+x_n(t)(2x_n(t)-2x_1(t)),\label{e:epsxjxn}
\end{align}
where in the last line we used the fact that $\dot{x}_j(t)\geq x_1(t)$, which is due to \eqref{e:Idnonresca}, and that $x_1(t)<x_j(t)$, which is due to the ordering of the particles. Since $x_j(t)\leq -c_2e^t$ and $x_1(t)\geq -c_1e^t$, the upper bound in \eqref{e:epsxjxn} is negative if $t$ is large enough. We therefore conclude that for any fixed $\kappa$, if there exist unbounded times $t$ such that $x_n(t)x_j(t)\geq x_n(t)^2-\kappa$, then $x_n(t)x_j(t)\geq x_n(t)^2-\kappa$ for any $t$ large enough. But this is excluded since $x_n(t)>0$ and $x_j(t)\rightarrow-\infty$ as $t\to+\infty$. This concludes the proof of \eqref{e:convdeladiff}, and the lemma follows by plugging this information into the definition of $P_{nj}(t)$.
\end{proof}

\begin{lemma}\label{l:boundedother}
If $i_0\notin \{1,n\}$ and $x_{i_0}(t)$ remains uniformly bounded in $t$, then for any $j\in[n-1]$, there exists some $\alpha_j\in[0, 1]$ such that $P_{i_0j}(t)\to\alpha_j$ as $t\rightarrow +\infty$.
\end{lemma}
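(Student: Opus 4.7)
The plan is to reduce the convergence of each attention coefficient $P_{i_0 j}(t)$ to the convergence of a single scalar: the rescaled coordinate $\beta(t):=x_{i_0}(t)e^t$. The setup is collected from earlier lemmas. Since $\#\mathscr{B}=1$, the last paragraph of the proof of Lemma~\ref{l:onlyone} forces $x_{i_0}(t)\to 0$. Because the ordering $x_1(t)<\cdots<x_n(t)$ is preserved for all $t\geq 0$ (particles never collide, by Lemma~\ref{l:distnondec}), every particle with index $j<i_0$ diverges to $-\infty$ while every particle with index $j>i_0$ diverges to $+\infty$; I write $S_-:=[i_0-1]$, $S_+:=\{i_0+1,\ldots,n\}$, $m_\pm:=|S_\pm|\geq 1$. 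By Lemma~\ref{l:exactasymptotic}, there exist constants $\gamma_1<0<\gamma_n$ such that $x_j(t)=\gamma_n e^t+o(e^t)$ for every $j\in S_+$ and $x_j(t)=\gamma_1 e^t+o(e^t)$ for every $j\in S_-$.

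The reduction step is as follows. If $\beta\in L^\infty(\mathbb{R}_+)$, then $x_{i_0}(t)=O(e^{-t})$, so the products $x_{i_0}(t)x_j(t)$ satisfy $\gamma_n\beta(t)+o(1)$ for $j\in S_+$, $\gamma_1\beta(t)+o(1)$ for $j\in S_-$, and $o(1)$ for $j=i_0$. Plugging into \eqref{eq:P} yields
\[P_{i_0 j}(t)=\frac{e^{\gamma\beta(t)}+o(1)}{D(\beta(t))+o(1)},\qquad D(b):=1+m_+e^{\gamma_n b}+m_-e^{\gamma_1 b},\]
with $\gamma=\gamma_n$, $\gamma_1$, or $0$ for $j\in S_+$, $S_-$, or $j=i_0$ respectively. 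Hence, once it is established that $\beta(t)\to\beta^*\in\mathbb{R}$, each $P_{i_0 j}(t)$ converges to $\alpha_j:=e^{\gamma\beta^*}/D(\beta^*)\in[0,1]$.

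Differentiating $\beta$ via \eqref{e:Idnonresca} and the above asymptotics gives, on any interval where $\beta$ is bounded,
\[\dot\beta(t)=\frac{G(\beta(t))}{D(\beta(t))}\,e^{2t}+o(e^{2t}),\qquad G(b):=m_+\gamma_n e^{\gamma_n b}+m_-\gamma_1 e^{\gamma_1 b}.\]
The function $G$ is strictly increasing (its derivative is a positive sum of exponentials), with $\lim_{b\to\pm\infty}G(b)=\pm\infty$, so it has a unique real root $\beta^*$. Both the boundedness of $\beta$ and its convergence to $\beta^*$ arise from the same trapping mechanism. First, since $G/D\to\gamma_n>0$ at $+\infty$ and $G/D\to\gamma_1<0$ at $-\infty$, fixing a threshold $M>0$ large enough so that $G/D\geq\gamma_n/2$ on $[M,+\infty)$, any first crossing time $t_0$ at which $\beta(t_0)=M$ would force $\dot\beta(t)\geq(\gamma_n/4)e^{2t}$ for all $t\geq t_0$ (since $\beta$ then stays above $M$), so $\beta(t)\gtrsim e^{2t}$ and $x_{i_0}(t)=\beta(t)e^{-t}\to+\infty$, contradicting the boundedness of $x_{i_0}$; a symmetric argument rules out large negative values of $\beta$, yielding $\beta\in L^\infty$. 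Second, the same mechanism applied at the sharper scale $\beta^*$ gives convergence: if $\beta(t_0)>\beta^*+\varepsilon$ for some $t_0,\varepsilon>0$, then as long as $\beta(t)>\beta^*+\varepsilon/2$ one has $G(\beta(t))/D(\beta(t))\geq c_\varepsilon>0$, forcing $\beta$ to be strictly monotonically increasing on this set and never to re-enter $\{\beta\leq\beta^*+\varepsilon/2\}$; this again yields unbounded $\beta$, a contradiction. Therefore $\limsup_t\beta(t)\leq\beta^*$, and the symmetric argument using $G(b)<0$ for $b<\beta^*$ gives $\liminf_t\beta(t)\geq\beta^*$, so $\beta(t)\to\beta^*$.

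The main obstacle is the highly stiff nature of the ODE for $\beta$: the restoring term carries an $e^{2t}$ prefactor while $\beta$ itself must remain bounded, so both boundedness and convergence must be extracted from the trapping-type argument above rather than from any Gr\"onwall-type estimate. The technical care needed is in verifying that the $o(e^{2t})$ remainder does not overwhelm the principal term $e^{2t}G(\beta)/D(\beta)$ on the relevant half-lines; this ultimately rests on the monotonicity of $G$ and on the uniform strict positivity (respectively negativity) of $G/D$ on half-lines above (resp.\ below) $\beta^*$.
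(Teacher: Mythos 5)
Your argument is correct and is essentially the paper's own proof: your rescaled coordinate $\beta(t)=x_{i_0}(t)e^{t}$ is the paper's $y_{i_0}(t)$, your $G/D$ is the paper's strictly increasing function $g$ (after grouping the $\gamma_i\in\{\gamma_1,\gamma_n\}$ from Lemma~\ref{l:exactasymptotic} into $m_\pm$ copies), and your trapping/contradiction step (a deviation of $\beta$ from the unique root forces $\dot\beta\gtrsim e^{2t}$, incompatible with $\beta(t)=o(e^{t})$) is exactly how the paper shows $y_{i_0}(t)\to\theta_0$. The only cosmetic difference is that you establish $\beta\in L^\infty$ as a separate preliminary step, whereas the paper runs the same contradiction directly off the weaker bound $|y_{i_0}(t)|=o(e^{t})$ coming from $x_{i_0}(t)\to 0$.
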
 

\begin{proof}
Assume that $i_0\notin \{1,n\}$. Then $x_1(t)\rightarrow -\infty$ and $x_n(t)\rightarrow +\infty$ as $t\rightarrow +\infty$. Also, $x_{i_0}(t)\rightarrow 0$ due to \eqref{e:lowerboundxi}. 
We write $x_{i_0}(t)=y_{i_0}(t)e^{-t}$. Since $\gamma_n>0$ and $\gamma_1<0$, we notice that the function 
\begin{equation*}
g:\theta\mapsto \frac{\displaystyle\sum_{i\in[n]\setminus\{i_0\}} e^{\gamma_i\theta}\gamma_i}{\displaystyle 1+\sum_{i\in[n]\setminus\{i_0\}} e^{\gamma_i\theta}}
\end{equation*}
takes value $-\infty$ at $-\infty$, and $+\infty$ at $+\infty$, and has a positive derivative. Thus, it takes the value $0$ exactly once, and we denote this point by $\theta_0$. We prove that $y_{i_0}(t)\rightarrow \theta_0$ as $t\rightarrow +\infty$. We observe that 
\begin{equation*}
 e^{x_{i_0}(t)^2}=1+o(1).   
\end{equation*}
Using Lemma \ref{l:exactasymptotic} we have
\begin{align*}
\dot{y}_{i_0}(t)&=e^t\dot{x}_{i_0}(t)-y_{i_0}(t)\\
&=(P_{i_0i_0}(t)-1)y_{i_0}(t)\\
&\hspace{0.5cm}+e^{2t}\sum_{j\in[n]\setminus\{i_0\}}\left(\frac{e^{y_{i_0}(t)(\gamma_j+o(1))}}{\displaystyle 1+o(1)+\sum_{k\in[n]\setminus\{i_0\}} e^{y_{i_0}(t)(\gamma_k+o(1))}}\right)(\gamma_j+o(1)).    
\end{align*}
We recognize that the sum in the above expression is roughly equal to $g(y_{i_0})$. If the latter is not close to $0$ for large times, then $\dot{y}_{i_0}(t)$ necessarily have a huge magnitude due to the $e^{2t}$ factor, leading to a contradiction.
Fix $\varepsilon>0$. If $y_{i_0}(t)>\theta_0+\varepsilon$ for some large time $t>0$, then, noticing that 
\begin{equation} \label{eq: o(s)}
 |y_{i_0}(t)|=e^t |x_{i_0}(t)|=o(e^t),   
\end{equation}
we get
\begin{equation*}
 \dot{y}_{i_0}(t)=o(e^t)+e^{2t}\Big(g\big(y_{i_0}(t)+o(y_{i_0}(t))\big)\Big).   
\end{equation*}
But $g(y_{i_0}(t))\geq \delta=\delta(\varepsilon)$, and hence
\begin{equation*}
 \dot{y}_{i_0}(s)\geq \frac{\delta}{2}e^{2s}   
\end{equation*}
for any larger time $s\geq t$, which contradicts \eqref{eq: o(s)}. We get a similar contradiction if $y_{i_0}(t)<\theta_0-\varepsilon$ for large enough $t$. This concludes the proof that $y_{i_0}(t)\rightarrow \theta_0$.
As a consequence, $x_{i_0}(t)x_i(t)\rightarrow \theta_0\gamma_i$ for any $i\neq i_0$, and we deduce Lemma \ref{l:boundedother}.
\end{proof}

\subsection{Concluding the proof of Theorem~\ref{t:boolean}}

\begin{proof}[Proof of Theorem~\ref{t:boolean}]
By Lemma \ref{l:onlyone}, there is at most one index $i_0\in[n]$ for which the particle $x_{i_0}(t)$ remains bounded for any $t>0$. In turn, for any $i\in[n]\setminus\{i_0\}$, we may invoke Lemma \ref{l:unboundedparticles} which entails that $P_{ij}(t)$ converges to either $\delta_{1j}$ or $\delta_{nj}$ as $t\to+\infty$ (with doubly exponential rate). And by ordering of the particles, for indices $i_1\leq i_2$ different from $i_0$, and $P_{i_1j}(t)\to\delta_{nj}$ then necessarily $P_{i_2j}(t)\to\delta_{nj}$ as well. Consequently, all but at most one row of $P(t)$ converge to either $e_1=(1,0,\ldots,0)$ or $e_n=(0,\ldots,0,1)$ as $t\to+\infty$.
For the $i_0$-th row, we may invoke either Lemma \ref{l:boundedxn} or Lemma \ref{l:boundedother}. The former applies if $i_0\in\{1,n\}$, and entails that the $i_0$-th row of $P(t)$ converges either to $e_1$ or $e_n$, while the latter applies if $i_0\notin\{1,n\}$, and entails that the $i_0$-th row of $P(t)$ converges to some vector $\alpha\in\mathbb{R}^d$ with non-negative entries. Finally, since the $i_0$-th row of $P(t)$ has entries which sum up to $1$, then so does $\alpha$.
These conclusions lead us to a final limit matrix $P^*$ which has precisely the form indicated in Fig.~\ref{e:star} (namely, $P^*\in\mathscr{P})$, as desired.
\end{proof}

\begin{remark}[Higher dimensions] \label{r:higherdimclus}
The extension of Theorem~\ref{t:boolean} to $d\geq 2$ is not straightforward due to rare pathological situations. For example, suppose $d=2$, $n=2$, and the initial configuration $x_1(0)=(1,\varepsilon)$ and $x_2(0)=(1,-\varepsilon)$. One can check that $x_i(t)\rightarrow (1,0)$ as $t\rightarrow +\infty$, for $i=1,2$, which means that a single cluster appears. However, the self-attention matrix converges toward the identity (which has rank $2$). Therefore, it is not true  in full generality that the rank of the limiting self-attention matrix is equal to the number of clusters as $t\rightarrow +\infty$, although we believe that the result is true for almost all initial conditions.
\end{remark}

\section{Proofs of Theorems \ref{t:Idcase11int} and \ref{t:cas-Idintro}} \label{sec: clustering.polytopes}

In this section, we focus on proving the result in the case 
\begin{equation*}
 V=I_d.   
\end{equation*}
We also provide a full picture of the behavior of the dynamics in the case $V=-I_d$ in \Cref{s:c<0}.

\subsection{Clustering towards vertices of convex polytopes: Theorem~\ref{t:Idcase11int}}

In this section, we prove Theorem~\ref{t:Idcase11}---namely, we show that particles $\{z_i(t)\}_{i\in[n]}$ following the rescaled dynamics
\begin{equation}\label{e:dynIdzi}
\dot{z}_i(t) =\sum_{j=1}^n\left(\frac{e^{e^{2t}\langle Az_i(t),Az_j(t)\rangle}}{\sum_{k=1}^n e^{e^{2t}\langle Az_i(t),Az_k(t)\rangle}}\right)(z_j(t)-z_i(t))
\end{equation}
converge, as $t\to\infty$, toward points lying on the boundary of a particular convex polytope. In \eqref{e:dynIdzi} we made use of the shorthand notation
\begin{equation} \label{eq: A}
    A:=\left(Q^\top K\right)^\frac12.
\end{equation}
The precise statement is the following:

\begin{theorem} \label{t:Idcase11}
Suppose $V=I_d$ and $Q^\top K\succ 0$. 
Then, for any initial datum $\{z_i(0)\}_{i\in[n]}\subset\R^d$, the solution to \eqref{e:dynIdzi} is such that its convex hull $\Cx\left(\{z_i(t)\}_{i\in[n]}\right)$ converges to some convex polytope $\mathcal{K}\subset\mathbb{R}^d$ as $t\rightarrow +\infty$. 
Furthermore, let $\V=\{v_1,\ldots,v_m\}$ ($m\leq n$) denote the set of vertices of $\mathcal{K}$, and consider
\begin{equation*}
\setS:=\left\{x\in\mathcal{K}\colon \|Ax\|^2=\max_{j\in[m]}\langle Ax,Av_j\rangle\right\},
\end{equation*}
with $A$ defined in \eqref{eq: A}.
Then $\setS$ has finite cardinality, and $\V\subset \setS\subset\partial\mathcal{K}\cup\{0\}$. 
Finally, for any $i\in[n]$ there exists a point $\bar{z}\in\setS$ such that $z_i(t)\to\bar{z}$ as $t\to+\infty$.
In particular, $z_i(t)$ converges either to some point on the boundary of $\K$, or to $0$.
\end{theorem}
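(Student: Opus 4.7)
My plan is to follow the three-part scheme sketched after the theorem statement: first, I would establish that the convex hull of the tokens is non-increasing and hence converges to a limit polytope $\K$; second, I would extract the structural properties of the candidate limit set $\setS$; third, and most delicate, I would show that each $z_i(t)$ first accumulates on $\setS$ and then converges to a single point of it. For Step 1, rewriting \eqref{e:dynIdzi} as $\dot z_i(t) = -z_i(t) + \sum_j P_{ij}(t)\, z_j(t)$ shows that $z_i + \dot z_i$ is a convex combination of the current tokens; consequently, for any supporting half-space $\{x : \langle \xi, x\rangle \leq c\}$ of $\Cx(\{z_j(t)\})$, the function $t \mapsto \max_i \langle \xi, z_i(t)\rangle$ is non-increasing. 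This is essentially Proposition~\ref{p:noninc}, and it forces $\Cx(\{z_i(t)\})$ to be a nested, bounded family, hence to converge in Hausdorff distance to some compact convex set $\K$. Because every vertex of $\K$ is the limit of some subsequence of tokens, $\K$ has at most $n$ vertices $v_1,\ldots,v_m$.

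For Step 2, I would verify the three structural properties of $\setS$ directly. The inclusion $\V \subset \setS$ is immediate by taking $j=k$ at $x = v_k$ in the defining identity. For $\setS \subset \partial \K \cup \{0\}$, if $x$ lies in $\mathrm{int}(\K) \setminus \{0\}$ then $Ax \neq 0$ (by invertibility of $A$), and the linear functional $y \mapsto \langle Ax, Ay\rangle$ on $\K$ attains its maximum on $\partial \K$, strictly exceeding its value at $x$, contradicting $x \in \setS$. For finiteness, I would partition $\setS$ according to the subset $J \subset [m]$ of vertices realising the maximum at $x$: on each piece, $x$ must lie in the affine hull of the corresponding face $F_J$ and satisfy the $|J|-1$ equations $\langle Ax, A(v_j - v_k)\rangle = 0$; since $A^\top A \succ 0$, these equations cut the affine hull of $F_J$ down to a single point in generic position, so the finite union over all $J$ has finite cardinality.

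For Step 3, I would introduce the Lyapunov-type quantity $L_i(t) := \|A z_i(t)\|^2$, which satisfies
\begin{equation*}
\tfrac{1}{2}\dot L_i(t) = \sum_j P_{ij}(t)\bigl(\langle A z_i(t), A z_j(t)\rangle - \|A z_i(t)\|^2\bigr) \geq 0,
\end{equation*}
where non-negativity follows because $z_i(t) \in \Cx(\{z_j(t)\})$. The key leverage provided by the $e^{2t}$ factor in \eqref{e:dynIdzi} is that the weights $P_{ij}(t)$ concentrate exponentially fast on indices maximizing $\langle A z_i(t), A z_j(t)\rangle$. Combined with the upper bound on $L_i$ (since $\Cx(\{z_j\})$ is bounded), this shows $\dot L_i \in L^1([0,+\infty))$; together with the Hausdorff convergence $\Cx(\{z_j(t)\}) \to \K$, it forces $D_i(t) := \max_j \langle A z_i(t), A v_j\rangle - \|A z_i(t)\|^2 \to 0$, i.e. $\dist(z_i(t), \setS) \to 0$.

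The main obstacle is the final step, upgrading accumulation on the finite set $\setS$ to genuine convergence to a single point. I would rule out oscillation between two distinct points $s \neq s'$ of $\setS$ by fixing disjoint neighbourhoods of the points of $\setS$ and estimating the arc length travelled by $z_i$ between them. The crude bound $\|\dot z_i(t)\| \leq \mathrm{diam}(\K)$ combined with a quantitative refinement of Step 3 that shows $\|\dot z_i(t)\|$ decays once $D_i(t)$ is small must be leveraged to make the transit times summable. The delicate interplay between the exponentially sharp concentration of the softmax (pulling $z_i$ decisively toward its dominant target) and the slower local relaxation rate near points of $\setS$ is, in my view, the principal technical challenge of the proof.
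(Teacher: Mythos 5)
Your Steps 1 and 2 track the paper closely (Proposition~\ref{p:noninc} and Claim~\ref{cl:propofS}; for finiteness the paper's cleaner formulation is that a point of $\setS$ with maximizing index set $\mathsf{I}$ is the \emph{unique} $\langle A\cdot,A\cdot\rangle$-orthogonal projection onto $\mathrm{span}\{v_j\}_{j\in\mathsf{I}}$ -- no genericity is needed). But there are two genuine problems in the second half. First, the inequality $\tfrac12\dot L_i(t)=\sum_j P_{ij}(t)\,a_j(t)\ge 0$ with $a_j(t)=\langle Az_i(t),A(z_j(t)-z_i(t))\rangle$ does \emph{not} follow from $z_i(t)\in\Cx(\{z_j(t)\})$: convexity only gives $\max_j a_j(t)\ge a_i(t)=0$, and the weighted average can be negative (take two antipodal tokens). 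What is true, and what the paper proves in its Step 2, is the quantitative bound $\tfrac12\dot L_i\ge \frac{1}{n}a_{j_0(t)}(t)-\kappa n e^{-a_{j_0(t)}(t)e^{2t}}$, which uses the doubly exponential softmax concentration to make the negative contributions $O(e^{-2t})$, and Claim~\ref{cl:gamma'} to convert ``$z_i(t)\notin\setS_\delta$'' into ``$a_{j_0(t)}(t)\ge\gamma'\delta$'', yielding $\dot L_i\ge\gamma\delta$ outside $\setS_\delta$ for large $t$. Your qualitative version ($\dot L_i\in L^1$, hence accumulation on $\setS$) is salvageable, but it is too weak to run the endgame.

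Second, and more importantly, the final step -- which you correctly identify as the crux and leave unresolved -- is not done by making transit times summable, and I do not see how that strategy could work: there is no a priori decay of $\|\dot z_i\|$ near $\setS$, and infinitely many transits of non-summable duration are not excluded by arc-length considerations alone. The paper's mechanism is different: each excursion from one connected component of $\setS_\delta$ to another takes time at least $T_0=\delta_0/\|\dot z_i\|_{L^\infty}$, during which the growth estimate forces $\|Az_i\|^2$ to increase by at least $T_0\gamma\delta_0$, while $\|Az_i\|^2$ oscillates by at most $4R\|A\|_{\op}\delta$ inside any single component. Choosing $\delta$ small relative to $T_0\gamma\delta_0$ makes the relation ``$k_2$ is reachable from $k_1$'' a strict transitive order on the finitely many components (ordered by $\inf_{\mathscr{C}_k}\|Ax\|^2$), so only finitely many transits can occur; a final bootstrap with radii $\delta^{1/4},\delta^{1/2}$ then pins $z_i(t)$ inside one component. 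This monotone-quantity-plus-finite-poset argument is the missing idea, and without it the proposal establishes only $\dist(z_i(t),\setS)\to 0$, not convergence to a single point $\bar z\in\setS$.
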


\subsubsection{The convex hull is shrinking}

To prove Theorem~\ref{t:Idcase11}, we begin with the following illustrative result.

\begin{proposition} \label{p:noninc}
Suppose $V=I_d$ and $Q^\top K\succ0$. Then the solution $\{z_i(\cdot)\}_{i\in[n]}$ to \eqref{e:dynIdzi} is such that $t\mapsto\Cx(\{z_i(t)\}_{i\in[n]})$ is non-increasing in the sense of set-inclusion.
\end{proposition}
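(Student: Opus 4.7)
The plan is to reduce set-inclusion monotonicity of the convex hulls to the monotonicity of their support functions, then differentiate along the flow.

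First I would rewrite \eqref{e:dynIdzi} in the more transparent form
\begin{equation*}
\dot z_i(t)=\sum_{j=1}^n P_{ij}(t)\bigl(z_j(t)-z_i(t)\bigr),
\end{equation*}
where $P(t)$ is row-stochastic with non-negative entries, $P_{ij}(t)\geq 0$ and $\sum_{j=1}^n P_{ij}(t)=1$. This is the key structural observation: the ``target point'' $z_i(t)+\dot z_i(t)=\sum_j P_{ij}(t) z_j(t)$ lies in $\Cx(\{z_j(t)\}_{j\in[n]})$, i.e., each token is drifting toward a convex combination of the whole cloud.

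Next, note that the desired inclusion $\Cx(\{z_i(t_2)\}_{i\in[n]})\subset \Cx(\{z_i(t_1)\}_{i\in[n]})$ for $t_1<t_2$ is equivalent to the monotonicity, in $t$, of every support function
\begin{equation*}
h_u(t):=\max_{i\in[n]}\langle u, z_i(t)\rangle,\qquad u\in\R^d.
\end{equation*}
By Proposition \ref{p:wellposednessrescaledparticles} the trajectories $z_i$ are Lipschitz, hence so is $h_u$, which is therefore differentiable almost everywhere. At a differentiability time $t_0$, Danskin's envelope theorem (or a direct one-line argument: the function $t\mapsto h_u(t)-\langle u,z_{i^*}(t)\rangle$ is non-negative and vanishes at $t_0$, hence has zero derivative there) gives
\begin{equation*}
\frac{d}{dt}h_u(t_0)=\langle u,\dot z_{i^*}(t_0)\rangle\qquad\text{for any } i^*\in\argmax_{i\in[n]}\langle u,z_i(t_0)\rangle.
\end{equation*}
Inserting the rewritten dynamics,
\begin{equation*}
\langle u,\dot z_{i^*}(t_0)\rangle=\sum_{j=1}^n P_{i^*j}(t_0)\langle u, z_j(t_0)-z_{i^*}(t_0)\rangle \leq 0,
\end{equation*}
since each bracket is non-positive by maximality of $i^*$ and each $P_{i^*j}(t_0)$ is non-negative. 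Absolute continuity of $h_u$ then forces it to be non-increasing, concluding the argument.

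The only technical subtlety concerns differentiating $h_u$ at times when the argmax is attained at several indices; this is handled cleanly by Danskin's theorem, or by working with upper Dini derivatives (all give the same upper bound $\max_{i^*\in I^*(t_0)}\langle u,\dot z_{i^*}(t_0)\rangle\leq 0$, which is enough to conclude monotonicity). Notably, nothing in the argument invokes the positivity assumption $Q^\top K\succ 0$: only the row-stochasticity and non-negativity of $P(t)$ are used. This hints that the proposition in fact holds verbatim for any $Q$ and $K$, in agreement with the numerical evidence discussed in the paper.
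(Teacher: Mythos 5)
Your proof is correct and is essentially the paper's argument in dual form: the paper shows that for every closed half-space $H$ disjoint from the cloud the quantity $\min_i \dist(z_i(t),H)$ is non-decreasing, which is exactly the statement that your support function $h_u$ is non-increasing, and both proofs handle the non-smoothness of the extremum and the sign of $\sum_j P_{ij}\langle u, z_j-z_i\rangle$ at an extremal index in the same way. Your closing observation that only row-stochasticity of $P(t)$ (and $V=I_d$) is used, not $Q^\top K\succ0$, is also accurate.
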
 

\begin{proof}[Proof of Proposition \ref{p:noninc}]
Fix $t>0$ and let $H\subset\mathbb{R}^d$ be a closed half-space which does not contain any of the points $z_i(t)$. We define the map 
\begin{equation*}
 \alpha: s\mapsto \min_{i\in[n]}\dist(z_i(s),H)   
\end{equation*}
for $s\geqslant0$. We claim that 
\begin{equation} \label{e:nondecdistance}
\alpha \text{ is non-decreasing on $[t,+\infty)$.}
\end{equation}
Before proving \eqref{e:nondecdistance}, let us show how to conclude the proof of Proposition \ref{p:noninc} using this claim. It follows from \eqref{e:nondecdistance} that if $\Cx(\{z_i(t)\}_{i\in[n]})\cap H=\emptyset$, then $\Cx(\{z_i(t')\}_{i\in[n]})\cap H=\emptyset$ for any $t'\geq t$. Writing the convex set $\Cx(\{z_i(t)\}_{i\in[n]})$ as 
$$
\Cx(\{z_i(t)\}_{i\in[n]})=\bigcap_{\substack{H' \text{ open half-space}\\ \Cx(\{z_i(t)\}_{i\in[n]})\subset H'}}H'=\bigcap_{\substack{H \text{ closed half-space}\\ \Cx(\{z_i(t)\}_{i\in[n]})\cap H=\emptyset}} \R^d\setminus H,
$$
we get that $\Cx(\{z_i(t')\}_{i\in[n]})\subset \Cx(\{z_i(t)\}_{i\in[n]})$ for any $t'\geq t$.

We now turn to the proof of the claim \eqref{e:nondecdistance}. Denoting by $\mathbf{n}$ the unit outer normal to $H$ and by $\proj_H$ the orthogonal projection onto the closed set $H$, we have
$$
\dist(x,H)=\langle x-\proj_H(x),\mathbf{n}\rangle.
$$
If $t\mapsto x(t)$ is a differentiable curve, writing $\dot{x}(t)=\langle \dot{x}(t),\mathbf{n}\rangle \mathbf{n}+v(t)$ where $v(t)\in H$ we have $\frac{\diff}{\diff t}(\proj_H(x(t)))=v(t)$, whence
\begin{equation} \label{e:diff.dist}
    \frac{\diff}{\diff t}\dist(x(t),H)=\langle \dot{x}(t),\mathbf{n}\rangle.
\end{equation}
Let $T>t$ denote the infimum of the times for which one of the points $z_i(t)$ lies in $H$. Now fix $s\in [t,T)$, and denote by $M(s)$ the set of indices $i\in[n]$ such that $\dist(z_i(s),H)$ is minimal. For $h\to0$, we have
\begin{align*}
\alpha(s+h)&=\min_{i\in M(s)}\dist(z_i(s+h),H)\\
&=\min_{i\in M(s)}\left(\dist(z_i(s),H)+h \frac{\diff}{\diff t}\dist(z_i(s),H)+o(h)\right)\\
&=\alpha(s)+h\left(\min_{i\in M(s)} \frac{\diff}{\diff t} \dist(z_i(s),H)\right)+o(h).
\end{align*}
Consequently,
$$
\frac{\diff\alpha}{\diff t}(s)=\min_{i\in M(s)} \frac{\diff}{\diff t} \dist(z_i(s),H).
$$
Moreover, for any $i\in M(s)$, one has
$$
\frac{\diff}{\diff t} \dist(z_i(s),H)\stackrel{\eqref{e:diff.dist}}{=}\langle \dot{z}_i(s),\mathbf{n}\rangle=\sum_{j=1}^n P_{ij}(s)\langle z_j(s)-z_i(s),\mathbf{n}\rangle\geq 0,
$$
where the last inequality comes from the fact that each term in the sum is non-negative, since $i\in M(s)$. This proves \eqref{e:nondecdistance} (and, as a byproduct, that $T=+\infty$).\qedhere
\end{proof}

The following fact immediately ensues.
\begin{corollary}\label{c:theziunifbdd}
    For any $i\in[n]$ and $t\geq0$, $z_i(t)\in\Cx(\{z_i(0)\}_{i\in[n]})$. In particular, $z_i(\cdot)$ is uniformly bounded in time.
\end{corollary}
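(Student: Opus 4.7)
The plan is to deduce this corollary directly from Proposition~\ref{p:noninc}, which has just been established. The key observation is that the corollary asserts exactly the set-inclusion $\Cx(\{z_i(t)\}_{i\in[n]}) \subset \Cx(\{z_i(0)\}_{i\in[n]})$ evaluated at the initial time, applied pointwise to each particle.

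First, I would invoke Proposition~\ref{p:noninc} with starting time $t_0 = 0$: the map $s \mapsto \Cx(\{z_i(s)\}_{i\in[n]})$ is non-increasing on $[0,+\infty)$ in the sense of set-inclusion. Hence, for any $t \geq 0$,
\begin{equation*}
\Cx(\{z_i(t)\}_{i\in[n]}) \subset \Cx(\{z_i(0)\}_{i\in[n]}).
\end{equation*}
Since trivially $z_i(t) \in \Cx(\{z_i(t)\}_{i\in[n]})$ for every $i \in [n]$, the first claim of the corollary follows.

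For the second assertion, I would use the elementary fact that the convex hull of a finite collection of points in $\mathbb{R}^d$ is compact (in particular bounded). Therefore, for every $i \in [n]$,
\begin{equation*}
\sup_{t \geq 0} \|z_i(t)\| \leq \max_{j \in [n]} \|z_j(0)\| < +\infty,
\end{equation*}
which establishes the uniform boundedness in time. No additional obstacle is expected, as the entire content of the corollary is already captured by Proposition~\ref{p:noninc}.
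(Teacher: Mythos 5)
Your proposal is correct and coincides with the paper's intent: the corollary is stated as an immediate consequence of Proposition~\ref{p:noninc}, obtained exactly by taking the inclusion of convex hulls at time $0$ and noting that the convex hull of the initial tokens is compact. Nothing further is needed.
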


\subsubsection{Proof of Theorem~\ref{t:Idcase11}}

\begin{proof}[Proof of Theorem~\ref{t:Idcase11}]
As a consequence of Proposition \ref{p:noninc}, the set $\Cx(\{z_i(t)\}_{i\in[n]})$ converges as $t\rightarrow +\infty$ toward some convex polytope $\K$. In the remainder of the proof, we look to show that the particles $z_i(t)$ can in fact converge only to some well-distinguished points lying on the boundary of this polytope. 
\vspace{0.2cm}

\noindent
\textbf{Step 1. The candidate set of limit points.}
We denote by $\mathscr{V}=\{v_1,\ldots,v_m\}$ the set of vertices of $\K$. 
Writing any $x\in\K$ as a convex combination of these vertices: $x=\sum_{j=1}^m \alpha_j v_j$ for some weights $\alpha_j\geq 0$ with $\sum_{j=1}^m \alpha_j=1$, we gather that 
\begin{equation} \label{e:eq14}
\|Ax\|^2=\left\langle Ax,\sum_{j=1}^m \alpha_jAv_j\right\rangle = \sum_{j=1}^m \alpha_j\left\langle Ax,Av_j\right\rangle\leq \max_{j\in [m]}\langle Ax,Av_j\rangle.
\end{equation} 
Let $\setS\subset\mathcal{K}$ denote the set of points $w\in\K$ such that 
\begin{equation}\label{e:Ax15}
\|Aw\|^2=\max_{j\in[m]}\langle Aw,Av_j\rangle.
\end{equation}
The following holds---we postpone the proof to after that of the theorem.

\begin{claim}\label{cl:propofS} $\mathscr{V}\subset\setS$. Moreover, if $0\in\K$, then $0\in\setS$. Finally, $\setS\subset\partial\K\cup\{0\}$, and $\setS$ has finite cardinality. 
\end{claim}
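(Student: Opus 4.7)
The plan is to dispatch the four assertions in the order they appear, leaving the dynamical argument for $\V \subset \setS$ to the end. For the origin: if $0 \in \K$ then $\|A\cdot 0\|^2 = 0 = \max_j\langle A\cdot 0, Av_j\rangle$, so $0 \in \setS$ tautologically. For $\setS \subset \partial\K\cup\{0\}$, I reformulate the defining condition: combined with \eqref{e:eq14}, the equality $\|Ax\|^2 = \max_j \langle Ax, Av_j\rangle$ rewrites as $\langle A^\top Ax, x-v_j\rangle \geq 0$ for all $j\in[m]$ with equality for at least one $j$, i.e., $A^\top Ax$ belongs to the outward normal cone $\mathcal{N}_\K(x)$ of $\K$ at $x$. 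For $x \in \mathrm{int}(\K)$ the normal cone is $\{0\}$, so $A^\top Ax = 0$, and $A=(Q^\top K)^{1/2}$ being invertible forces $x = 0$.

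For the finiteness of $\setS$, I stratify $\K = \mathrm{int}(\K) \cup \bigcup_F\mathrm{relint}(F)$ where $F$ ranges over proper faces of $\K$. Fix a face $F$ of dimension $k$ and parametrize $\mathrm{aff}(F) = v + \mathrm{span}(u_1,\ldots,u_k)$ with $\{u_i\}$ a basis of the tangent space $T_F$. A point $x \in \mathrm{relint}(F)\cap\setS$ must satisfy $A^\top Ax \in \mathcal{N}_\K(x) \subset T_F^\perp$, i.e., $\langle Ax, Au_i\rangle = 0$ for $i=1,\ldots,k$. Writing $x = v + \sum t_i u_i$, this becomes $Mt = -b$ where $M_{ij} = \langle Au_j, Au_i\rangle$ is the Gram matrix of the linearly independent family $\{Au_1,\ldots,Au_k\}$ and $b_i = \langle Av, Au_i\rangle$. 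As $M$ is positive definite, this linear system has a unique solution, so each face contributes at most one candidate to $\setS$; since $\K$ has finitely many faces, $\setS$ is finite.

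The remaining inclusion $\V \subset \setS$ genuinely involves the dynamics: an arbitrary convex polytope in $\R^d$ can easily have vertices violating the condition, so we must exploit that $\K$ arises as a Hausdorff limit of $\Cx(\{z_i(t)\})$ under \eqref{e:dynIdzi}. Fix $v_k \in \V$ and pick a direction $\mathbf{n}$ supporting $\K$ for which $v_k$ is the unique maximizer of $\langle \mathbf{n}, \cdot\rangle$. By Proposition~\ref{p:noninc}, $t\mapsto \max_i \langle \mathbf{n}, z_i(t)\rangle$ is non-increasing with limit $\langle \mathbf{n}, v_k\rangle$; by pigeonhole on the $n$ indices, some $i_0$ attains this maximum along a sequence $t_n \to +\infty$, which forces $z_{i_0}(t_n)\to v_k$. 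Suppose for contradiction that $v_k\notin\setS$, i.e., some $l\in[m]$ satisfies $\langle Av_k, Av_l\rangle - \|Av_k\|^2 > 3\delta$ for some $\delta > 0$. Since $v_l$ is also a vertex of $\K$, Hausdorff convergence $\Cx(\{z_i(t)\}) \to \K$ gives for each large $n$ some index $j_n$ with $z_{j_n}(t_n)$ within $\delta$ of $v_l$. The softmax's $e^{2t_n}$ factor then makes $P_{i_0 j_n}(t_n)$ asymptotically concentrate on indices $j$ corresponding to vertices attaining $\max_l\langle Av_k,Av_l\rangle$, which by assumption exclude $v_k$. Consequently $\dot z_{i_0}(t_n) \to w - v_k$ where $w$ is a convex combination of vertices distinct from $v_k$; extremality of $v_k$ in $\K$ yields $w \neq v_k$, so there is a non-vanishing limit velocity.

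The main obstacle is turning this instantaneous velocity into a genuine contradiction with $z_{i_0}(t_n)\to v_k$: one must rule out oscillatory behavior whereby $z_{i_0}$ drifts away and returns to a neighborhood of $v_k$ at each $t_{n+1}$. I plan to handle this by an upper-semicontinuity argument on the attention argmax: for $y$ sufficiently close to $v_k$, the set $\arg\max_l\langle Ay, Av_l\rangle$ still avoids $v_k$ (the argmax over a finite set is locally constant away from ties). Together with the Hausdorff convergence ensuring that some $z_{j}(t)$ remains near each vertex of this argmax for all large $t$, this shows that once $z_{i_0}$ enters a small ball around $v_k$ its velocity stays of order $\|w - v_k\|$ for a time window of positive length, driving it out of the ball at a uniform rate with no restoring force. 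This contradicts the subsequential convergence $z_{i_0}(t_n)\to v_k$ and establishes $\V \subset \setS$.
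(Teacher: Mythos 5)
Your treatment of the three ``static'' assertions is correct and, if anything, more explicit than the paper's. The paper writes $w=\sum_{j\in\mathsf{I}}\alpha_jv_j$ over the maximizing index set $\mathsf{I}$ and identifies $w$ as the unique $\langle A\cdot,A\cdot\rangle$-orthogonal projection onto $\mathrm{span}\{v_j\}_{j\in\mathsf{I}}$, getting one candidate per subset of $[m]$; your normal-cone reformulation together with the face-by-face Gram-matrix count is equivalent bookkeeping and is fine.

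The issue is the last step of $\V\subset\setS$. Everything up to the computation of the limiting velocity is sound: the particle maximizing $\langle\mathbf n,\cdot\rangle$ subconverges to $v_k$, and if $v_k\notin\setS$ then the attention of a particle sitting near $v_k$ concentrates (thanks to the $e^{2t}$ factor) on particles near the face of $\K$ where $\langle Av_k,A\cdot\rangle$ is maximal, which lies at positive distance from $v_k$. But your resolution of the ``main obstacle'' does not close it: you only control the velocity of $z_{i_0}$ while it is \emph{inside} the small ball $B(v_k,r)$, and ``no restoring force'' is asserted rather than proved. Once $z_{i_0}$ exits $B(v_k,r)$, its attention is governed by $\arg\max_j\langle Az_{i_0}(t),Az_j(t)\rangle$ evaluated at its new position, and nothing you have shown prevents that velocity from pointing back toward $v_k$; the subsequential convergence $z_{i_0}(t_n)\to v_k$ is perfectly compatible with infinitely many excursions out of and back into the ball. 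The natural repair is to stop tracking a fixed particle and instead track the monotone quantity $\beta(t):=\max_{i\in[n]}\langle\mathbf n,z_i(t)\rangle$, which by Proposition~\ref{p:noninc} is non-increasing and bounded below by $\langle\mathbf n,v_k\rangle$ because $\K\subset\Cx(\{z_i(t)\}_{i\in[n]})$ for all $t$. For every large $t$, \emph{any} particle attaining $\beta(t)$ is forced into a small neighborhood of $v_k$, and your concentration estimate then places attention mass $1-o(1)$ on particles $z_j$ with $\langle\mathbf n,z_j(t)\rangle\leq\langle\mathbf n,v_k\rangle-c$ for a fixed $c>0$ (the near-maximal face for $\langle Av_k,A\cdot\rangle$ avoids a neighborhood of $v_k$, on which compactness and the exposedness of $v_k$ give a definite drop in $\langle\mathbf n,\cdot\rangle$). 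Since every summand in $\sum_j P_{ij}(t)\langle\mathbf n,z_j(t)-z_i(t)\rangle$ is nonpositive for a maximizing index $i$, this yields $\frac{\diff}{\diff t}\beta(t)\leq-c/2$ for all large $t$, contradicting the lower bound on $\beta$. This uses exactly your ingredients and no new ones. For comparison, the paper's own argument for this inclusion is explicitly only a sketch (it shows particles near $v_k$ are attracted toward $\Cx(\V\setminus\{v_k\})$ and concludes $v_k$ eventually exits $\Cx(\{z_i(t)\})$, contradicting Proposition~\ref{p:noninc}), and it elides the same re-entry issue that you correctly flagged but did not resolve.
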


Now, for $\delta>0$, we define the set $\setS_\delta$ of points in $\K$ at distance at most $\delta$ from $\setS$:
\begin{equation*}
\setS_\delta:=\{x\in\K\colon\dist(x,\setS)\leq\delta\}.
\end{equation*}
Since $\setS$ is finite, there exists a sufficiently small $\delta_0>0$ such that for any $\delta\leq\delta_0$, the set $\setS_\delta$ has $M:=\#\setS$ connected components, with any two of these connected components being separated by a distance of at least $\delta_0$. Our goal is to prove that for any $i\in[n]$, and for sufficiently large $t$, the particle $z_i(t)$ remains in one of these connected components. In the sequel, we fix $i\in[n]$.

\begin{figure}[h!]
\centering
\includegraphics[scale=0.8]{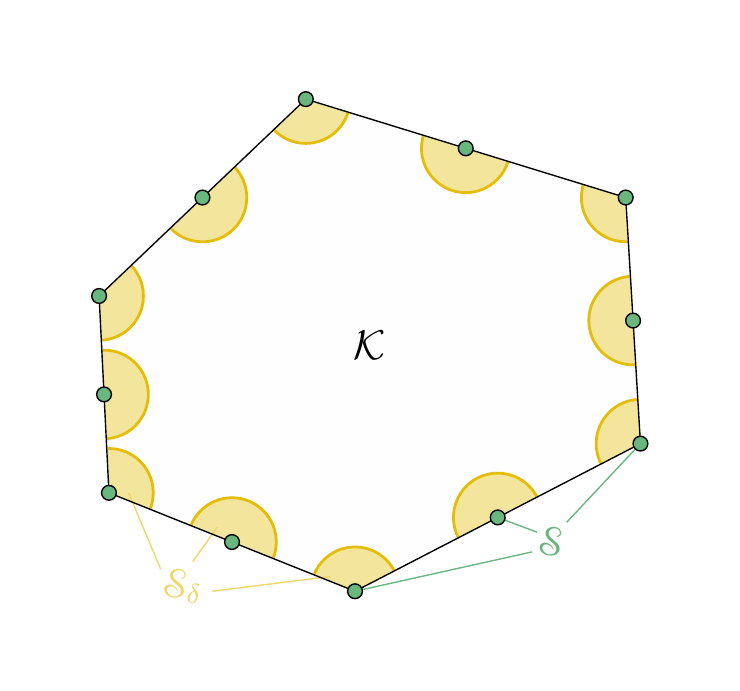}
\caption{An example configuration of the sets $\mathscr{S}$ and $\mathscr{S}_{\delta}$ in $\mathbb{R}^2$. The set $\mathscr{S}$ consists of all green nodes along the boundary of $\partial\mathcal{K}$, while $\mathscr{S}_\delta$ is the union of all yellow "hemispheres". The latter are pairwise disjoint and are the connected components of $\mathscr{S}_\delta$, which we denote by $\mathscr{C}_k$, for $k\in[M]$.}
\label{f:convexhull}
\end{figure}
\vspace{0.2cm}

\noindent
\textbf{Step 2. $z_i(t)$ must grow if it is not already in $\setS_\delta$.}
We  now prove that there exists some $\gamma=\gamma(\mathcal{K})>0$ (depending only on the geometry of $\K$) such that for any $\delta\in(0,\delta_0]$, there exists $T(\delta)>0$ such that if $t\geq T(\delta)$ and $z_i(t)\notin \setS_{\delta}$, then 
\begin{equation} \label{e:eq17}
\frac{\diff}{\diff t}\|Az_i(t)\|^2\geq \gamma\delta.
\end{equation}
To this end, we observe that
\begin{align}
\frac12\frac{\diff}{\diff t}\|Az_i(t)\|^2&=\langle A\dot{z}_i(t), Az_i(t)\rangle\nonumber\\
&=\sum_{j=1}^n\left(\frac{e^{\langle Az_i(t),Az_j(t)\rangle e^{2t}}}{\sum_{k=1}^n e^{\langle Az_i(t),Az_k(t)\rangle e^{2t}}}\right)\langle A(z_j(t)-z_i(t)),Az_i(t)\rangle\nonumber\\
&=\sum_{j=1}^n\underbrace{\left(\frac{e^{a_j(t) e^{2t}}}{\sum_{k=1}^n e^{a_k(t) e^{2t}}}\right)a_j(t)}_{:=b_j(t)} \label{e:derivativenorm}
\end{align}
where we have set 
\begin{equation*}
 a_j(t):=\langle A(z_j(t)-z_i(t)), Az_i(t)\rangle.   
\end{equation*}
(To obtain the last equality in \eqref{e:derivativenorm}, divide both the numerator and the denominator by $e^{\|Az_i(t)\|^2e^{2t}}$.) 
The following holds.

\begin{claim} \label{cl:gamma'}
There exists some constant $\gamma'=\gamma'(\mathcal{K})>0$ depending only on the geometry of $\K$ such that the following holds. Fix $\delta\in(0,\delta_0]$. 
There exists $T'(\delta)>0$ such that if $t\geq T'(\delta)$ and $z_i(t)\notin \setS_{\delta}$, then there exists $j\in[n]$ such that $a_j(t)\geq \gamma'\delta$.
\end{claim}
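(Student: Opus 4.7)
My plan is to reduce the claim to a lower bound on the ``gap function''
\[
\phi(x) := \max_{k \in [m]} \langle Ax, A(v_k - x)\rangle,
\]
which, by \eqref{e:Ax15}, is non-negative on $\K$ with zero set exactly $\setS$. The key comparison is immediate from Proposition~\ref{p:noninc}: since $\K \subset \Cx(\{z_j(t)\}_{j \in [n]})$, each vertex $v_k$ admits a convex decomposition $v_k = \sum_j \lambda_j^{(k)} z_j(t)$ with $\sum_j \lambda_j^{(k)} = 1$, whence
\[
\phi(z_i(t)) \,=\, \max_k \sum_j \lambda_j^{(k)} \langle A z_i(t), A(z_j(t) - z_i(t))\rangle \,\leq\, \max_j a_j(t).
\]
So it suffices to establish $\phi(z_i(t)) \geq \gamma' \delta$ for some $\gamma' = \gamma'(\K) > 0$.

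The core step is a \emph{linear growth} estimate for $\phi$: there exists $\gamma_0 = \gamma_0(\K) > 0$ with $\phi(x) \geq \gamma_0 \dist(x, \setS)$ for every $x \in \K$. By finiteness of $\setS$ (Claim~\ref{cl:propofS}) and compactness of $\K$, it suffices to prove linear growth in a neighborhood of each $x_0 \in \setS$. Near $x_0$, $\phi$ is the maximum of the smooth quadratics $\psi_k(x) = \langle Ax, Av_k\rangle - \|Ax\|^2$; the inactive indices ($\psi_k(x_0) < 0$) remain negative on a small ball, and for the active indices $k \in K(x_0) := \{k : \psi_k(x_0) = 0\}$ one has the Taylor expansion $\psi_k(x) = \langle x - x_0, A^\top A(v_k - 2 x_0)\rangle + O(\|x - x_0\|^2)$. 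A case analysis over the points of $\setS$ (vertex of $\K$ versus other point of $\setS \subset \partial\K \cup \{0\}$) then shows that for every direction $u$ into $\K$ from $x_0$, at least one $k \in K(x_0)$ satisfies $\langle u, A^\top A(v_k - 2 x_0)\rangle \geq c_{x_0} \|u\|$: for instance, at a vertex $v_\ell$ with only $\ell$ active, the tangent cone is generated by $\{v_j - v_\ell\}_{j \neq \ell}$ and $\langle v_j - v_\ell, -A^\top A v_\ell\rangle = -\psi_j(v_\ell) > 0$; the multi-active case is handled by noting that, since $A$ is invertible and the distinct vertices cannot satisfy $A v_{j_1} = A v_\ell$, one gets $\|Av_{j_1}\|^2 > \|Av_\ell\|^2$ whenever $j_1 \in K(v_\ell) \setminus \{\ell\}$, so $\psi_{j_1}$ contributes a strictly positive linear term in such a direction. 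Taking the minimum of the $c_{x_0}$ yields $\gamma_0$.

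To conclude, Proposition~\ref{p:noninc} gives $\eta_t := \max_j \dist(z_j(t), \K) \to 0$, so we may choose $T'(\delta)$ large enough that $\eta_t \leq \gamma_0 \delta / (2(\gamma_0 + L))$ for $t \geq T'(\delta)$, where $L$ is the Lipschitz constant of $\phi$ on the bounded set $\Cx(\{z_j(0)\}_{j \in [n]})$. Projecting $z_i(t)$ onto $\K$ produces $\tilde z_i(t) \in \K$ with $\|\tilde z_i(t) - z_i(t)\| \leq \eta_t$, so interpreting $z_i(t) \notin \setS_\delta$ as $\dist(z_i(t), \setS) > \delta$ (the non-trivial case for large $t$) yields $\dist(\tilde z_i(t), \setS) > \delta - \eta_t$; combining linear growth with Lipschitz continuity,
\[
\phi(z_i(t)) \,\geq\, \phi(\tilde z_i(t)) - L \eta_t \,\geq\, \gamma_0 (\delta - \eta_t) - L \eta_t \,\geq\, \gamma_0 \delta / 2,
\]
so $\gamma' := \gamma_0/2$ works. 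The main obstacle is the linear growth estimate: although a Lojasiewicz-type inequality with some exponent is essentially automatic from the semi-algebraic structure of $\phi$, upgrading it to a \emph{linear} rate with a constant depending only on $\K$ hinges on the quantitative property that, at every $x_0 \in \setS$, the active gradients $\{\nabla \psi_k(x_0) : k \in K(x_0)\}$ collectively cover the tangent cone $T_{x_0}\K$; this is what forces the case analysis above.
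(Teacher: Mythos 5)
Your overall architecture is exactly the paper's: the reduction $\max_{j\in[n]} a_j(t) \geq \phi(z_i(t))$ via $\K\subset\Cx(\{z_j(t)\}_{j\in[n]})$ (Proposition~\ref{p:noninc}), a linear-growth estimate for $\phi$ near each point of $\setS$ (this is precisely Lemma~\ref{l:lemmaw} in the paper; the region far from $\setS$ is handled there by a separate compactness/positivity argument, which your global statement $\phi(x)\geq\gamma_0\dist(x,\setS)$ subsumes), and finally the approximation of $z_i(t)$ by a nearby point of $\K$ using the convergence of the convex hulls together with Lipschitz continuity of $\phi$. So the strategy is the correct one and coincides with the paper's.

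The gap is in the linear-growth estimate itself, which you rightly identify as the crux but do not actually establish. Your verification that "for every direction $u$ into $\K$ from $x_0$, at least one active $k$ satisfies $\langle u, A^\top A(v_k-2x_0)\rangle \geq c_{x_0}\|u\|$" only checks the extreme directions $v_j-x_0$ one at a time: $\nabla\psi_\ell(v_\ell)$ works on directions toward inactive vertices, and $\nabla\psi_{j_1}(v_\ell)$ works on the single direction $v_{j_1}-v_\ell$. But $u\mapsto\max_{k\in K(x_0)}\langle u,\nabla\psi_k(x_0)\rangle$ is a maximum of linear forms, hence not linear, so positivity on each generator of the tangent cone (achieved by \emph{different} $k$'s for different generators) does not pass to conic combinations with a uniform constant. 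For instance, with $j_1$ active and $j_2$ inactive, $u=N(v_{j_1}-v_\ell)+(v_{j_2}-v_\ell)$ has $\langle u,\nabla\psi_\ell(v_\ell)\rangle=-\psi_{j_2}(v_\ell)$ bounded while $\|u\|\to\infty$ as $N\to\infty$, and $\langle u,\nabla\psi_{j_1}(v_\ell)\rangle$ contains the possibly negative cross term $\langle v_{j_2}-v_\ell, A^\top A(v_{j_1}-2v_\ell)\rangle$; one must show the maximum still wins uniformly. Moreover the non-vertex points of $\setS$ are never treated (they do occur: $\setS$ contains the $\langle A\cdot,A\cdot\rangle$-orthogonal projections onto spans of subsets of vertices, per Claim~\ref{cl:propofS}). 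This is exactly where Lemma~\ref{l:lemmaw} does its real work: it writes an arbitrary unit tangent direction at $w\in\setS\setminus\{0\}$ as $u=bw+a+r$ along $\R w\oplus\mathcal{A}\oplus\mathscr{R}$, uses \eqref{e:veretout} to show that either $\|a\|^2$ or $-b$ is bounded below by a constant depending only on $\K$, and thereby extracts a uniform first-order lower bound $\alpha\delta'-\delta'^2$. Without an argument of that type your constant $c_{x_0}$, and hence $\gamma_0$, is not justified.
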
 

We postpone the proof of this claim to after that of the theorem. We  seek to use this claim in obtaining a lower bound of $b_j(t)$ for any $j$, whenever $\delta$ is small enough and $t$ is large enough.
Since by Corollary \ref{c:theziunifbdd}, for any $j\in[n]$, $t\mapsto z_j(t)$ is uniformly bounded on $[0,+\infty)$, we gather that $a_j(\cdot)\in L^\infty(0,+\infty)$. So, we may set 
\begin{equation*}
    \kappa:=\max_{j\in[n]}\sup_{t\geq0}|a_j(t)|.
\end{equation*}
Let $t\geq0$ be fixed. We define
$$
B(t):=\{j\in[n]\colon a_j(t)\geq 0\}.
$$
We pick an index $j_0(t)$ maximizing $a_j(t)$, namely 
\begin{equation*}
j_0(t)\in\argmax_{j\in[n]} a_j(t).
\end{equation*}
Observe that $j_0(t)\in B(t)$ since $a_{j_0(t)}(t)\geq a_i(t)= 0$. 
Clearly 
\begin{equation} \label{e:lowerbound2}
    b_j(t)\geq 0 \hspace{1cm} \text{ for all } j\in B(t).
\end{equation}
In fact, we also have
\begin{equation} \label{e:lowerbound1}
    b_{j_0(t)}(t)\geq \frac{a_{j_0(t)}(t)}{n}.
\end{equation}
Now suppose that $j\notin B(t)$; since $a_j(t)\geq-\kappa$, and 
\begin{equation*}
\frac{e^{a_j(t)e^{2t}}}{\displaystyle\sum_{k=1}^n e^{a_k(t)e^{2t}}}\leq  \frac{1}{\displaystyle\sum_{k=1}^n e^{a_k(t)e^{2t}}}\leq e^{-a_{j_0(t)}e^{2t}},
\end{equation*}
    we gather that
\begin{equation} \label{e:lowerbound3}
    b_j(t)\geq -\kappa e^{-a_{j_0(t)}(t)e^{2t}} \hspace{1cm} \text{ for all } j\in[n]\setminus B(t).    
\end{equation}
Using \eqref{e:lowerbound2}, \eqref{e:lowerbound1} and \eqref{e:lowerbound3} in \eqref{e:derivativenorm}, we find
\begin{equation*}
\frac12\frac{\diff}{\diff t}\|Az_i(t)\|^2\geqslant \frac{a_{j_0(t)}(t)}{n}-\kappa ne^{-a_{j_0(t)}(t)e^{2t}}.
\end{equation*}
The above inequality along with Claim \ref{cl:gamma'} lead us to deduce that there exists $T(\delta)>0$ (possibly larger than $T'(\delta)$) such that \eqref{e:eq17} holds whenever $t\geqslant T(\delta)$, with $\gamma=\frac{\gamma'}{2n}$, as desired. 
\vspace{0.2cm}

\noindent
\textbf{Step 3. $z_i(t)$ cannot circulate indefinitely between the connected components of $\setS_{\delta}$.} Since $z_i\in L^\infty([0,+\infty))$ by Corollary \ref{c:theziunifbdd}, from \eqref{e:dynIdzi} we gather that $\dot{z}_i\in L^\infty([0,+\infty))$ as well. And since any two connected components of $\setS_{\delta_0}$ are separated by a distance at least $\delta_0$, we deduce that it takes a time at least 
\begin{equation*}
 T_0:=\frac{\delta_0}{\|\dot{z}_i\|_{L^\infty([0,+\infty))}}   
\end{equation*}
for $z_i$ to go from one connected component of $\setS_{\delta_0}$ to another one. Fix $\delta\in (0,\delta_0)$ such that 
\begin{equation} \label{e:deltareq}
 \delta<\frac{T_0\gamma\delta_0}{8R\|A\|_\op},  
\end{equation}
where $R:=\max_{j\in[n]} \|z_j\|_{L^\infty([0,+\infty))}$.
Denote by 
\begin{equation*}
 \mathscr{C}_1,\ldots,\mathscr{C}_M   
\end{equation*}
the connected components of $\setS_\delta$, each of which being the intersection of $\mathcal{K}$ with a Euclidean ball of radius $\delta$ centered at some point of $\setS$ (see Fig.~\ref{f:convexhull}). 
For any $k\in[M]$, 
\begin{equation}\label{e:dansSdelta}
\sup_{x\in \mathscr{C}_k}\|Ax\|^2 - \inf_{x\in \mathscr{C}_k}\|Ax\|^2\leq 4R\|A\|_\op\delta.
\end{equation} 
We introduce the following binary relation on $[M]$:
$$
k\succ\ell \Longleftrightarrow \inf_{x\in \mathscr{C}_k} \|Ax\|^2 > \sup_{x\in \mathscr{C}_\ell}\|Ax\|^2,
$$
which is transitive. 
The underlying idea is the following: if $t$ is sufficiently large, and if $z_i$ starts from some connected component $\mathscr{C}_\ell$, then the only connected components $\mathscr{C}_k$ which $z_i$  is able to visit later on are those for which $k\succ\ell$. This travel of $z_i$ has to stop after some time since $[M]$ is finite, $\succ$ is transitive, and for any $\ell$, the relation $\ell\succ\ell$ does not hold.

Let $T=T(\delta)$ be as in Step 2. Suppose that $t_2>t_1\geq T$ and $k_1,k_2\in[M]$ are distinct and such that $z_i(t_1)\in \mathscr{C}_{k_1}$, $z_i(t_2)\in \mathscr{C}_{k_2}$ and $z_i(t)\notin \setS_\delta$ for any $t\in (t_1,t_2)$. 
Per Step 2 (more specifically, \eqref{e:eq17}),
\begin{equation*}
\|Az_i(t_2)\|^2\geq \|Az_i(t_1)\|^2+T_0\gamma\delta_0.
\end{equation*} 
Therefore using \eqref{e:dansSdelta} twice and since $\delta$ is chosen as in \eqref{e:deltareq}, we gather that
\begin{equation}\label{e:longcomp}
\begin{aligned}
\inf_{x\in \mathscr{C}_{k_2}}\|Ax\|^2\geq \|Az_i(t_2)\|^2-4R\|A\|_\op\delta &\geq \|Az_i(t_1)\|^2+T_0\gamma\delta_0-4R\|A\|_\op\delta\\
&\geq \inf_{x\in \mathscr{C}_{k_1}}\|Ax\|^2+T_0\gamma\delta_0-4R\|A\|_\op\delta \\
&\geq\sup_{x\in \mathscr{C}_{k_1}}\|Ax\|^2+T_0\gamma\delta_0-8R\|A\|_\op\delta\\
&>\sup_{x\in \mathscr{C}_{k_1}}\|Ax\|^2.
\end{aligned}
\end{equation}
Whence $k_2\succ k_1$. We therefore deduce that there exist some $T'\geq T$ and $k\in[M]$ such that $z_i(t)\notin \setS_\delta\setminus\mathscr{C}_k$ for any $t\geq T'$. 
\vspace{0.2cm}

\noindent
\textbf{Step 4. Conclusion.} To conclude, it remains to be shown that $z_i(t)$ stays in $\mathscr{C}_k$ for $t$ large enough. For this, in addition to \eqref{e:deltareq}, we impose 
\begin{equation} \label{e:deltaunquart}
\delta^{\frac14}<\frac{\gamma T_0}{8R\|A\|_\op\delta_0}.
\end{equation}
For $r>0$, we denote by $\mathscr{C}_k^r$ the intersection of $\mathcal{K}$ with the closed Euclidean ball of radius $\delta^r$ having the same center as $\mathscr{C}_k$. 
In particular, $\mathscr{C}_k^1=\mathscr{C}_k$. If, after time $T'$, $z_i$ travels from $\mathscr{C}_k$ to the complement of $\mathscr{C}_k^{\frac14}$, it spends a time at least
\begin{equation*}
 \frac{(\delta^{\frac14}-\delta^{\frac12})}{\|\dot{z}_i\|_{L^\infty([0,+\infty))}}  
\end{equation*}
in $\mathscr{C}_k^{\frac14}\setminus \mathscr{C}_k^{\frac12}$. Per Step 2 (used with $\delta^{\frac12}$), $\|Az_i\|^2$ has to increase by at least 
\begin{equation}\label{e:ineqondelta}
\frac{\gamma\delta^{\frac12}\left(\delta^{\frac14}-\delta\right)}{\|\dot{z}_i\|_{L^\infty([0,+\infty))}}\geq \frac{\gamma\delta^{\frac34}}{2\|\dot{z}_i\|_{L^\infty([0,+\infty))}}>4R\|A\|_\op\delta
\end{equation}
during this travel (the last inequality in \eqref{e:ineqondelta} stems from \eqref{e:deltaunquart}). This implies that $z_i$ cannot reenter $\mathscr{C}_k$ after having reached the boundary of $\mathscr{C}_k^{\frac14}$, due to \eqref{e:dansSdelta}. Thus $z_i(t)\notin \setS_\delta$ for any sufficiently large $t$, which is impossible due to Step 2 and the uniform boundedness of $t\mapsto\|Az_i(t)\|$. Hence, for sufficiently large $t$, $z_i(t)\in \mathscr{C}_k^{\frac14}$. Since $\delta$ may be chosen arbitrarily small, this concludes the proof of Theorem~\ref{t:Idcase11}.
\end{proof}

\subsubsection{Proving Claims \ref{cl:propofS} and \ref{cl:gamma'}}
We now address the proofs of the two claims which were instrumental in what precedes (along with a sketch of the proof of $\mathscr{V}\subset\mathscr{S}$, as implied).

\begin{proof}[Proof of Claim \ref{cl:propofS}]
The fact that $0\in \setS$ if $0\in\K$ is immediate. 
We now show that $\setS$ is finite and $\setS\subset\partial\K\cup\{0\}$.
Let $w\in\setS\setminus\{0\}$. As 
$$w=\sum_{j=1}^m \alpha_j v_j$$ 
for some $\alpha_j\geqslant0$ with $\sum_{j=1}^m \alpha_j=1$, and since \eqref{e:Ax15} holds by definition, it follows that $\alpha_j=0$ for any $j$ not attaining the maximum in \eqref{e:Ax15}. Let $\mathsf{I}\subset[m]$ denote the set of all such indices. We have 
$$w=\sum_{j\in\mathsf{I}}\alpha_jv_j$$ 
with $\|Aw\|^2=\langle Aw, Av_j\rangle$ for any $j\in\mathsf{I}$. Whence $w$ is the orthogonal projection onto $\mathrm{span}\{v_j\}_{j\in\mathsf{I}}$ with respect to $\langle A\cdot,A\cdot\rangle$. 
This yields $\setS\subset \partial \K$. Moreover, since for each subset $\mathsf{I}\subset[m]$ there exists a unique such projection $w$, $\setS$ is finite.\qed
\vspace{0.2cm}

\noindent
\textit{Sketch of proof of $\V\subset \setS$.}
We notice that for any $i\in[n]$ and for $t$ large enough, we have
\begin{align}
\dot{z}_i(t)&=\sum_{j=1}^n\left(\frac{e^{e^{2t}\langle Az_i(t),Az_j(t)\rangle}}{\sum_{k=1}^n e^{e^{2t}\langle Az_i(t),Az_k(t)\rangle}}\right)(z_j(t)-z_i(t))\label{e:completesum}\\
&\approx \sum_{j\in M_i(t)} \left(\frac{e^{e^{2t}\langle Az_i(t),Az_j(t)\rangle}}{\sum_{k=1}^n e^{e^{2t}\langle Az_i(t),Az_k(t)\rangle}}\right)(z_j(t)-z_i(t)),
\label{e:Midynamics}
\end{align}
where $M_i(t)$ is the subset of $[n]$ containing all indices $j$ such that
$$
\max_{k\in[n]} \langle Az_i(t),Az_k(t)\rangle -\langle Az_i(t),Az_j(t)\rangle \leq  e^{-t}
$$
(all other terms in the sum \eqref{e:completesum} are negligible). Due to the convergence of $\Cx(\{z_i(t)\}_{i\in[n]})$ toward $\K$, we also know that for $t$ large enough,
\begin{itemize}
    \item all the points $z_i(t)$ are contained in a small neighborhood of $\K$,
    \vspace{0.2cm}
    \item near any element of $\V$, there exists some particle $z_i(t)$.
\end{itemize}
Assume, for the sake of contradiction, that there exists a vertex $v_j\in \V$ such that $v_j\notin \setS$. Set 
$\mathcal{C}:=\Cx(\{v_i\}_{i\in[m]\setminus\{j\}})$.
In particular, $\dist(v_j,\mathcal{C})>0$ since $v_j$ is a vertex of $\K$.
If $\mathsf{I}\subset[n]$ denotes the set of indices $i$ such that $z_i(t)$ lies near $v_j$, then $M_i(t)\cap \mathsf{I}=\emptyset$ for any $i\in\mathsf{I}$, since $v_j\notin\setS$. 
For $i\in\mathsf{I}$, using \eqref{e:Midynamics}, we find that $\dist(z_i(t),\mathcal{C})$ decays as $t\to+\infty$ as long as $i\notin M_i(t)$---indeed, \eqref{e:Midynamics} implies that $z_i(t)$ is attracted by $\mathcal{C}$. 
This implies that $v_j\notin \Cx(\{z_k(t')\}_{k\in[n]})$ for $t'$ large enough. This is a contradiction since $\K\subset \Cx(\{z_k(t)\}_{k\in[n]})$ for any $t\geq 0$ according to Proposition \ref{p:noninc}.
\end{proof}

\begin{proof}[Proof of Claim \ref{cl:gamma'}] 
To simplify the notation, we only prove Claim \ref{cl:gamma'} when $A=I_d$.
 Assume that $t\geq 0$ and that $z_i(t)\notin \setS_\delta$.
\vspace{0.2cm}

\noindent
\textbf{First case.} Firstly, we prove the claim in the case where $z_i(t)\notin \setS_{\delta_0}$. For this, we notice that the function
$$
f:x\mapsto \max_{j\in[n]} \langle v_j,x\rangle -\|x\|^2
$$
is continuous, and by definition of $\setS$, $f$ is strictly positive on the compact set $\K\setminus \text{Int}(\setS_{\delta_0})$ (the complement in $\K$ of the interior of $\setS_{\delta_0}$). Hence $f(x)\geq c'$ in this set for some constant $c'>0$. Setting
\begin{equation*}
    \mathcal{K}_\varepsilon:=\{x\in\mathbb{R}^d\colon\dist(x,\mathcal{K})\leq\varepsilon\},
\end{equation*}
by continuity we find that $f(x)\geq c'/2$ for $x\in\K_\varepsilon\setminus \text{Int}(\setS_{\delta_0})$ and for sufficiently small $\varepsilon>0$ (fixed in the sequel). For sufficiently large $t$, we have $z_i(t)\in \K_\varepsilon$ for any $i\in[n]$, thus
$$
\max_{j\in[n]} \langle z_i(t),z_j(t)-z_i(t)\rangle \geq 
\max_{j\in[m]} \langle z_i(t),v_j-z_i(t)\rangle\geq \frac{c'}{2}.
$$
Since $c'$ is independent of $\delta$, we deduce the claim in this case (notice that it suffices to prove the claim for sufficiently small $\delta$).
\vspace{0.2cm}

\noindent
\textbf{Second case.} Secondly, we prove the claim when $z_i(t)\in\setS_{\delta_0}\setminus \setS_{\delta}$.
The proof mainly relies on the following result: 
\begin{lemma}\label{l:lemmaw}
For any $w\in\setS$, there exists $\beta>0$ such that if\footnote{Here, $B(y,r)$ denotes the closed ball with center $y\in\R^d$ and radius $r>0$.} $x\in\K\cap B(w,\delta_0)$, then
\begin{equation}\label{e:guendouzi}
\max_{j\in[m]}\langle x,v_j-x\rangle \geq \beta \|x-w\|.
\end{equation}
\end{lemma}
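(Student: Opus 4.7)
The inequality \eqref{e:guendouzi} is a linear, Lojasiewicz-type bound for the non-negative ``defect'' function $g(x):=\max_{j\in[m]}\langle x,v_j-x\rangle$ at its zero $w\in\setS$. Recall that $g\ge 0$ on $\K$ (by \eqref{e:eq14}) and that $\setS=\{x\in\K:g(x)=0\}$. Setting $\mathsf{I}:=\{j\in[m]:\langle w,v_j\rangle=\|w\|^2\}$ (the active indices at $w$) and $\psi(y):=\max_{j\in\mathsf{I}}\langle y,v_j-2w\rangle$, a direct expansion gives, for $j\in\mathsf{I}$,
\[
\phi_j(w+y):=\langle w+y,v_j-(w+y)\rangle=\langle y,v_j-2w\rangle-\|y\|^2,
\]
while for $j\notin\mathsf{I}$ one has $\phi_j(w)=\langle w,v_j\rangle-\|w\|^2<0$, so $\phi_j(w+y)$ remains negative for $\|y\|$ small. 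Hence for $x=w+y$ close enough to $w$, $g(x)=\psi(y)-\|y\|^2$, and the task reduces to a linear lower bound on the convex, piecewise-linear function $\psi$ over the cone $C:=\R_{+}(\K-w)$ of feasible directions at $w$, a closed polyhedral cone since $\K$ is a polytope.

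The key step is to show that $\psi(y)>0$ for every $y\in C\setminus\{0\}$. Suppose for contradiction that $\psi(y)\le 0$ for some such $y$; write $y=\lambda(v-w)$ with $v\in\K$ and $\lambda>0$, so that $w+\varepsilon y=(1-\varepsilon\lambda)w+\varepsilon\lambda v\in\K$ for every $\varepsilon\in(0,1/\lambda]$. The expansion above, together with the fact that inactive indices contribute only negative terms for small $\varepsilon$, yields
\[
g(w+\varepsilon y)=\varepsilon\psi(y)-\varepsilon^2\|y\|^2<0
\]
for sufficiently small $\varepsilon>0$, contradicting $g\ge 0$ on $\K$. Thus $\psi>0$ on $C\setminus\{0\}$, and positive homogeneity of $\psi$ together with the compactness of $C\cap S^{d-1}$ produce a constant $\beta_0>0$ with $\psi(y)\ge \beta_0\|y\|$ for every $y\in C$.

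To conclude, take $x=w+y\in\K\cap B(w,\delta_0)$, so that $y\in \K-w\subset C$. For $\|y\|$ below some threshold $\eta>0$ chosen small enough that the max defining $g(x)$ is attained in $\mathsf{I}$ and that $\eta\le\beta_0/2$, we get $g(x)\ge \beta_0\|y\|-\|y\|^2\ge(\beta_0/2)\|y\|$. For $\|y\|\in[\eta,\delta_0]$, the choice of $\delta_0$ ensures $B(w,\delta_0)\cap\setS=\{w\}$, so the continuous function $g$ is strictly positive on the compact set $\{x\in\K\cap B(w,\delta_0):\|x-w\|\ge\eta\}$ and hence admits a positive minimum $c>0$ there; this gives $g(x)\ge c\ge(c/\delta_0)\|y\|$. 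Taking $\beta:=\min(\beta_0/2,\,c/\delta_0)$ yields \eqref{e:guendouzi}.

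The crux of the proof is the non-vanishing of $\psi$ on $C\setminus\{0\}$; this is where the polytope structure of $\K$ (so that tangent directions are realized as genuine feasible rays into $\K$) and the non-negativity of $g$ on all of $\K$ enter in an essential way. The remaining ingredients — the local expansion of $g$, positive homogeneity, and a compactness argument — are routine.
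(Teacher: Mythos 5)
Your proof is correct. It differs from the paper's in the main case $w\neq0$. The paper treats $w=0$ and $w\neq0$ separately: for $w=0$ it argues much as you do (strict positivity of the homogeneous function $x\mapsto\max_{j}\langle v_j,x\rangle$ on $\K\setminus\{0\}$, then compactness), but for $w\neq0$ it introduces the decomposition $\R^d=\R w\oplus\mathcal{A}\oplus\mathscr{R}$ with $\mathcal{A}=\mathrm{span}\{v_j-w\colon j\in\mathsf{I}_w\}$, proves a quantitative inequality $\langle w-v_j,w\rangle\geq\rho\|\pi_{\mathscr{R}}v_j\|$, writes $x=w+\delta'u$ with $u=bw+a+r$, and shows that $b\leq0$ and that either $|b|$ or $\|a\|$ is bounded below, which feeds into the explicit lower bound $\max_{j\in\mathsf{I}_w}\langle x,v_j-x\rangle\geq-\delta'b\|w\|^2-\delta'^2+\delta'\|a\|^2$. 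You replace all of this by a single statement: the linearization $\psi(y)=\max_{j\in\mathsf{I}}\langle y,v_j-2w\rangle$ of $g$ at $w$ (over the active indices) is strictly positive on the closed polyhedral tangent cone $C=\R_+(\K-w)$ away from the origin, proved by observing that $\psi(y)\leq0$ would force $g(w+\varepsilon y)=\varepsilon\psi(y)-\varepsilon^2\|y\|^2<0$ for small $\varepsilon>0$, contradicting the nonnegativity of $g$ on $\K$ from \eqref{e:eq14}; homogeneity and compactness of the spherical section of $C$ then give the linear bound. This is a genuinely different and more conceptual route: it unifies the two cases and isolates the real reason the lemma holds (first-order positivity of a nonnegative function at a zero lying on a polytope, a Lojasiewicz-type statement). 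The paper's computation, in exchange, exhibits the constant $\beta$ more explicitly in terms of the geometry of the active vertices, whereas your $\beta_0$ is obtained nonconstructively from a compactness minimum. Both proofs conclude identically on the annulus $\eta\leq\|x-w\|\leq\delta_0$ via continuity and compactness.
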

We postpone the proof of Lemma \ref{l:lemmaw} and show how to conclude the proof of Claim \ref{cl:gamma'}. Fix $\delta>0$. We set 
$$\eta:=\frac{\beta\delta}{6R}$$ 
where 
$$R:=\max_{j\in[n]}\|z_j\|_{L^\infty(\mathbb{R})}.$$
Since $\Cx(\{z_j(t)\}_{j\in[n]})$ converges to $\K$ as $t\to+\infty$, there exists $T(\delta)>0$ such that for any $t\geq T(\delta)$, if $z_i(t)\in B(w,\delta_0)\setminus B(w,\delta)$ for some $w\in\setS$, then $$\|z_i(t)-x\|\leq \eta$$ 
for some $x\in \K\cap (B(w,\delta_0)\setminus B(w,\delta))$. Therefore, using Lemma \ref{l:lemmaw},
\begin{align*}
\max_{j\in[m]}\langle z_i(t),v_j-z_i(t)\rangle&\geq \max_{j\in[m]}\langle x,v_j-x\rangle -3R\eta\\
&\geq \beta\delta -3R\eta\\
&=\frac{\beta}{2}\delta.
\end{align*}
To summarize, we have found that for any $\delta>0$ there exists $T(\delta)>0$ such that if $t\geq T(\delta)$ and $z_i(t)\in \setS_{\delta_0}\setminus\setS_{\delta}$, then
\begin{equation}\label{eq:Payet}
\max_{j\in[m]}\langle z_i(t),v_j-z_i(t)\rangle \geq \frac{\beta}{2} \delta.
\end{equation}
Combining \eqref{eq:Payet} with
\begin{equation*}
\max_{j\in[n]}\langle z_i(t),z_j(t)-z_i(t)\rangle\geq \max_{j\in[m]}\langle z_i(t),v_j-z_i(t)\rangle
\end{equation*}
concludes the proof of Claim \ref{cl:gamma'} in this second case.
\end{proof}

\begin{proof}[Proof of Lemma \ref{l:lemmaw}] 
Let us first address the case where $w=0$. Writing any $x\in\K\setminus\{0\}$ as a convex combination of the vertices: $x=\sum_{j=1}^m\alpha_jv_j$, we find
\begin{equation}\label{e:firstttt}
0=\left\langle x,\sum_{j=1}^m \alpha_j(v_j-x)\right\rangle=\sum_{j=1}^m \alpha_j \langle x,v_j-x\rangle.
\end{equation} 
We can exclude having $\langle x,v_j-x\rangle=0$ for all $j\in[m]$, as this would necessarily imply that $\|x\|^2=2\sum_{j=1}^m \alpha_{j}\langle x,v_j-x\rangle=0$. We deduce from \eqref{e:firstttt} that 
$$\max_{j\in[m]}\langle x,v_j-x\rangle>0$$ 
for any $x\in\K\setminus\{0\}$. Hence, it is sufficient to prove \eqref{e:guendouzi} for $\|x\|$ small enough. We notice that for any $x\in\K\setminus\{0\}$ written as above,
$$
\|x\|^2=\sum_{j=1}^m\alpha_j\langle v_j,x\rangle.
$$
Hence $x\mapsto\max_{j\in[m]}\langle v_{j},x\rangle$ is positive for $x\in \K\setminus\{0\}$. Since this function is continuous and homogeneous in $x$, we deduce the existence of $\beta>0$ such that 
\begin{equation*}
 \max_{j\in[m]}\langle v_{j},x\rangle\geq 2\beta\|x\|   
\end{equation*}
for any $x\in\K$. For $x\in\K$ with $\|x\|$ sufficiently small, we obtain \eqref{e:guendouzi}.

We now assume that $w\in\setS\setminus\{0\}$.
We set 
\begin{equation*}
\mathsf{I}_w:=\big\{j\in[n] \colon \|w\|^2=\langle w,v_j\rangle\big\}
\end{equation*}
and 
\begin{equation*}
\mathcal{A}:=\mathrm{span}\left(\big\{v_j-w \colon j\in {\mathsf{I}}_w\big\}\right),
\end{equation*}
which is orthogonal to $w$. We also introduce 
$$
\mathscr{R}:=\big(\R w\oplus\mathcal{A}\big)^{\perp},
$$
and we denote by $\pi_{\mathscr{R}}$ the orthogonal projection on $\mathscr{R}$.
We claim that there exists some $\rho>0$ such that for any $j\in[m]$, we have 
\begin{equation*}
 \langle w-v_j,w\rangle\geq \rho \|\pi_{\mathscr{R}}v_j\|.
\end{equation*}
This follows from the observation that $[m]$ is finite, and that $\|\pi_{\mathscr{R}}v_j\|>0$ implies $\langle w-v_j,w\rangle>0$.
Therefore, for any $x\in\K$, writing $x$ as a convex combination of the vertices, namely $x=\sum_{j=1}^m\alpha_jv_j$, we find that
\begin{equation}\label{e:veretout}
\rho\|\pi_{\mathscr{R}}x\|\leq \sum_{j=1}^m\alpha_j\|\pi_{\mathscr{R}}v_j\|\leq  \sum_{j=1}^m\alpha_j\langle w-v_j,w\rangle= \langle w-x,w\rangle.
\end{equation} 
Fix $x\in\K\cap B(w,\delta_0)$. We write $x=w+\delta'u$ with $0\leq \delta'\leq \delta_0$ and $\|u\|=1$. Then we have the orthogonal decomposition
\begin{equation}\label{e:compo}
u=bw+a+r
\end{equation}
where $a\in\mathcal{A}$, $r\in\mathscr{R}$ and $b\in\R$. Since $a$ is a convex combination of the form 
$$a=\sum_{j\in\mathsf{I}_w} \beta_j(v_j-w),$$ 
we have 
$$\|a\|^2=\sum_{j\in\mathsf{I}_w}\beta_j\langle v_j-w,a\rangle,$$ 
whence
\begin{equation*}
\max_{j\in\mathsf{I}_w}\langle  a,v_j-w\rangle \geq \|a\|^2.
\end{equation*}
We deduce that
\begin{align}
\max_{j\in\mathsf{I}_w}\langle x,v_j-x\rangle&=\max_{j\in\mathsf{I}_w}\langle w+\delta' u, (v_j-w)-\delta' u\rangle\nonumber\\
&= -\delta'b\|w\|^2-\delta'^2+\delta'\max_{j\in\mathsf{I}_w}\langle a,v_j-w\rangle \nonumber\\
&\geq -\delta'b\|w\|^2-\delta'^2 +\delta'\|a\|^2.  \label{e:allezl'OM}
\end{align}
Notice that $b\leq 0$ by combining \eqref{e:veretout} and \eqref{e:compo}. Since $\|u\|=1$ and using \eqref{e:veretout} we have 
$$1=b^2+\|a\|^2+\|r\|^2\leq \|a\|^2+\kappa b^2\leq \kappa(\|a\|^2+b^2)$$
where $\kappa:=1+\rho^{-2}\|w\|^4$. We deduce that either $\|a\|^2\geq (2\kappa)^{-1}$ or $-b=|b|\geq (2\kappa)^{-\frac12}$. Plugging this knowledge in \eqref{e:allezl'OM} and using the fact that $\|w\|>0$, we finally deduce the existence of an $\alpha>0$ (independent of $\delta>0$ and $x\in\K\cap B(w,\delta_0)$) such that
$$\max_{j\in[m]}\langle x,v_j-x\rangle\geq \alpha \delta'-\delta'^2=\alpha\|x-w\|-\|x-w\|^2.$$
This proves \eqref{e:guendouzi} when $\|x-w\|\leq \alpha/2$. 

It thus remains to show that \eqref{e:guendouzi} holds for all $x\in \mathcal{K} \cap (B(w,\delta_0)\setminus B(w,\frac{\alpha}{2}))$. To this end, we notice that $x\mapsto\max_{j\in[m]}\langle x,v_j-x\rangle$ is continuous in the connected set $\mathcal{K} \cap (B(w,\delta_0)\setminus B(w,\frac{\alpha}{2}))$, non-negative according to \eqref{e:eq14}, and it is nowhere $0$ (by definition of $\setS$). Therefore, it is strictly positive, and denote by $\alpha'>0$ some lower bound. Then for $x\in \mathcal{K} \cap (B(w,\delta_0)\setminus B(w,\frac{\alpha}{2}))$, we have
$$
\max_{j\in[m]}\langle x,v_j-x\rangle\geq \alpha'\geq \frac{\alpha'}{\delta_0}\|x-w\|.
$$
This concludes the proof of Lemma \ref{l:lemmaw}.
\end{proof}

\subsection{A cluster at the origin} \label{s:c<0}

We complete this section by addressing the case $V=-I_d$, for which the convergence of the solutions of \eqref{eq:trans_dyn} is the simplest, since a unique cluster forms at the origin.
We also suppose that $Q^\top K=I_d$: in other words, we consider the dynamics
\begin{equation}\label{e:-Iddyn}
\dot{x}_i(t)=-\sum_{j=1}^n\left(\frac{e^{\langle x_i(t),x_j(t)\rangle}}{\sum_{k=1}^n e^{\langle x_i(t),x_k(t)\rangle}}\right)x_j(t), \hspace{1cm} t\in[0,+\infty),
\end{equation}
with a prescribed initial condition $\{x_i(0)\}_{i\in[n]}\subset\R^d$.

\begin{theorem}[Convergence toward the origin] \label{t:cas-Idintro}
Suppose $V=-I_d$ and $Q^\top K=I_d$. Then, for any initial sequence of tokens $\{x_i(0)\}_{i\in[n]}\subset\R^d$, and for any $i\in[n]$, we have $\|x_i(t)\|\rightarrow 0$ as $t\rightarrow +\infty$.
\end{theorem}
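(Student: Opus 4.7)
The plan is to produce a strict Lyapunov function on the compact sublevel sets of $\mathcal{E}(t):=\sum_{i=1}^n \|x_i(t)\|^2$ and then invoke LaSalle's invariance principle. The whole argument hinges on the Chebyshev sum inequality: for any real numbers $a_1,\ldots,a_n$, the softmax-weighted average dominates the uniform average,
$$\frac{\sum_{j} a_j e^{a_j}}{\sum_{k} e^{a_k}} \;\geq\; \frac{1}{n}\sum_{j} a_j,$$
with equality if and only if the $a_j$ are all equal, since $a\mapsto a$ and $a\mapsto e^{a}$ are comonotone.

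First I would differentiate $\mathcal{E}$ along \eqref{e:-Iddyn}, obtaining $\dot{\mathcal{E}}(t) = -2\sum_{i,j} P_{ij}(t)\langle x_i(t), x_j(t)\rangle$. Applying the above inequality with $a_j = \langle x_i(t), x_j(t)\rangle$ for each $i$ and summing over $i$ yields
$$\sum_{i,j} P_{ij}(t)\langle x_i(t),x_j(t)\rangle \;\geq\; \sum_i \langle x_i(t),\bar{x}(t)\rangle \;=\; n\|\bar{x}(t)\|^2,$$
where $\bar{x}(t):=\frac{1}{n}\sum_i x_i(t)$. Thus $\dot{\mathcal{E}}\leq -2n\|\bar{x}\|^2 \leq 0$; together with Proposition~\ref{p:wellposedparticles}, the orbit $\bx(t)=(x_1(t),\ldots,x_n(t))$ stays in the compact set $\{\mathcal{E}\leq \mathcal{E}(0)\}$ and therefore admits a non-empty, compact, flow-invariant $\omega$-limit set $\Omega\subset (\mathbb{R}^d)^n$.

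Next I would identify $\Omega$. On $\Omega$, $\mathcal{E}$ is constant (equal to $\lim_{t\to+\infty}\mathcal{E}(t)$), so any $\bx^{\ast}=(x_1^\ast,\ldots,x_n^\ast)\in\Omega$ generates an orbit along which $\dot{\mathcal{E}}\equiv 0$; evaluating at $t=0$ forces $\sum_{i,j} P_{ij}^{\ast}\langle x_i^{\ast}, x_j^{\ast}\rangle = 0$. Plugging this into the display above yields $\bar{x}^{\ast}=0$, and hence $\sum_j \langle x_i^{\ast}, x_j^{\ast}\rangle=0$ for every $i$. The Chebyshev bound $\sum_j P_{ij}^{\ast}\langle x_i^{\ast}, x_j^{\ast}\rangle \geq \frac{1}{n}\sum_j \langle x_i^{\ast}, x_j^{\ast}\rangle=0$, combined with the fact that the sum over $i$ vanishes, forces equality in Chebyshev for every $i$; the equality case implies that $\langle x_i^{\ast}, x_j^{\ast}\rangle$ is independent of $j$, and since $\bar{x}^{\ast}=0$ the common value must be $0$. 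Setting $j=i$ gives $\|x_i^{\ast}\|=0$ for every $i\in[n]$, so $\Omega=\{(0,\ldots,0)\}$ and LaSalle's principle produces $x_i(t)\to 0$ as $t\to+\infty$.

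The main obstacle is the equality-case analysis, since a single invocation of Chebyshev does not suffice: the quantities $\langle x_i,\bar{x}\rangle$ can be negative for some $i$, so the inequality $\dot{\mathcal{E}}\leq -2n\|\bar{x}\|^2$ is not strict unless one first extracts $\bar{x}^{\ast}=0$ from the summed estimate and then reapplies Chebyshev row by row. Once this is done, everything else is a textbook LaSalle argument in a compact invariant region.
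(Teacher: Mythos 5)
Your proof is correct, and it takes a genuinely different route from the paper's. You work with the energy $\mathcal{E}(t)=\sum_i\|x_i(t)\|^2$ and show, via the Chebyshev/comonotonicity inequality $\sum_j a_je^{a_j}/\sum_k e^{a_k}\geq\frac1n\sum_j a_j$ applied row by row with $a_j=\langle x_i,x_j\rangle$ (which matches the softmax weights exactly because $Q^\top K=I_d$), that $\dot{\mathcal{E}}\leq-2n\|\bar x\|^2\leq0$; the equality-case analysis then identifies $\{\dot{\mathcal{E}}=0\}$ with the origin, and LaSalle concludes. The paper instead (i) proves uniform boundedness by a separate direct estimate (Lemma~\ref{l:cas1circle}), (ii) characterizes the stationary configurations as $\{0\}$ using convexity and analyticity of the log-sum-exp function (Lemma~\ref{l:stationary}), (iii) shows $\int_0^\infty\|\dot x_i\|^2\,\diff t<+\infty$ using the decreasing energy $\mathscr{L}(t)=\sum_{i,j}e^{\langle x_i(t),x_j(t)\rangle}$ (Lemma~\ref{e:finiteinegral}), and (iv) upgrades subsequential limits to stationary configurations by invoking the $W_2$-stability estimate \eqref{e:w2estimate} for the associated continuity equation. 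Your argument buys several things at once: boundedness comes for free from $\dot{\mathcal{E}}\leq0$; the identification of the limit set uses only the elementary equality case of Chebyshev (all $\langle x_i^*,x_j^*\rangle$ equal in $j$, hence equal to $\langle x_i^*,\bar x^*\rangle=0$, hence $x_i^*=0$) rather than the analyticity argument; and the passage to the limit is the textbook LaSalle principle, requiring only the Cauchy--Lipschitz regularity of the flow supplied by Proposition~\ref{p:wellposedparticles} rather than the mean-field stability machinery. Both arguments lean essentially on $Q^\top K=I_d$, so neither is obviously more general; yours is shorter and more self-contained. The one step worth spelling out in a final write-up is the invariance of the $\omega$-limit set, which is what converts constancy of $\mathcal{E}$ on $\Omega$ into $\dot{\mathcal{E}}=0$ at every point of $\Omega$; this follows from global existence, uniqueness and continuous dependence on initial data, all of which the paper provides.
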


\begin{remark} 
In the setting of Theorem~\ref{t:cas-Idintro}, the self-attention matrix $P(t)$ defined in \eqref{eq:P} converges, as $t\to+\infty$, to the $n\times n$ matrix with all entries equal to $1/n$.
\end{remark}

\subsubsection{Proof of Theorem~\ref{t:cas-Idintro}} 

We begin by showing that for any $i\in[n]$, the solution to \eqref{e:-Iddyn} is uniformly bounded for all $t>0$. In the sequel, we fix an initial configuration $\{x_i(0)\}_{i\in[n]}\subset\R^d$.

\begin{lemma}\label{l:cas1circle}
The trajectories of \eqref{e:-Iddyn} are uniformly bounded in time---namely,
there exists $R>0$ (depending solely on $n$ and the initial configuration) such that the solution $x_i(\cdot)$ to \eqref{e:-Iddyn} satisfies $\|x_i(t)\|\leqslant R$ for any $i\in[n]$ and $t\geqslant0$.
\end{lemma}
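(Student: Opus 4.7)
The plan is to control the scalar quantity $\Phi(t):=\max_{i\in[n]}\|x_i(t)\|^2$ and show it cannot grow past an explicit constant depending only on $n$. Since $\Phi$ is the maximum of finitely many $C^1$ functions, its upper Dini derivative obeys the envelope-type bound
$$D^+\Phi(t) \leq \max_{\,i\,:\,\|x_i(t)\|^2=\Phi(t)}\frac{\diff}{\diff t}\|x_i(t)\|^2,$$
so it suffices to prove that at every time $t$ and every index $i$ realising the maximum, $\tfrac{\diff}{\diff t}\|x_i(t)\|^2\leq -\tfrac{1}{n}\|x_i(t)\|^2$ whenever $\|x_i(t)\|^2$ exceeds some threshold $C(n)$ depending only on $n$. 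Combined with continuity, this yields $\Phi(t)\leq\max(\Phi(0),C(n))$ for all $t\geq0$, which gives the desired $R$.

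Differentiating $\|x_i(t)\|^2$ along \eqref{e:-Iddyn} yields
$$\frac{1}{2}\frac{\diff}{\diff t}\|x_i(t)\|^2 = -\sum_{j=1}^n P_{ij}(t)\,\langle x_j(t),x_i(t)\rangle.$$
At an index $i$ attaining the maximum, Cauchy--Schwarz gives $\langle x_j,x_i\rangle\leq\|x_j\|\,\|x_i\|\leq \|x_i\|^2$ for every $j$, with equality at $j=i$. Two elementary softmax bounds follow from $\max_k\langle x_i,x_k\rangle=\|x_i\|^2$: the self-weight obeys $P_{ii}(t)\geq 1/n$, since every exponent in the denominator is bounded by $\|x_i\|^2$; and $P_{ij}(t)\leq e^{-\|x_i\|^2}$ whenever $\langle x_j,x_i\rangle\leq 0$, since the $k=i$ term alone lower-bounds the denominator by $e^{\|x_i\|^2}$.

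Splitting the sum according to the sign of $\langle x_j,x_i\rangle$, the nonnegative-inner-product part is bounded below by the $j=i$ contribution $P_{ii}\|x_i\|^2\geq \|x_i\|^2/n$, while the negative-inner-product part is bounded below by $-n\,e^{-\|x_i\|^2}\|x_i\|^2$, using $|\langle x_j,x_i\rangle|\leq \|x_i\|^2$ together with the softmax decay. Combining,
$$\sum_{j=1}^n P_{ij}(t)\,\langle x_j(t),x_i(t)\rangle \,\geq\, \|x_i(t)\|^2\left(\frac{1}{n}-n\,e^{-\|x_i(t)\|^2}\right),$$
which is at least $\|x_i(t)\|^2/(2n)$ as soon as $\|x_i(t)\|^2\geq C(n):=\log(2n^2)$. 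This produces the required differential inequality and hence the bound on $R$.

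The only non-computational ingredient is the Dini-derivative/envelope fact at times when several indices tie for the maximum, which is classical. The real mechanism driving the proof is the observation that the softmax denominator exponentially suppresses the weights $P_{ij}$ that a large-norm "leader" assigns to tokens with which it has negative correlation, so this potentially destabilising contribution is dwarfed by the guaranteed self-interaction $P_{ii}\geq 1/n$, which, with $V=-I_d$, pulls the leader back toward the origin.
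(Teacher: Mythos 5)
Your proof is correct, and it rests on the same mechanism as the paper's: differentiate $\|x_i(t)\|^2$ along \eqref{e:-Iddyn}, split the sum according to the sign of $\langle x_i,x_j\rangle$, and observe that the softmax exponentially suppresses the negatively correlated terms while the positively correlated ones (in particular $j=i$) force decay once $\|x_i\|^2$ exceeds an $n$-dependent threshold. The execution differs in one respect worth noting. The paper's estimate is per particle: writing $S_i(t)$ for the positively-correlated part of $\sum_k e^{\langle x_i,x_k\rangle}\langle x_i,x_k\rangle$ and $R_i(t)$ for the softmax denominator, the elementary inequalities $ye^{-y}\le 1$ and $e^{y}\le 1+ye^{y}$ give $\tfrac12\tfrac{\diff}{\diff t}\|x_i\|^2\le (n-S_i)/R_i$ and $R_i\le n+S_i$, so each $\|x_i(t)\|^2$ individually obeys a strictly negative differential inequality as soon as it exceeds $2n$, with no reference to the other particles' norms. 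Your bound $P_{ii}\ge 1/n$ is only valid at an index maximizing $\|x_i(t)\|$ (otherwise some exponent $\langle x_i,x_k\rangle$ may exceed $\|x_i\|^2$), which is why you must pass through $\Phi(t)=\max_i\|x_i(t)\|^2$ and the Dini-derivative envelope fact. That extra piece of machinery is harmless --- the paper uses the same device in the proof of Proposition \ref{p:noninc} --- but the per-particle route avoids it entirely. In exchange, your argument yields the slightly stronger drift $D^+\Phi\le-\Phi/n$ above the threshold $\log(2n^2)$, versus the paper's constant negative drift above $2n$; both deliver the lemma with an explicit $R$ depending only on $n$ and the initial configuration.
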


\begin{proof}[Proof of Lemma \ref{l:cas1circle}]
We fix $i\in[n]$. For $t\geq0$, we denote by $D_i(t)$ the set of points $x_k(t)$ such that $\langle x_i(t),x_k(t)\rangle\geq 0$. We also set 
$$
S_i(t):=\sum_{k\in D_i(t)} e^{\langle x_i(t),x_k(t)\rangle}\langle x_i(t),x_k(t)\rangle,
$$
and
$$
R_i(t):=\sum_{k=1}^n e^{\langle x_i(t),x_k(t)\rangle}.
$$
Since $1+x\leq e^{x}$ whence $e^{-x}x\leq 1$, we deduce that 
$$
\frac12\frac{\diff}{\diff t}\|x_i(t)\|^2=-\frac{\displaystyle\sum_{k=1}^n e^{\langle x_i(t),x_k(t)\rangle}\langle x_i(t),x_k(t)\rangle}{R_i(t)}\leq \frac{-S_i(t)+n}{R_i(t)}.
$$
Now since $1-x\leq e^{-x}$ whence $e^x\leq 1+e^x x$, we find that $R_i(t)\leq n+S_i(t)$. Consequently, if we assume that $\|x_i(t)\|^2\geq 2n$ then $S_i(t)\geq 2n$, and therefore 
$$
\frac12\frac{\diff}{\diff t}\|x_i(t)\|^2\leq \frac{-S_i(t)+n}{n+S_i(t)}\leq -1.
$$
This shows that $\|x_i(t)\|\leq \max\{\|x_i(0)\|,\sqrt{2n}\}$ for any $t\geq 0$, which concludes the proof.
\end{proof}

By virtue of Lemma \ref{l:logsumexp}, we are able to characterize the stationary configurations for the dynamics \eqref{e:-Iddyn}---namely, the set of points $(\bar{x}_1,\ldots,\bar{x}_n)\in(\R^{d})^n$ satisfying 
$$\sum_{j=1}^n\left(\frac{e^{\langle \bar{x}_i, \bar{x}_j\rangle}}{\sum_{k=1}^n e^{\langle \bar{x}_i, \bar{x}_k\rangle}}\right)\bar{x}_j=0$$ 
for all $i\in[n]$.

\begin{lemma}\label{l:stationary}
     The only stationary configuration for the dynamics \eqref{e:-Iddyn} is $\bar{x}_1=\ldots=\bar{x}_n=0$.
\end{lemma}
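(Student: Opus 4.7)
The plan is to recognize the stationarity condition as a gradient condition for the convex log-sum-exp functional of Lemma \ref{l:logsumexp} and exploit strict convexity on the span of the $\bar x_i$'s. Concretely, given a stationary configuration $(\bar x_1,\ldots,\bar x_n)$, introduce
\[
f(x):=\log\sum_{j=1}^n e^{\langle x,\bar x_j\rangle}, \qquad p_j(x):=\frac{e^{\langle x,\bar x_j\rangle}}{\sum_k e^{\langle x,\bar x_k\rangle}},
\]
so that $\nabla f(x)=\sum_j p_j(x)\bar x_j$. The stationarity equation then reads exactly $\nabla f(\bar x_i)=0$ for every $i\in[n]$, i.e. each $\bar x_i$ is a global minimizer of the convex function $f$ (convexity is guaranteed by Lemma~\ref{l:logsumexp}).

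Next I would compute the Hessian, which is the weighted covariance
\[
\nabla^2 f(x)=\sum_{j=1}^n p_j(x)\bigl(\bar x_j-\mu(x)\bigr)\bigl(\bar x_j-\mu(x)\bigr)^\top, \qquad \mu(x)=\sum_j p_j(x)\bar x_j.
\]
Since the softmax weights $p_j(x)$ are strictly positive, a short computation shows that for any $u\in\mathbb{R}^d$ one has $u^\top\nabla^2 f(x)u=0$ iff $u\perp(\bar x_j-\bar x_k)$ for all $j,k$; in other words, the kernel of $\nabla^2 f(x)$ equals $U^\perp$, where $U:=\mathrm{span}\{\bar x_j-\bar x_k:j,k\in[n]\}$, uniformly in $x$. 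Thus $f$ is strictly convex along every direction lying in $U$.

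Now I would run the standard minimizer-uniqueness argument: for any $i,i'\in[n]$, both endpoints of the segment $[\bar x_i,\bar x_{i'}]$ are minimizers of $f$, so by convexity $f$ is constant on this segment. But the segment has direction $\bar x_{i'}-\bar x_i\in U$, along which $f$ is strictly convex; the only way for a strictly convex one-variable function to be constant on an interval is for the interval to degenerate, forcing $\bar x_i=\bar x_{i'}$. Hence all $\bar x_i$ coincide with some common vector $\bar x\in\mathbb{R}^d$.

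Finally, substituting $\bar x_1=\cdots=\bar x_n=\bar x$ into the stationarity condition gives $p_j(\bar x)=1/n$ for every $j$, so $\nabla f(\bar x)=\bar x=0$, which yields the result. The only genuinely delicate point is verifying that the Hessian is strictly positive definite on $U$ at every point (and not merely on some subspace that fails to contain the differences $\bar x_{i'}-\bar x_i$); this is handled by the direct covariance-matrix computation above, which shows that the kernel depends only on the anchor points $\{\bar x_j\}$ and not on $x$.
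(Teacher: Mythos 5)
Your proof is correct. It shares its first half with the paper's argument: both recognize the stationarity condition as $\nabla f(\bar x_i)=0$ for the convex log-sum-exp function $f$ of Lemma~\ref{l:logsumexp}, so that every $\bar x_i$ is a global minimizer of $f$ and $f$ is constant on $\Cx(\{\bar x_i\}_{i\in[n]})$. Where you diverge is in how you force all minimizers to coincide. The paper upgrades constancy on the convex hull to constancy on the whole affine span $E$ of the $\bar x_i$ via analyticity of $f$, and then derives a contradiction from the linear lower bound $f(P_s)\geq\langle P_s,\bar x_{i_0}\rangle$, which is unbounded along a suitable line in $E$ unless $E$ is a single point. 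You instead compute the Hessian $\nabla^2 f(x)=\sum_j p_j(x)(\bar x_j-\mu(x))(\bar x_j-\mu(x))^\top$ and observe that its kernel is exactly $U^\perp$ with $U=\mathrm{span}\{\bar x_j-\bar x_k\}$, independently of $x$; since each difference $\bar x_{i'}-\bar x_i$ lies in $U$, strict convexity along the segment joining two global minimizers forces the segment to degenerate. Your computation is right (the quadratic form is $\sum_j p_j(x)\langle u,\bar x_j-\mu(x)\rangle^2$, which vanishes iff all $\langle u,\bar x_j\rangle$ coincide), and the final reduction to $\bar x=0$ matches the paper. Your route is arguably more self-contained — it needs neither analyticity nor the case analysis around the projection of the origin onto $E$ — while the paper's growth argument has the mild advantage of never computing second derivatives and of reusing only the first-order convexity inequality already established. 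Either way the conclusion of Lemma~\ref{l:stationary} follows.
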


\begin{proof}
 Assume that $(\bar{x}_1,\ldots, \bar{x}_n)\in(\mathbb{R}^{d})^n$ is a stationary configuration for the dynamics \eqref{e:-Iddyn}. 
 We consider $f:\mathbb{R}^d\to\mathbb{R}$ defined as
 $$f:x\mapsto \log\left(\sum_{j=1}^n e^{\langle x,\bar{x}_j\rangle}\right).$$
 Per Lemma \ref{l:logsumexp}, $f$ is convex, whence 
\begin{equation*}
    f(x)\geq f(\bar{x}_i)+\langle\nabla f(\bar{x}_i), x-\bar{x}_i\rangle
\end{equation*}
for $x\in\mathbb{R}^d$ and $i\in[n]$. 
Since $\nabla f(\bar{x}_i)=0$ for any $i\in[n]$, we gather that $f(x)\geq f(\bar{x}_i)$, whence $\bar{x}_i$ is a global minimizer of $f$ for any $i\in[n]$. 
By convexity, $f$ is constant on $\Cx(\{\bar{x}_i\}_{i\in[n]})$.  
Since $f$ is analytic on the affine space $E$ spanned by the points $\bar{x}_i$, $i\in[n]$, it is then constant on $E$ as well. 
Now assume that not all of the points $\bar{x}_i$ are equal, and pick an index $i_0\in[n]$ such that $\bar{x}_{i_0}$ is not equal to the projection of the origin onto $E$. 
Then there exists some $j_0\in[n]$ such that $\langle \bar{x}_{i_0}-\bar{x}_{j_0}, \bar{x}_{i_0}\rangle\neq 0$. 
For any $s\in\R$, we set $P_s:=\bar{x}_{j_0}+s(\bar{x}_{i_0}-\bar{x}_{j_0})\in E$, and we notice that $f(P_s)\geq \langle P_s,\bar{x}_{i_0}\rangle$, where the lower bound tends to $+\infty$ either when $s\rightarrow +\infty$ or when $s\rightarrow -\infty$. This contradicts the fact that $f$ is constant on $E$. We conclude that the $\bar{x}_i$ are all equal for $i\in[n]$. The only value they can then take is necessarily $0$.
\end{proof}

\begin{lemma}\label{e:finiteinegral}
The trajectories of \eqref{e:-Iddyn} satisfy $\displaystyle\int_0^{+\infty} \|\dot{x}_i(t)\|^2\diff t<+\infty$ for any $i\in[n]$.
\end{lemma}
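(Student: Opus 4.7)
My plan is to produce a scalar Lyapunov functional whose dissipation is a positive multiple of $\sum_i \|\dot x_i\|^2$, and then to use the a priori bound from Lemma~\ref{l:cas1circle} to convert the multiplier into an honest constant and integrate.

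The Lyapunov I would try is
$K(t) := \sum_{i,j=1}^n e^{\langle x_i(t),\, x_j(t)\rangle}$.
Because $\langle x_i,x_j\rangle$ is symmetric in $(i,j)$, swapping dummy indices lets me collapse the two resulting terms in $\dot K$ into one, and then using the dynamics $\dot x_i = -\sum_j P_{ij}x_j$ I expect to arrive at
$\dot K(t) \;=\; 2\sum_{i,j} e^{\langle x_i,x_j\rangle}\langle \dot x_i, x_j\rangle \;=\; -2\sum_{i=1}^n S_i(t)\,\|\dot x_i(t)\|^2,$
where $S_i(t) := \sum_k e^{\langle x_i,x_k\rangle}$ is the normalising denominator of $P_{ij}$. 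In particular $K$ will be non-increasing along the flow.

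Once this identity is in hand, the rest is routine. By Lemma~\ref{l:cas1circle} there is some $R>0$ with $\|x_i(t)\|\leq R$ for all $i$ and $t$, so $|\langle x_i,x_j\rangle|\leq R^2$ gives the uniform bounds $n e^{-R^2}\leq S_i(t)$ and $K(t)\leq n^2 e^{R^2}$. Integrating the dissipation identity on $[0,T]$ and dropping $K(T)\geq 0$ yields a bound for $\int_0^T \sum_i S_i\|\dot x_i\|^2\,\diff t$ uniform in $T$, and the lower bound on $S_i$ converts this into a uniform-in-$T$ bound on $\int_0^T \|\dot x_i\|^2\,\diff t$ for each fixed $i$. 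Letting $T\to+\infty$ concludes the proof.

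The hard part of the argument is guessing the correct Lyapunov: the more natural candidate $\tfrac12\sum_i \|x_i\|^2$ is not obviously useful, since a short computation gives $\tfrac{\diff}{\diff t}\sum_i\|x_i\|^2 = -2\sum_i\langle x_i,\bar x_i\rangle$, which (as one can check on toy configurations with $\|x_1\|\neq\|x_2\|$) cannot in general be bounded from below by a positive multiple of $\sum_i\|\bar x_i\|^2 = \sum_i\|\dot x_i\|^2$. The key observation that unlocks the argument is that the exponent $\langle x_i,x_j\rangle$ appearing in $P_{ij}$ is symmetric in $(i,j)$; this is precisely what makes the symmetrisation in $\dot K$ produce the double factor $2S_i$ needed to match $\bar x_i$ with $-\dot x_i$. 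Everything else is bounded calculus.
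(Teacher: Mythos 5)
Your proposal is correct and follows essentially the same route as the paper: the paper uses exactly the Lyapunov functional $\mathscr{L}(t)=\sum_{i,j}e^{\langle x_i(t),x_j(t)\rangle}$, obtains $\dot{\mathscr{L}}=-2\sum_{i,j}e^{\langle x_i,x_j\rangle}\|\dot x_i\|^2$ via the same symmetrization and substitution of the dynamics, and concludes by bounding the exponents uniformly using Lemma~\ref{l:cas1circle} and integrating. No gaps.
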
 

\begin{proof}
The function 
$$\mathscr{L}:t\mapsto\sum_{i=1}^n\sum_{j=1}^n e^{\langle x_i(t),x_j(t)\rangle}$$ 
is non-increasing, as demonstrated by the following simple computation:
\begin{align*}
\frac{\diff\mathscr{L}(t)}{\diff t}&=2\sum_{i=1}^n\sum_{j=1}^n e^{\langle x_i(t),x_j(t)\rangle}\langle \dot{x}_i(t),x_j(t)\rangle=2\sum_{i=1}^n \left\langle \dot{x}_i(t),\sum_{j=1}^n e^{\langle x_i(t),x_j(t)\rangle} x_j(t)\right\rangle\\
&=-2\sum_{i=1}^n\sum_{j=1}^n e^{\langle x_i(t),x_j(t)\rangle}\|\dot{x}_i(t)\|^2.
\end{align*}
Being non-negative, $\mathscr{L}(t)$ thus converges as $t\to+\infty$.
Since $\langle x_i(t),x_j(t)\rangle\geq R$ for some (possibly negative) $R\in\R$ by virtue of Lemma \ref{l:cas1circle}, we deduce that $$\int_0^{+\infty} \|\dot{x}_i(t)\|^2\diff t\leq e^{-R}\int_0^{+\infty} \sum_{i=1}^n\sum_{j=1}^ne^{\langle x_i(t),x_j(t)\rangle}\|\dot{x}_i(t)\|^2\diff t=e^{-R}(\mathscr{L}(0)-\lim_{t\rightarrow+\infty} \mathscr{L}(t)),$$
which concludes the proof.
\end{proof}

We are now able to conclude the proof of Theorem~\ref{t:cas-Idintro}.

\begin{proof}[Proof of Theorem~\ref{t:cas-Idintro}]
We set $\bx(t):=(x_1(t),\ldots,x_n(t))\in(\R^d)^n$. 
If $\bx(t)$ does not converge to $0$, the compactness provided by Lemma \ref{l:cas1circle} implies that there is a sequence 
$\{t_k\}_{k=1}^{+\infty}$ with $t_k\rightarrow +\infty$, and $\bx^*=(x_1^*,\ldots,x_n^*)\in (\R^d)^n\setminus\{0\}$, such that $\bx(t_k)\rightarrow \bx^*$ as $k\to+\infty$. 
To conclude the proof, it suffices to show that $\bx^*$ is a stationary configuration of the dynamics: this directly leads to a contradiction per Lemma \ref{l:stationary}. Therefore, assume that $\bx^*$ is not a stationary configuration of the dynamics.
We denote by $\bx^*(t)=(x_1^*(t),\ldots,x_n^*(t))$ the solution of \eqref{e:-Iddyn} with initial condition $\bx^*$.
Then, there exists $i\in[n]$ such that $\dot{x}^*_i(0)\neq 0$. 
We set $\varepsilon = \|\dot{x}^*_i(0)\|$. We select $T_0>0$ (possibly small) such that $\|\dot{x}_i^*(t)\|\geq\varepsilon/2$ for $t\in[0,T_0]$. 
It follows from \eqref{e:w2estimate} (which is verified according to Corollary \ref{c:wellposedtransformers}) that for any $\delta>0$ there exists $k_0\in\mathbb{N}$ such that $\|\bx(t_k+t)-\bx^*(t)\|\leq \delta$ for any $t\in[0,T_0]$ and any $k\geq k_0$. 
By \eqref{e:lipinmu} (which is verified according to Corollary \ref{c:wellposedtransformers}), we obtain that $\|\dot{x}_i(t_k+t)-\dot{x}_i^0(t)\|\leq C\delta$ for $t\in [0,T_0]$ and any $k\geq k_0$. Choosing $\delta>0$ sufficiently small, we obtain that $\|\dot{x}_i(t_k+t)\|\geq \varepsilon/4$ for $t\in [0,T_0]$ and any $k\geq k_0$. This contradicts Lemma \ref{e:finiteinegral}. 
\end{proof}

\section{Proof of Theorem~\ref{l:3hyperplanes11}} \label{sec: clustering.hyperplanes}

To ensure clarity, we present the proof of Theorem~\ref{l:3hyperplanes11} under the assumption that $V$ is diagonalizable. However, this assumption is not necessary. In Remark \ref{r:notdiagonalizable}, we explain how the proof can be modified to accommodate for non-diagonalizable $V$.

Let us therefore assume that $V$ is diagonalizable. Let $(\varphi_1,\ldots,\varphi_d)$ be an orthonormal basis of eigenvectors associated to eigenvalues $(\lambda_1,\ldots,\lambda_d)$, ordered in a decreasing manner with respect to their modulus: $|\lambda_1|\geq\ldots\geq|\lambda_d|$. (Starting from this point and throughout, we use the symbol $\lambda$ exclusively to denote the eigenvalues of $V$.)
Except for $\lambda_1\in \R$, all the other eigenvalues (and eigenvectors) may be complex. We denote by $(\varphi_1^*,\ldots, \varphi_d^*)$ the dual basis of $(\varphi_1,\ldots,\varphi_d)$.

\subsection{Some monotonicity properties and bounds}
To start, we present some general facts that are prove useful in all subsequent sub-cases.

\begin{lemma} \label{l:fj}
Suppose $k\in[d]$ is such that $\lambda_k\geq 0$. Then $t\mapsto \max_{j\in[n]} \varphi^*_k(z_j(t))$  
is a non-increasing and bounded function, and $t\mapsto \min_{j\in[n]} \varphi^*_k(z_j(t))$ 
is a non-decreasing and bounded function.
In particular, $t\mapsto \varphi^*_k(z_i(t))$ is uniformly bounded as a function on $[0,+\infty)$ for any $i\in[n]$.
\end{lemma}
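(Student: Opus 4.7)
The plan is to apply $\varphi_k^*$ to the rescaled dynamics~\eqref{e:Rres} and exploit the eigenvector relation $\varphi_k^* V = \lambda_k \varphi_k^*$ to reduce the question to a scalar consensus-type dynamics on which a maximum principle applies. Since $\lambda_k \geq 0$ is assumed real, the corresponding eigenvector $\varphi_k$ (and hence its dual $\varphi_k^*$) can be taken real, so that $y_i(t) := \varphi_k^*(z_i(t)) \in \mathbb{R}$ for every $i \in [n]$.

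Applying the linear functional $\varphi_k^*$ to~\eqref{e:Rres} and using $\varphi_k^* V = \lambda_k \varphi_k^*$, I obtain
\begin{equation*}
\dot{y}_i(t) \;=\; \lambda_k \sum_{j=1}^n \widetilde{P}_{ij}(t)\bigl(y_j(t)-y_i(t)\bigr), \qquad i \in [n],
\end{equation*}
where $\widetilde{P}_{ij}(t)$ denotes the (rescaled) self-attention coefficients appearing in~\eqref{e:Rres}. The key structural features I need are only that $\widetilde{P}_{ij}(t) \geq 0$ and $\sum_j \widetilde{P}_{ij}(t) = 1$ for every $i$ and $t$, which hold by construction.

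The core of the argument is then a standard max/min principle. Set $M(t) := \max_{j \in [n]} y_j(t)$ and $m(t) := \min_{j \in [n]} y_j(t)$. Both are finite maxima of smooth functions and therefore locally Lipschitz, hence differentiable almost everywhere. At any $t$ where $M$ is differentiable, I can pick an index $i^\star = i^\star(t)$ achieving the maximum; then
\begin{equation*}
\dot{M}(t) \;=\; \dot{y}_{i^\star}(t) \;=\; \lambda_k \sum_{j=1}^n \widetilde{P}_{i^\star j}(t)\bigl(y_j(t)-y_{i^\star}(t)\bigr) \;\leq\; 0,
\end{equation*}
since each bracket is nonpositive, each $\widetilde{P}_{i^\star j}(t) \geq 0$, and $\lambda_k \geq 0$. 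Being absolutely continuous with almost-everywhere nonpositive derivative, $M$ is non-increasing. The symmetric calculation at an index achieving $m(t)$ gives $\dot{m}(t) \geq 0$ a.e., so $m$ is non-decreasing.

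From monotonicity I deduce $m(0) \leq m(t) \leq M(t) \leq M(0)$ for every $t \geq 0$, proving boundedness of both envelopes. Since $m(t) \leq \varphi_k^*(z_i(t)) \leq M(t)$ for each $i \in [n]$, the uniform boundedness of $t \mapsto \varphi_k^*(z_i(t))$ on $[0,+\infty)$ follows immediately. I do not anticipate any serious obstacle: the only mild point is the standard justification that the envelope of finitely many smooth functions can be differentiated at an active index almost everywhere, which is classical.
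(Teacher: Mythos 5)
Your proof is correct and follows essentially the same route as the paper's: apply $\varphi_k^*$ to the dynamics, use $\varphi_k^*\circ V=\lambda_k\varphi_k^*$ to reduce to a scalar row-stochastic consensus system, and run the max/min principle on the envelopes. The only difference is cosmetic — you are slightly more careful about justifying differentiability of the envelope almost everywhere, a point the paper's proof passes over implicitly.
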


\begin{proof}
For any $k\in[d]$ and any $t\geq0$, set
\begin{equation*}
\alpha_k(t)=\min_{j\in[n]} \varphi^*_k(z_j(t)), \qquad \beta_k(t)=\max_{j\in[n]} \varphi^*_k(z_j(t)).
\end{equation*}
Let $i\in[n]$ be an index such that $\alpha_k(t)=\varphi^*_k(z_i(t))$. 
Then we have
\begin{align*}
\frac{\diff}{\diff t}\varphi^*_k(z_i(t))&= \sum_{j=1}^n P_{ij}(t)\varphi^*_k\left(V(z_j(t)-z_i(t))\right)\\
&=\lambda_k\sum_{j=1}^n P_{ij}(t)(\varphi^*_k(z_j(t))-\varphi^*_k(z_i(t)))\geq 0
\end{align*}
where the last inequality stems from the fact that $\lambda_k\geq 0$ and the choice of index $i$. This proves that $\alpha_k(\cdot)$ is non-decreasing, as desired. 
Arguing similarly, one finds that $\beta_k(\cdot)$ is non-increasing. As a consequence, $\alpha_k(0)\leq \alpha_k(t)\leq\beta_k(t)\leq\beta_k(0)$ for any $t\geq0$, which shows that $\alpha_k(\cdot)$ and $\beta_k(\cdot)$ are bounded.
\end{proof}

\begin{corollary} \label{c:bounded}
If $V$ only has real non-negative eigenvalues, then $z_i(\cdot)\in L^\infty([0,+\infty))$.
\end{corollary}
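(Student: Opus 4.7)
The plan is to derive this as a direct consequence of Lemma \ref{l:fj} applied coordinate-by-coordinate in the eigenbasis of $V$. Under the standing assumption that $V$ is diagonalizable, together with the additional hypothesis that every eigenvalue $\lambda_k$ is real and non-negative, the orthonormal basis $(\varphi_1,\ldots,\varphi_d)$ introduced at the start of the section can be chosen to consist of real vectors in $\mathbb{R}^d$, and its dual $(\varphi_1^\ast,\ldots,\varphi_d^\ast)$ is then the standard coordinate system in this basis.

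The key step is then to observe that the hypothesis $\lambda_k\geq 0$ in Lemma \ref{l:fj} is satisfied for every index $k\in[d]$, and not merely for the leading one. Applying the lemma to each $k$ simultaneously, we obtain that for every $i\in[n]$, the scalar curve $t\mapsto \varphi_k^\ast(z_i(t))$ is uniformly bounded on $[0,+\infty)$.

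To finish, I would reconstruct $z_i(t)$ from its coordinates in the orthonormal basis. Writing $z_i(t)=\sum_{k=1}^d \varphi_k^\ast(z_i(t))\,\varphi_k$ and using either the Pythagorean identity $\|z_i(t)\|^2=\sum_{k=1}^d |\varphi_k^\ast(z_i(t))|^2$ or simply the triangle inequality, one bounds $\|z_i(t)\|$ by a finite sum of uniformly bounded scalar functions of $t$. This yields $z_i(\cdot)\in L^\infty([0,+\infty))$ for every $i\in[n]$, which is the claim. I do not anticipate any genuine obstacle here: the only point to keep in mind is that the non-negativity assumption must hold for \emph{all} eigenvalues of $V$ in order to invoke Lemma \ref{l:fj} on every coordinate, and that the realness of the eigenvalues is what allows the eigenbasis to be used as a genuine real orthonormal frame in $\mathbb{R}^d$.
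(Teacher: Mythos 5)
Your proposal is correct and is exactly the argument the paper intends: the corollary is stated as an immediate consequence of Lemma \ref{l:fj}, obtained by applying that lemma to every index $k\in[d]$ (which is where the hypothesis that \emph{all} eigenvalues are real and non-negative enters) and then recovering $\|z_i(t)\|$ from the uniformly bounded coordinates in the real eigenbasis. No gap, and no meaningful divergence from the paper's route.
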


\begin{lemma}\label{l:nottoofassst}
Fix $k\in[d]$ and $i\in[n]$. Then there exists a constant $C>0$ such that 
\begin{equation*}
\left|\varphi^*_k\left(e^{tV}z_i(t)\right)\right|\leq Ce^{\left|\lambda_k\right|t}
\end{equation*}
holds for all $t\geqslant0$. 
\end{lemma}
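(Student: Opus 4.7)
The plan is to reduce the claim to a scalar Grönwall inequality applied to the non-rescaled tokens $x_i(t)=e^{tV}z_i(t)$, which satisfy \eqref{eq:trans_dyn}. The key algebraic observation is that, $V$ being diagonalizable with $V\varphi_k=\lambda_k\varphi_k$, the dual basis vector $\varphi_k^*$ satisfies the intertwining identity $\varphi_k^*\circ V=\lambda_k\varphi_k^*$ as a linear functional on $\mathbb{R}^d$ (viewed over $\mathbb{C}$ when $\lambda_k$ is not real). Setting $w_i(t):=\varphi_k^*(x_i(t))$ and applying $\varphi_k^*$ to \eqref{eq:trans_dyn} therefore produces the closed linear system
$$\dot w_i(t)=\lambda_k\sum_{j=1}^n P_{ij}(t)\,w_j(t), \qquad i\in[n].$$

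From this point, I would exploit only the fact that $P(t)$ is row-stochastic with non-negative entries. Differentiating $|w_i|^2$ and using Cauchy--Schwarz gives
$$\frac{d}{dt}|w_i(t)|^2=2\mathrm{Re}\Big(\overline{w_i(t)}\,\lambda_k\sum_{j=1}^n P_{ij}(t)w_j(t)\Big)\leq 2|\lambda_k|\,|w_i(t)|\max_{j\in[n]}|w_j(t)|.$$
Define $\phi(t):=\max_{j\in[n]}|w_j(t)|^2$. Evaluating the above inequality at any index $i$ realizing the maximum shows that $\phi$ satisfies, in the sense of upper Dini derivatives, $\dot\phi\leq 2|\lambda_k|\,\phi$. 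A standard Grönwall argument then yields $\phi(t)\leq\phi(0)e^{2|\lambda_k|t}$, which after taking square roots gives
$$|\varphi_k^*(x_i(t))|\leq\Big(\max_{j\in[n]}|\varphi_k^*(x_j(0))|\Big)\,e^{|\lambda_k|t}.$$
One then takes $C:=\max_{j\in[n]}|\varphi_k^*(x_j(0))|$, which depends only on the initial data.

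There is no real obstacle in this argument. The only point that deserves a remark is that $\lambda_k$ and $\varphi_k^*$ may be complex, but this is absorbed transparently by interpreting $\varphi_k^*$ as a $\mathbb{C}$-linear functional and by the triangle inequality in $\mathbb{C}$, and the Dini-derivative handling of the maximum of finitely many smooth functions is routine. Notably, no finer property of the attention kernel defining $P(t)$ is used beyond row-stochasticity, which reflects the fact that the bound is a purely kinematic "no faster than $e^{|\lambda_k|t}$" statement along each eigendirection.
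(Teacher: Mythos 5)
Your argument is correct and is essentially identical to the paper's proof: both apply $\varphi_k^*$ to the dynamics of $x_i(t)=e^{tV}z_i(t)$, use the left-eigenvector identity $\varphi_k^*(Vx)=\lambda_k\varphi_k^*(x)$ together with row-stochasticity of $P(t)$ to bound $\frac{\diff}{\diff t}|\varphi_k^*(x_i(t))|^2$ by $2|\lambda_k|\max_j|\varphi_k^*(x_j(t))|^2$, and conclude by Gr\"onwall applied to the maximum over $j$. The only cosmetic difference is that you make the Dini-derivative handling of the max explicit, which the paper leaves implicit.
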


\begin{proof}
We naturally make use of the equation for $x_i(t):=e^{tV}z_i(t)$. Fix $t\geqslant0$. We have
\begin{align*}
\frac{\diff}{\diff t}\left|\varphi^*_k(x_i(t))\right|^2&=2\cdot\mathbf{Re}\left(\overline{\varphi^*_k(x_i(t))}\frac{\diff}{\diff t}\varphi^*_k(x_i(t))\right)
\\
&=2\cdot\mathbf{Re}\left(\sum_{j=1}^n P_{ij}(t)\varphi^*_k(Vx_j(t))\overline{\varphi^*_k(x_i(t))}\right)\\
&=2\cdot\mathbf{Re}\left(\sum_{j=1}^n P_{ij}(t)\lambda_k\varphi^*_k(x_j(t))\overline{\varphi^*_k(x_i(t))}\right)\\
&\leq 2|\lambda_k|\max_{j\in[n]} \left|\varphi^*_k(x_j(t))\right|^2.
\end{align*}
Choosing $i\in[n]$ running over the set of indices such that $|\varphi^*_k(x_i(t))|$ is maximal, we obtain
$$
\frac{\diff}{\diff t}\max_{j\in[n]} |\varphi^*_k(x_j(t))|^2 \leq 2|\lambda_k| \max_{j\in[n]} |\varphi^*_k(x_j(t))|^2.
$$
We conclude the proof by applying Gr\"onwall's lemma.
\end{proof}

\subsection{Proof of Theorem~\ref{l:3hyperplanes11}}

We now prove Theorem~\ref{l:3hyperplanes11}. 
We again recall that $\lambda_1$ is simple and positive, and the eigenvalues of $V$ are ordered in decreasing order of modulus: $\lambda_1>|\lambda_2|\geq\ldots\geq|\lambda_d|$.

\begin{proof}[Proof of Theorem~\ref{l:3hyperplanes11}]

We look to prove that for any $i\in[n]$, the component of $z_i(t)$ along the principal eigenvector $\varphi_1$, i.e. $\varphi_1^*(z_i(t))$, converges as $t\rightarrow +\infty$. We  also show that there exists a set of at most $3$ real numbers (depending on the initial datum $(z_1(0),\ldots,z_n(0))$) such that for any $i\in[n]$ the limit of $\varphi_1^*(z_i(t))$ belongs to this set. Theorem~\ref{l:3hyperplanes11} directly follows from these facts.

Let $i\in[n]$ be fixed.
Recall from Lemma \ref{l:fj} that $\varphi_1^*(z_i(t))$ is uniformly bounded for any $t\in[0,+\infty)$. 
We set
\begin{equation}\label{e:defab}
a:=\lim_{t\to+\infty}\min_{j\in[n]}\varphi_1^*(z_j(t)), \hspace{1cm} b:=\lim_{t\to+\infty}\max_{j\in[n]}\varphi_1^*(z_j(t)).
\end{equation}   
(Note that by Lemma \ref{l:fj}, $a\geq\min_{j\in[n]}\varphi_1^*(z_j(0))$ and $b\leq\max_{j\in[n]}\varphi_1^*(z_j(0))$.)
For $c\in\{0, a, b\}$, we define the candidate limiting hyperplanes for $z_i(t)$:
\begin{equation*}
H_c:=\{x\in\mathbb{R}^d\colon \varphi_1^*(x)=c\}.
\end{equation*}
We show that $z_i(t)$ converges either to $H_0$, to $H_a$ or to $H_b$. If $a=b=0$, then according to \eqref{e:defab} all particles converge to $H_0$ and there is nothing left to prove. 
We now distinguish two scenarios:
\begin{enumerate} 
\item[(i)]  either for any $\varepsilon>0$, $|\varphi_1^*(z_i(t))|\leq\varepsilon$ for $t$ large enough---in which case, we deduce that $z_i(t)$ converges toward $H_0$ as $t\to+\infty$---, 
\vspace{0.2cm}
\item[(ii)] or $|\varphi_1^*(z_i(t_k))|>\varepsilon_0$ for some $\varepsilon_0>0$ and for some sequence of positive times $\{t_k\}_{k=1}^{+\infty}$ with $t_k\to+\infty$.
\end{enumerate}
Since case (i) is straightforward, let us handle case (ii). Without loss of generality,  we can extract a subsequence of times (which we do not relabel, for simplicity of notation) along which
\begin{equation} \label{e:infepstk}
    \varphi_1^*(z_i(t_k))>\varepsilon_0.
\end{equation}
Let $\varepsilon\in(0,\varepsilon_0]$ be fixed and to be chosen later.
We set
\begin{equation*}
w_j(t):=\left\langle Qe^{tV}z_i(t), Ke^{tV}z_j(t)\right\rangle,
\end{equation*}
so that 
\begin{equation} \label{eq:phistarvar}
\frac{1}{\lambda_1}\frac{\diff}{\diff t}\varphi_1^*(z_i(t))=\sum_{j=1}^n \frac{e^{w_j(t)}}{\sum_{k=1}^n e^{w_k(t)}}\left(\varphi^*_1(z_j(t))-\varphi^*_1(z_i(t))\right).
\end{equation}
We look to obtain a lower bound for the right-hand side in the above identity. Let us use the shorthand
\begin{equation*}
c_{k\ell}:=\langle Q\varphi_k,K\varphi_\ell\rangle 
\end{equation*}
for $k,\ell\in[d]$. By assumption, $c_{11}>0$. 
We have $\varphi_k^*(e^{tV}z_i(t))=e^{t\lambda_k}\varphi_k^*(z_i(t))$ and the following spectral expansion holds:
$$e^{tV}z_i(t)=\sum_{k=1}^d e^{t\lambda_k}\varphi^*_k(z_i(t))\varphi_k.$$ 
Using this fact, as well as Lemma \ref{l:nottoofassst}, we gather that
\begin{align}
\Big|w_j(t)-c_{11}e^{2\lambda_1t}\varphi^*_1(z_i(t))\varphi^*_1(z_j(t))\Big|&=\left|\sum_{(k,\ell)\neq (1,1)}c_{k\ell}\varphi^*_k(e^{tV}z_i(t))\varphi^*_\ell\left(e^{tV}z_j(t)\right)\right|\nonumber\\
&\leq \sum_{(k,\ell)\neq (1,1)}|c_{k\ell}|\left|\varphi^*_k\left(e^{tV}z_i(t)\right)\right|\left|\varphi^*_\ell\left(e^{tV}z_j(t)\right)\right|\nonumber\\
&\leq C^2\|Q^\top K\|_\op\sum_{(k,\ell)\neq (1,1)}e^{(|\lambda_k|+|\lambda_\ell|)t}\nonumber\\
&\leq\underbrace{C^2\|Q^\top K\|_\op(d-1)^2}_{=:C'}\,e^{(\lambda_1+|\lambda_{2}|)t}\label{e:majorizeweights}
\end{align}
holds for all $t\geq0$ and $j\in[n]$.
Now since $\lambda_1>0$, Lemma \ref{l:fj} implies that 
for any $t\geq0$ there exists an index $i_0(t)\in[n]$ such that
\begin{equation} \label{eq:varphidgeqb}
\varphi^*_1(z_{i_0(t)}(t))\geq b.
\end{equation}
With $j_0(t)\in\argmax_{j\in[n]}w_j(t)$, using \eqref{e:majorizeweights} and \eqref{eq:varphidgeqb} we see that 
\begin{equation} \label{e:boundj0}
    w_{j_0(t)}(t)\geq w_{i_0(t)}(t)\geq c_{11}\varphi^*_1(z_{i}(t))be^{2\lambda_1t}-C'e^{(\lambda_1+|\lambda_{2}|)t}.
\end{equation}
Now for any $t$ within the sequence $\{t_k\}_{k=1}^{+\infty}$, combining the first inequality in \eqref{e:boundj0} with the fact that $c_{11}>0$, \eqref{e:infepstk} and \eqref{e:majorizeweights}, we deduce that
\begin{equation}\label{e:premiercompo}
    \varphi_1^*(z_{j_0(t)}(t))-\varphi_1^*(z_{i_0(t)}(t))\geq -\frac{2C'}{c_{11}\varepsilon}e^{-(\lambda_1-|\lambda_2|)t}.
\end{equation}
As $\lambda_1>|\lambda_2|$, for $t$ large enough, we find that we can lower bound the above expression by $-\frac{\varepsilon}{4}$. 
We now define the set of indices 
\begin{equation*}
N(t):=\{j\in[n]\colon\varphi_1^*(z_i(t))-\varphi_1^*(z_j(t))\geq0\}.
\end{equation*}
Take $t$ within the sequence $\{t_k\}_{k=1}^{+\infty}$ such that $\varphi_1^*(z_i(t))\leq b-\varepsilon$ and large enough so that \eqref{e:premiercompo} is lower bounded by $-\frac{\varepsilon}{4}$ (if such a $t$ does not exist, we immediately conclude that $\varphi_1^*(z_i(t))\rightarrow b$ as $t\rightarrow +\infty$). Using \eqref{eq:varphidgeqb} and the subsequent derivations, we deduce that 
\begin{equation*}
    \varphi_1^*(z_{j_0(t)}(t))-\varphi_1^*(z_i(t))\geq\frac{3\varepsilon}{4},
\end{equation*}
and since $\varphi_1^*(z_j(t))-\varphi_1^*(z_i(t))\geq0$ for $j\notin N(t)$, we expand in \eqref{eq:phistarvar} to get
\begin{align} \label{eq:eq10}
    \frac{1}{\lambda_1}\frac{\diff}{\diff t}\varphi_1^*(z_i(t))\geq \frac{e^{w_{j_0(t)}(t)}}{\sum_{k=1}^n e^{w_k(t)}}\frac{3\varepsilon}{4}+\sum_{j\in N(t)}\frac{e^{w_j(t)}}{\sum_{k=1}^n e^{w_k(t)}}\left(\varphi_1^*(z_j(t))-\varphi_1^*(z_i(t))\right).
\end{align}
On another hand, for $j\in N(t)$, we may use \eqref{e:majorizeweights} to find
\begin{equation} \label{e:boundNt}
    w_j(t)\leq c_{11}\varphi^*_1(z_i(t))^2e^{2\lambda_1 t}+C'e^{(\lambda_1+|\lambda_{2}|)t}.
\end{equation}
We set
\begin{equation*}
C_0:=\max_{j\in[n]} \varphi^*_1(z_j(0)) - \min_{j\in[n]} \varphi^*_1(z_j(0)).
\end{equation*}
Using the monotonicity properties from Lemma \ref{l:fj}, as well as \eqref{e:boundNt} in \eqref{eq:eq10}, we obtain
\begin{align*}
    \frac{1}{\lambda_1}\frac{\diff}{\diff t}\varphi_1^*(z_i(t))&\geq \frac{3\varepsilon}{4n}-C_0n\frac{\exp\Big(c_{11}\varphi^*_1(z_i(t))^2e^{2\lambda_1 t}+C'e^{(\lambda_1+|\lambda_{2}|)t}\Big)}{\exp\Big(c_{11}\varphi^*_1(z_i(t))be^{2\lambda_1t}-C'e^{(\lambda_1+|\lambda_{2}|)t}\Big)}.
\end{align*}
Given our choice of $t$, we have $\varphi_1^*(z_i(t))^2-b\varphi_1^*(z_i(t))\leq-\varepsilon(b-\varepsilon)$,
so, we conclude from the inequality just above that 
\begin{equation} \label{e:movem}
    \frac{1}{\lambda_1}\frac{\diff}{\diff t}\varphi_1^*(z_i(t))\geq\frac{3\varepsilon}{4n}-C_0n\exp\Big(-c_{11}\varepsilon(b-\varepsilon)e^{2\lambda_1t}+2C'e^{(\lambda_1+|\lambda_{2}|)t}\Big).
\end{equation}
Since $\lambda_1>|\lambda_{2}|$, it follows from \eqref{e:movem} that there exists $T>0$ such that for any $t$ within the sequence $\{t_k\}_{k=1}^{+\infty}$ for which $t\geq T$ and $\varphi^*_1(z_i(t))\in [\varepsilon, b-\varepsilon]$, there holds
$$
\frac{\diff}{\diff t}\varphi^*_1(z_i(t))\geq \frac{\lambda_1\varepsilon}{2n}.
$$
This shows the existence of a larger time horizon $T'>T$ such that $\varphi^*_1(z_i(t))\geq b-\varepsilon$ whenever $t\geq T'$. And since $\varepsilon$ can be taken arbitrarily small, we deduce that $\varphi_1^*(z_i(t))$ converges toward $b$, namely that $z_i(t)$ converges toward $H_b$, as $t\rightarrow+\infty$.

Arguing in the same way as above, and assuming without loss of generality that $a<0$, we may find that all indices $i\in[n]$ for which $\varphi_1^*(z_i(t_k))\leq -\varepsilon_0$ for some $\varepsilon_0>0$ and some sequence $t_k\rightarrow +\infty$, the particle $z_i(t)$ converges toward $H_a$ as $t\rightarrow +\infty$. This concludes the proof.
\end{proof}

\subsection{Remarks}

\begin{remark}
Theorem~\ref{l:3hyperplanes11} establishes the convergence of  $\varphi^*_1(z_i(t))$ for any $i\in[n]$ as $t\rightarrow +\infty$, but does not preclude the fact that  $\|z_i(t)\|$ may diverge toward $+\infty$ (along the hyperplane) as $t\rightarrow +\infty$. This is indeed expected (and observed numerically---see Fig.~\ref{fig:thm21}) when $V$ has some negative eigenvalues. 
We also note that when all the eigenvalues of $V$ are non-negative, Corollary \ref{c:bounded} shows that all the $z_i(t)$ remain bounded. 
\end{remark}

\begin{remark}[The case where $V$ is not diagonalizable] \label{r:notdiagonalizable}

If $V$ is not assumed to be diagonalizable, Lemma \ref{l:nottoofassst} (or, at least the proof thereof) requires some modifications. 
Let $\delta:=\lambda_1-|\lambda_2|>0$.
Let $\varepsilon>0$ be fixed and to be chosen later. 
We decompose $V$ in Jordan blocks, and we consider
\begin{equation}\label{e:jordandecompo}
\mathbb{C}^d=\bigoplus_{k=1}^m \mathscr{F}_k,
\end{equation}
where $\mathscr{F}_k$ is the span of the Jordan chain corresponding to the $k$-th Jordan block.
By a slight abuse of notation (solely for the purpose of this remark), we denote by $\lambda_k$ the eigenvalue associated to the $k$-th Jordan block. 
We recall that we can choose a basis $(\varphi_{k,1},\ldots,\varphi_{k,j_k})$ of each $\mathscr{F}_k$ in a way that $V_{|\mathscr{F}_k}$ reads in this basis as\footnote{Recall that Jordan blocks are commonly written with a $+1$ in the superdiagonal. This can be replaced by any non-zero complex scalar as done here---see \cite[Chapter 3, Corollary 3.1.21]{horn2012matrix}.}
\begin{equation}\label{e:jordanblock}
\begin{bmatrix} 
    \lambda_k & \varepsilon &  & \\
     & \ddots & \ddots &\\
     & &  \ddots &\varepsilon\\
     &        & & \lambda_k
    \end{bmatrix}.
\end{equation}
We observe that if $\varepsilon$ is chosen sufficiently small (depending only on $\delta$), Lemma \ref{l:nottoofassst} may be replaced by the following estimate in each $\mathscr{F}_k$:
\begin{equation} \label{e:notdi}
\exists C>0, \ \forall t\geq 0, \ \forall i\in[n], \qquad \left\|\pi_{\mathscr{F_k}}\left(e^{tV}z_i(t)\right)\right\|\leq Ce^{(|\lambda_k|+\delta)t}.
\end{equation}
Here, $\pi_{\mathscr{F}_k}$ denotes the orthogonal projection onto $\mathscr{F}_k$.
To prove estimate \eqref{e:notdi}, we follow the proof of Lemma \ref{l:nottoofassst}, with $\frac{\diff}{\diff t}\|\pi_{\mathscr{F}_k}(x_i(t))\|^2$ playing the role of $\frac{\diff}{\diff t}\left|\varphi^*_k(x_i(t))\right|^2$.
The key observation is that combining \eqref{e:jordandecompo} and \eqref{e:jordanblock} we obtain 
\begin{equation*}
 \|\pi_{\mathscr{F}_k}(Vx_i(t))\|\leq (|\lambda_k|+\delta)\|\pi_{\mathscr{F}_k}(x_i(t))\|,
\end{equation*}
provided $\varepsilon$ is chosen sufficiently small. Then \eqref{e:notdi} follows as in Lemma \ref{l:nottoofassst}.

With \eqref{e:jordandecompo} at hand, the proof of Theorem~\ref{l:3hyperplanes11} carries through, under the impactless modification that $Ce^{(\lambda_1+|\lambda_{2}|+\delta)t}$ replaces \eqref{e:majorizeweights} (and subsequent estimates are modified in the same way).
\end{remark}

\section{Proof of Theorem~\ref{t:multiplicity}} \label{sec: clustering.hyperplanes.and.polytopes}

In this section, we establish the proof for Theorem~\ref{t:multiplicity}. Since the proof is essentially a combination of the proofs of Theorems \ref{l:3hyperplanes11} and \ref{t:Idcase11}, we may occasionally skip certain details and refer to the proofs of these two results.
As done throughout this work, we set
\begin{equation*}
    A:=(Q^\top K)^\frac12.
\end{equation*}
We denote by $\pi_{\F}:\R^d\rightarrow \F$ the projection onto $\F$ parallel to $\G$, and by $\pi_\G:\R^d\rightarrow\G$ the projection onto $\G$ parallel to $\F$. The set $\pi_\F(\Cx(\{z_i(t)\}_{i\in[n]}))$ is a convex subset of $\F$ which is non-increasing with respect to $t$ (the proof of this fact is identical to that of Proposition \ref{p:noninc}). It therefore converges toward some convex polytope $\K$ as $t\to+\infty$. 

Fix $i\in[n]$. We have
\begin{align*}
\pi_\F(\dot{z}_i(t))&=\sum_{j=1}^n \left(\frac{e^{\left\langle Ae^{tV}z_i(t),Ae^{tV}z_j(t)\right\rangle}}{\sum_{k=1}^n e^{\left\langle Ae^{tV}z_i(t),Ae^{tV}z_k(t)\right\rangle}}\right)\pi_\F(V(z_j(t)-z_i(t)))\\
&=\sum_{j=1}^n\left(\frac{e^{\left\langle Ae^{tV}z_i(t),Ae^{tV}(z_j(t)-z_i(t))\right\rangle}}{\sum_{k=1}^n e^{\left\langle Ae^{tV}z_i(t),Ae^{tV}(z_k(t)-z_i(t))\right\rangle}}\right)\pi_\F(V(z_j(t)-z_i(t))).
\end{align*}
From this point on, we follow the proof of Theorem~\ref{t:Idcase11}, and we solely highlight the changes compared to the original proof. Roughly speaking, this new proof amounts to   adding projections $\pi_\F$ at several places. 
We denote by $\setS\subset\F$ the set of points $w\in\K$ such that 
\begin{equation*}
\|\pi_\F(Aw)\|^2=\max_{j\in[m]}\left\langle \pi_\F(Aw),\pi_\F(Av_j)\right\rangle.
\end{equation*} 
The fact that $\setS\subset\partial\K$ and that $\setS$ has finite cardinality is proved precisely as Claim \ref{cl:propofS} (in the proof of Theorem~\ref{t:Idcase11}), simply by replacing all occurrences of $A\cdot$ by $\pi_\F(A\cdot)$. Once again, $\setS_\delta$ denotes the set of all points in $\mathcal{K}$ at distance $\leq \delta$ to some point of $\setS$. 

Step 2 in the proof of Theorem~\ref{t:Idcase11} (i.e., \eqref{e:eq17}) is replaced by the following statement: 
\vspace{0.2cm}

\noindent
\textbf{Step 2':} There exists a constant $\gamma=\gamma(\mathcal{K})>0$ (depending only on the geometry of $\K$) such that for any $\delta\in (0,\delta_0]$, there exists $T=T(\delta)>0$ such that if $t\geq T$ and $\pi_\F(z_i(t))\notin \setS_{\delta}$, then 
\begin{equation*}
 \frac{\diff}{\diff t}\|\pi_\F(Az_i(t))\|^2\geq \gamma\delta.   
\end{equation*}
We now proceed in proving this statement.

\begin{proof}[Proof of Step 2'] 
We set
\begin{equation*}
 a_j(t):=\left\langle \pi_\F(Az_i(t)),\pi_\F(A(z_j(t)-z_i(t))\right\rangle  
\end{equation*}
and 
$$
r_j(t):=\left\langle Ae^{tV}z_i(t),Ae^{tV}(z_j(t)-z_i(t))\right\rangle-a_j(t)e^{2\lambda_1t}.
$$


We find
\begin{align}
\frac12\frac{\diff}{\diff t}&\|\pi_\F(Az_i(t))\|^2=\left\langle \pi_\F(A\dot{z}_i(t)),\pi_\F(Az_i(t))\right\rangle\nonumber\\&=\sum_{j=1}^n\left(\frac{e^{\left\langle Ae^{tV}z_i(t),Ae^{tV}z_j(t)\right\rangle}}{\sum_{k=1}^n e^{\left\langle Ae^{tV}z_i(t),Ae^{tV}z_k(t)\right\rangle}}\right)\left\langle \pi_\F(A(z_j(t)-z_i(t))),\pi_\F(Az_i(t))\right\rangle\nonumber\\
&=\sum_{j=1}^n\left(\frac{e^{\left\langle Ae^{tV}z_i(t),Ae^{tV}(z_j(t)-z_i(t))\right\rangle}}{\sum_{k=1}^n e^{\langle Ae^{tV}z_i(t),Ae^{tV}(z_k(t)-z_i(t))\rangle}}\right)\left\langle \pi_\F(A(z_j(t)-z_i(t))),\pi_\F(Az_i(t))\right\rangle\nonumber\\
&=\sum_{j=1}^n\underbrace{\left(\frac{ e^{a_j(t)e^{2\lambda_1t}+r_j(t)}}{\sum_{k=1}^n e^{a_k(t) e^{2\lambda_1 t}+r_k(t)}}\right)a_j(t)}_{=:b_j(t)} \label{e:derivativenorm123}.
\end{align} 
We now make use of the following adaptation of Claim \ref{cl:gamma'}.

\begin{claim}\label{cl:gamma'12}
There exists some constant $\gamma'=\gamma'(\mathcal{K})>0$ depending only on the geometry of $\mathcal{K}$ such that the following holds. Fix $\delta\in(0,\delta_0]$. There exists $T=T(\delta)>0$ such that if $t\geq T$ and $z_i(t)\notin \setS_{\delta}\times\G$, then there exists $j\in[n]$ such that $a_j(t)\geq \gamma'\delta$.
\end{claim}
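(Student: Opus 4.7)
The plan is to follow the proof of Claim \ref{cl:gamma'} line by line, systematically projecting quantities onto $\F$, and mirroring the simplifying assumption $A=I_d$ made in the original proof. First, I would establish the following analogue of Lemma \ref{l:lemmaw}: for any $w\in\setS$, there exists $\beta>0$ such that for all $x\in\K\cap B(w,\delta_0)$,
\begin{equation*}
\max_{j\in[m]}\left\langle \pi_\F(x),\pi_\F(v_j-x)\right\rangle\geq\beta\|x-w\|.
\end{equation*}
Since $\K\subset\F$ and the vertices $v_j$ lie in $\F$, for $x\in\K\cap B(w,\delta_0)$ the projections $\pi_\F$ act as the identity; hence this is exactly the statement of the original Lemma \ref{l:lemmaw} applied inside $\F$, and its proof carries over verbatim.

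Next I would follow the two-case structure of Claim \ref{cl:gamma'}. In the first case, when $\pi_\F(z_i(t))\notin\setS_{\delta_0}$, the continuous function $x\mapsto\max_{j\in[m]}\langle x,v_j-x\rangle$ is bounded below by some $c'>0$ on the compact set $\K\setminus\mathrm{Int}(\setS_{\delta_0})\subset\F$ by the very definition of $\setS$, and this bound extends by continuity to a small tubular neighborhood of $\K$ inside $\F$. Since $\pi_\F(\Cx(\{z_k(t)\}_{k\in[n]}))$ converges to $\K$ as $t\to+\infty$, for sufficiently large $t$ the point $\pi_\F(z_i(t))$ lies in this tubular neighborhood, and for each vertex $v_j$ there exists $k_j(t)\in[n]$ such that $\pi_\F(z_{k_j(t)}(t))$ is arbitrarily close to $v_j$. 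Replacing the vertices by the corresponding particles introduces only a small error, yielding a uniform lower bound on $\max_{k\in[n]}a_k(t)$ that does not depend on $\delta$.

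In the second case, when $\pi_\F(z_i(t))\in\setS_{\delta_0}\setminus\setS_\delta$, I would apply the adapted Lemma \ref{l:lemmaw} with $w\in\setS$ chosen closest to $\pi_\F(z_i(t))$. By the convergence of the projected convex hulls, for $t$ large enough one can locate a genuine point $x\in\K\cap(B(w,\delta_0)\setminus B(w,\delta))$ within distance $\eta:=\beta\delta/(6R)$ of $\pi_\F(z_i(t))$, where $R$ is a uniform bound on $\|\pi_\F(z_k(\cdot))\|_{L^\infty}$ (finite by the same monotonicity argument as in Proposition \ref{p:noninc}, since $V$ acts as $\lambda\,\Id$ on $\F$). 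A further particle-for-vertex approximation then produces $a_j(t)\geq(\beta/2)\delta$ for some $j\in[n]$.

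The main obstacle will be justifying that the particle projections cluster near each vertex of $\K$ for large $t$: namely, that for every vertex $v_j$ of $\K$ and every $\eta>0$, there exist $T>0$ and $k(t,v_j)\in[n]$ such that $\|\pi_\F(z_{k(t,v_j)}(t))-v_j\|\leq\eta$ for all $t\geq T$. This is intuitively clear because $\pi_\F(\Cx(\{z_k(t)\}))$ shrinks monotonically to $\K$ (of which $v_j$ is an extreme point), but rigorously requires combining the Hausdorff convergence of the shrinking projected convex hulls with a pigeonhole argument on the finite family $\{z_k\}_{k\in[n]}$ to locate a particle persistently near each vertex. Once this approximation is in place, the rest of the argument is a routine adaptation of Claim \ref{cl:gamma'}.
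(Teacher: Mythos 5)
Your overall plan is the paper's own: the paper gives no separate proof of Claim~\ref{cl:gamma'12}, asserting only that it is the proof of Claim~\ref{cl:gamma'} with every occurrence of $A\cdot$ replaced by $\pi_\F(A\cdot)$, which is exactly the two-case adaptation (with the projected Lemma~\ref{l:lemmaw} and the same $\eta=\beta\delta/(6R)$ bookkeeping) that you describe. The one place where you diverge is the final passage from the vertices $v_j$ to the particles, and there your route is needlessly hard. You identify as ``the main obstacle'' the statement that for every vertex of $\K$ there is a particle whose $\F$-projection stays $\eta$-close to it; that statement is true (and provable by the quantitative extreme-point argument you hint at), but it is not what the paper uses and it is not needed. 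The proof of Claim~\ref{cl:gamma'} instead exploits the projected version of Proposition~\ref{p:noninc}: since $\pi_\F(\Cx(\{z_k(t)\}_{k\in[n]}))$ is non-increasing and converges to $\K$, one has $\K\subset\Cx(\{\pi_\F(z_k(t))\}_{k\in[n]})$ for all $t$, so each vertex $v_j$ is a convex combination $\sum_k\alpha_k\,\pi_\F(z_k(t))$, and by linearity of $y\mapsto\langle\pi_\F(Az_i(t)),Ay-\pi_\F(Az_i(t))\rangle$ one gets
\begin{equation*}
\max_{k\in[n]}a_k(t)\;\geq\;\max_{j\in[m]}\left\langle \pi_\F(Az_i(t)),\pi_\F(Av_j)-\pi_\F(Az_i(t))\right\rangle
\end{equation*}
with no error term and no need to localize any particle near any vertex. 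Replacing your particle-for-vertex approximation by this convex-combination inequality in both cases removes the obstacle entirely and makes the adaptation of Claim~\ref{cl:gamma'} routine, as the paper intends; with that substitution your argument is correct and matches the paper's.
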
 

Compared to Step 2 in the proof of Theorem~\ref{t:Idcase11}, we now have to estimate the coefficients $r_j(t)$. To this end, setting $y_j(t):=Ae^{tV}z_j(t)$ for $j\in[n]$, we notice that $r_j(t)=P_1(t)+P_2(t)+P_3(t)$ where
\begin{align*}
&P_1(t)=\langle \pi_\F(y_i(t)),\pi_\G(y_j(t)-y_i(t))\rangle, \\
&P_2(t)=\langle \pi_\G(y_i(t)),\pi_\F(y_j(t)-y_i(t))\rangle,\\
&P_3(t)=\langle \pi_\G(y_i(t)),\pi_\G(y_j(t)-y_i(t))\rangle.
\end{align*}
By virtue of Lemma \ref{l:nottoofassst} we have $|\pi_\F(y_j(t))|\leq Ce^{\lambda_1 t}$ and $|\pi_{\mathscr{G}}(y_j(t))|\leq Ce^{t|\lambda_{2}|}$ for any $t\geq0$ (or $Ce^{t|\lambda_{2}|+\varepsilon}$ if $V_{|\mathscr{G}}$ is not diagonalizable---see Remark \ref{r:notdiagonalizable}), hence 
\begin{equation}\label{e:boundrj} 
|r_j(t)|\leq  Ce^{t(\lambda_1+|\lambda_{2}|)}.
\end{equation}
Since $\pi_\F(z_j(t))$ is uniformly bounded in $t\in[0,+\infty)$ for any $j\in[n]$ due to Corollary \ref{c:theziunifbdd}, we get $a_j(\cdot)\in L^\infty(0,+\infty)$. So, we may set 
\begin{equation*}
    \kappa:=\max_{j\in[n]}\sup_{t\geq0}|a_j(t)|.
\end{equation*}
Let $t\geq0$. We define
$$
B(t):=\left\{j\in[n]\colon a_j(t)e^{2\lambda_1 t}+r_j(t)\geq 0\right\}.
$$
Let $j_0(t)\in\argmax_{j\in[n]}(a_j(t)e^{2\lambda_1t}+r_j(t))$. 
Note that $j_0(t)\in B(t)$ since 
\begin{equation*}
 a_{j_0}(t)e^{2\lambda_1t}+r_{j_0}(t)\geq a_i(t)e^{2\lambda_1t}+r_i(t)= 0.   
\end{equation*}
We notice the following three properties: 
\begin{itemize} 
\item For $j=j_0(t)$, we have $b_{j_0(t)}(t)\geq \frac{a_{j_0(t)}(t)}{n}$ (recall the definition of $b_j$ in \eqref{e:derivativenorm123});
\vspace{0.2cm}
\item for any $j\in B(t)\setminus\{j_0\}$, we have $b_j(t)\geq 0$; 
\vspace{0.2cm}
\item for any $j\notin B(t)$, we have 
\begin{equation*}
b_j(t)\geq -\kappa\exp\left(-a_{j_0}(t)e^{2\lambda_1t}+Ce^{(\lambda_1+|\lambda_2|)t}\right).  
\end{equation*}
Indeed, using the fact that $j\in B(t)$ and \eqref{e:boundrj}, we find
\begin{align*} 
\frac{\exp\left(a_j(t)e^{2\lambda_1t}+r_j(t)\right)}{\displaystyle\sum_{k=1}^n \exp\left(a_k(t) e^{2\lambda_1t}+r_k(t)\right)}&\leq \frac{1}{\displaystyle\sum_{k=1}^n \exp\left(a_k(t) e^{2\lambda_1t}+r_k(t)\right)}\\&\leq \frac{ 1}{\exp\left(a_{j_0}(t) e^{2\lambda_1t}+r_{j_0}(t)\right)}\\ &\leq \exp\left(-a_{j_0}(t)e^{2\lambda_1t}+Ce^{(\lambda_1+|\lambda_2|)t}\right).
\end{align*} 
\end{itemize} 
Making use of these properties in \eqref{e:derivativenorm123} yields the desired lower bound---indeed, if $t$ is sufficiently large and $z_i(t)\notin \setS_{\delta}\times\G$, we have
$\{j\in[n]\colon a_j(t)\geq \gamma'\delta\} \neq\emptyset$ according to Claim \ref{cl:gamma'12}, and so we deduce that
\begin{equation*}
\frac12\frac{\diff}{\diff t}\|Az_i(t)\|^2\geq \frac{\gamma'\delta}{n}-\kappa n e^{-\gamma'\delta e^{2\lambda_1 t}+Ce^{(\lambda_1+|\lambda_{2}|)t}}.
\end{equation*}
Taking $t$ possibly larger (and depending on $\delta$), we obtain the result of Step 2'.
\end{proof}

Steps 3 and 4 in the proof of Theorem~\ref{t:Idcase11} are essentially unchanged---we replace all the occurrences of $\|A\cdot\|$ by $\|\pi_\F(A\cdot)\|$ (for instance in \eqref{e:dansSdelta} and \eqref{e:longcomp}). Although $\|Az_i(t)\|$ may not be uniformly bounded in $t$, it is important to note that $\|\pi_\F(Az_i(t))\|$ is uniformly bounded. Similarly, while $\dot{z}_i(t)\notin L^\infty([0,+\infty))$, we do have $\|\frac{\diff}{\diff t}\pi_\F(z_i(\cdot))\|_{L^\infty([0,+\infty))}<+\infty$. The sets $\setS_\delta$, $\mathscr{C}_k$ and $\mathscr{C}_k^r$ are replaced by $\setS_\delta\times\G$, $\mathscr{C}_k\times \G$ and $\mathscr{C}_k^r\times \G$ respectively. The conclusion is that $\|\pi_\F(Az_i(t))\|^2$ has to increase by at least 
\begin{equation*}
\frac{\gamma\delta^{\frac12}(\delta^{\frac14}-\delta)}{\|\dot{z}_i\|_{L^\infty([0,+\infty))}}\geq \frac{\delta^{\frac34}}{2\|\dot{z}_i\|_{L^\infty([0,+\infty))}}>4R\|A\|_\op\delta
\end{equation*}
during a travel from $\mathscr{C}_k\times\G$ to the complement of $\mathscr{C}_k^{\frac14}\times\G$. As in the proof of Theorem~\ref{t:Idcase11} this implies that for any $i\in[n]$ there exists $s\in\setS$ such that $z_i(t)$ remains at distance at most $\delta$ away from $\{s\}\times\G$. This being true for any $\delta>0$, we obtain the desired result.

\section{Numerical experiments} \label{s: pictures}

\subsection{Setup}

Unless indicated otherwise, all figures presented in this paper were generated by discretizing the underlying dynamics (either \eqref{eq:trans_dyn} or \eqref{e:Rres}) using a fourth order Runge-Kutta scheme with a step size of $0.1$. All points in the initial sequence were drawn independently from the uniform distribution over the hypercube $[-5,5]^d$. Random matrices (e.g., $Q, K, V$) have entries drawn independently from the uniform distribution on $[-1, 1]$. 
Codes and animated plots of all examples may be found online at 
\begin{center}
     \href{https://github.com/borjanG/2023-transformers}{{\color{myblue}\texttt{https://github.com/borjanG/2023-transformers}}}.
\end{center}
We now present some experiments which motivate some conjectures and claims made in what precedes.

\subsection{Eigenvalues of {\sf ALBERT}'s value matrices} 

In Figure \ref{f:V.spectrum} we illustrate the eigenvalues of the value matrices $V_h$ for a couple of heads $h$ in a pre-trained {\sf ALBERT} model. We focus on {\sf ALBERT-xlarge-v2} available online at 
\begin{center}
\href{https://huggingface.co/albert-xlarge-v2}{\color{myblue}\texttt{https://huggingface.co/albert-xlarge-v2}}. 
\end{center}
This version uses $16$ heads, with sequences of length $n=256$ and tokens of dimension $d=128$. While not all value matrices $V_h$ per head $h\in[16]$ satisfy the assumptions made in Section \ref{s:simpleeigvalue}, we illustrate the eigenvalues of a couple of them which do.

\begin{figure}[h!]
    \centering
    \includegraphics[scale=0.45]{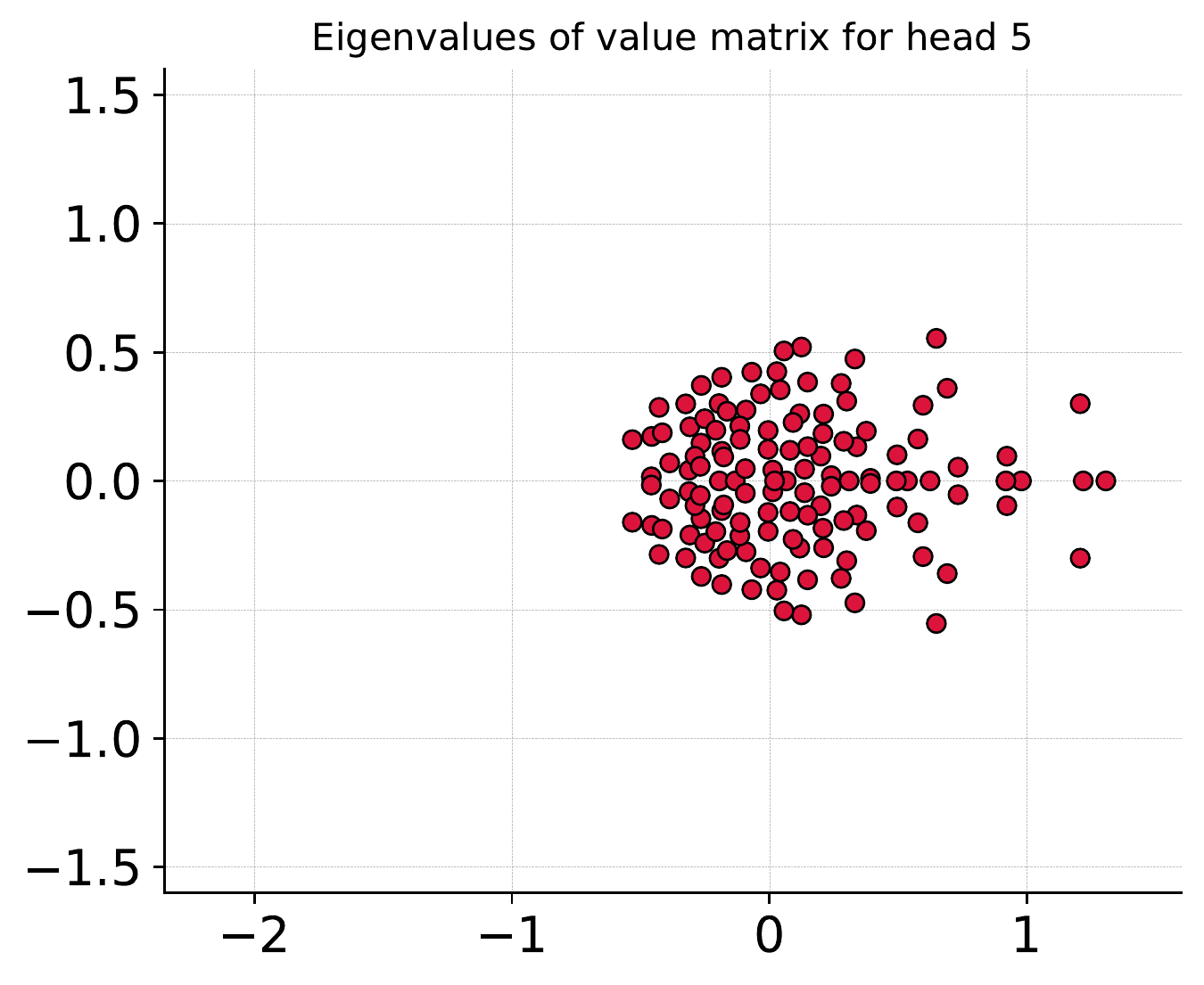}
    \includegraphics[scale=0.45]{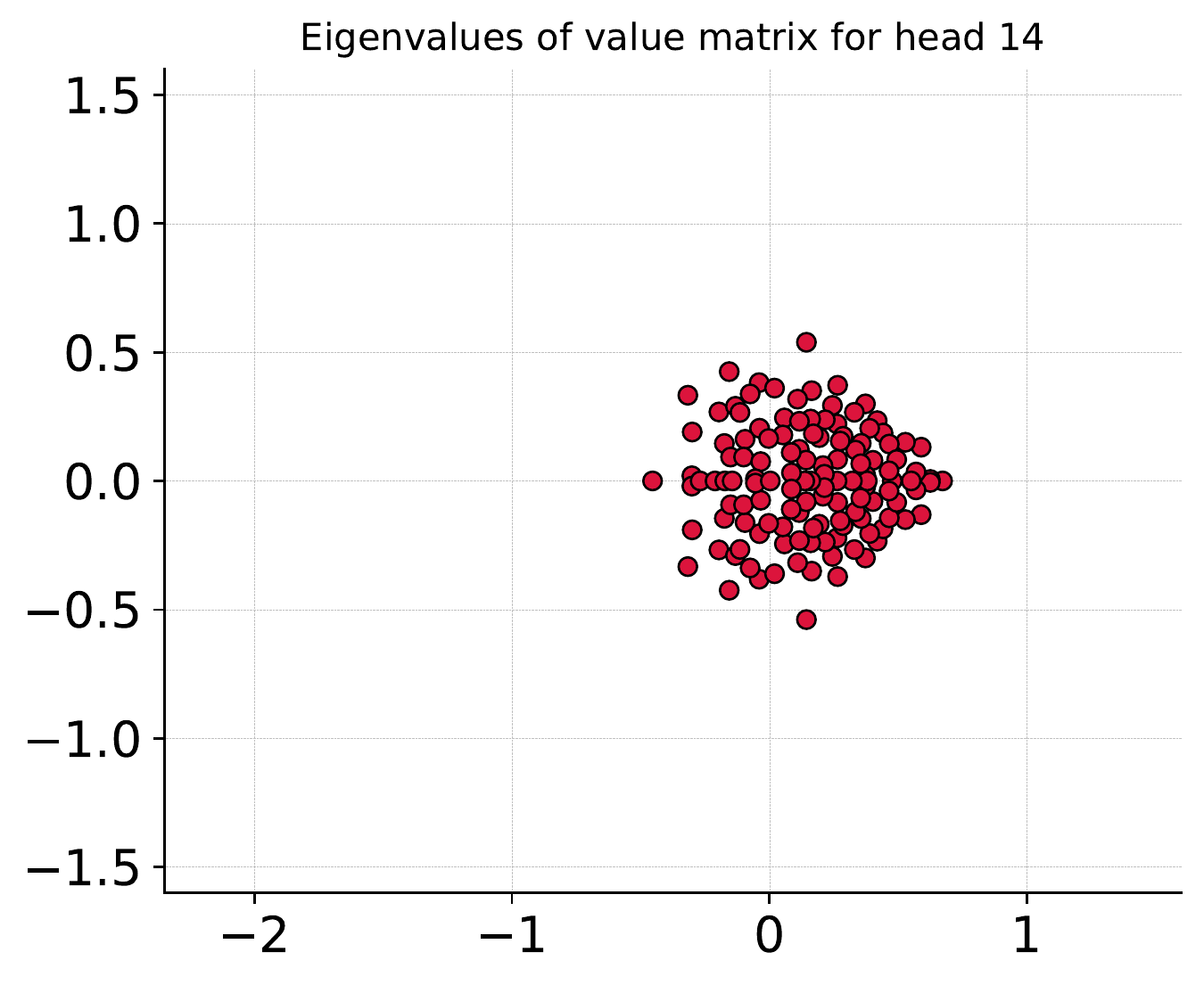}
    \caption{The eigenvalues of $V_5$ and $V_{14}$ in the pre-trained {\sf ALBERT} satisfy the eigenvalue assumption made in Definition \ref{d:good}. Furthermore, the second assumption made in Definition \ref{d:good} is satisfied by $(Q_5, K_5)$ and $(Q_{14}, K_{14})$ (the inner products evaluated along the eigenvector of norm $1$ equal $1.3060$ and $0.6719$ respectively). In other words, the triples $(Q_h, K_h, V_h)$ corresponding to heads $h=5$ and $h=14$ in {\sf ALBERT} satisfy all the assumptions made in the statement of Theorem \ref{l:3hyperplanes11}.}
    \label{f:V.spectrum}
\end{figure}

\subsection{Experiments related to Theorem~\ref{t:boolean}}

We begin with the setup of Theorem~\ref{t:boolean}, which we recall was proven to hold in the case $d=1$. Herein we present a couple of examples (Figures \ref{f:largenP} and \ref{f:attentionmatrix}) which elucidate the role that $d$ and $n$ appear to play in this fact.

Notably, as seen in Fig.~\ref{f:attention}, we believe that the conclusion of Theorem~\ref{t:boolean} could plausibly be extended to any $d>1$, assuming $V\succ0$.

\begin{figure}[h!]
    \centering
    \includegraphics[width=0.24\textwidth]{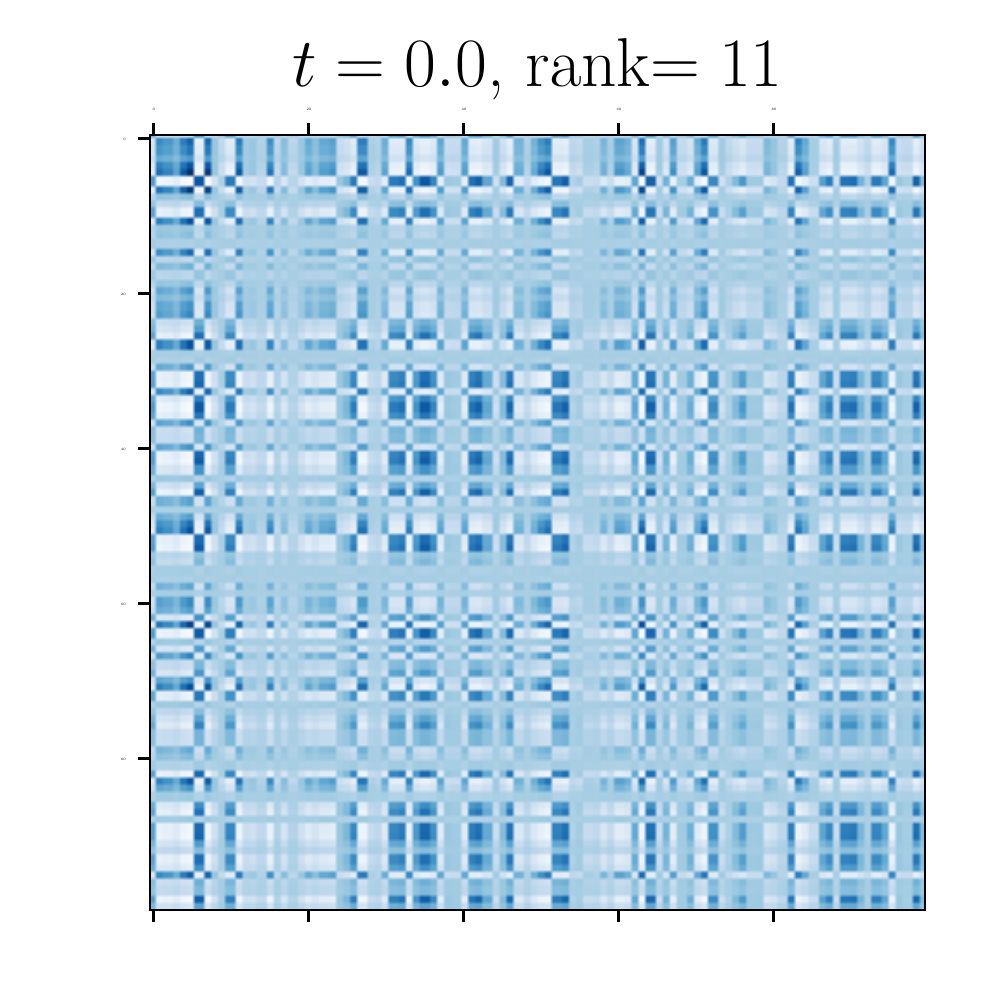}
     \includegraphics[width=0.24\textwidth]{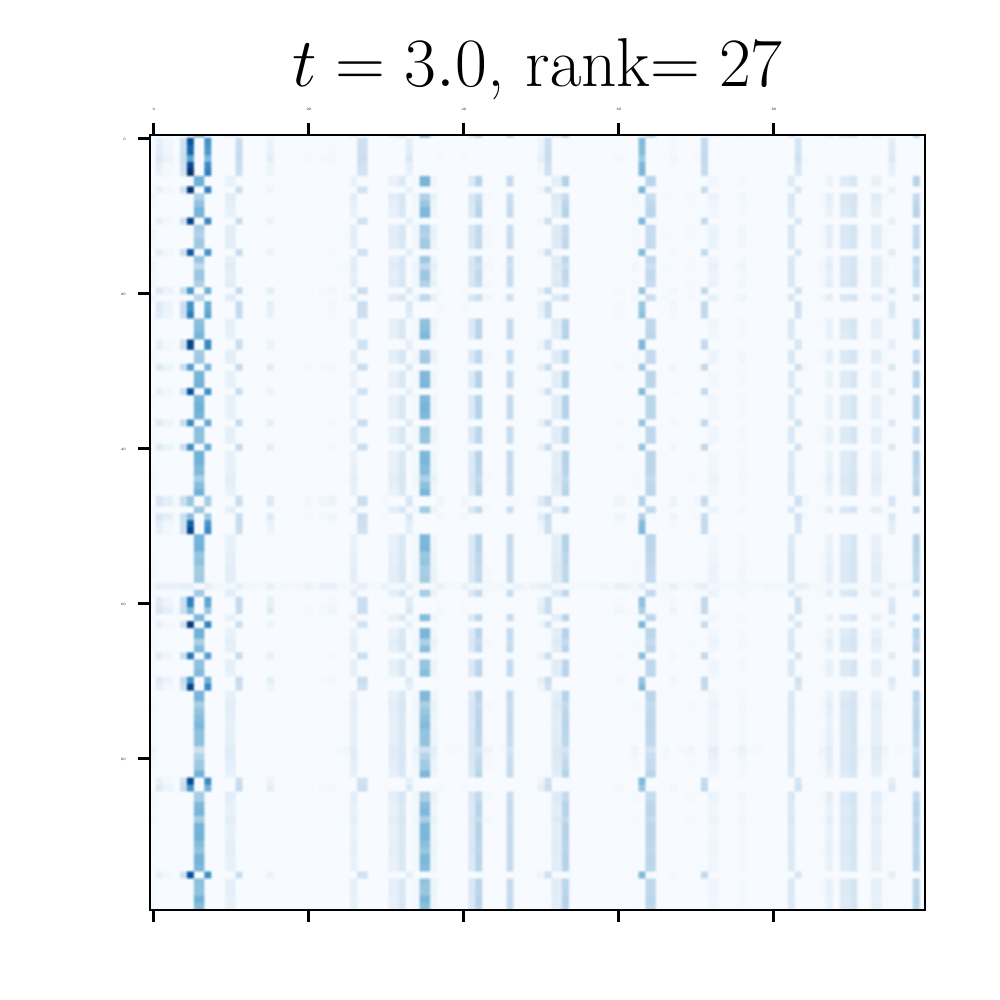}
      \includegraphics[width=0.24\textwidth]{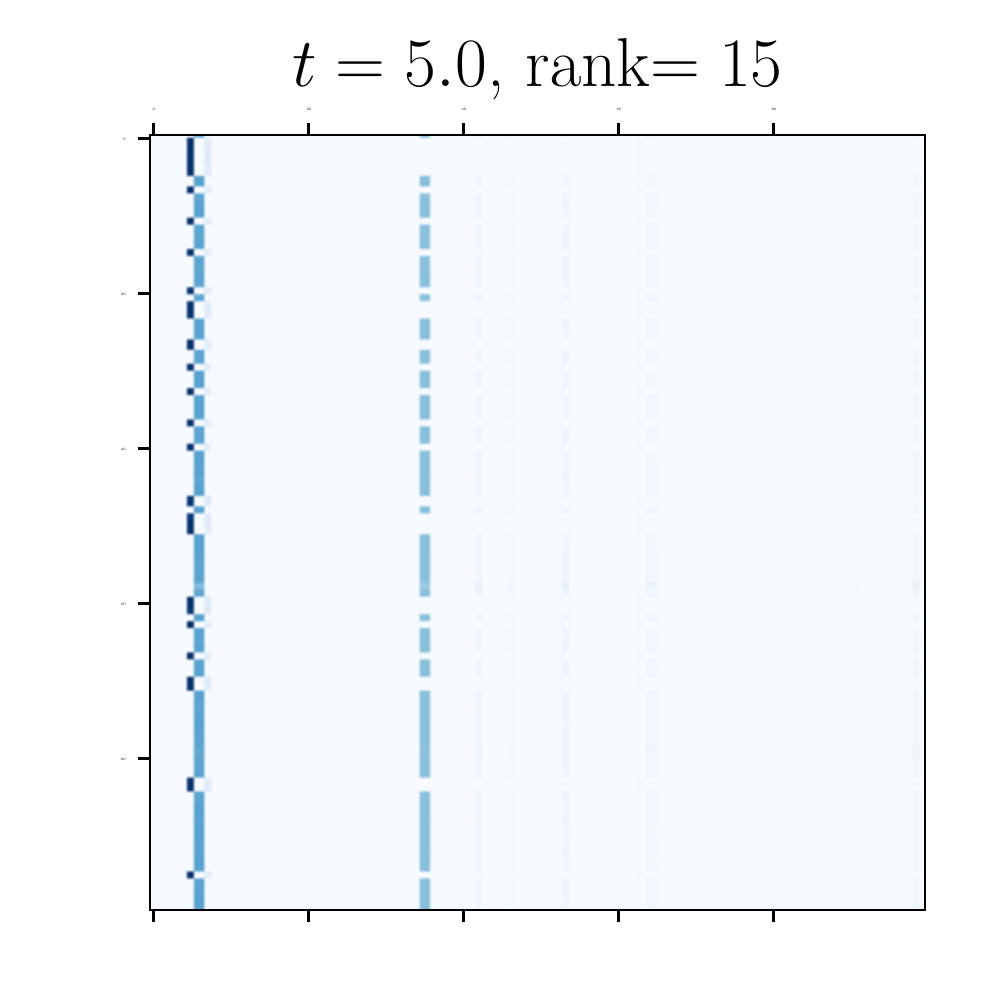}
       \includegraphics[width=0.24\textwidth]{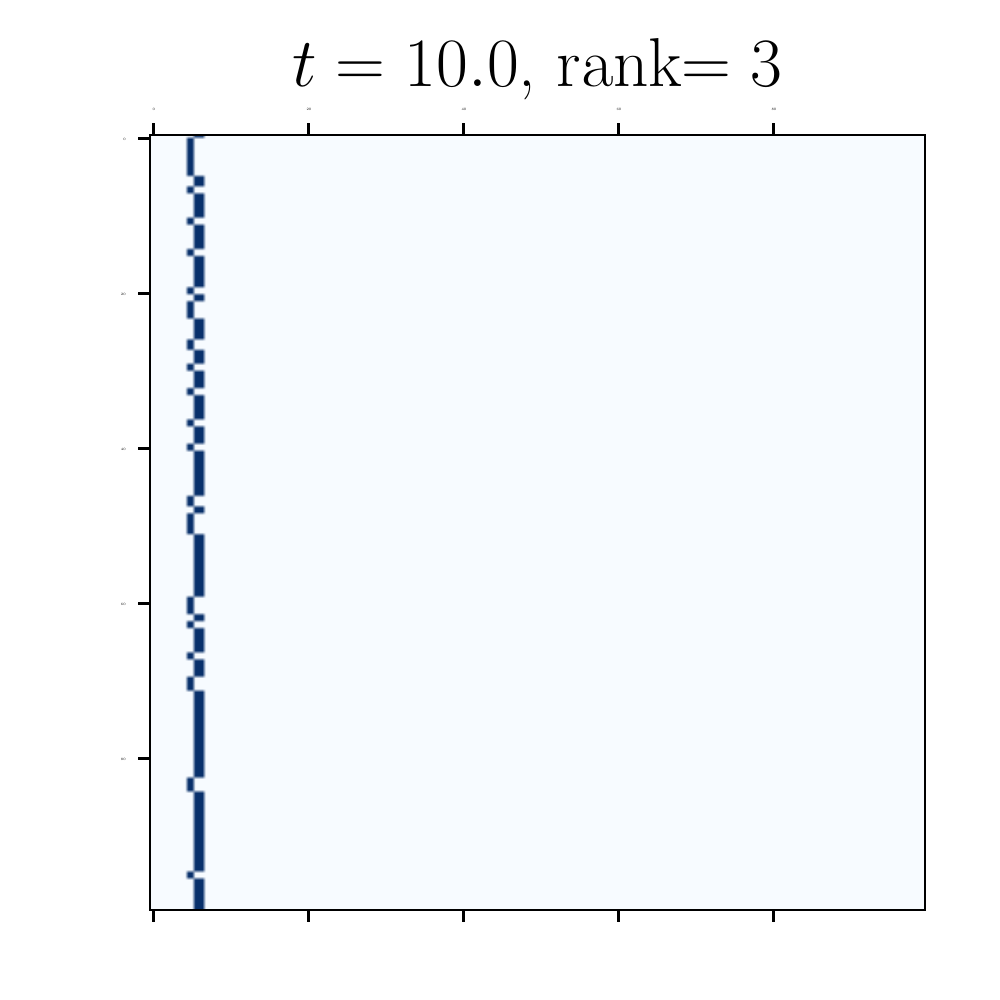}
    \caption{We expand on Fig.~\ref{f:attention1}---for the same setup, consider $n=100$. The sequence length $n$ does not appear to influence the rank of $P(t)$, which is expected since the rank of $P$ corresponds to the number of leaders.}
    \label{f:largenP}
\end{figure}

\begin{figure}[h!]
    \centering
    \includegraphics[width=0.24\textwidth]{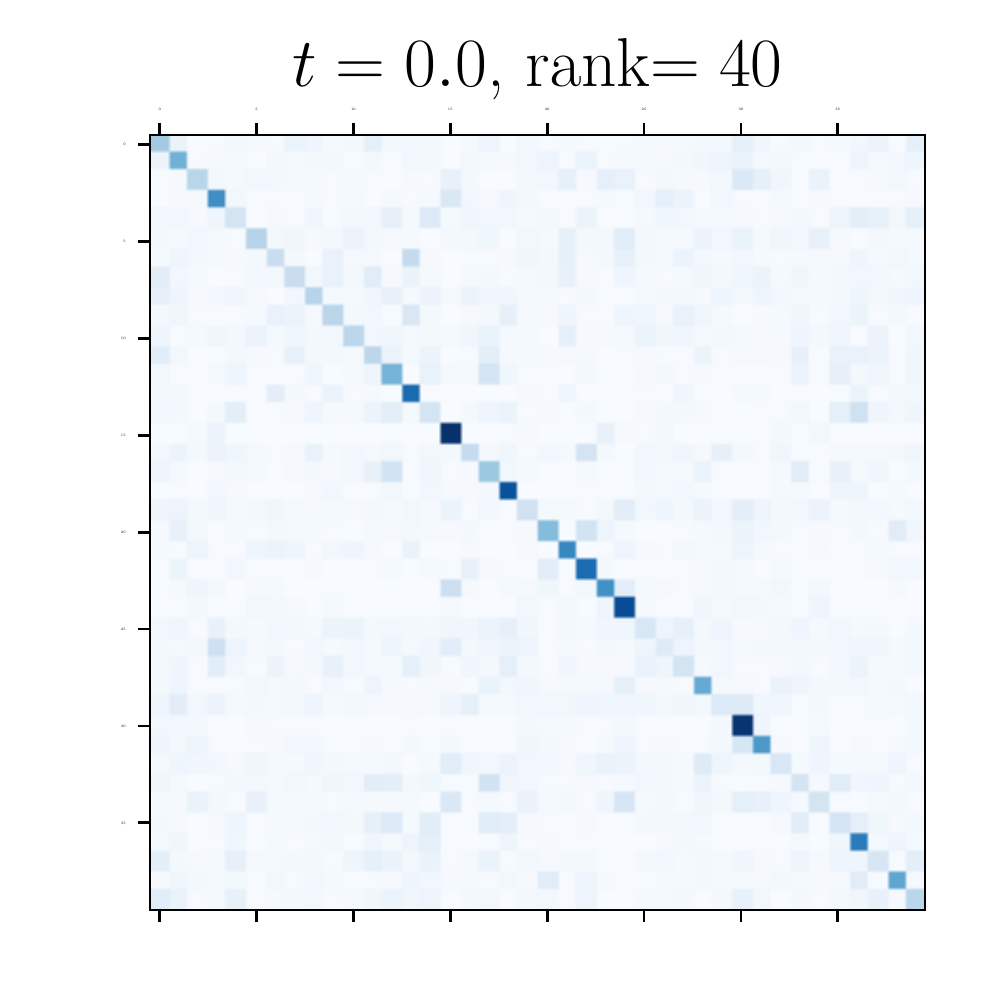}
    \includegraphics[width=0.24\textwidth]{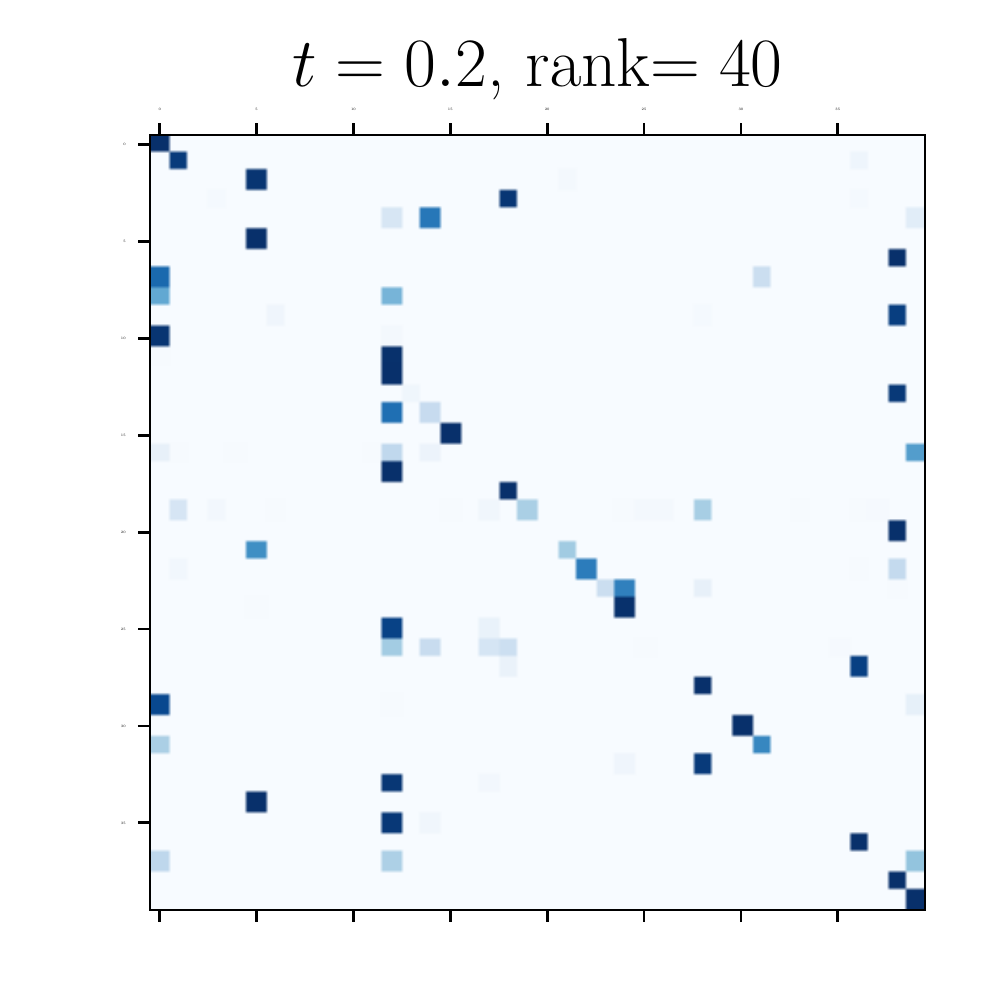}
    \includegraphics[width=0.24\textwidth]{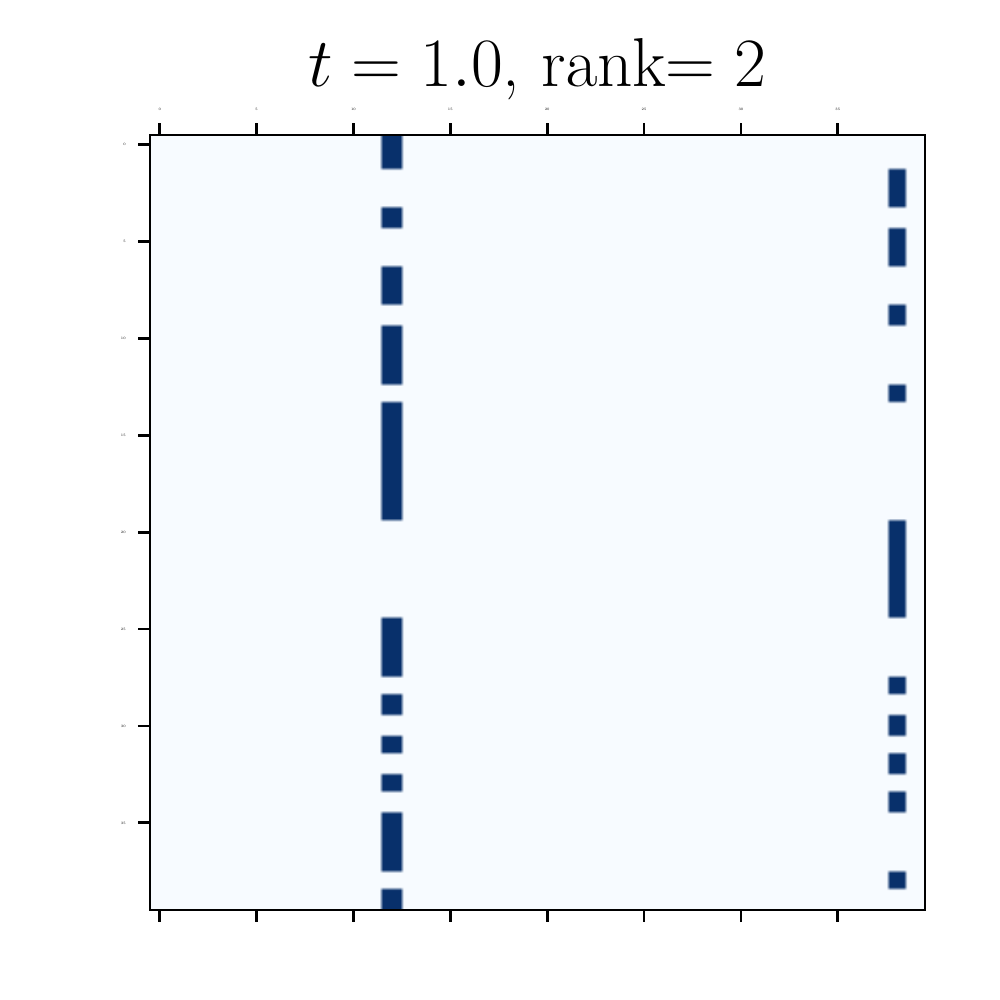}
    \includegraphics[width=0.24\textwidth]{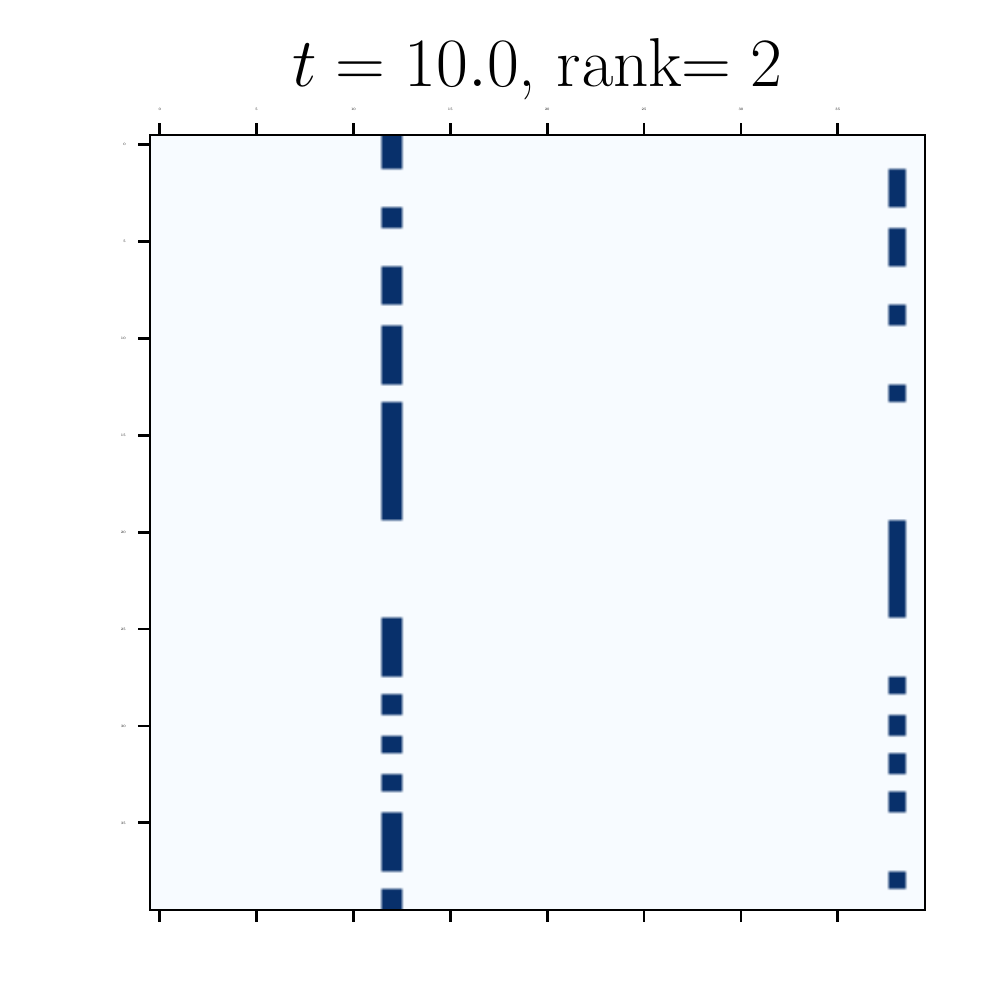}
    \includegraphics[width=0.24\textwidth]{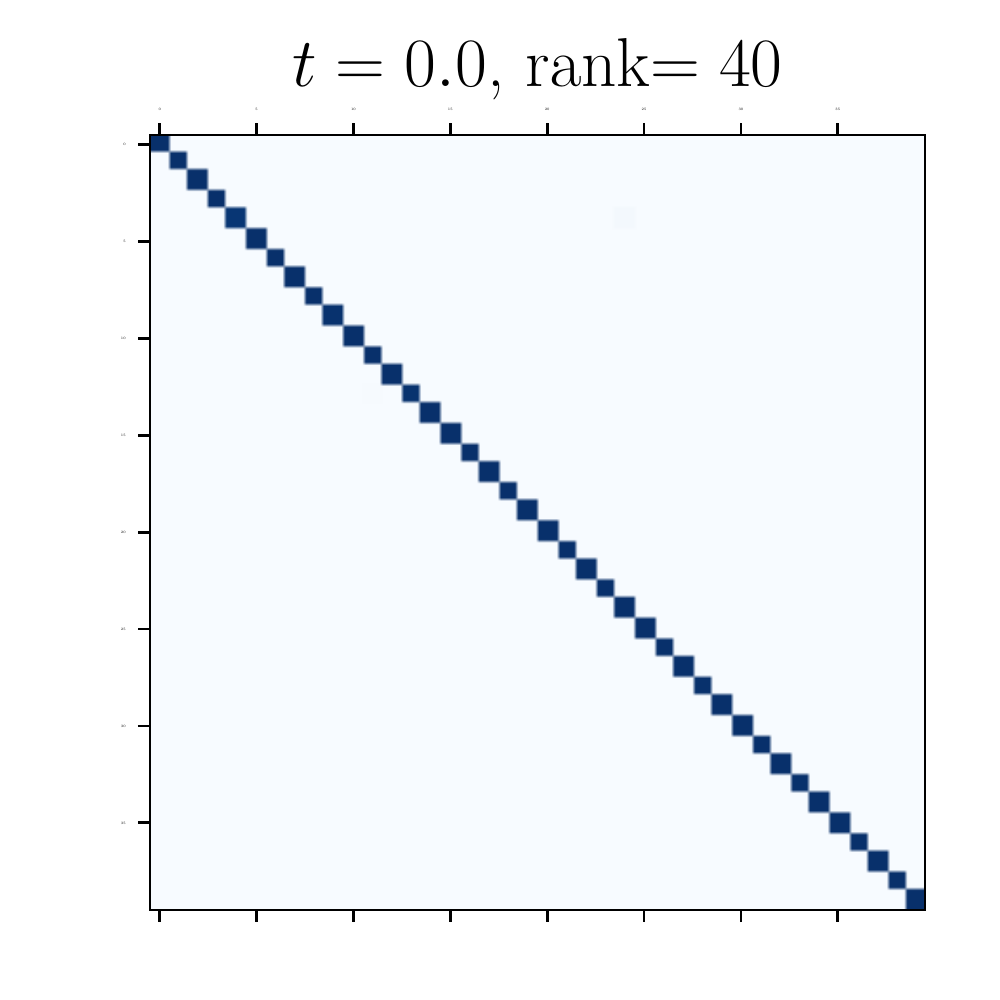}
    \includegraphics[width=0.24\textwidth]{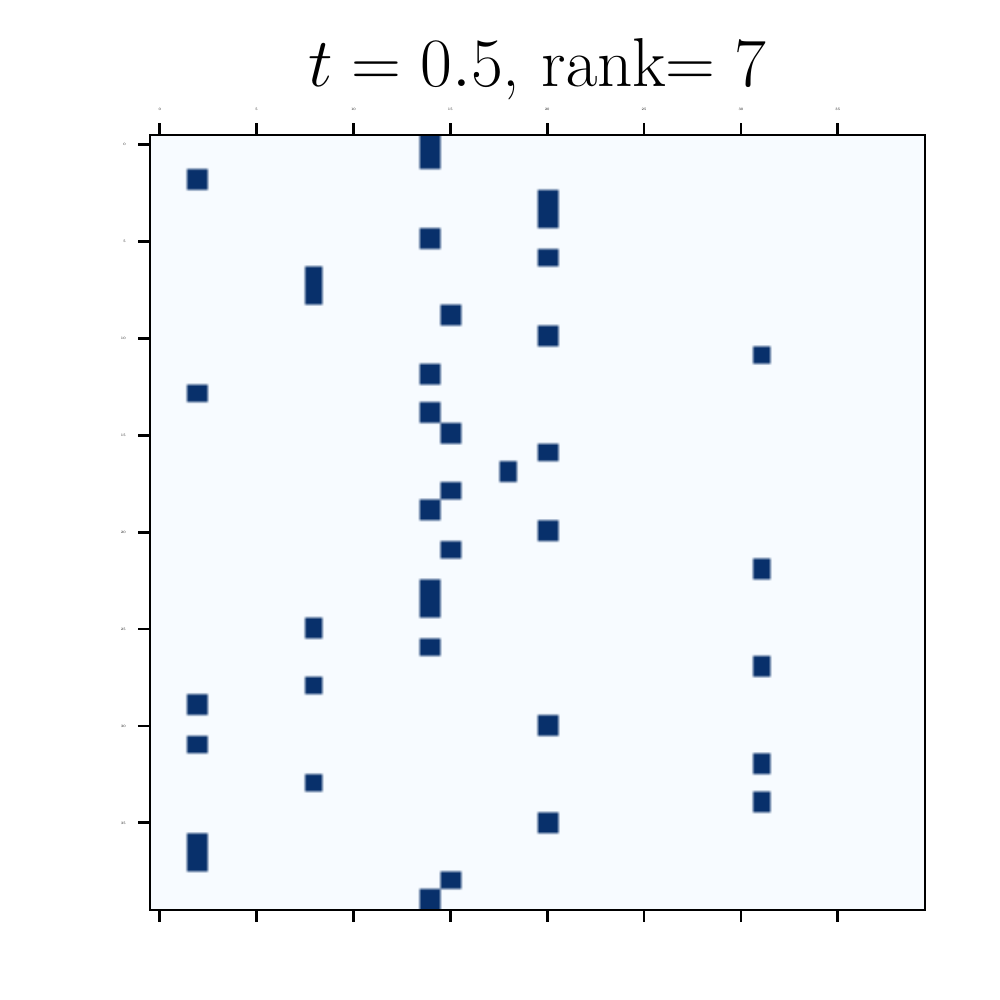}
    \includegraphics[width=0.24\textwidth]{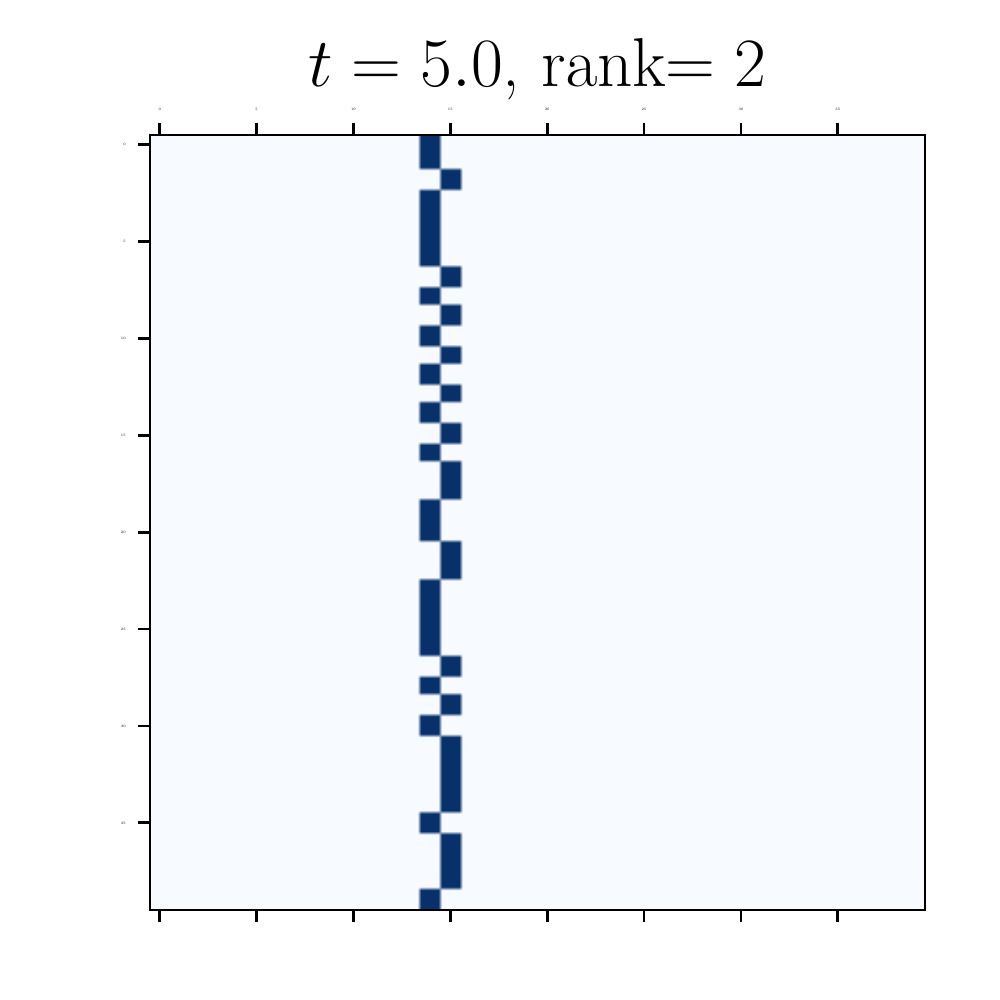}
    \includegraphics[width=0.24\textwidth]{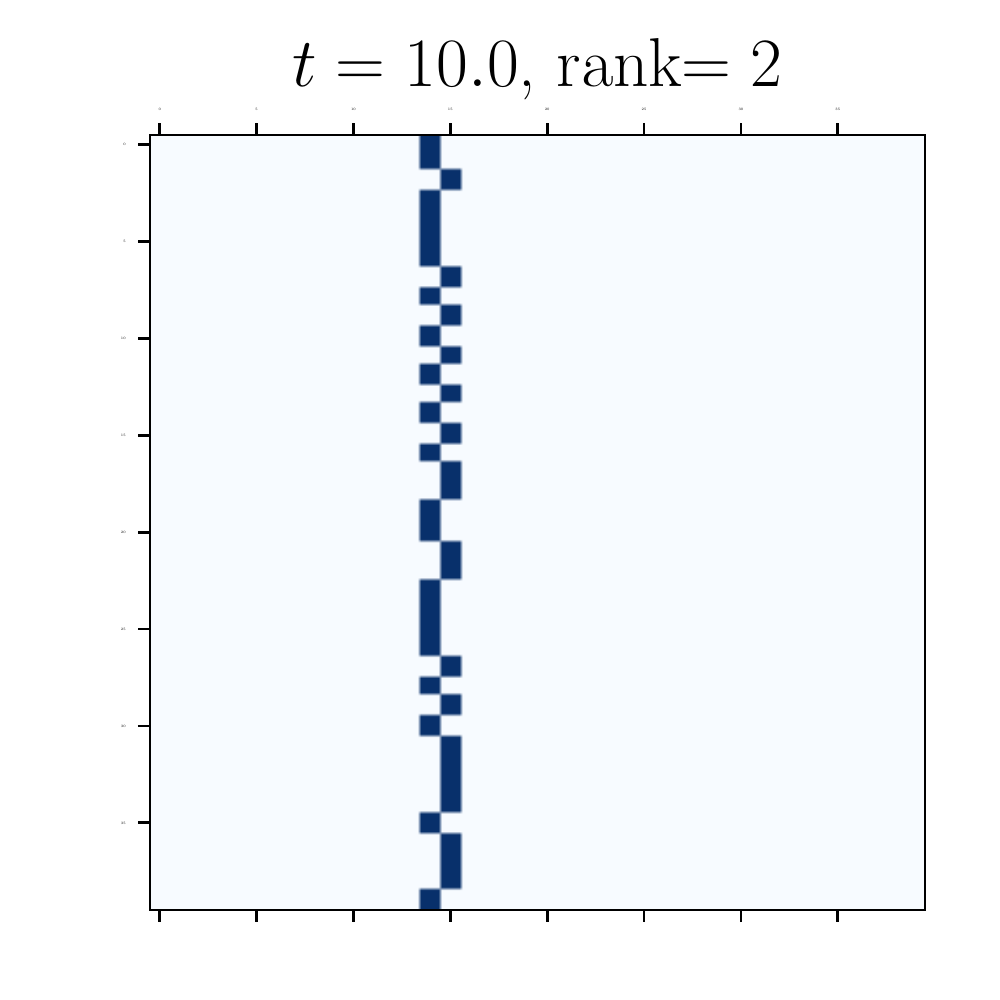}
    \includegraphics[width=0.24\textwidth]{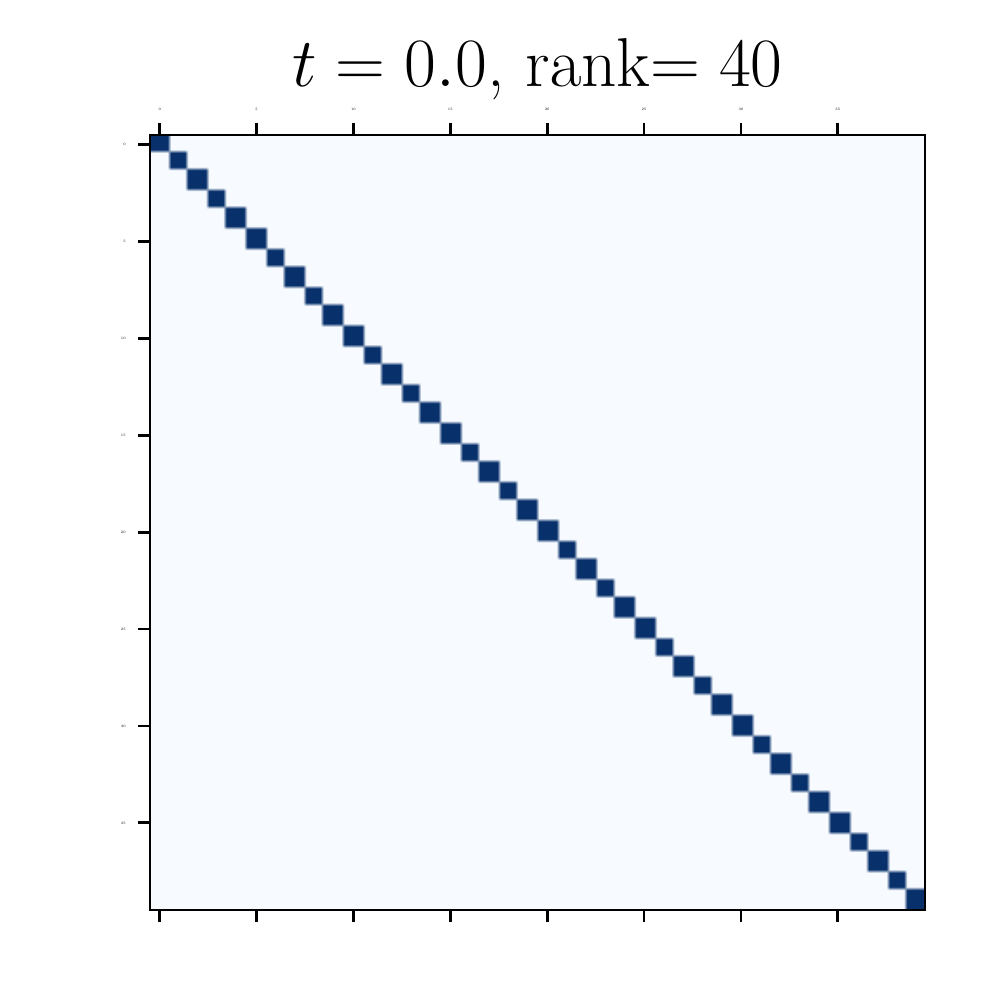}
    \includegraphics[width=0.24\textwidth]{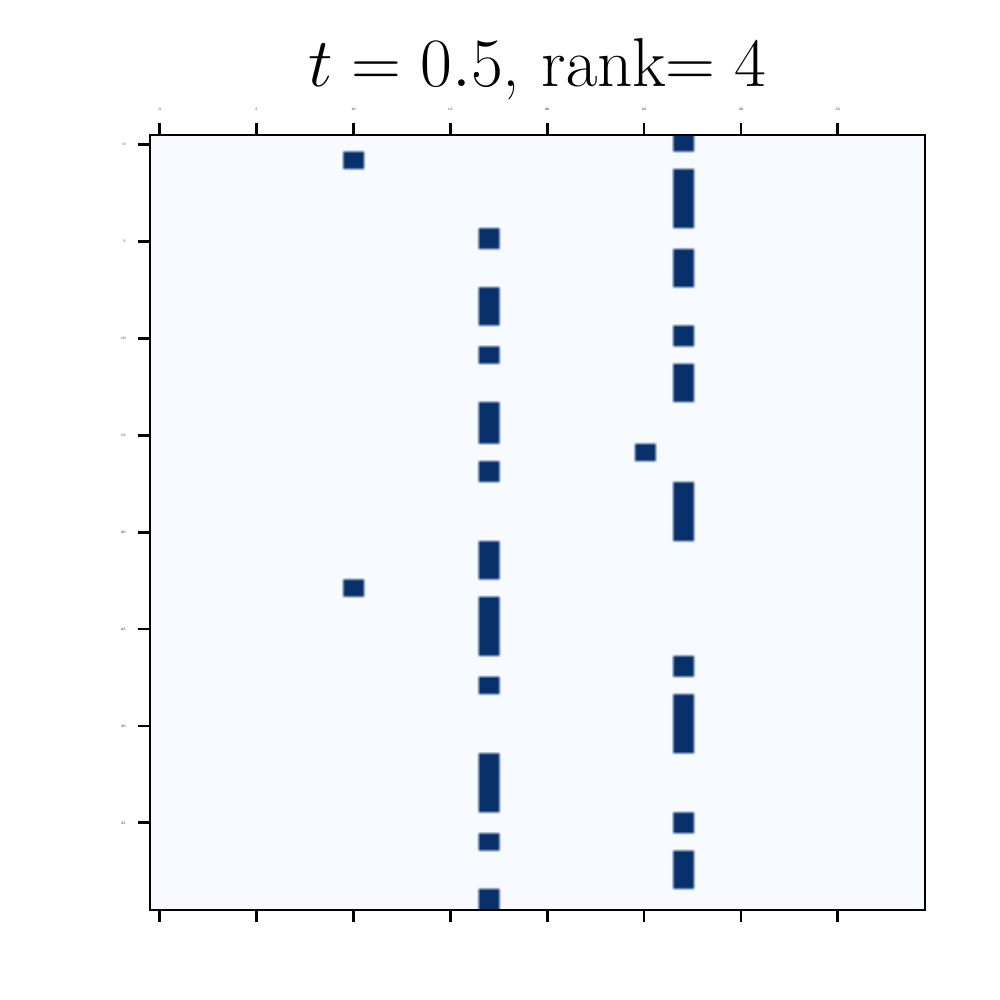}
    \includegraphics[width=0.24\textwidth]{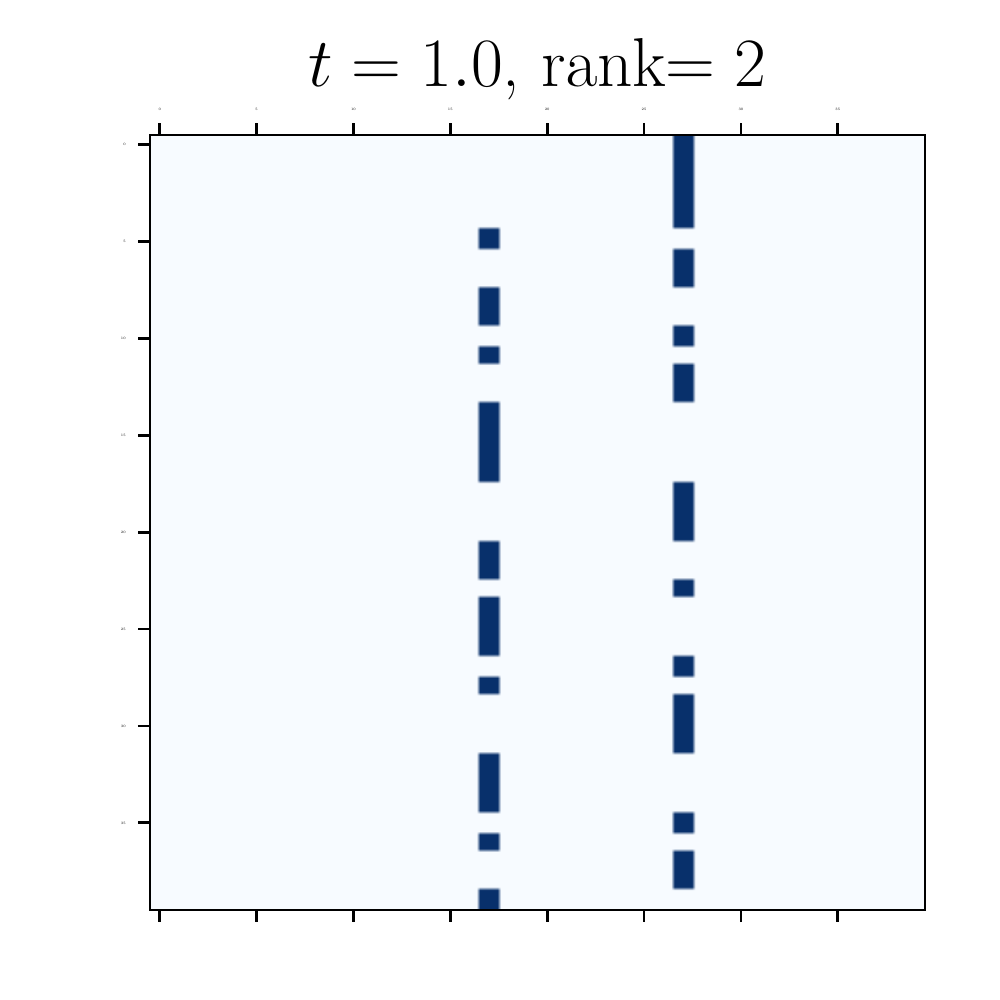}
    \includegraphics[width=0.24\textwidth]{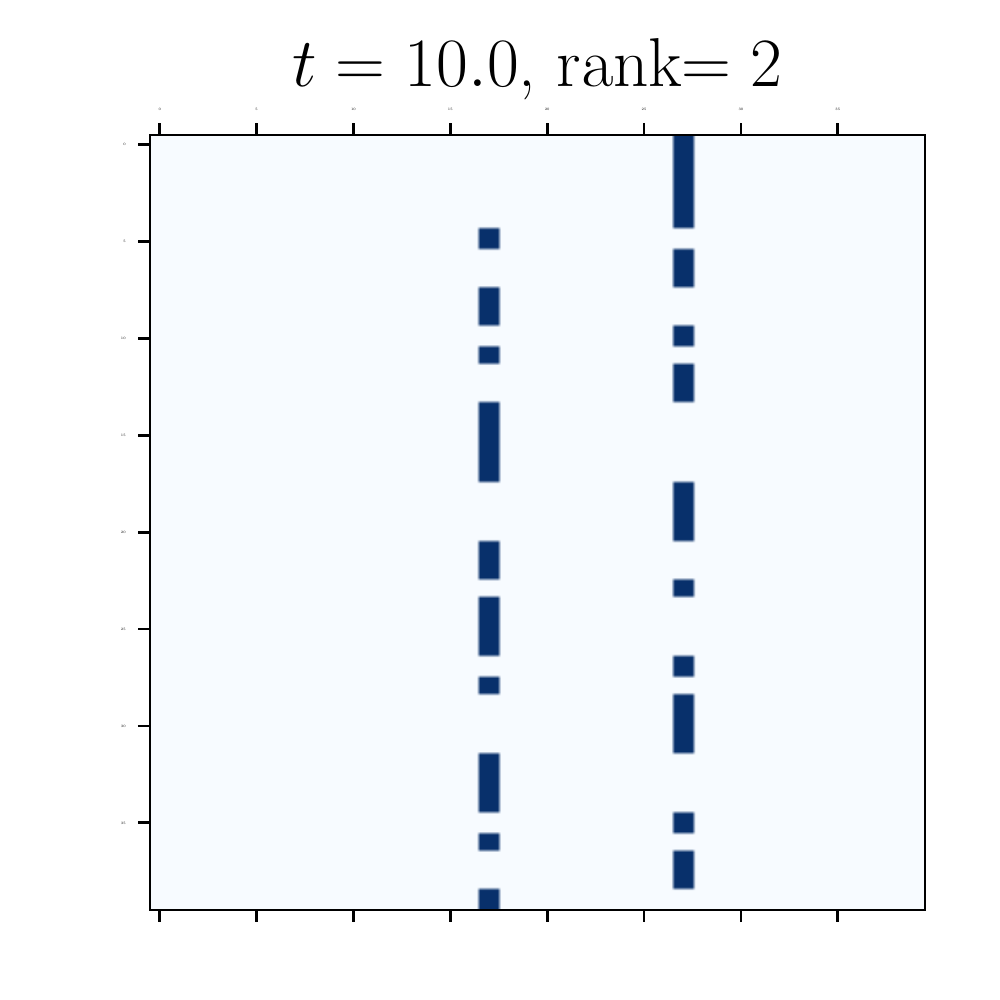}
    \caption{We consider $n=40$, $Q=K=I_d$ and a random matrix $V\succ0$ in dimensions $d=10$ (first row), $d=40$ (second row), and $d=80$ (third row). The conclusion of Theorem~\ref{t:boolean} appears to transfer to the higher dimensional case, and this would actually follow from Conjecture \ref{c:codim} (should it hold).}
    \label{f:attentionmatrix}
\end{figure}

\subsection{Illustrating Theorem~\ref{l:3hyperplanes11} in $\mathbb{R}^3$}

To precisely illustrate the appearance of at most three hyperplanes in the setting of Theorem~\ref{l:3hyperplanes11}, we gave an example in $\mathbb{R}^2$. We expand on this and provide a couple of toy examples in $\mathbb{R}^3$ for the purpose of visualization (we recall that these are toy models, as Transformers in practice are high-dimensional), and namely focus in both examples on the case where the two latter eigenvalues are complex. 
In Fig.~\ref{f:planets}, we see the effect of having eigenvalues with a negative real part, and the complementary case is illustrated in Fig.~\ref{f:point}.

\begin{figure}[h!]
    \centering
    \includegraphics[width=0.32\textwidth]{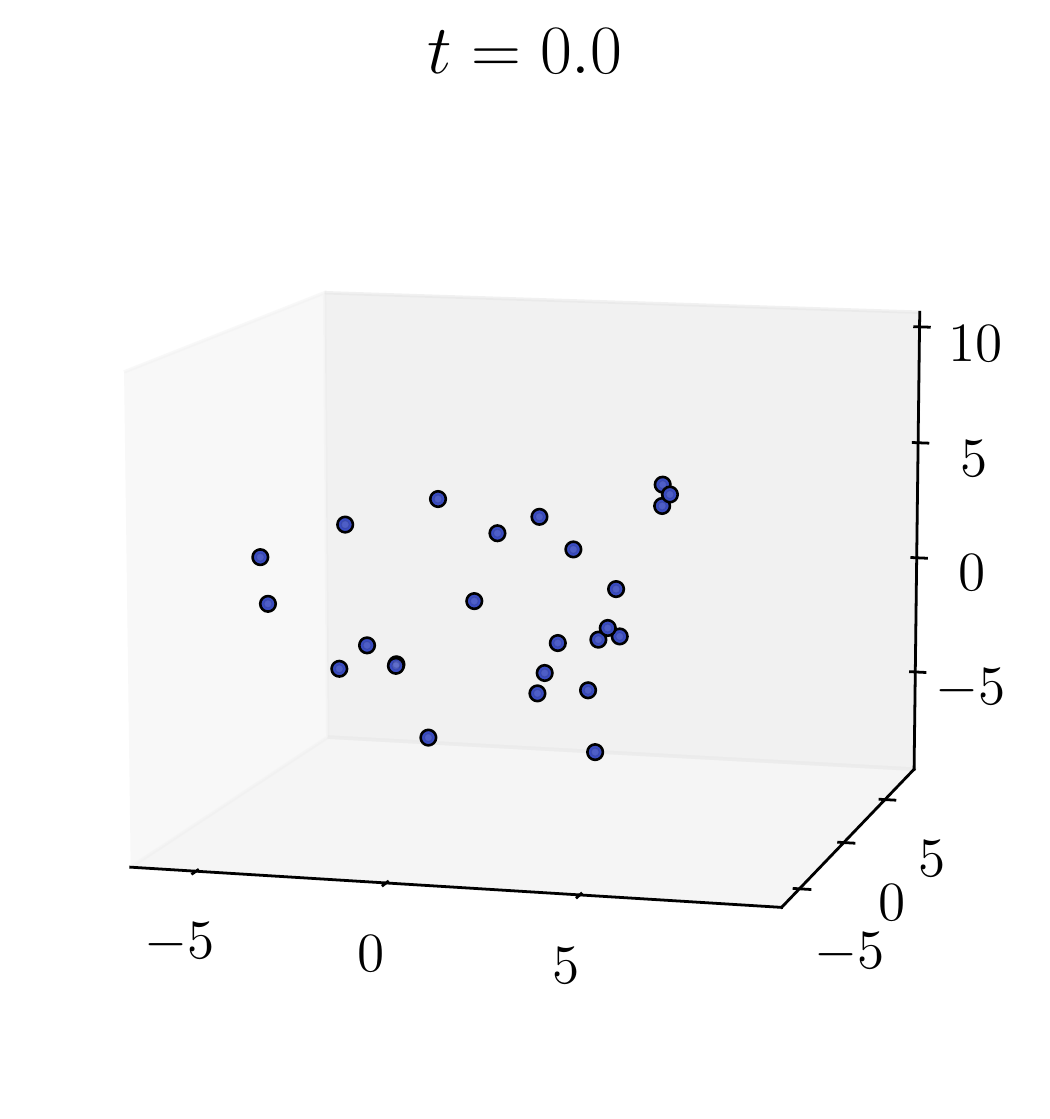}
    \includegraphics[width=0.32\textwidth]{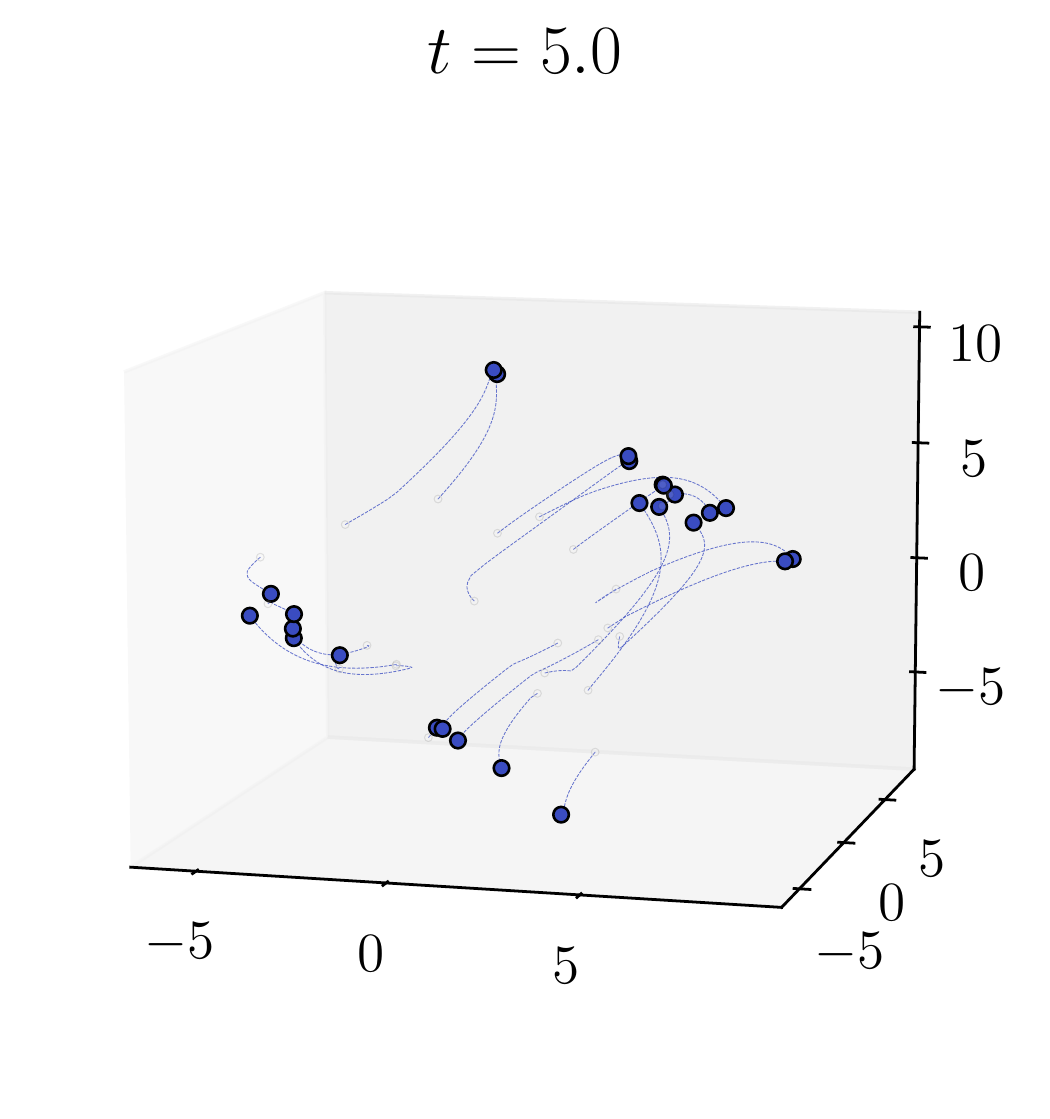}
    \includegraphics[width=0.32\textwidth]{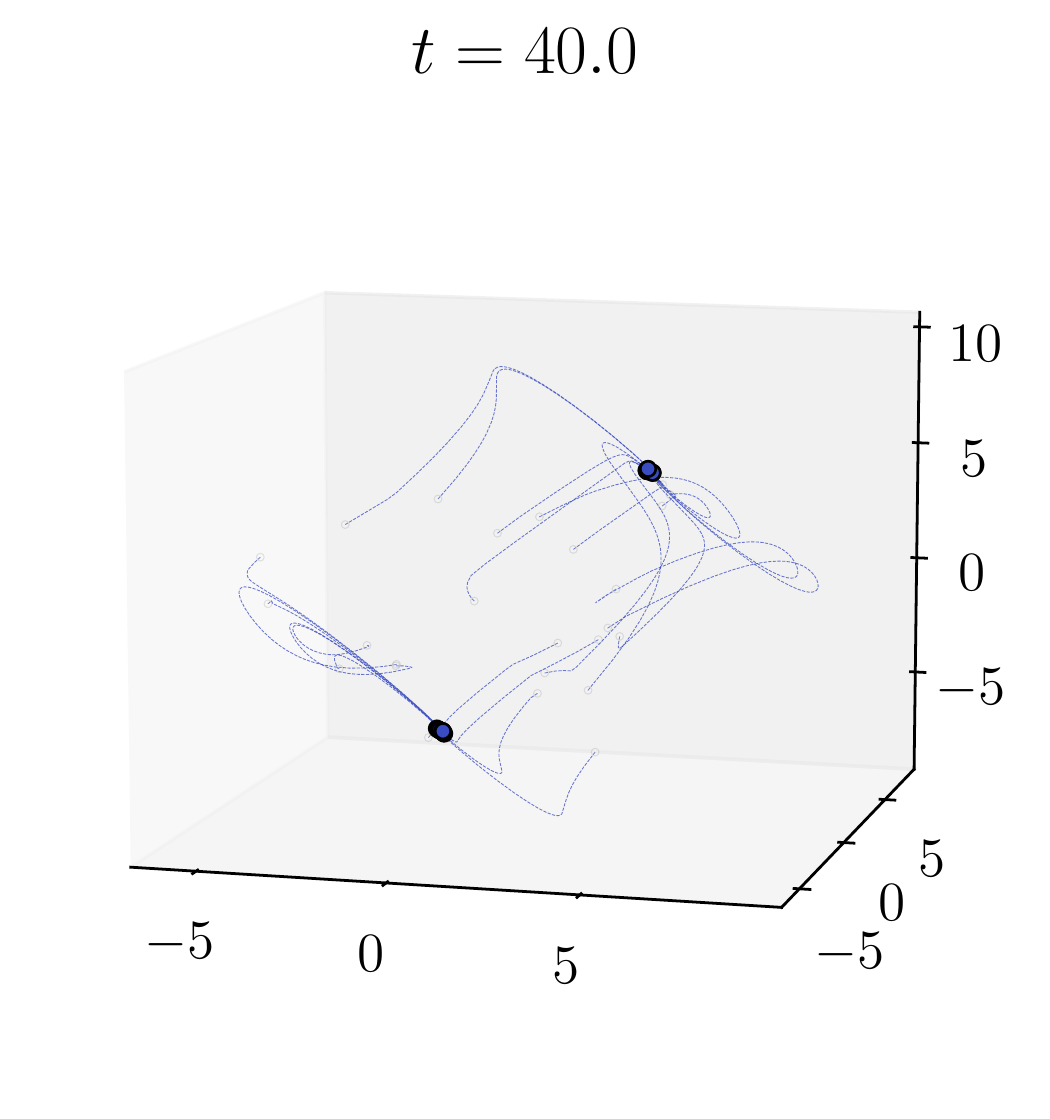}
    \caption{We consider $n=25$, $Q=K=I_d$, and $V$ a random matrix with positive entries and eigenvalues
    $\{1, 0.1+0.08i, 1-0.08i\}$.
    The pair of complex eigenvalues have a positive real part. We not only see convergence to one of two hyperplanes determined by the direction 
    $\varphi_1=(0.38, 0.8, 0.47)$,
    but in fact, the particles appear to collapse to two points. In other words, the "hyperplanes" are of codimension $3$, which is in line with Conjecture \ref{c:codim}.}
    \label{f:point}
\end{figure}

\begin{figure}[h!]
    \centering
    \includegraphics[width=0.32\textwidth]{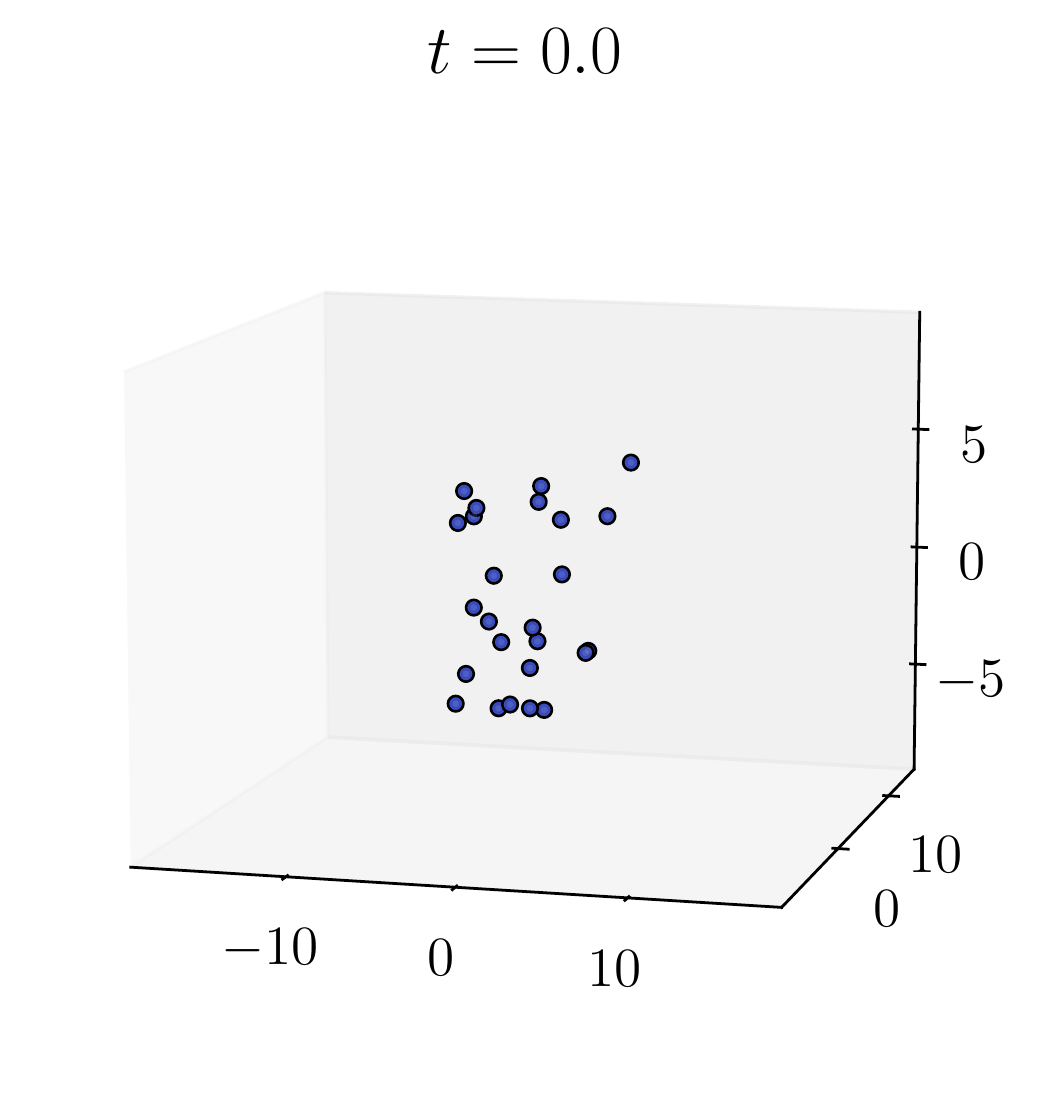}
    \includegraphics[width=0.32\textwidth]{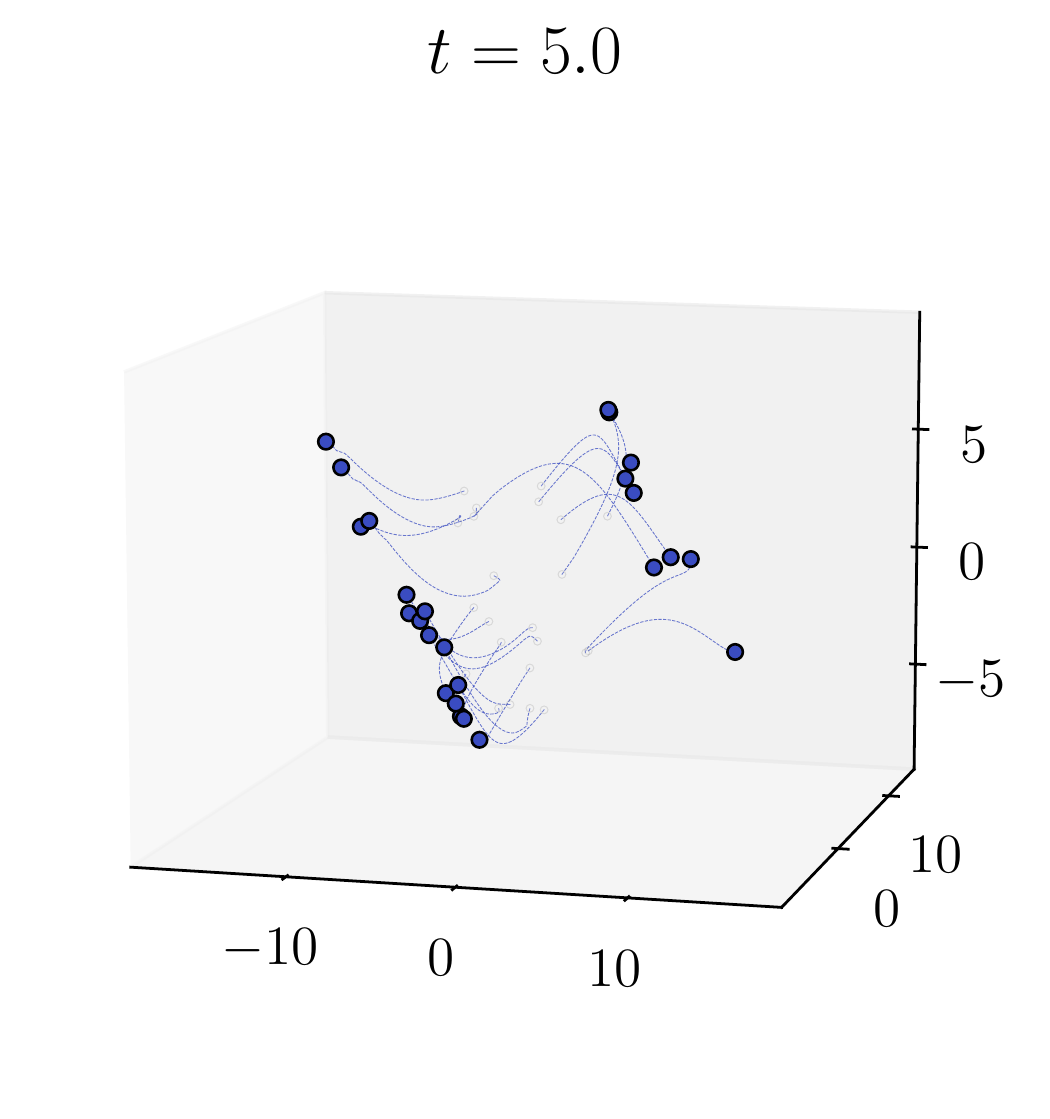}
    \includegraphics[width=0.32\textwidth]{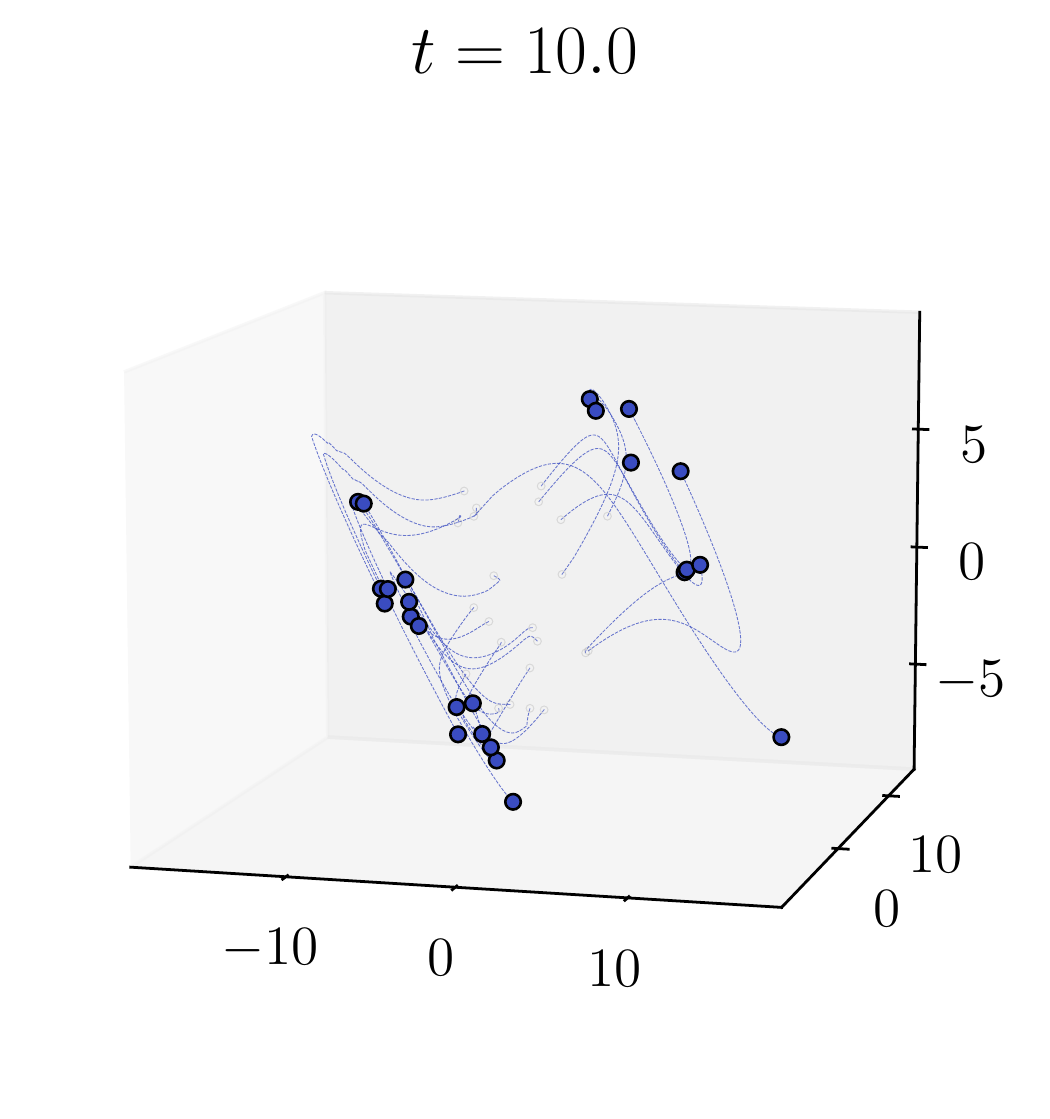}
    \includegraphics[width=0.32\textwidth]{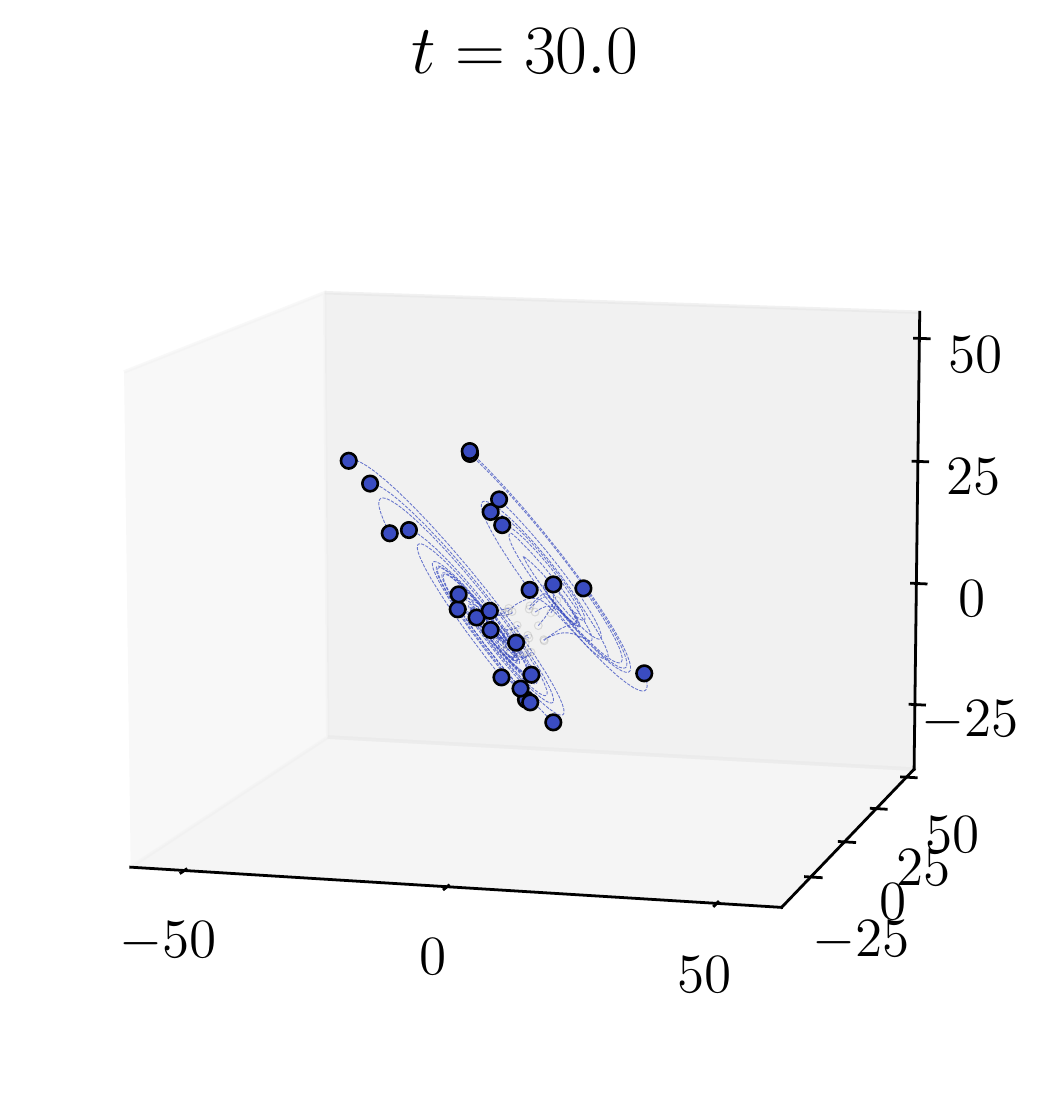}
    \includegraphics[width=0.32\textwidth]{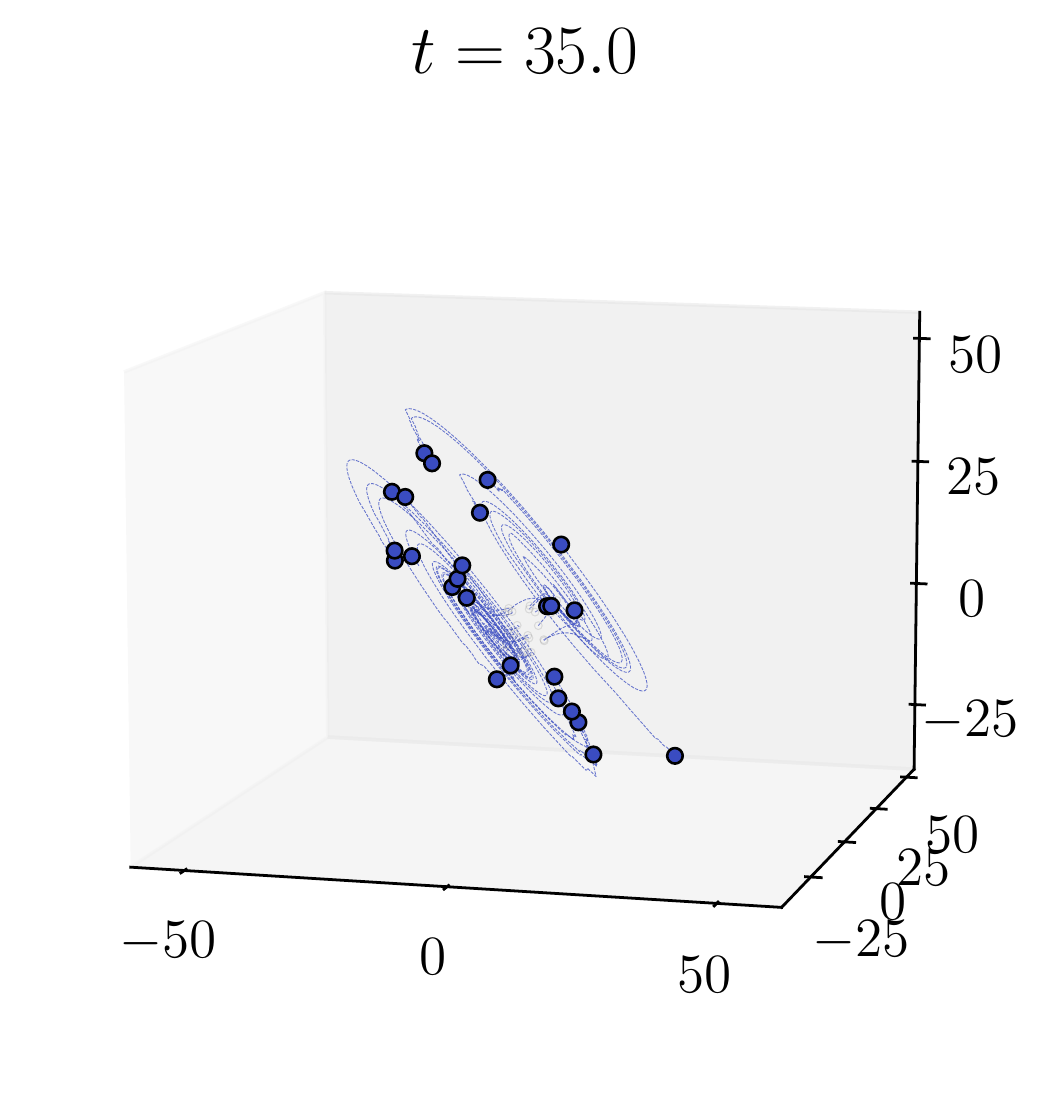}
    \includegraphics[width=0.32\textwidth]{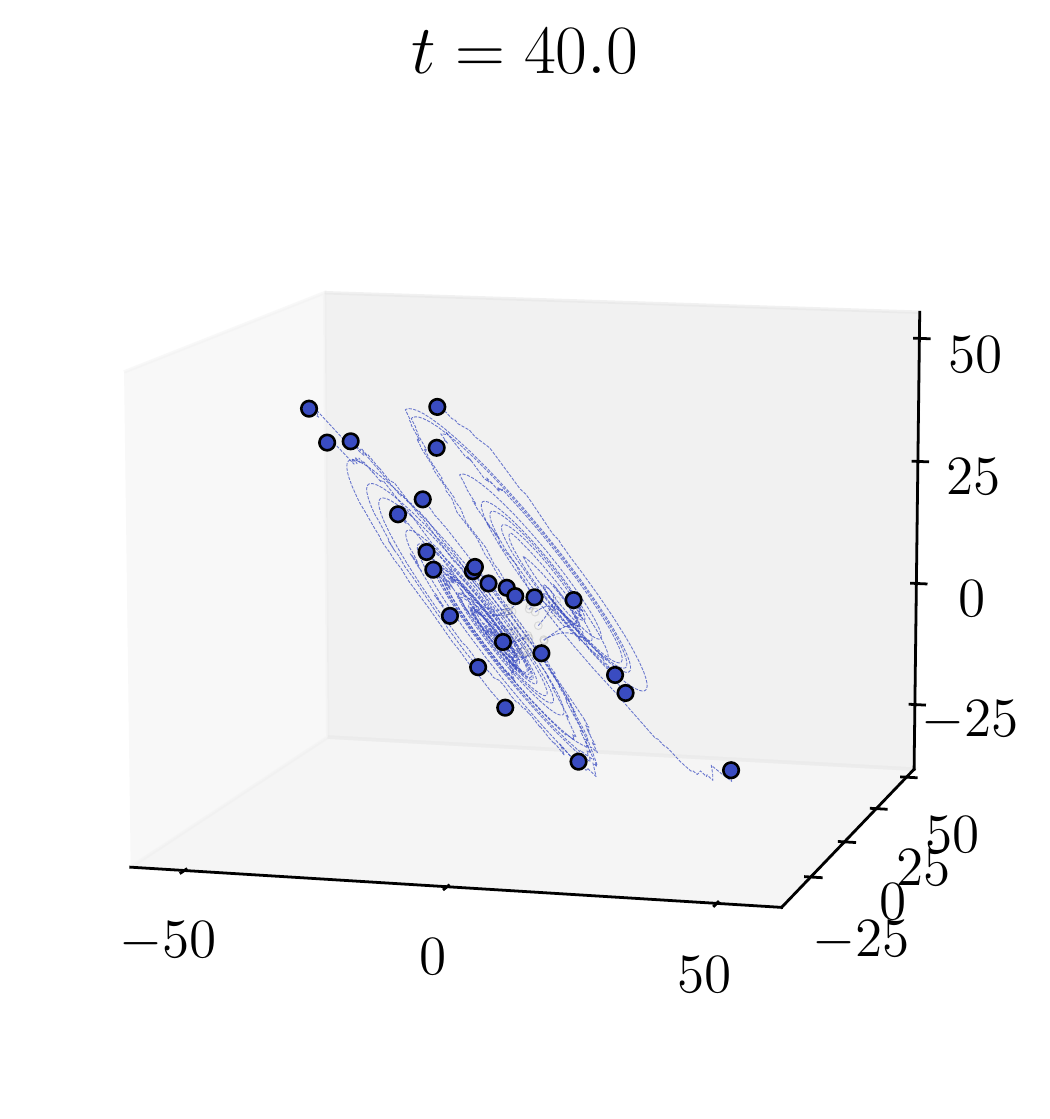}
    \caption{We consider $n=25$, $Q=K=I_d$, and $V$ a random matrix with positive entries and eigenvalues
    $\{1,-0.05+0.25i, -0.05-0.25i\}$.
    The pair of complex eigenvalues have a negative real part, which entails the rotation of the particles. We see that the particles rotate within a couple of $2$-dimensional hyperplanes determined by 
    $\varphi_1=(-0.3, -0.8, -0.45)$,
    as implied by Theorem~\ref{l:3hyperplanes11}.}
    \label{f:planets}
\end{figure}

\subsection{Complementing Figure \ref{fig:con1}}

In Figure \ref{fig:con1}, we illustrate the appearance of clustering in high-dimension (the {\sf ALBERT} setup: $n=256$ and $d=128$) for generic random matrices $(Q, K, V)$. The value matrix $V$ in question has $65$ positive eigenvalues, and we show the conjectured convergence of the $65$ coordinates along the corresponding eigenvectors to one of possibly $3$ (generically $2$) real scalars. In Figure \ref{f:oscillations}, we complement this illustration by showing the possible oscillatory and divergent behavior of the remaining coordinates.

\begin{figure}[h!]
    \centering
    \includegraphics[scale=0.65]{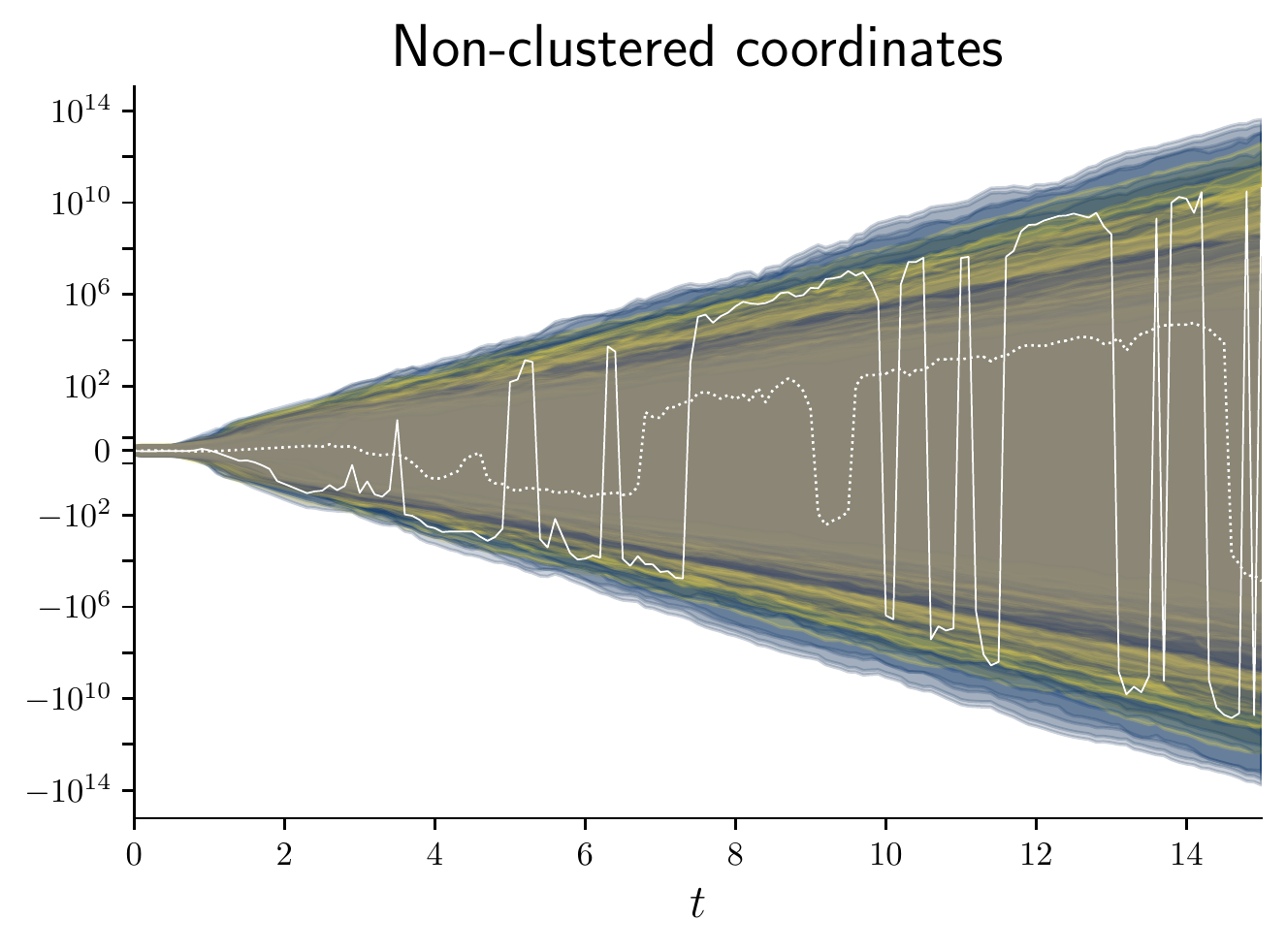}
    \caption{We complement Figure \ref{fig:con1} and plot the variance of the set $\{\varphi_j^*(z_i(t))\colon i\in[n]\}$ of all coordinates $j$ corresponding to negative eigenvalues of $V$. We also show the mean along tokens of a couple of coordinates (white lines). Coordinates diverge rapidly to $\pm\infty$ over time $t$;  $y$-axis is in $\log$ scale.}
    \label{f:oscillations}
\end{figure}

\part{Discussion and open questions}

\section{Outlook} \label{sec:conclusion}

Several important directions regarding the mathematical theory of Transformers remain unexplored. 
An important extension of our work would amount to studying \emph{multi-headed} Transformers---borrowing the notation from Remark \ref{r:discreterescaling}, they amount to:
\begin{equation*} 
x_i^{[k+1]} = x_i^{[k]}+\Delta t\sum_{h=1}^H\sum_{j=1}^n \left(\frac{e^{\langle Q_hx_i^{[k]}, K_hx_j^{[k]}\rangle}}{\sum_{\ell=1}^n e^{\langle Q_hx_i^{[k]}, K_hx_\ell(k)\rangle}}\right)V_hx_j^{[k]}, \hspace{1cm} k\in\mathbb{N}.
\end{equation*}
For each $h\in[H]$ (corresponding to a different \emph{head}), the weight matrices $Q_h,K_h,V_h$ are constant. Proofs regarding clustering or convergence of the self-attention matrix for such dynamics is an open problem. 
Preliminary numerical investigations seem to indicate that interesting clustering phenomena also occur in this context.
A characterization or properties of optimal weights by invoking the optimal control correspondence in the spirit of \cite{weinan2017proposal} is also an interesting avenue for future research. 

We hereby list a couple of additional numerical experiments suggesting generalizations of our results, which we leave as open problems.

\subsection{Beyond $Q^\top K\succ 0$ in Theorems \ref{t:Idcase11int} and \ref{t:multiplicity}} \label{sec: first.new}

As seen throughout all the presented proofs, assumptions on the value matrix $V$ are significantly more rigid than assumptions on the matrices $Q$ and $K$. For instance, should the eigenvalue $\lambda$ with the largest real part of $V$ be negative, all rescaled tokens will diverge to infinity. Should $\lambda$ be complex, we do not expect any clustering to occur (for the rescaled tokens). Yet, none of the conclusions of Theorems \ref{t:Idcase11int}
or \ref{t:multiplicity} seem to change for generic choices of $Q^\top K$. This is illustrated in Figures \ref{f: first.new} and \ref{f: second.new} respectively. 

\begin{figure}[h!]
    \centering
    \includegraphics[width=0.45\textwidth]{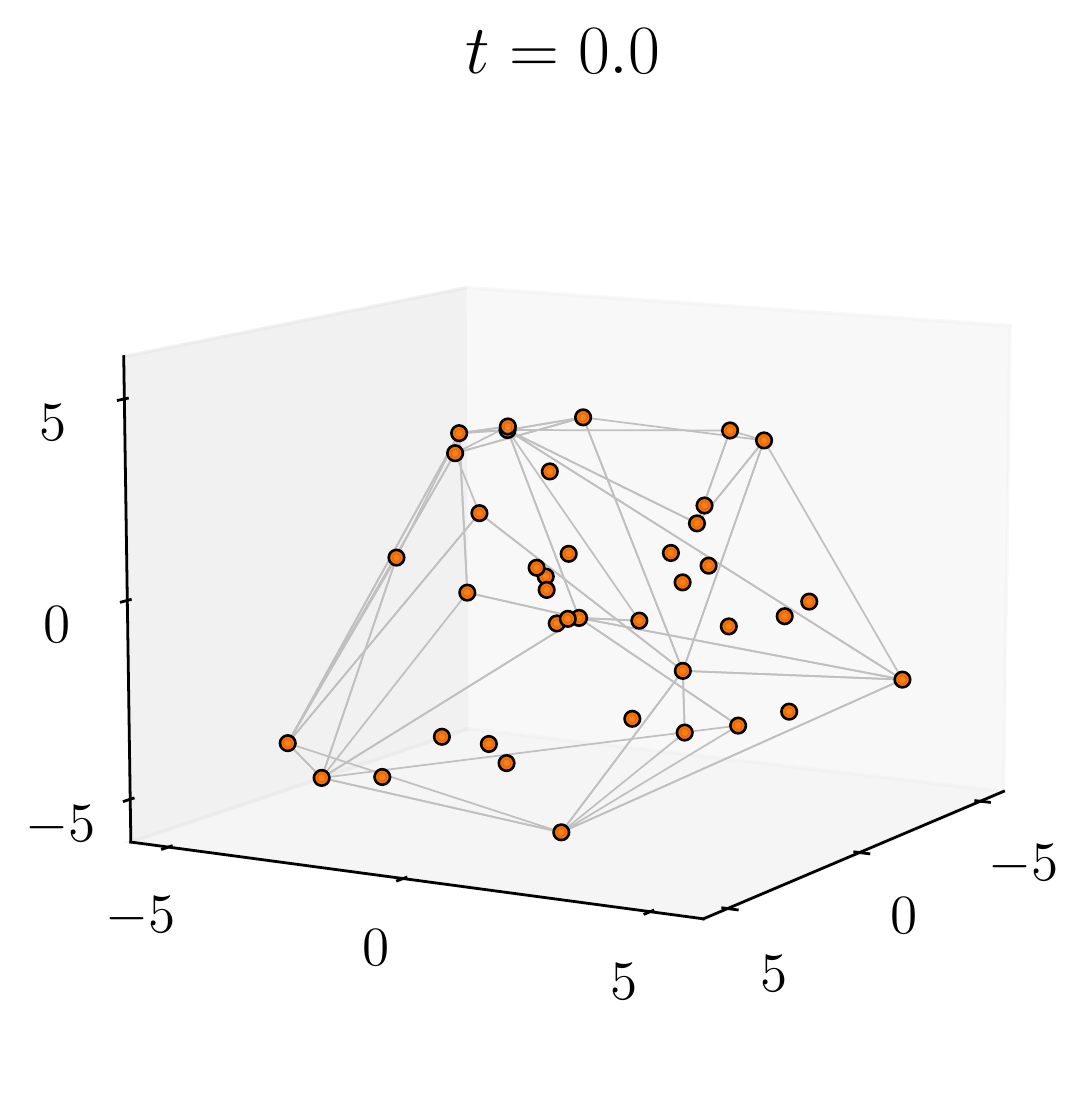}
    \includegraphics[width=0.45\textwidth]{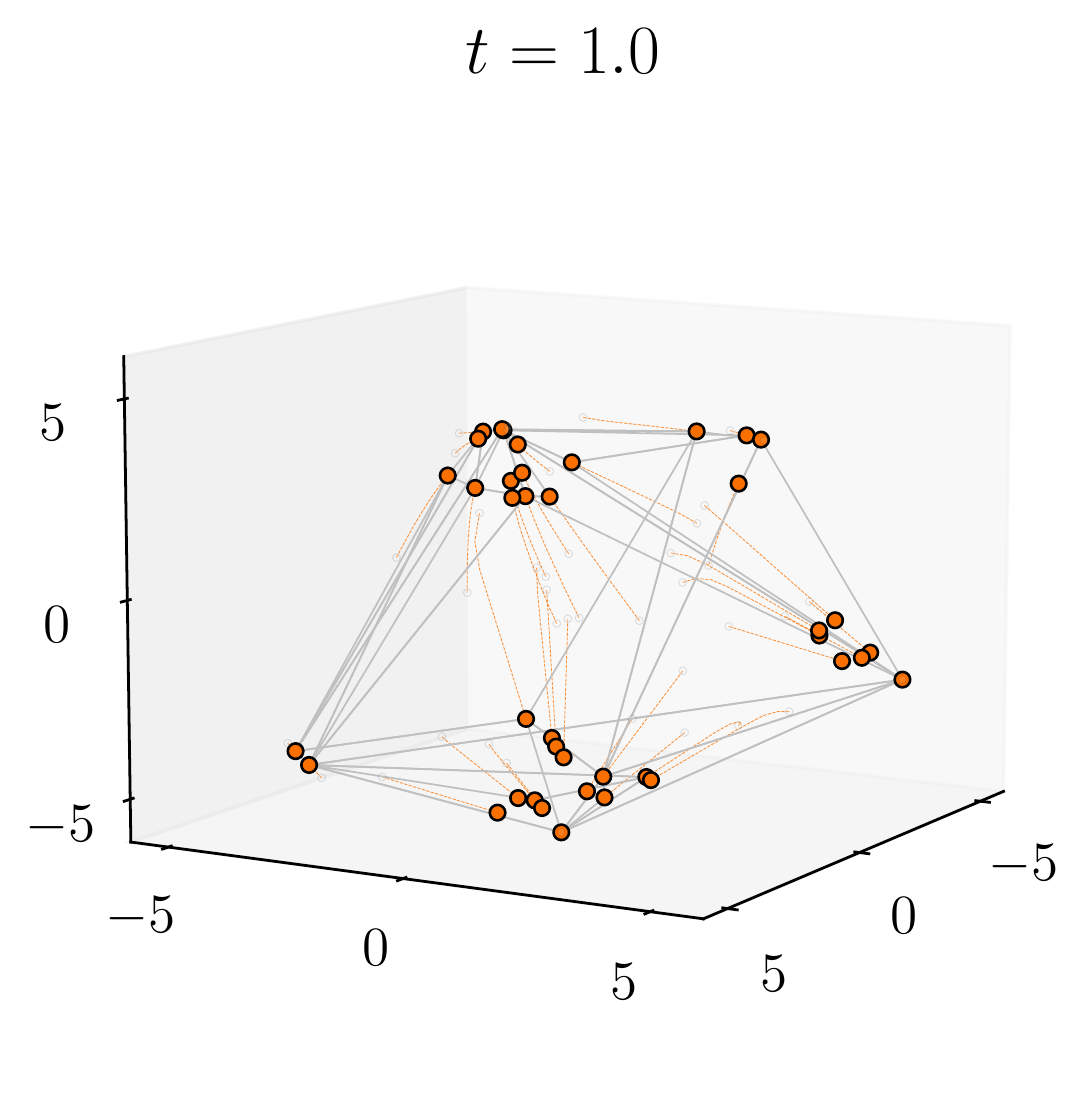}
    \includegraphics[width=0.45\textwidth]{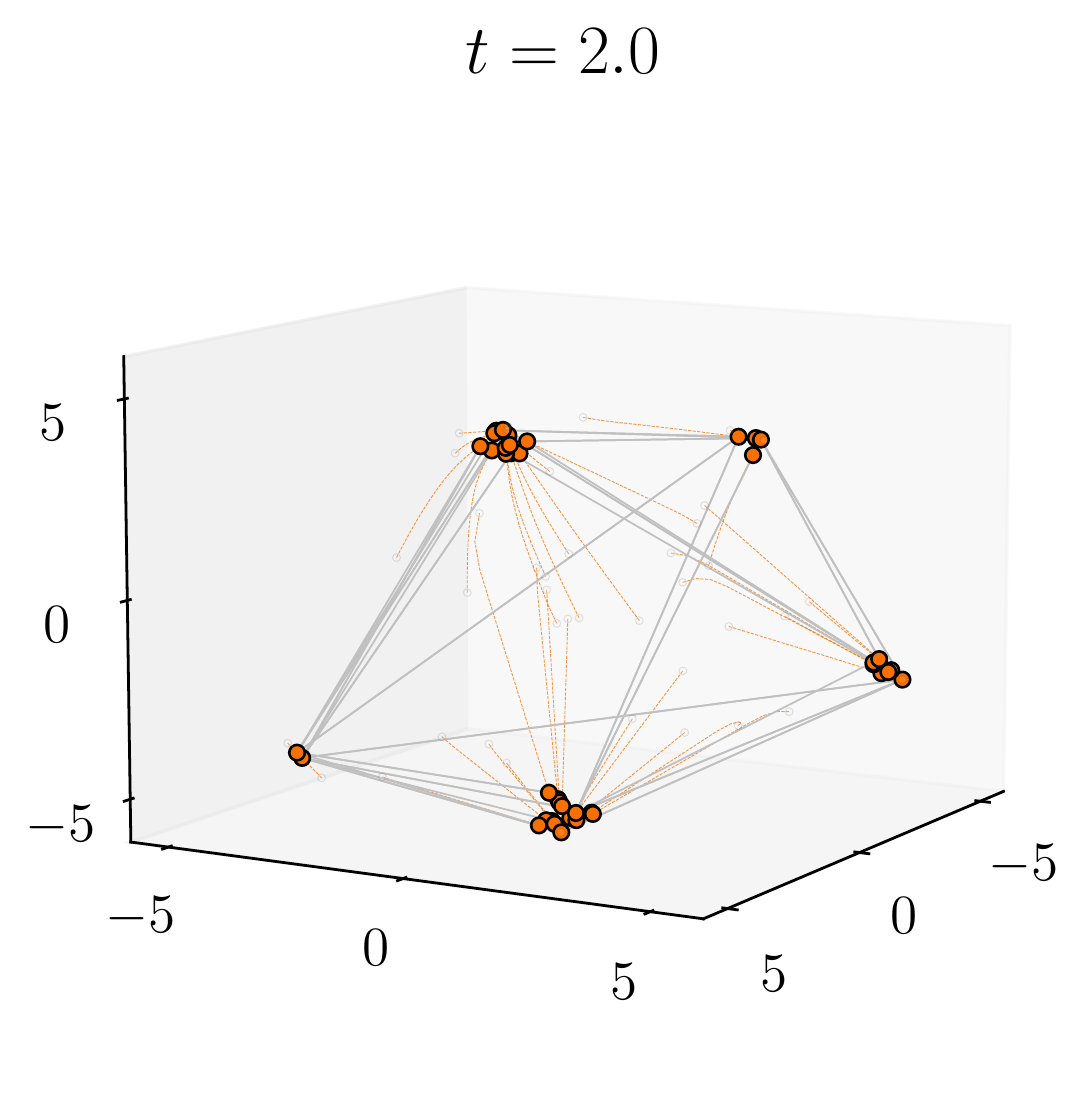}
    \includegraphics[width=0.45\textwidth]{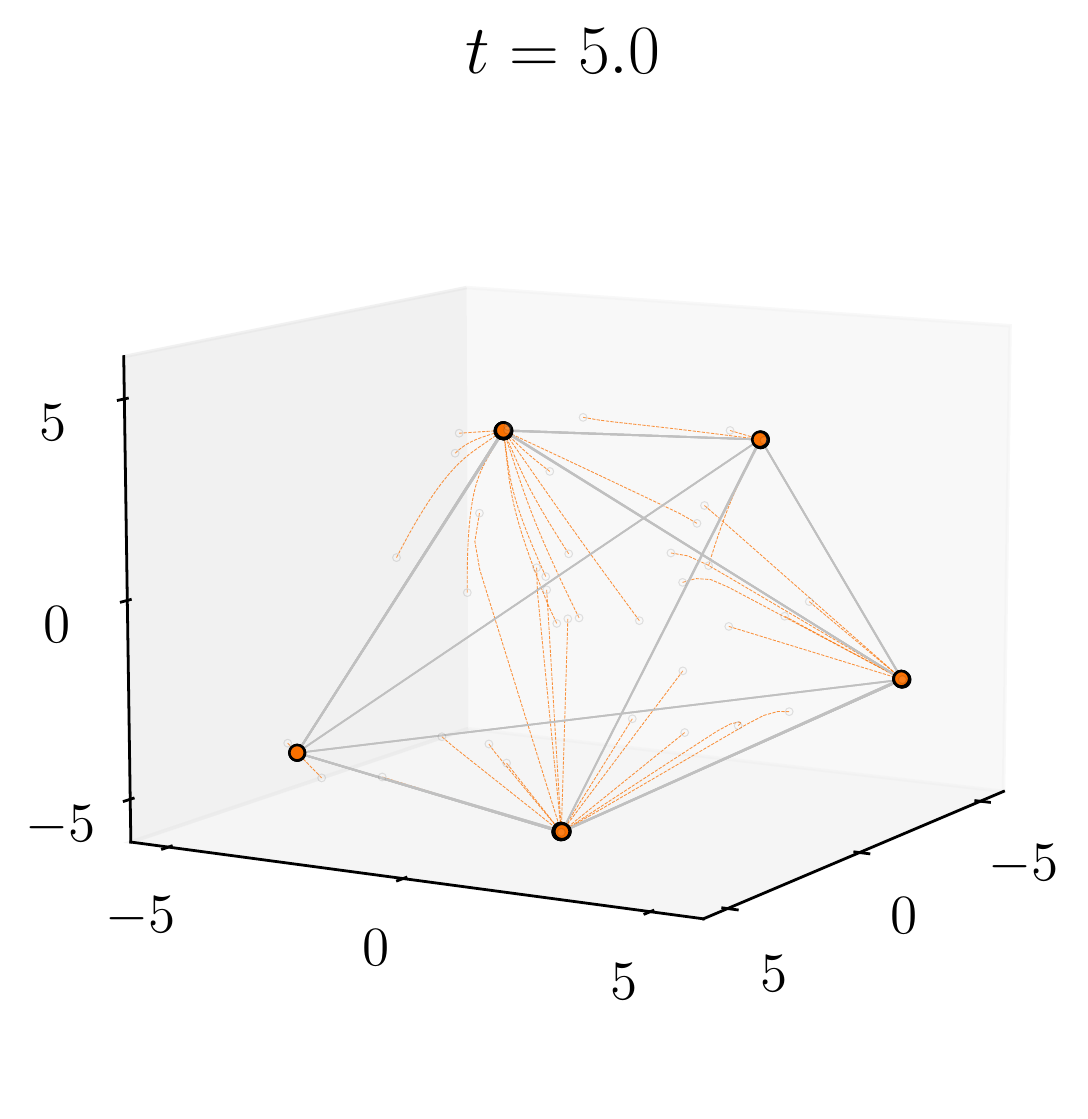}
    \caption{Here, $V=I_d$, while $Q^\top K$ violates the PSD assumption--it is a random matrix (with entries drawn from the uniform distribution on $[-1, 1]$). Nonetheless, the clustering pattern entailed by Theorem \ref{t:Idcase11int} persists.}
    \label{f: first.new}
\end{figure}

\begin{figure}[h!]
    \centering
    \includegraphics[width=0.45\textwidth]{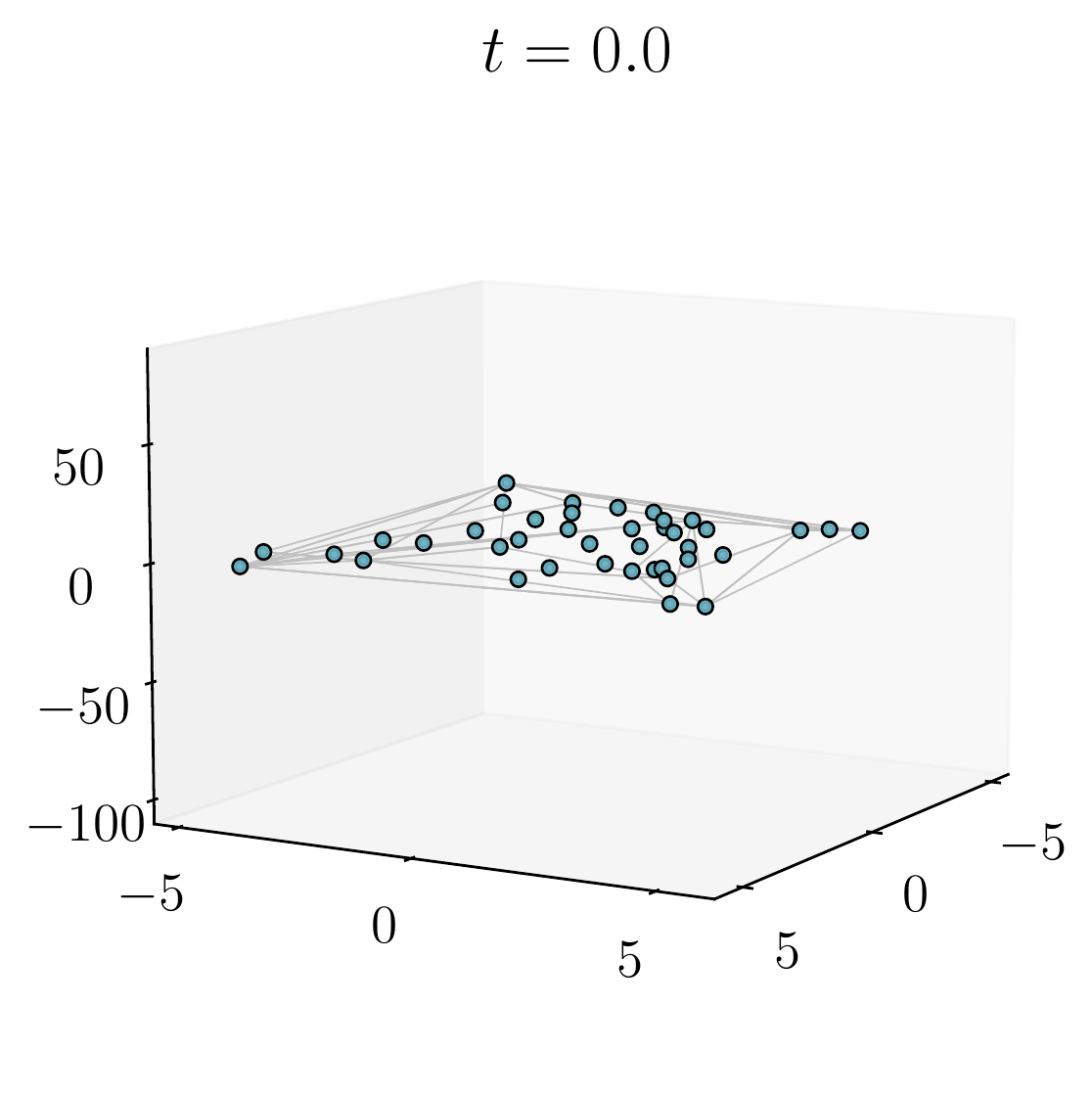}
    \includegraphics[width=0.45\textwidth]{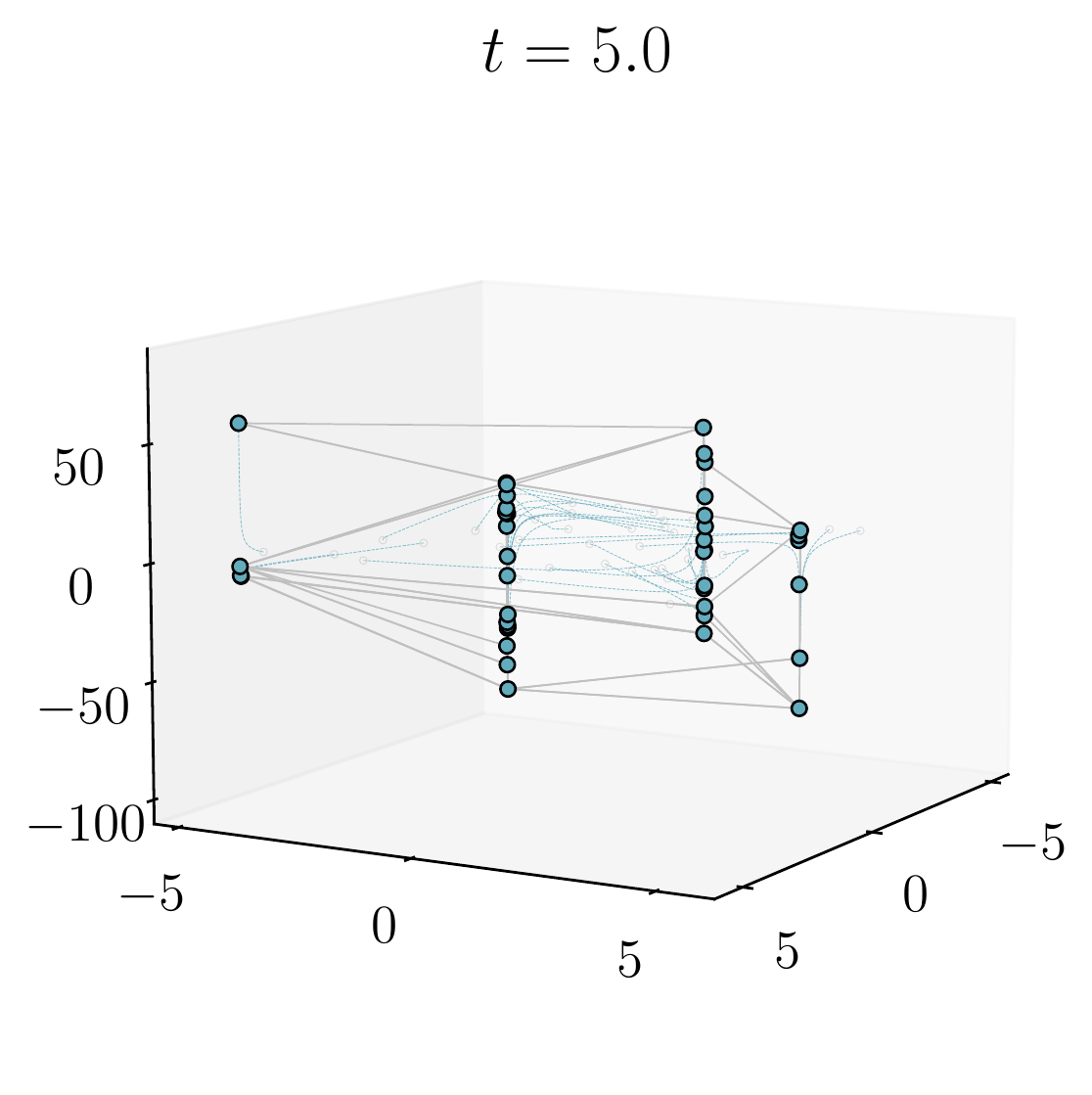}
    \includegraphics[width=0.45\textwidth]{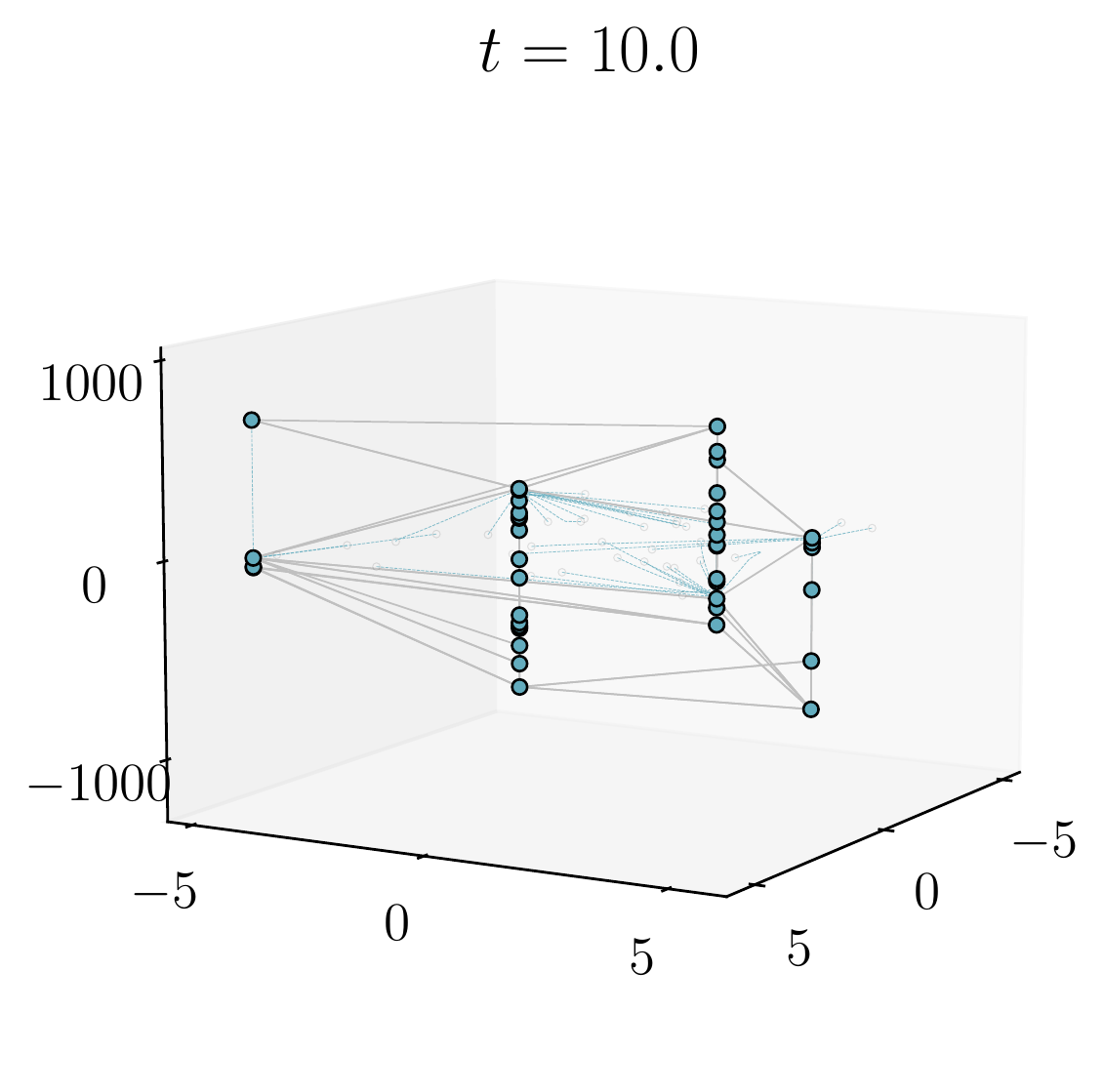}
    \includegraphics[width=0.45\textwidth]{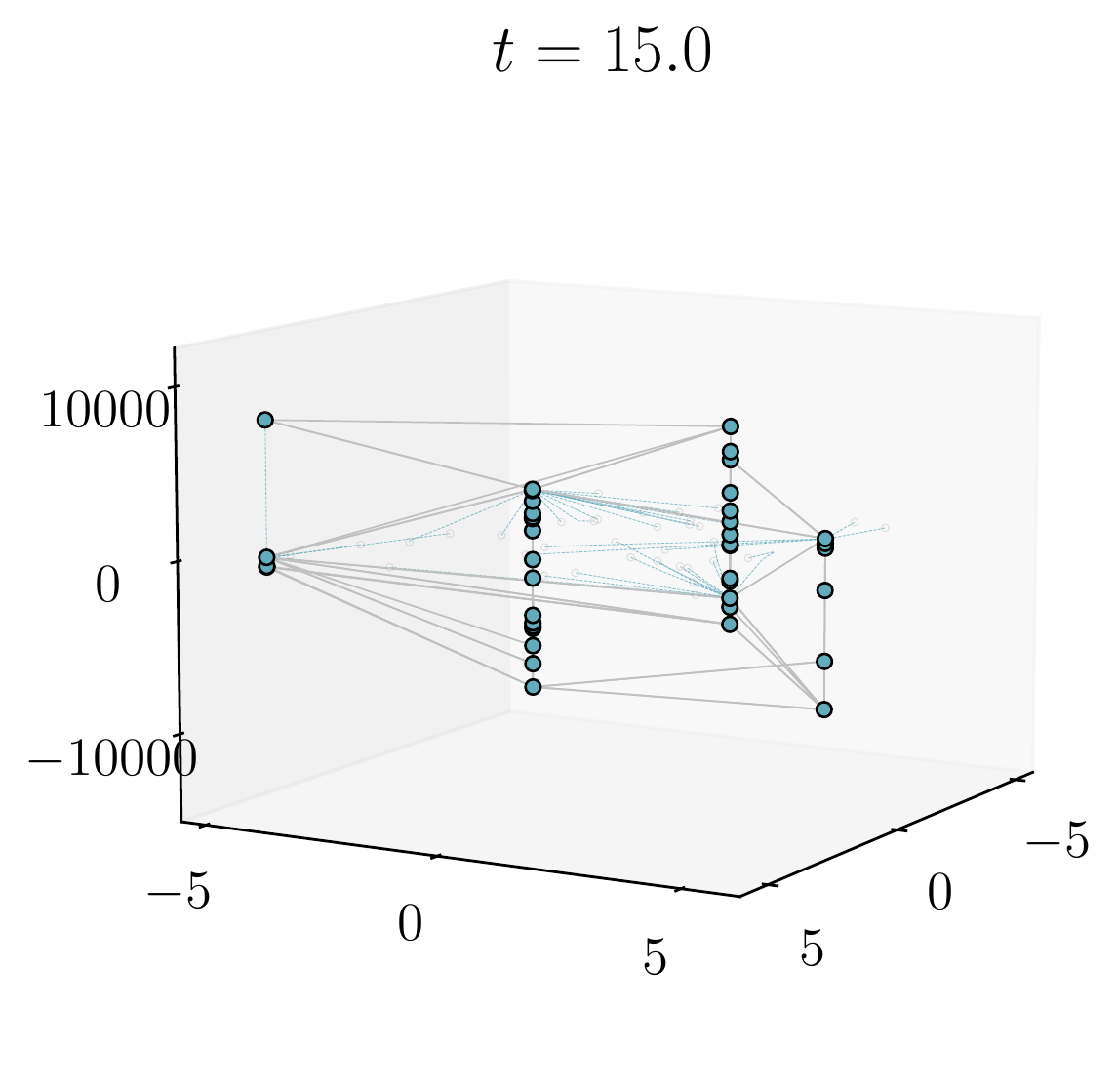}
    \caption{Here, $V$ is paranormal, while $Q^\top K$ violates the PSD assumption--it is a random matrix (with entries drawn from the uniform distribution on $[-1, 1]$). Nonetheless, the clustering pattern entailed by Theorem \ref{t:multiplicity} persists.}
    \label{f: second.new}
\end{figure}

\subsection{Beyond pure self-attention: adding a feed-forward layer} 

Practical implementations of the Transformer architecture combine the self-attention mechanism with a feed-forward neural network. While extending the mathematical analysis from this paper to such a broader setting would be challenging, we can offer some numerical insights into the expected outcomes.

The feed-forward neural network which can be adjoined to the Transformer dynamics in one of two ways. The first way consists in running the pure self-attention dynamics up to time $t\leq T$ (or equivalently, for $O(T)$ layers), and then applying a pure feed-forward neural network to the concatenated vector of clustered features at time $T$. This amounts to seeing the feed-forward network as a map from $\mathbb{R}^{nd}$ to $\mathbb{R}^{m}$ (for some $m\geqslant 1$), which can be studied independently with existing theory. The second way consists in using both the self-attention and feed-forward mechanisms in parallel at every layer $t$. 
In this case, clustering in the exact sense of Theorems \ref{t:Idcase11int} and Theorems \ref{t:multiplicity} would be difficult to anticipate since the weights of the feed-forward network play the role of a value matrix $V$ (as they can be absorbed within $V$), and the conclusions of these theorems strongly depend on the identity-like structure. 

In Figure \ref{fig: last.pdf}, we focus on the second of the above-discussed examples, and illustrate a possible generalization of Theorem \ref{l:3hyperplanes11} to this setup.
For simplicity, we focus on a $2$-layer neural network: we apply a component-wise nonlinear activation function $\sigma$ (either the ReLU or $\tanh$) to the self-attention dynamics, and then multiply by a weight matrix $W\in\R^{d\times d}$. Namely, we consider
\begin{equation} \label{eq: mlp.transfo}
    \dot{z}_i(t)=W\sigma\left(V\sum_{j=1}^n \left(\frac{e^{\langle Qe^{tV}z_i(t),Ke^{tV}z_j(t)\rangle}}{\sum_{k=1}^n e^{\langle Qe^{tV}z_i(t),Ke^{tV}z_k(t)\rangle}}\right)(z_j(t)-z_i(t))\right)
\end{equation}
for $i\in[n]$ and $t\geq0$. A bias vector $b\in\R^d$ (whether inside or outside the activation function) can also be included to allow for translations.
The clustering property appears to persist, the pattern depending on the weight matrix $W$ and on the activation function $\sigma$. We leave this problem open to further investigation.

\begin{figure}[h!]
    \centering
    \includegraphics[width=0.22\textwidth]{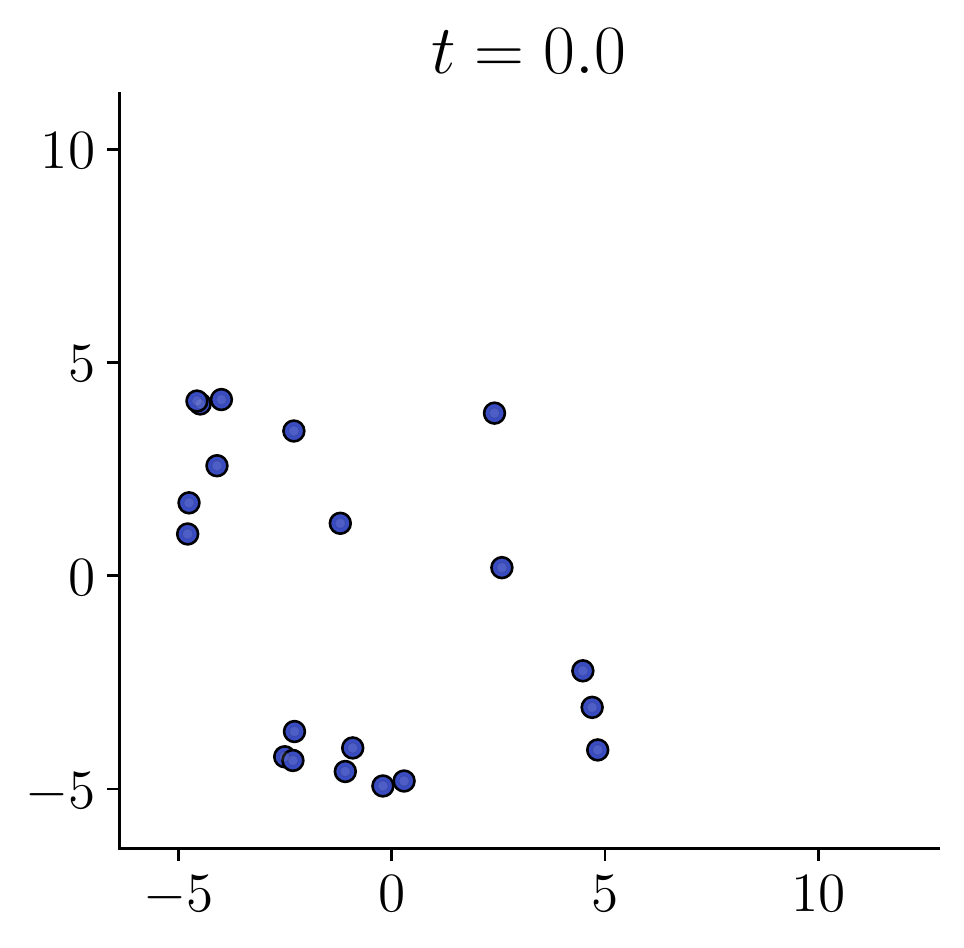}
    \includegraphics[width=0.22\textwidth]{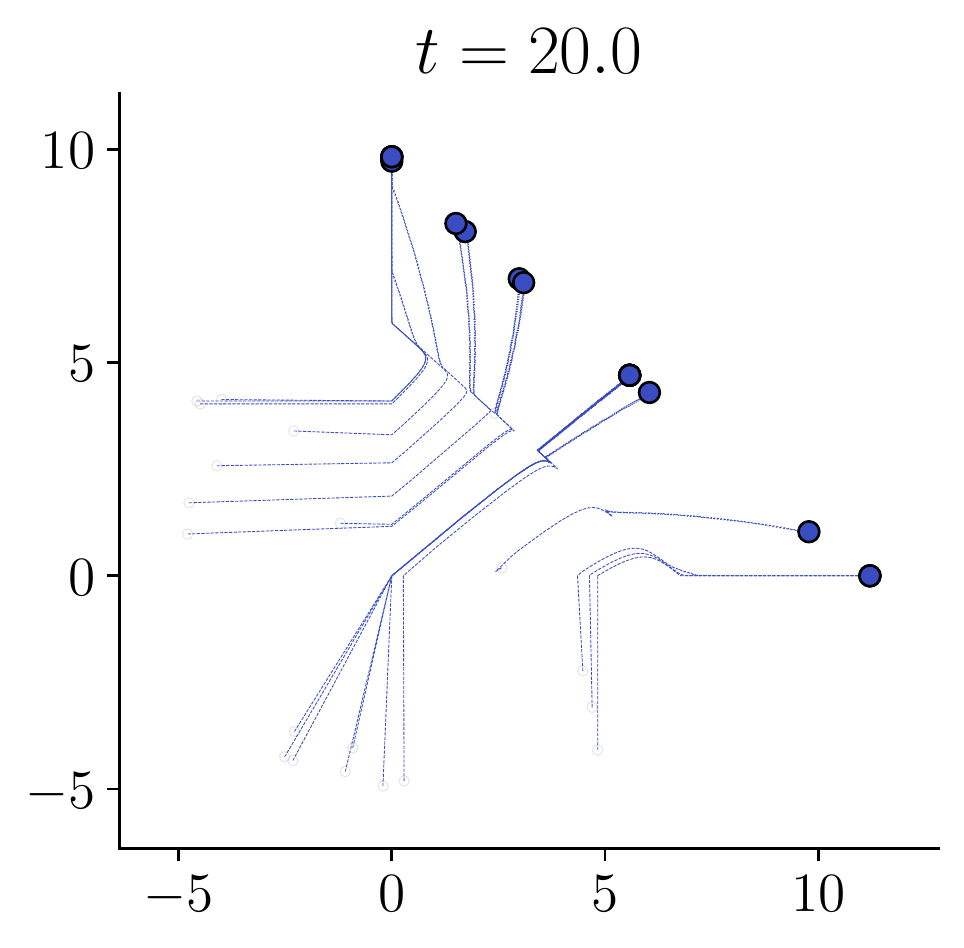}
    \includegraphics[width=0.22\textwidth]{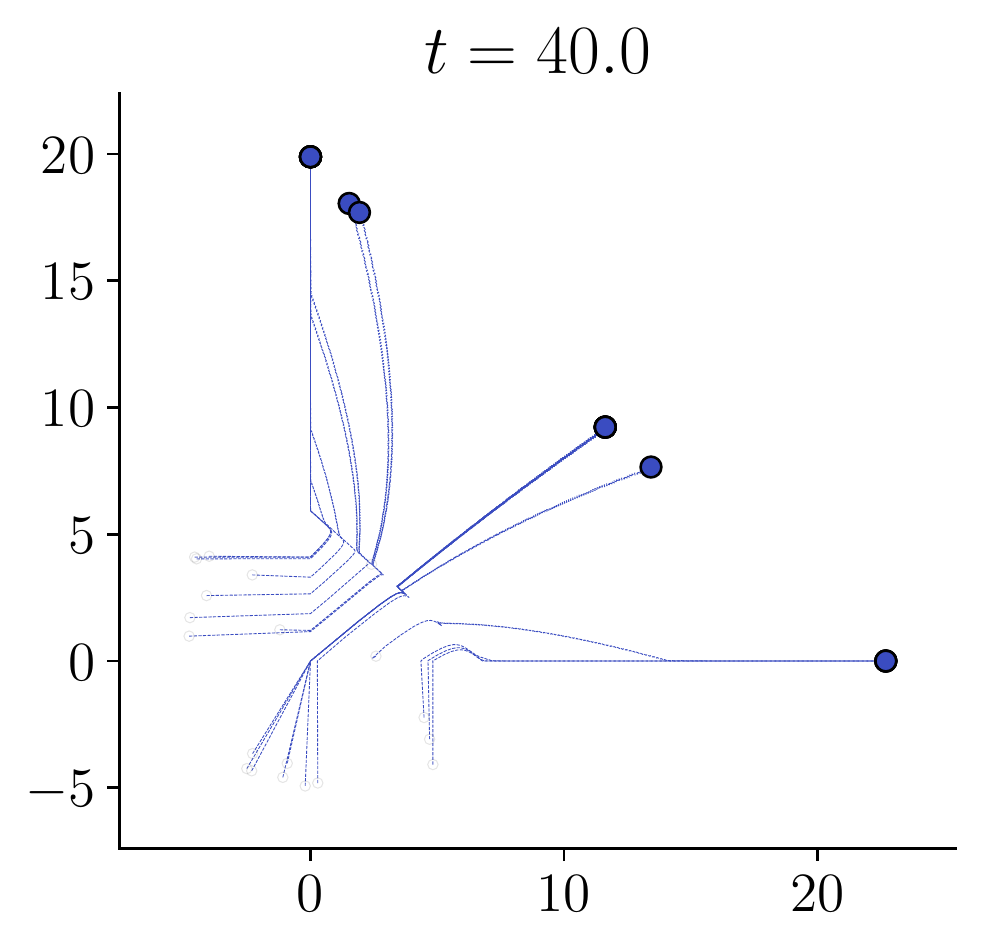}
    \includegraphics[width=0.22\textwidth]{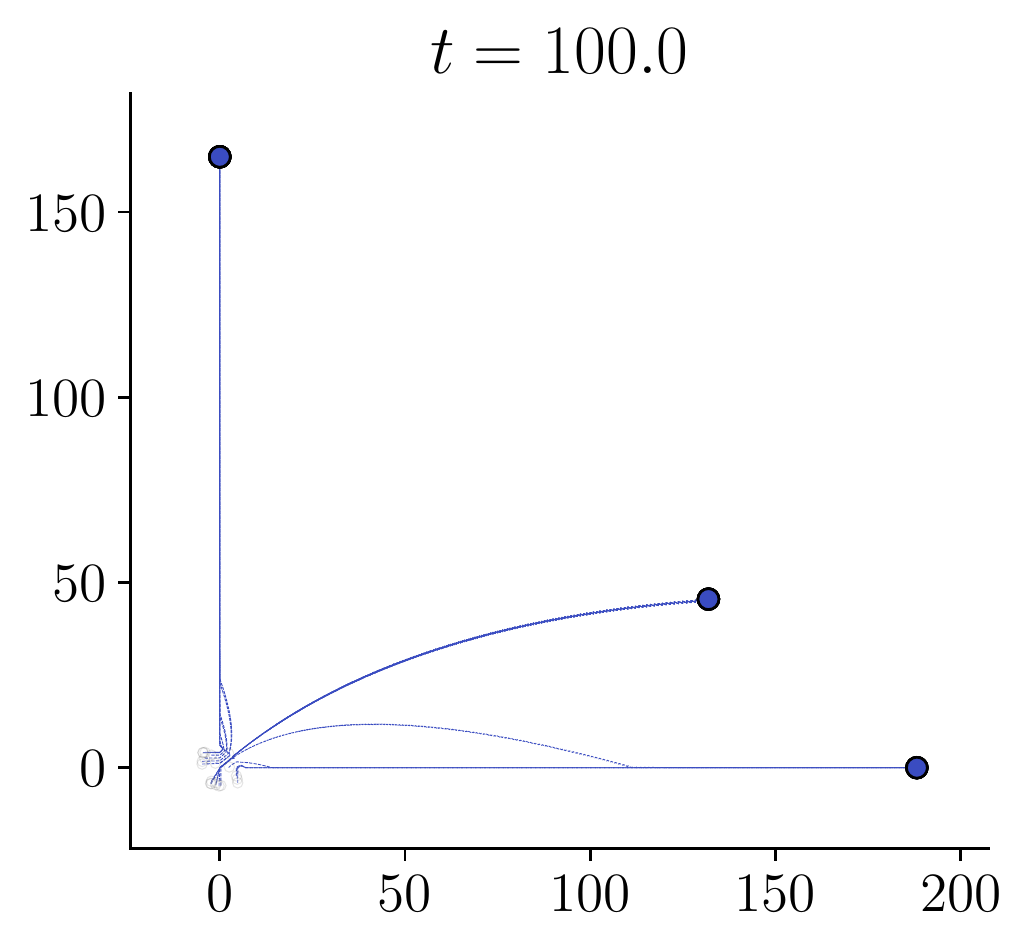}

    \includegraphics[width=0.22\textwidth]{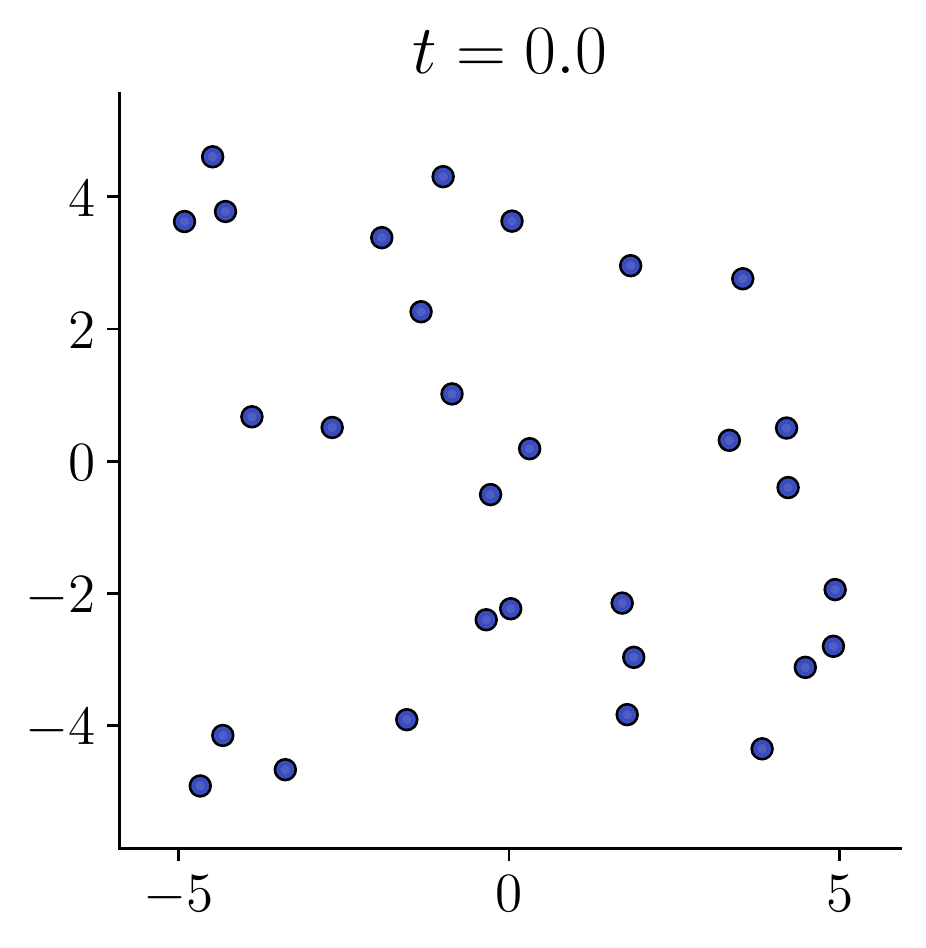}
    \includegraphics[width=0.22\textwidth]{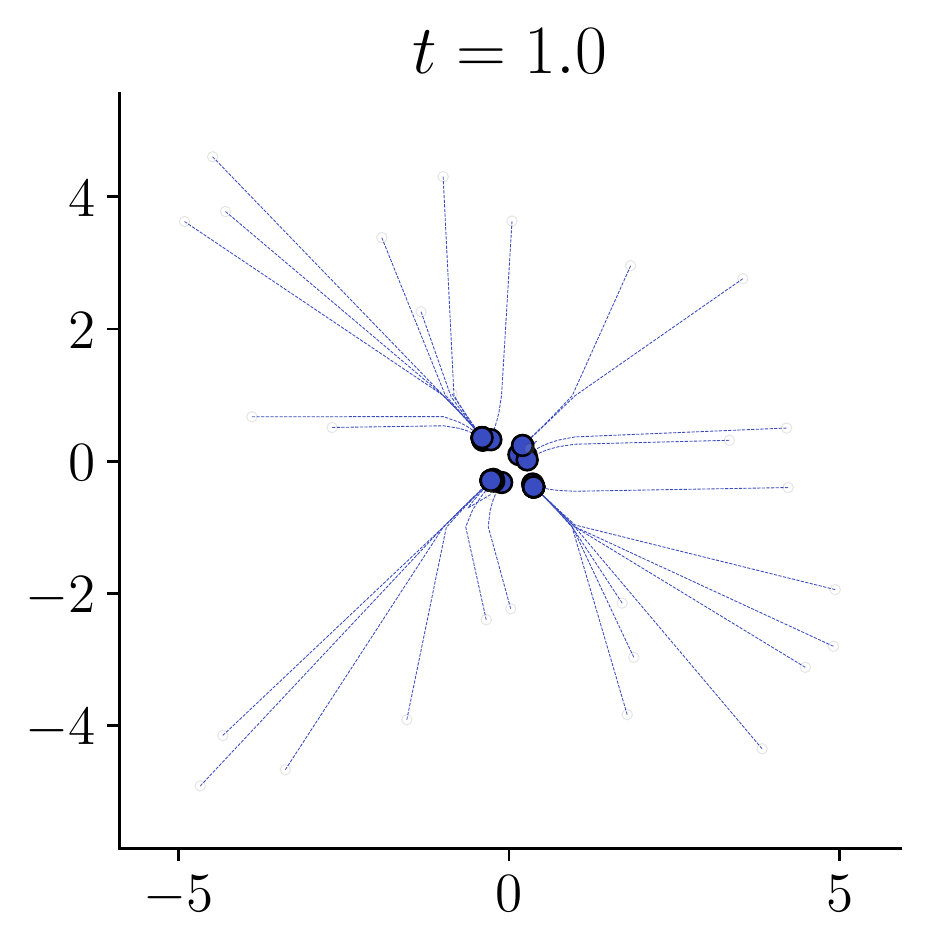}
    \includegraphics[width=0.22\textwidth]{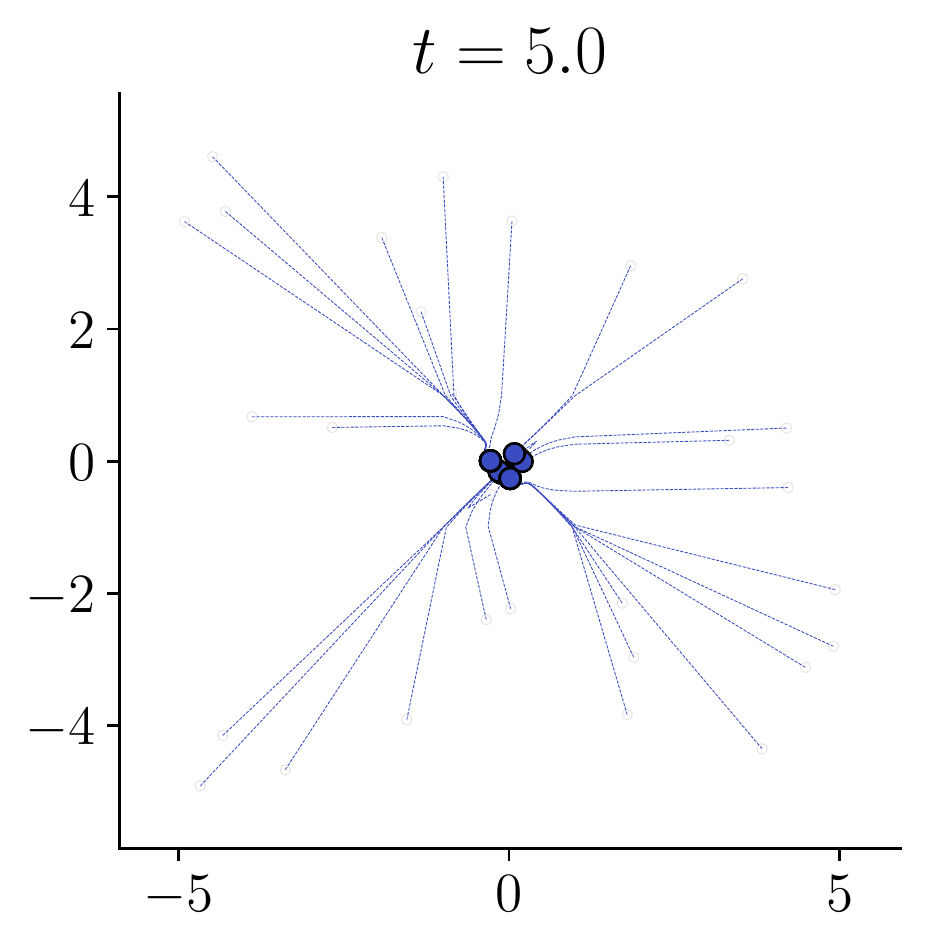}
    \includegraphics[width=0.22\textwidth]{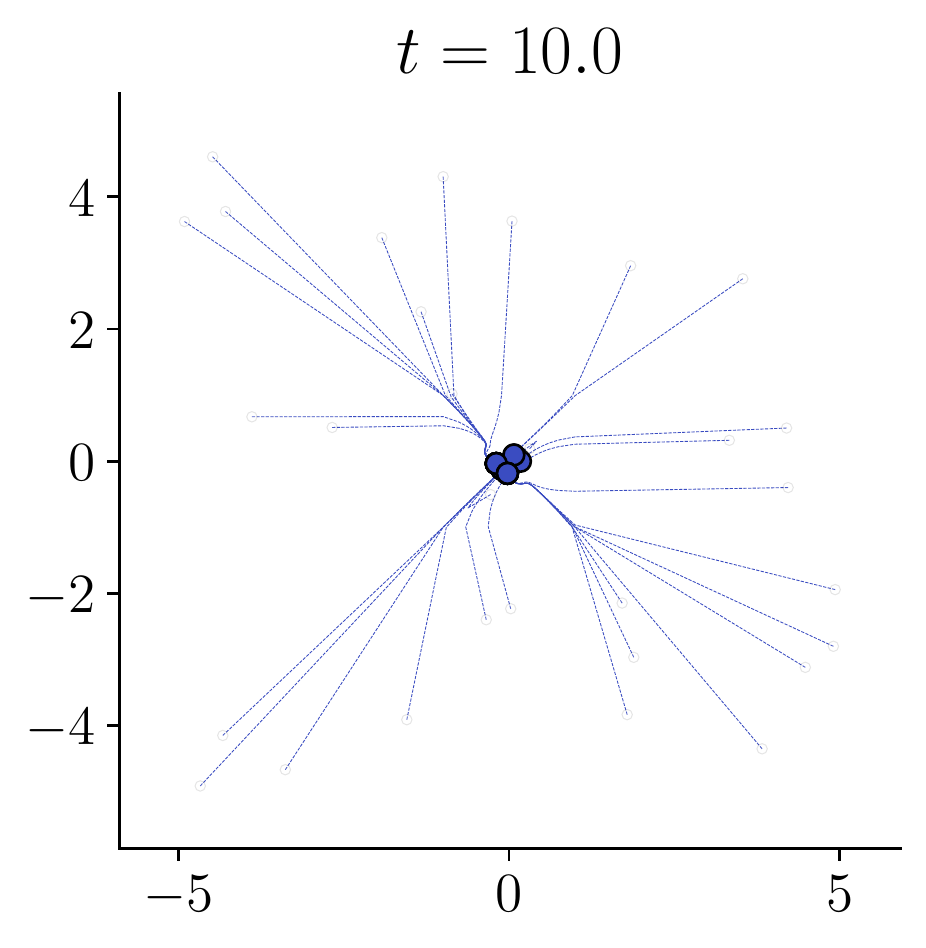}

    \includegraphics[width=0.22\textwidth]{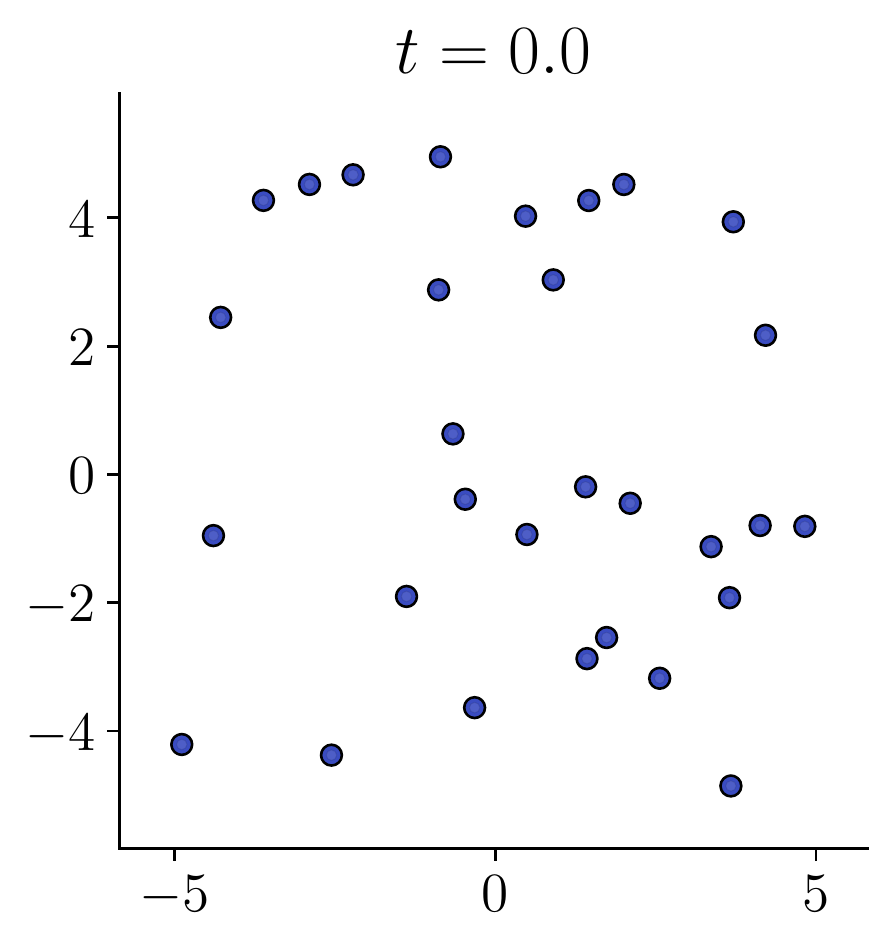}
    \includegraphics[width=0.22\textwidth]{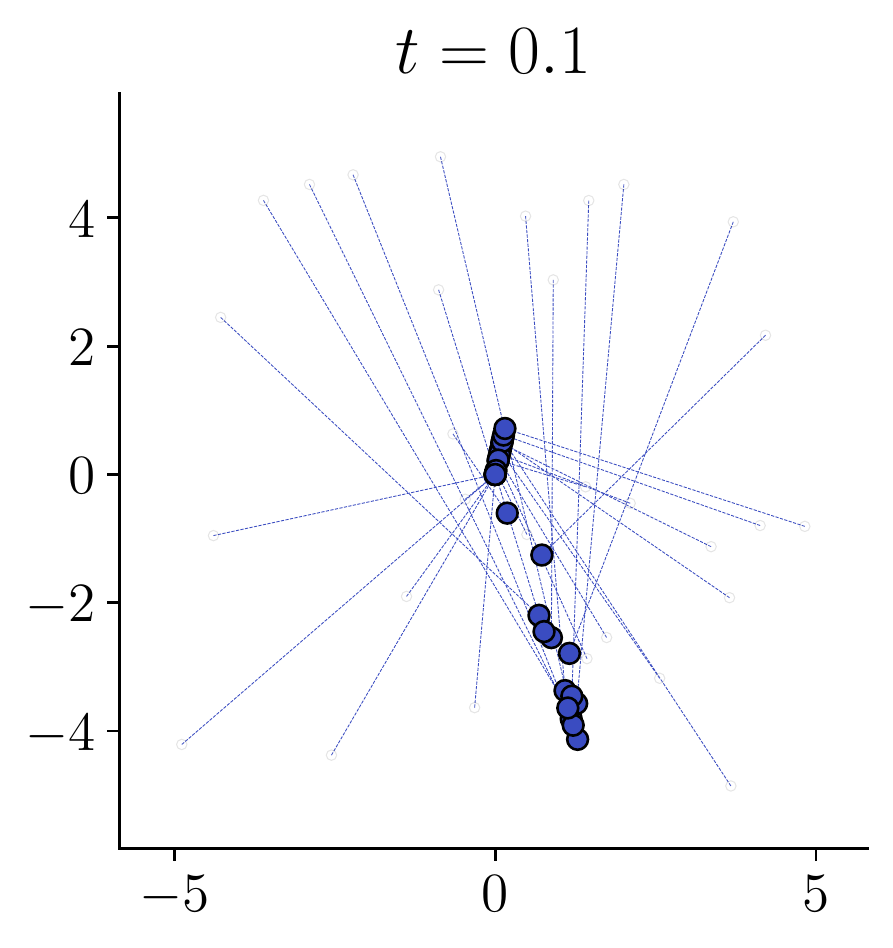}
    \includegraphics[width=0.22\textwidth]{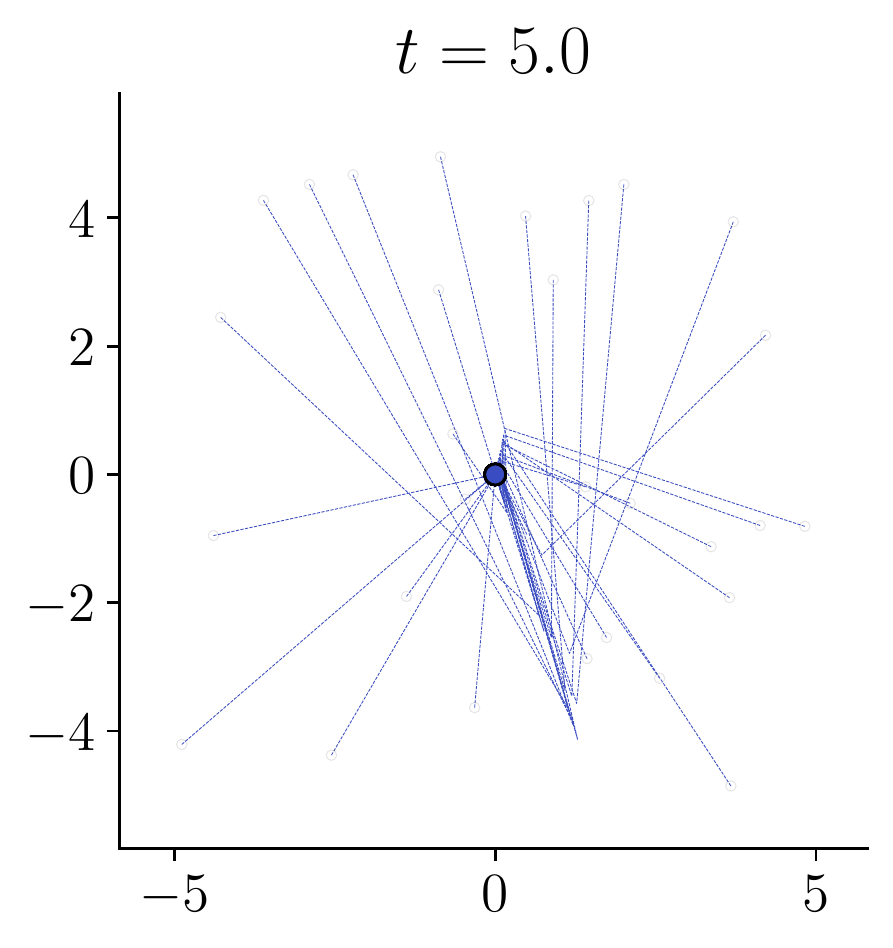}
    \includegraphics[width=0.22\textwidth]{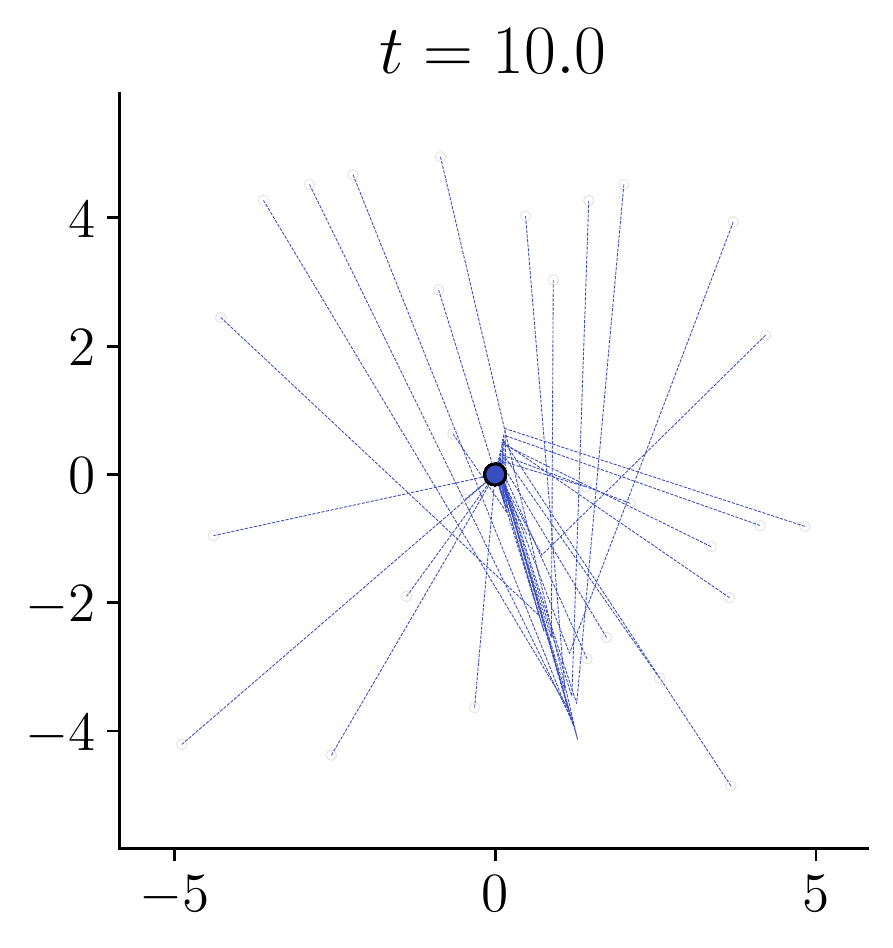}
    \caption{The setup of Theorem \ref{l:3hyperplanes11} with a 2-layer neural network appended to the dynamics (i.e., \eqref{eq: mlp.transfo}). Top: $\sigma=\text{ReLU}$ with $W=I_d$. Middle: $\sigma=\tanh$ with $W=I_d$. Bottom: $\sigma=\text{ReLU}$ with $W$ being a random matrix. In the first row, we see that the particles first evolve as to reach the upper right quadrant $(\mathbb{R}_{>0})^d$ (due to the ReLU). Once they reach it, every particle eventually follows one of three hyperplanes determined by the spectrum of $V$ and the projection onto $(\mathbb{R}_{>0})^d$. In the other two cases, all particles appear to collapse to $0$.}
    \label{fig: last.pdf}
\end{figure}

\bibliographystyle{alpha}
\bibliography{refs}{}

\newcommand{\etalchar}[1]{$^{#1}$}
\begin{thebibliography}{HysW{\etalchar{+}}22}

\bibitem[ABV{\etalchar{+}}05]{acebron2005kuramoto}
Juan~A Acebr{\'o}n, Luis~L Bonilla, Conrad J~P{\'e}rez Vicente, F{\'e}lix
  Ritort, and Renato Spigler.
\newblock The {K}uramoto model: A simple paradigm for synchronization
  phenomena.
\newblock {\em Reviews of modern physics}, 77(1):137, 2005.

\bibitem[BM00]{bradley2000k}
Paul~S Bradley and Olvi~L Mangasarian.
\newblock K-plane clustering.
\newblock {\em Journal of Global optimization}, 16:23--32, 2000.

\bibitem[Che95]{Cheng95_mean_shift}
Yizong Cheng.
\newblock Mean shift, mode seeking, and clustering.
\newblock {\em IEEE Transactions on Pattern Analysis and Machine Intelligence},
  17(8):790--799, 1995.

\bibitem[CHH{\etalchar{+}}16]{cho2016emergence}
Junghee Cho, Seung-Yeal Ha, Feimin Huang, Chunyin Jin, and Dongnam Ko.
\newblock Emergence of bi-cluster flocking for the {Cucker--Smale} model.
\newblock {\em Mathematical Models and Methods in Applied Sciences},
  26(06):1191--1218, 2016.

\bibitem[CRBD18]{chen2018neural}
Ricky~TQ Chen, Yulia Rubanova, Jesse Bettencourt, and David~K Duvenaud.
\newblock Neural ordinary differential equations.
\newblock {\em Advances in Neural Information Processing Systems}, 31, 2018.

\bibitem[CS07]{cucker2007emergent}
Felipe Cucker and Steve Smale.
\newblock Emergent behavior in flocks.
\newblock {\em IEEE Transactions on Automatic Control}, 52(5):852--862, 2007.

\bibitem[DCL21]{dong2021attention}
Yihe Dong, Jean-Baptiste Cordonnier, and Andreas Loukas.
\newblock Attention is not all you need: Pure attention loses rank doubly
  exponentially with depth.
\newblock In {\em International Conference on Machine Learning}, pages
  2793--2803. PMLR, 2021.

\bibitem[Dob79]{dobrushin1979vlasov}
Roland~L'vovich Dobrushin.
\newblock Vlasov equations.
\newblock {\em Funktsional'nyi Analiz i ego Prilozheniya}, 13(2):48--58, 1979.

\bibitem[Gol13]{golse2013mean}
Fran{\c{c}}ois Golse.
\newblock Mean field kinetic equations.
\newblock {\em Course of Polytechnique}, 2013.

\bibitem[HJ12]{horn2012matrix}
Roger~A Horn and Charles~R Johnson.
\newblock {\em Matrix analysis}.
\newblock Cambridge University Press, 2012.

\bibitem[HK02]{rainer2002opinion}
Rainer Hegselmann and Ulrich Krause.
\newblock Opinion dynamics and bounded confidence: models, analysis and
  simulation.
\newblock {\em Journal of Artifical Societies and Social Simulation (JASSS)},
  5(3), 2002.

\bibitem[HKPZ19]{ha2019complete}
Seung-Yeal Ha, Jeongho Kim, Jinyeong Park, and Xiongtao Zhang.
\newblock Complete cluster predictability of the {Cucker--Smale} flocking model
  on the real line.
\newblock {\em Archive for Rational Mechanics and Analysis}, 231:319--365,
  2019.

\bibitem[HR17]{haber2017stable}
Eldad Haber and Lars Ruthotto.
\newblock Stable architectures for deep neural networks.
\newblock {\em Inverse problems}, 34(1), 2017.

\bibitem[HysW{\etalchar{+}}22]{hu2022lora}
Edward~J Hu, yelong shen, Phillip Wallis, Zeyuan Allen-Zhu, Yuanzhi Li, Shean
  Wang, Lu~Wang, and Weizhu Chen.
\newblock Lo{RA}: Low-rank adaptation of large language models.
\newblock In {\em International Conference on Learning Representations}, 2022.

\bibitem[HZRS16a]{he2016deep}
Kaiming He, Xiangyu Zhang, Shaoqing Ren, and Jian Sun.
\newblock Deep residual learning for image recognition.
\newblock In {\em Proceedings of the IEEE conference on computer vision and
  pattern recognition}, pages 770--778, 2016.

\bibitem[HZRS16b]{he2016identity}
Kaiming He, Xiangyu Zhang, Shaoqing Ren, and Jian Sun.
\newblock Identity mappings in deep residual networks.
\newblock In {\em Computer Vision--ECCV 2016: 14th European Conference,
  Amsterdam, The Netherlands, October 11--14, 2016, Proceedings, Part IV 14},
  pages 630--645. Springer, 2016.

\bibitem[JM14]{jabin2014clustering}
Pierre-Emmanuel Jabin and Sebastien Motsch.
\newblock Clustering and asymptotic behavior in opinion formation.
\newblock {\em Journal of Differential Equations}, 257(11):4165--4187, 2014.

\bibitem[Kra00]{krause2000discrete}
Ulrich Krause.
\newblock A discrete nonlinear and non-autonomous model of consensus.
\newblock In {\em Communications in Difference Equations: Proceedings of the
  Fourth International Conference on Difference Equations}, page 227. CRC
  Press, 2000.

\bibitem[Kur75]{kuramoto1975self}
Yoshiki Kuramoto.
\newblock Self-entrainment of a population of coupled non-linear oscillators.
\newblock In {\em International Symposium on Mathematical Problems in
  Theoretical Physics: January 23--29, 1975, Kyoto University, Kyoto/Japan},
  pages 420--422. Springer, 1975.

\bibitem[LCG{\etalchar{+}}20]{lanalbert}
Zhenzhong Lan, Mingda Chen, Sebastian Goodman, Kevin Gimpel, Piyush Sharma, and
  Radu Soricut.
\newblock {ALBERT: A Lite BERT for Self-supervised Learning of Language
  Representations}.
\newblock In {\em International Conference on Learning Representations}, 2020.

\bibitem[LLH{\etalchar{+}}20]{lu2019understanding}
Yiping Lu, Zhuohan Li, Di~He, Zhiqing Sun, Bin Dong, Tao Qin, Liwei Wang, and
  Tie-Yan Liu.
\newblock Understanding and improving transformer from a multi-particle dynamic
  system point of view.
\newblock In {\em International Conference on Learning Representations}, 2020.

\bibitem[LWLQ22]{surveytransformers}
Tianyang Lin, Yuxin Wang, Xiangyang Liu, and Xipeng Qiu.
\newblock A survey of transformers.
\newblock {\em AI Open}, 3:111--132, 2022.

\bibitem[MT14]{motsch2014heterophilious}
Sebastien Motsch and Eitan Tadmor.
\newblock Heterophilious dynamics enhances consensus.
\newblock {\em SIAM Review}, 56(4):577--621, 2014.

\bibitem[PHD20]{papyan2020prevalence}
Vardan Papyan, XY~Han, and David~L Donoho.
\newblock Prevalence of neural collapse during the terminal phase of deep
  learning training.
\newblock {\em Proceedings of the National Academy of Sciences},
  117(40):24652--24663, 2020.

\bibitem[PR13]{piccoli2013transport}
Benedetto Piccoli and Francesco Rossi.
\newblock {Transport equation with nonlocal velocity in Wasserstein spaces:
  convergence of numerical schemes}.
\newblock {\em Acta Applicandae Mathematicae}, 124:73--105, 2013.

\bibitem[PR16]{piccoli2016properties}
Benedetto Piccoli and Francesco Rossi.
\newblock On properties of the generalized {Wasserstein} distance.
\newblock {\em Archive for Rational Mechanics and Analysis}, 222:1339--1365,
  2016.

\bibitem[PRT15]{piccoli2015control}
Benedetto Piccoli, Francesco Rossi, and Emmanuel Tr{\'e}lat.
\newblock {Control to flocking of the kinetic Cucker--Smale model}.
\newblock {\em SIAM Journal on Mathematical Analysis}, 47(6):4685--4719, 2015.

\bibitem[RS14]{RidSin14ginibre}
Brian Rider and Christopher~D. Sinclair.
\newblock {Extremal laws for the real {G}inibre ensemble}.
\newblock {\em The Annals of Applied Probability}, 24(4):1621 -- 1651, 2014.

\bibitem[SABP22]{sander2022sinkformers}
Michael~E Sander, Pierre Ablin, Mathieu Blondel, and Gabriel Peyr{\'e}.
\newblock Sinkformers: {T}ransformers with doubly stochastic attention.
\newblock In {\em International Conference on Artificial Intelligence and
  Statistics}, pages 3515--3530. PMLR, 2022.

\bibitem[VCBJ{\etalchar{+}}95]{vicsek1995novel}
Tam{\'a}s Vicsek, Andr{\'a}s Czir{\'o}k, Eshel Ben-Jacob, Inon Cohen, and Ofer
  Shochet.
\newblock Novel type of phase transition in a system of self-driven particles.
\newblock {\em Physical Review Letters}, 75(6):1226, 1995.

\bibitem[Vid11]{vidal2011subspace}
Ren{\'e} Vidal.
\newblock Subspace clustering.
\newblock {\em IEEE Signal Processing Magazine}, 28(2):52--68, 2011.

\bibitem[VKF20]{vyas2020fast}
Apoorv Vyas, Angelos Katharopoulos, and Fran{\c{c}}ois Fleuret.
\newblock Fast transformers with clustered attention.
\newblock {\em Advances in Neural Information Processing Systems},
  33:21665--21674, 2020.

\bibitem[VSP{\etalchar{+}}17]{vaswani2017attention}
Ashish Vaswani, Noam Shazeer, Niki Parmar, Jakob Uszkoreit, Llion Jones,
  Aidan~N Gomez, {\L}ukasz Kaiser, and Illia Polosukhin.
\newblock Attention is all you need.
\newblock {\em Advances in Neural Information Processing Systems}, 30, 2017.

\bibitem[Wei17]{weinan2017proposal}
E~Weinan.
\newblock A proposal on machine learning via dynamical systems.
\newblock {\em Communications in Mathematics and Statistics}, 1(5):1--11, 2017.

\bibitem[WHL19]{weinan2019mean}
E~Weinan, Jiequn Han, and Qianxiao Li.
\newblock A mean-field optimal control formulation of deep learning.
\newblock {\em Research in Mathematical Sciences}, 6(1):10, 2019.

\bibitem[WLK{\etalchar{+}}20]{wang2020linformer}
Sinong Wang, Belinda~Z Li, Madian Khabsa, Han Fang, and Hao Ma.
\newblock {Linformer: Self-attention with linear complexity}.
\newblock {\em arXiv preprint arXiv:2006.04768}, 2020.

\bibitem[YBR{\etalchar{+}}20]{yun2019transformers}
Chulhee Yun, Srinadh Bhojanapalli, Ankit~Singh Rawat, Sashank~J Reddi, and
  Sanjiv Kumar.
\newblock Are transformers universal approximators of sequence-to-sequence
  functions?
\newblock In {\em International Conference on Learning Representations}, 2020.

\end{thebibliography}

\bigskip
\bigskip

\begin{minipage}[t]{0.4\textwidth}
\centering
{\footnotesize{\bf Borjan Geshkovski}\par
  Department of Mathematics\par
  MIT\par
  Cambridge, MA\par
  02139 USA\par
  e-mail: \href{mailto:borjan@mit.edu}{\textcolor{myblue}{\texttt{borjan@mit.edu}}}
  }
\end{minipage}\hfill
\begin{minipage}[t]{0.4\textwidth}
\centering
  {\footnotesize{\bf Cyril Letrouit}\par
  Department of Mathematics\par
  MIT\par
  Cambridge, MA\par
  02139 USA\par
  e-mail: \href{mailto:letrouit@mit.edu}{\textcolor{myblue}{\texttt{letrouit@mit.edu}}}
  }
\end{minipage}%
\bigskip
\bigskip

\begin{minipage}[t]{0.4\textwidth}
\centering
{\footnotesize{\bf Yury Polyanskiy}\par
  Department of EECS\par
  MIT\par
  Cambridge, MA\par
  02139 USA\par
  e-mail: \href{mailto:yp@mit.edu}{\textcolor{myblue}{\texttt{yp@mit.edu}}}
  }
\end{minipage}\hfill
\begin{minipage}[t]{0.4\textwidth}
\centering
  {\footnotesize{\bf Philippe Rigollet}\par
  Department of Mathematics\par
  MIT\par
  Cambridge, MA\par
  02139 USA\par
  e-mail: \href{mailto:rigollet@math.mit.edu}{\textcolor{myblue}{\texttt{rigollet@mit.edu}}}
  }
\end{minipage}%

\end{document}